\definecolor{goodgoalcolor}{HTML}{85c0f9}
\definecolor{badgoalcolor}{HTML}{f5793a}
\newcommand{\lava}{\textsc{Lava Gridworld}\xspace}
\newcommand{\pointgoal}{\textsc{PointGoal1}\xspace}
\newcommand{\pointgoalhard}{\textsc{PointGoal1-Hard}\xspace}
\newcommand{\pillar}{\textsc{Pillar}\xspace}
\def\eqref#1{equation~\ref{#1}}
\def\1{\bm{1}}
\DeclareMathAlphabet{\mathsfit}{\encodingdefault}{\sfdefault}{m}{sl}
\SetMathAlphabet{\mathsfit}{bold}{\encodingdefault}{\sfdefault}{bx}{n}
\newcommand{\E}{\mathbb{E}}
\DeclareMathOperator*{\argmin}{arg\,min}
\providecommand*{\barvee}{%
  \mathbin{%
    \mathpalette\@barvee{}%
  }%
}
\newcommand*{\@barvee}[2]{%
  % #1: math style
  % #2: unused
  \sbox0{$#1\veebar\m@th$}%
  \sbox2{%
    \hbox to \wd0{%
      \hss
      \resizebox{1.05\wd0}{\height}{$#1-\m@th$}%
      \hss
    }%
  }%
  \sbox4{%
    \resizebox{\wd0}{.7\ht0}{$#1\vee\m@th$}%
  }%
  \sbox6{$#1\vcenter{}$}
  \ht2=\ht6 %
  \vbox to \ht0{%
    \copy2 %
    \vss
    \copy4 %
  }%
}
\newcommand{\state}{\mathcal{S}}
\newcommand{\action}{\mathcal{A}}
\newcommand{\reward}{R}
\newcommand{\rmax}{\reward_{\text{MAX}}}
\newcommand{\rmin}{\reward_{\text{MIN}}}
\newcommand{\goals}{\mathcal{G}}
\newcommand{\unsafegoals}{\mathcal{G^!}}
\renewcommand{\v}{V}
\newcommand{\pistar}{\pi^*}
\newcommand{\rbar}{\bar{\reward}}
\newcommand{\rbarmin}{\rbar_{\text{MIN}}}
\newcommand{\BlackBox}{\rule{1.5ex}{1.5ex}}  % end of proof
\newenvironment{proof}{\par\noindent{\bf Proof\ }}{\hfill\BlackBox\\[2mm]}
\newtheorem{theorem}{Theorem}
\newtheorem{definition}{Definition}
\newlength{\strutheight}
\newcommand{\vmin}{\v_{\text{MIN}}}
\newcommand{\vmax}{\v_{\text{MAX}}}
\newcommand{\circled}[2]{\tikz[baseline=(char.base)]{
            \node[shape=circle,draw=#2,fill=#2,inner sep=1pt] (char) {#1};}}
\newcommand{\sgood}{\circled{$s_3$}{goodgoalcolor}}
\newcommand{\sbad}{\circled{$s_1$}{badgoalcolor}}
\title{ROSARL: Reward-Only Safe Reinforcement Learning}
\author{
Geraud Nangue Tasse$^1$,
Tamlin Love$^1$,
Mark Nemecek$^2$,
Steven James$^1$,
Benjamin Rosman$^1$
\\
$^1$University of the Witwatersrand, Johannesburg, South Africa
\\
$^2$Department of Computer Science, Duke University, Durham,
North Carolina, USA
\\
geraudnt@gmail.com,
tamlinlollislove@gmail.com,
markn@cs.duke.edu,
\\
\{Steven.James,benjamin.rosman1\}@wits.ac.za
}
\begin{document}

\maketitle

\begin{abstract}
\looseness=-1
An important problem in reinforcement learning is designing agents that learn to solve tasks safely in an environment. A common solution is for a human expert to define either a penalty in the reward function or a cost to be minimised when reaching unsafe states. However, this is non-trivial, since too small a penalty may lead to agents that reach unsafe states, while too large a penalty increases the time to convergence. Additionally, the difficulty in designing reward or cost functions can increase with the complexity of the problem. Hence, for a given environment with a given set of unsafe states, we are interested in finding the upper bound of rewards at unsafe states whose optimal policies minimises the probability of reaching those unsafe states, irrespective of task rewards. We refer to this exact upper bound as the \textit{Minmax penalty}, and show that it can be obtained by taking into account both the controllability and diameter of an environment. We provide a simple practical model-free algorithm for an agent to learn this Minmax penalty while learning the task policy, and demonstrate that using it leads to agents that learn safe policies in high-dimensional continuous control environments. 
[
% Supplementary code and trajectory videos: 
\href{https://github.com/geraudnt/rosarl}{github.com/geraudnt/rosarl}
% Code\footnote{
% Supplementary code: \href{https://github.com/geraudnt/rosarl}{https://github.com/geraudnt/rosarl}
% }
]

\begin{figure}[h!]
    \centering
    \includegraphics[width=\linewidth]{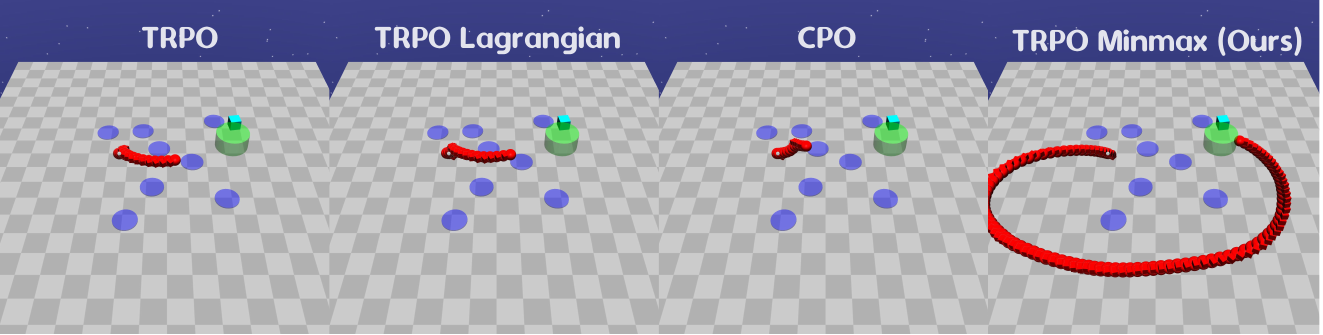}
    \caption{Example trajectories of prior work---TRPO \citep{schulman2015trust} (left-most), TRPO-Lagrangian \citep{Ray2019} (midle-left), CPO \citep{achiam2017constrained} (middle-right)---compared to ours (right-most). For each, a point mass agent learns to reach a goal location (green cylinder) while avoiding unsafe regions (blue circles). The cyan block is a randomly placed movable obstacle.}
    \label{fig:trajectory_safe}
\end{figure}%

\end{abstract}

\section{Introduction}

\looseness=-1
Reinforcement learning (RL) has recently achieved success across a variety of domains, such as video games \citep{shao2019survey}, robotics  \citep{kalashnikov2018scalable,kahn2018self} and autonomous driving \citep{kiran2021deep}. 
However, if we hope to deploy RL in the real world, agents must be capable of completing tasks while avoiding unsafe or costly behaviour. 
For example, a navigating robot must avoid colliding with objects and actors around it, while simultaneously learning to solve the required task. An example of this desired behaviour is illustrated in Figure~\ref{fig:trajectory_safe}.

Many approaches in RL deal with this problem by allocating arbitrary penalties to unsafe states when hand-crafting a reward function. However, the problem of specifying a reward function for desirable, safe behaviour is notoriously difficult \citep{amodei2016concrete}. \emph{Importantly, penalties that are too small will result in unsafe behaviour, while penalties that are too large may result in increased learning times. }
Furthermore, these rewards must be specified by an expert for each new environment an agent faces.
If our aim is to design truly autonomous, general agents, it is then simply impractical to require that a human designer specify penalties to guarantee optimal but safe behaviours for every task. 

When safety is an explicit goal, a common approach is to constrain policy learning according to some threshold on cumulative cost \citep{schulman2015trust,Ray2019,achiam2017constrained}. While effective, these approaches require the design of a cost function whose specification can be as challenging as designing a reward function. Additionally, these methods may still result in unacceptably frequent constraint violations in practice due to the large cost threshold typically used.

\looseness=-1
% Rather than attempting to minimise a cost function (which is only a proxy for unsafe behaviour), it may be prudent to encapsulate the unsafe behaviour in the reward function directly, such that the optimal agent behaviour is to minimise the probability of reaching unsafe states. 
Rather than attempting to both maximise a reward function and minimise a cost function, which requires specifying both rewards and costs and a new learning objective, we should simply aim to have a better reward function---since we then do not have to specify yet another scalar signal nor change the learning objective. 
This approach is consistent with the \textit{reward hypothesis} \citep{sutton2018reinforcement} which states:
\textcolor{red}{
\say{
All of what we mean by goals and purposes can be well thought of as maximisation of the expected value of the cumulative sum of a received scalar signal (reward).
}
}
Therefore, the question we examine in this work is how to determine the smallest penalty assigned to unsafe states such that the probability of reaching them is minimised by an optimal policy.
Rather than requiring an expert's input, we show that this penalty can be defined by the \textit{controllability} and \textit{diameter} of an environment, and in practice can be autonomously estimated by an agent using its current value estimates. 
% and provide a practical method for estimating this penalty. 
More specifically, we make the following contributions:
\begin{enumerate*}[label=(\roman*)]
    \item \textbf{The Minmax penalty}: We obtain the analytical form of the penalty and prove that its use results in learned behaviours that minimise the probability of visiting unsafe states;
    \item \textbf{Estimating the Minmax penalty}: We propose a simple model-free algorithm for estimating this penalty online, which can be integrated into any RL pipeline that learns value functions; 
    \item \textbf{Experiments}: We empirically demonstrate that our method outperforms existing baselines in high-dimensional, continuous control tasks, learning to complete tasks while minimising unsafe behaviour.
\end{enumerate*}
    Our results demonstrate that, while prior methods often violate safety constraints, our approach results in agents capable of learning to solve tasks while avoiding unsafe states.

\section{Background}

We consider the typical RL setting where the task faced by an agent is modelled by a Markov Decision Process (MDP). An MDP is defined as a tuple $\langle \state, \action, P, \reward \rangle$, where $\state$ is the set of states, $\action$ is the set of actions, $P:\state\times\action\times\state\to[0~1]$ is the transition function, and $\reward:\state\times\action\times\state\to\mathbb{R}$ is a reward function bounded by $[\rmin ~ \rmax]$. Our particular focus is on undiscounted MDPs that model stochastic shortest path problems in which an agent must reach some goal states in the non-empty set of absorbing states $\goals \subset \state$ \citep{bertsekas1991analysis} while avoiding unsafe absorbing states $\unsafegoals \subset \goals$. The non-empty set of non-absorbing states $\state \setminus \goals$ are referred to as \textit{internal states}. 

A \textit{policy} $\pi:\state\to\action$ is a mapping from states to actions. The \textit{value function} $V^{\pi}(s) = \mathbb{E}[\sum_{t=0}^{\infty}\reward(s_t,a_t,s_{t+1})]$ associated with a policy specifies the expected return under that policy starting from state $s$. The goal of an agent in this setting is to learn an optimal policy $\pi^{*}$ that maximises the value function $V^{\pi^{*}}(s) = V^{*}(s) = \max_\pi V^{\pi}(s) $ for each $s \in \state$. Since tasks are undiscounted, an optimal policy is guaranteed to exist by assuming that the value function of \textit{improper policies} is unbounded from below---where \textit{proper policies} are those that are guaranteed to reach an absorbing state \citep{van2019composing}. Since there always exists a deterministic optimal policy \citep{sutton1998introduction}, and all optimal policies are proper, we will focus our attention to the set of all deterministic proper policies $\Pi$.

% In this work, we rely on the definition of a \textit{proper policy} as a policy that is guaranteed to reach an absorbing state in $\mathcal{G}$ \citep{james2006analysis,van2019composing}.

% \columnbreak
\section{Avoiding Unsafe Absorbing States} \label{sec:idea}

% \begin{figure}[t!]
%     \centering
%     \begin{adjustbox}{width=\textwidth}
%     \input{chainwalk.tex}
%     \end{adjustbox}
%     \caption{Illustration of a simple chain-walk MDP. The MDP consists of four states $s_0, \ldots, \sgood$ and two actions $a_1, a_2$. $\sbad$, coloured in red, is an unsafe absorbing state, while $\sgood$ (green) is a safe absorbing state. The initial state is $s_0$, and the diagram denotes the transitions and associated probabilities when executing actions. The absorbing transitions have reward of $0$ while all other transitions have a reward of $-1$.}
%     \label{fig:chainwalk}
% \end{figure}

\begin{figure*}[b!]
    \centering
    \begin{subfigure}[t]{0.35\textwidth}
        \centering
        \begin{adjustbox}{width=\linewidth}
        \begin{tikzpicture}[>=stealth',shorten >=1pt,auto,node distance=4cm, line width=0.4mm, font = \Large]
  \node[state,accepting,minimum size=1.2cm,fill=goodgoalcolor] (s3)      {$s_3$};
  \node[state, minimum size=1.2cm]         (s2) [right of=s3]  {$s_2$};
  \node[state,minimum size=1.2cm]         (s0) [below=2.3cm of s2] {$s_0$};
  \node[state,accepting,minimum size=1.2cm,fill=badgoalcolor]         (s1) [left of=s0] {$s_1$};

  \path[->]          (s3)  edge [loop left] node {$a_1 / a_2$} (s3);

  \draw[->] (s2) to node[above] {$a_1 / a_2$} node[below] {w.p. $(1-p)$} (s3);

  \path[->]          (s2)  edge [loop right] node {w.p. $p$} (s2);
  
  \draw[->] (s0) to[align=center, bend left] node[left] {$a_1$ w.p. $(1-p)$} (s2);
  \draw[->] (s0) to[align=center, bend right] node[right] {$a_2$ \\ w.p. $p$} (s2);

  \draw[->] (s0) to[align=center,bend right] node[above] {$a_1$} node[below] {w.p. $p$} (s1);
  \draw[->] (s0) to[align=center,bend left] node[above] {$a_2$} node[below] {w.p. $(1-p)$} (s1);

  \path[->]          (s1)  edge [loop left] node {$a_1 / a_2$} (s1);

\end{tikzpicture}
        \end{adjustbox}
        \caption{Chain-walk MDP}
        \label{fig:chainwalkmdp}
    \end{subfigure}%
    ~
    \begin{subfigure}[t]{0.21\textwidth}
        \centering
        \includegraphics[width=\linewidth]{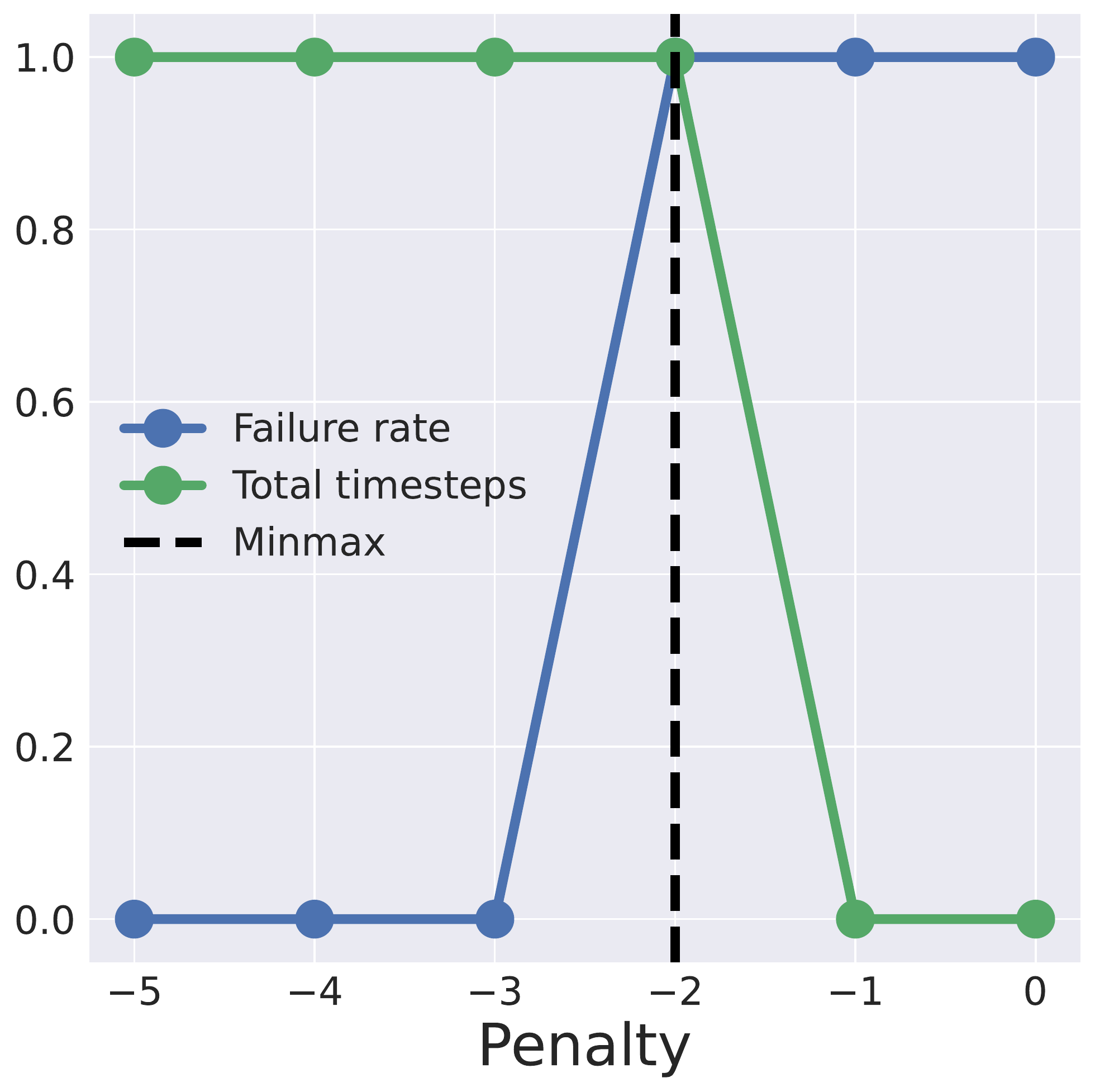}
        \caption{$p=0$}
        \label{fig:p0}
    \end{subfigure}%
    % ~
    \centering
    \begin{subfigure}[t]{0.21\textwidth}
        \centering
        \includegraphics[width=\linewidth]{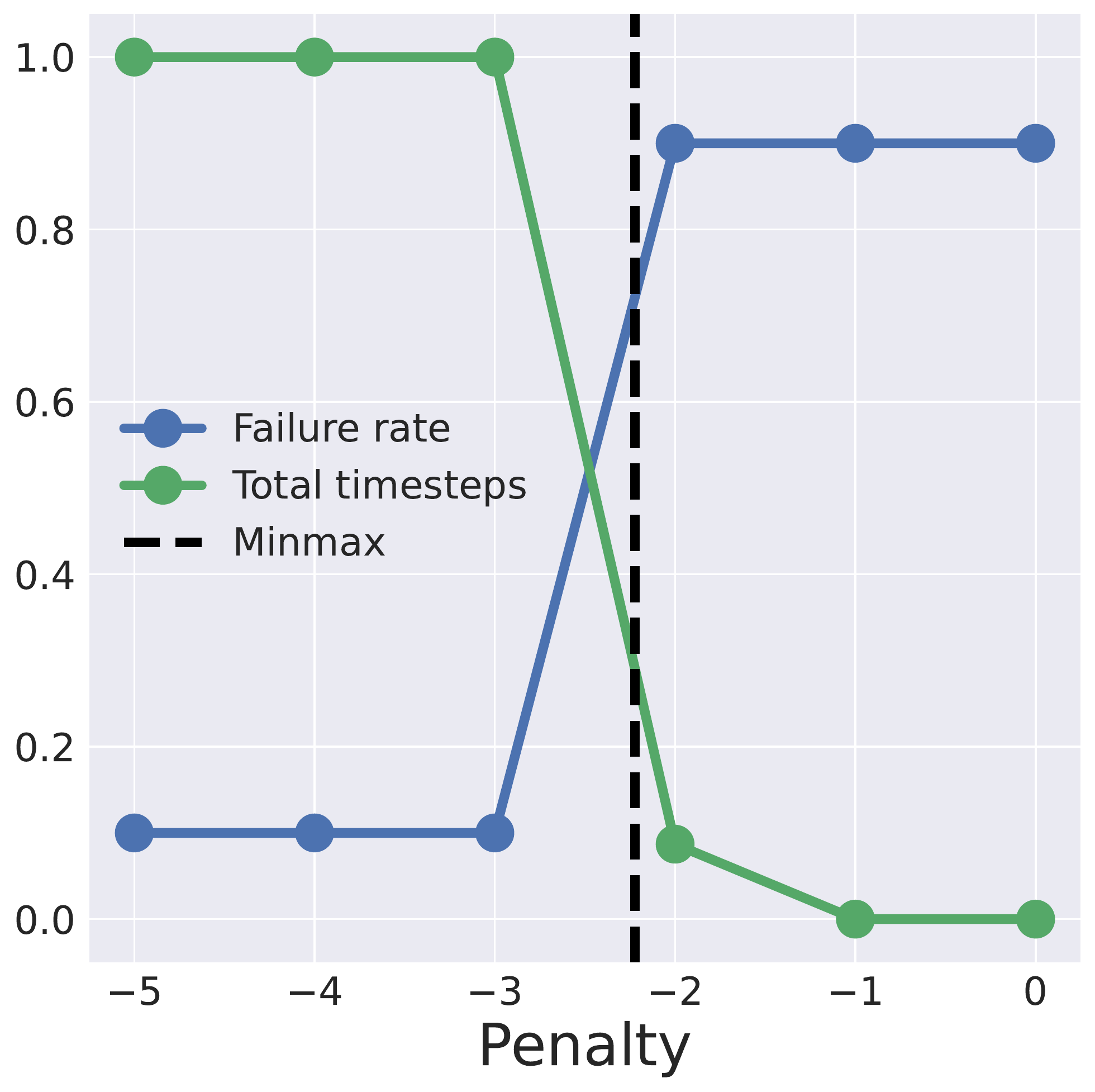}
        \caption{$p=0.1$}
        \label{fig:p0.1}
    \end{subfigure}%
    % ~
    \centering
    \begin{subfigure}[t]{0.21\textwidth}
        \centering
        \includegraphics[width=\linewidth]{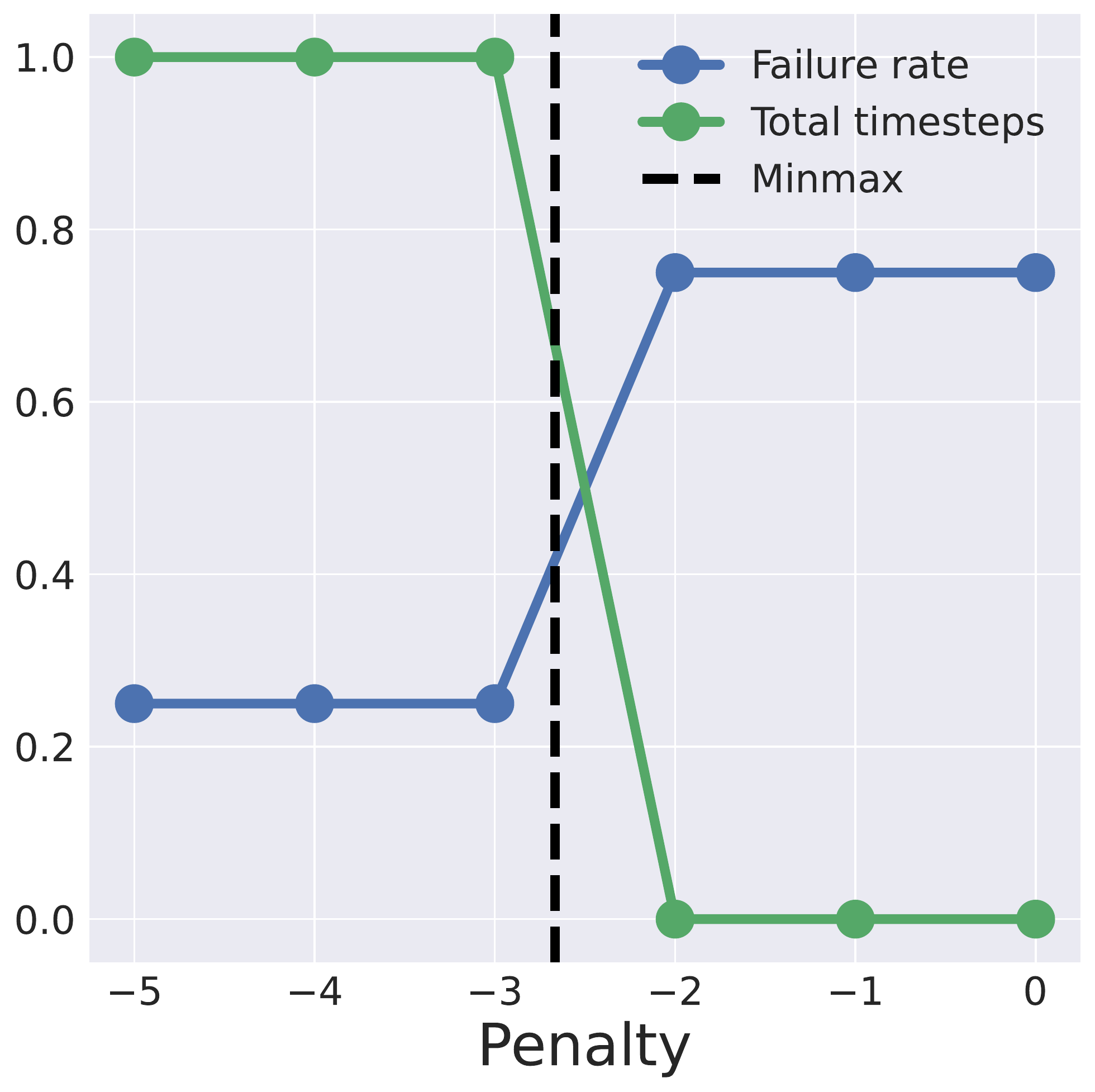}
        \caption{ $p=0.25$}
        \label{fig:p0.25}
    \end{subfigure}%
    \caption{A simple chain-walk MDP (a), and the effect of different choices of penalties (b--d) on the failure rate of optimal policies and the total timesteps needed to learn them (using value iteration \protect\citep{sutton1998introduction}) in it. The total-timesteps are normalised to lie in $[0~1]$, and the Minmax penalty is as per Definition \protect\ref{def:penalty}.}
    \label{fig:chainwalk}
\end{figure*}%

Given an MDP, our goal is to determine the smallest penalty (and hence the largest reward) to use as feedback in unsafe states to obtain safe policies.
We formally define a safe policy as one that minimises the probability of reaching any unsafe terminal state from any internal state:

\begin{definition}[Safe Policy]
Where $s_{T}$ is the final state of a trajectory and $\unsafegoals \subset \goals$ is the non-empty set of unsafe absorbing states, let $P^{\pi}_{s}(s_{T} \in \unsafegoals)$ be the probability of reaching $\unsafegoals$ from $s$ under a proper policy $\pi \in \Pi$.
Then $\pi$ is called \textit{safe} if
\[
\pi \in \argmin\limits_{\pi' \in \Pi} P^{\pi'}_{s}(s_{T} \in \unsafegoals) \quad \textit{for all } s \in \state.
\]
\label{def:mastery}
\end{definition}

\looseness=-1
To illustrate the difficulty in designing reward functions for safe behaviour, consider the simple \textit{chain-walk} MDP in Figure \ref{fig:chainwalkmdp}. 
% We will be using this domain as a running example.
The MDP consists of four states $s_0, \ldots, s_3$, the agent has two actions $a_1, a_2$, the initial state is $s_0$, and the diagram denotes the transitions and associated probabilities when executing actions. The absorbing transitions have a reward of $0$ while all other transitions have a reward of $-1$ and the agent must reach the goal state $\sgood$, but not the unsafe state $\sbad$. 

Since the transitions per action are stochastic, controlled by $p\in[0~1]$, and $\sgood$ is further from the start state $s_0$ than $\sbad$, the agent may not always be able to avoid $\sbad$. In fact, for $p = 0 $ and $-1$ reward for transitions into $\sbad$, the optimal policy is to always pick $a_2$ which always reaches $\sbad$. However, for a sufficiently high penalty for reaching $\sbad$ (any penalty higher than $-2$), the optimal policy is to always pick action $a_1$, which always reaches $\sgood$.
% Finally, the larger $p$ is, clearly the larger the penalty should be so that the optimal policy is not to go to $\sbad$.
Finally, observe that as $p$, the probability that the agent can transition from $s_2$ to $\sgood$ decreases. Therefore, the penalty for $\sbad$ must be sufficiently large to ensure an optimal policy will not simply reach $\sbad$ to avoid self-transitions in $s_2$.
% Finally, the larger $p$ is, the longer the agent is expected to get stuck in $s_2$ if it reaches that state and therefore the penalty for $\sbad$ must be larger so that it is not optimal to go to $\sbad$ to avoid spending a long time in $s_2$.
To capture this relationship between the stochasticity of an MDP and the required penalty to obtain safe policies, 
% we first define a measure of the controllability of an environment as follows:
we introduce the notion of \textit{controllability}, which measures how much of an effect actions have on the transition dynamics.

\begin{definition}[Controllability]
\label{def:controllability}
Define the degree of controllability of the environment with
\[
C = \min_{\substack{\pi_1\not=\pi_2 \in \Pi}} \max_{\substack{s \in \state}} |\Delta P_{s}(\pi_1,\pi_2)|,
\]
where $\Delta P_{s}(\pi_1,\pi_2) \coloneqq P^{\pi_1}_{s}(s_{T} \not\in \unsafegoals) - P^{\pi_2}_{s}(s_{T} \not\in \unsafegoals) $.
\end{definition}

\looseness=-1
$C$ measures the degree of controllability of the environment by simply taking the smallest difference between the probability of reaching a safe goal state by following any two policies. For example, using deterministic policies, a deterministic task will have $C=1$ , and $C$ will tend to $0$ as the task's stochasticity increases.
% by computing $C$ with deterministic policies, we can see that a controllable deterministic task will have $C=1$ and that $C$ will tend to $0$ as the task's stochasticity increases.
At $C=0$, the MDP becomes a Markov reward process since the agent's actions no longer affect its transition probabilities.
This can be seen in Figure \ref{fig:controllability} for the chain-walk MDP with different choices for $p$. Since actions in $s_2$ do not affect the transition probability, there are only $2$ relevant deterministic proper policies $\pi_1(s)=a_1$ and $\pi_2(s)=a_2$. Here, $C=1$ when $p=0$ because the task is deterministic. $C=0$ when $p=0.5$ because the dynamics are uniformly random, and $C=0$ when $p=1$ because the agent can never reach $\sgood$ from $s_2$ even though the dynamics are deterministic.

% Since we are interested in controllable MDPs, we assume that there exist at least two policies $\pi_1$ and $\pi_2$ such that $P^{\pi_1}_{s}(s_{T} \not\in \unsafegoals) - P^{\pi_2}_{s}(s_{T} \not\in \unsafegoals) > 0$ for at least one internal state.
% In the chain-walk MDP, these are the always $a_1$ and always $a_2$ policies, which are the only deterministic proper policies. 

% \begin{figure}[h!]
%     \centering
%     \includegraphics[width=\linewidth]{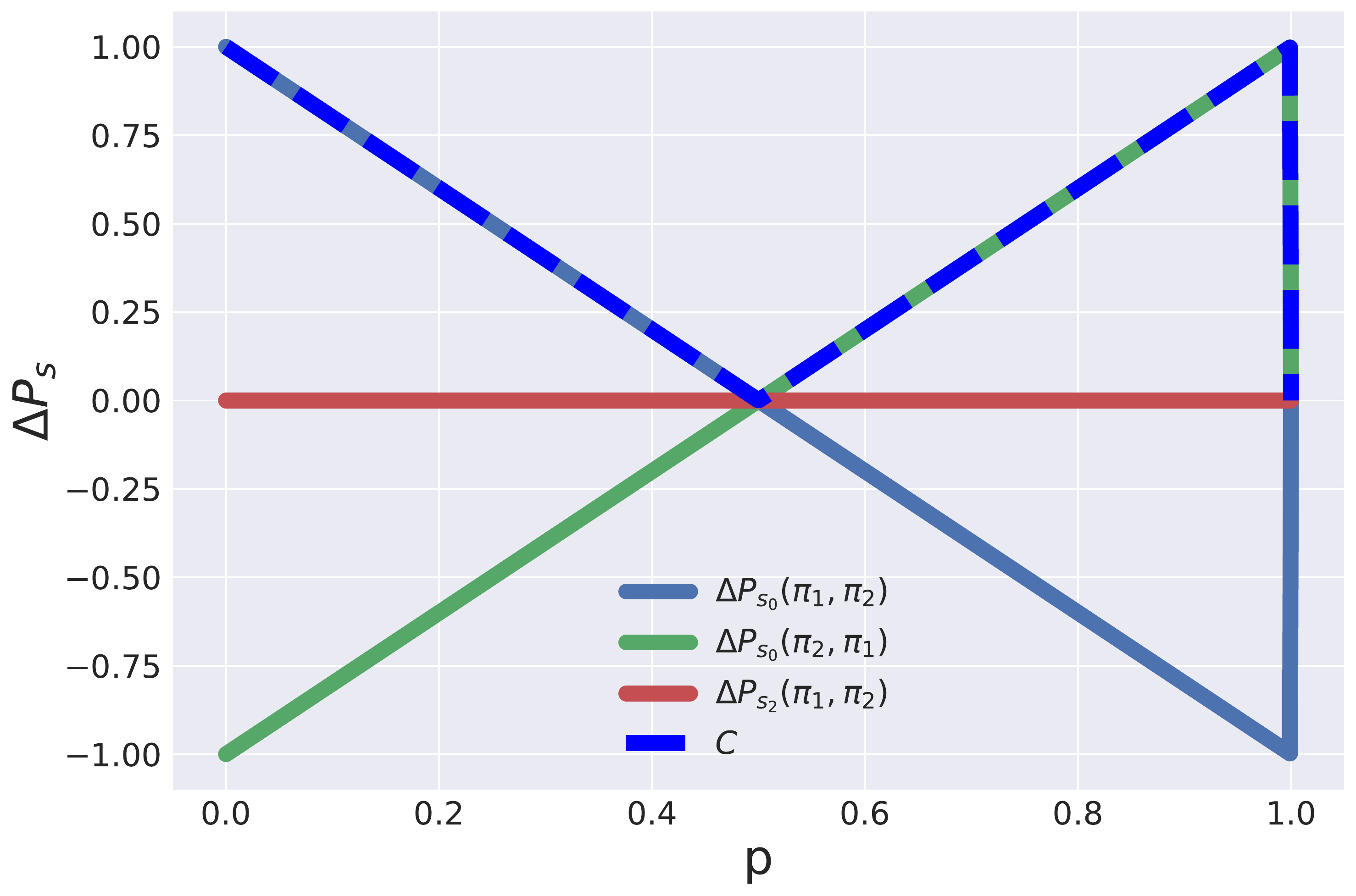}
%     \caption{Controllability of the chain-walk MDP for varying $p$. There are only $2$ deterministic proper policies $\pi_1(s)=a_1,\pi_2(s)=a_2$, giving $\Delta P_{s_0}(\pi_1,\pi_2) = (p-1)(p-1)-p(p-1)$ and $\Delta P_{s_2}(\pi_1,\pi_2) = (p-1)$.}
%     \label{fig:controllability}
% \end{figure}%

\begin{figure*}[b!]
    \centering
    \begin{subfigure}[t]{0.3\textwidth}
        \centering
        \includegraphics[width=\linewidth]{images/controllability.pdf}
        \caption{Controllability of the chain-walk MDP. Here,  $\pi_1(s)=a_1$ and $\pi_2(s)=a_2$ giving $\Delta P_{s_0}(\pi_1,\pi_2)$ $= 1-2p$, $\Delta P_{s_0}(\pi_2,\pi_1)$ $= 2p-1$ and $\Delta P_{s_2}(\pi_1,\pi_2)$ $= \Delta P_{s_2}(\pi_2,\pi_1) = 0$.}
        \label{fig:controllability}
    \end{subfigure}%
    \quad
    \centering
    \begin{subfigure}[t]{0.3\textwidth}
        \centering
        \includegraphics[width=\linewidth]{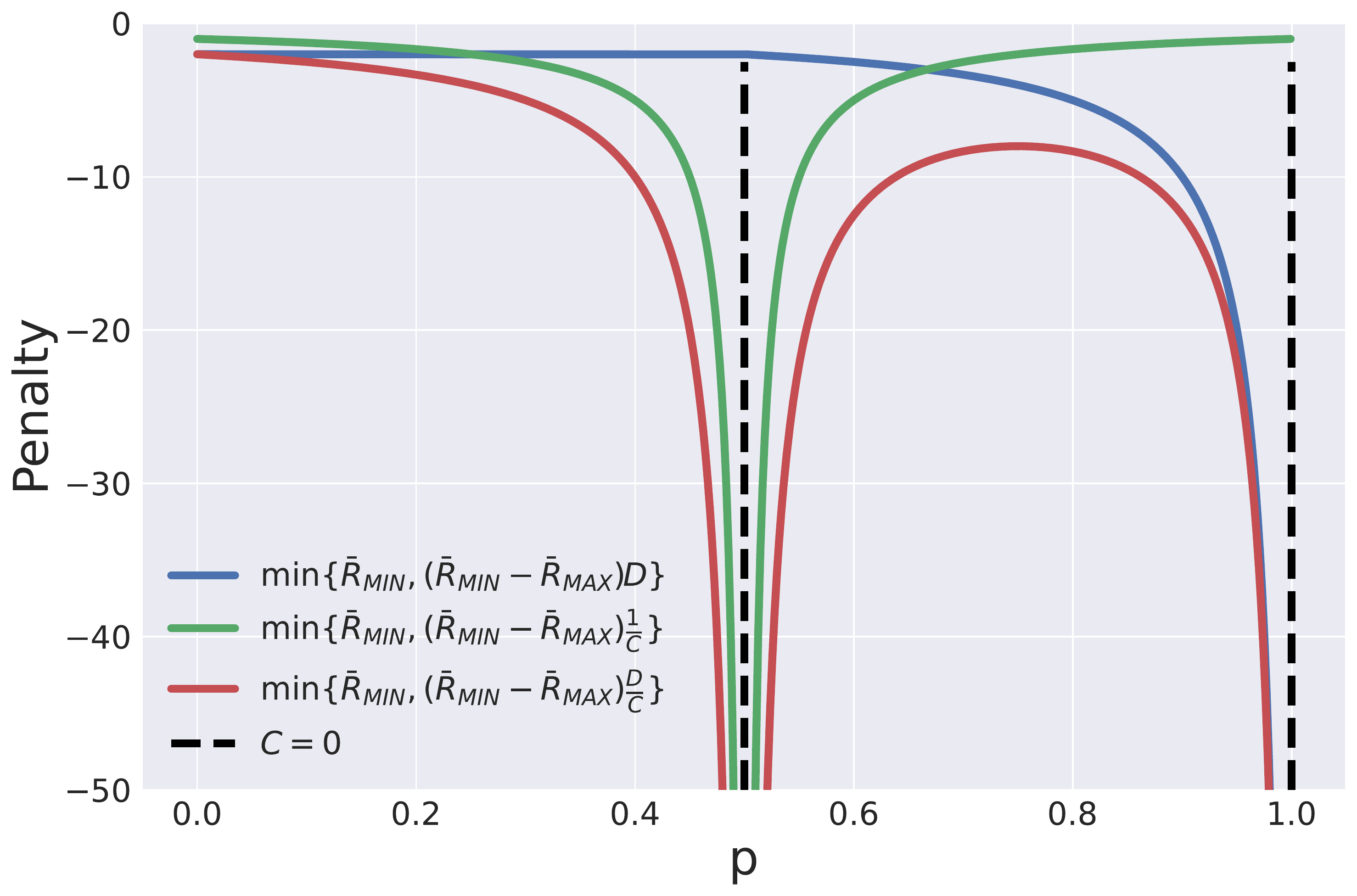}
        \caption{The Minmax penalty (red) compared to accounting for the diameter $D$ and controllability $C$ only (blue and green respectively) in the chain-walk MDP for varying $p$. $p=0.5$ and $p=1$ are excluded since their corresponding controllability is $0$. 
        % They respectively lead to a completely stochastic MDP and one with a zero probability of reaching the safe goal $\sgood$. 
        }
        \label{fig:penalty}
    \end{subfigure}%
    \quad
    \centering
    \begin{subfigure}[t]{0.3\textwidth}
        \centering
        \includegraphics[width=0.9\linewidth]{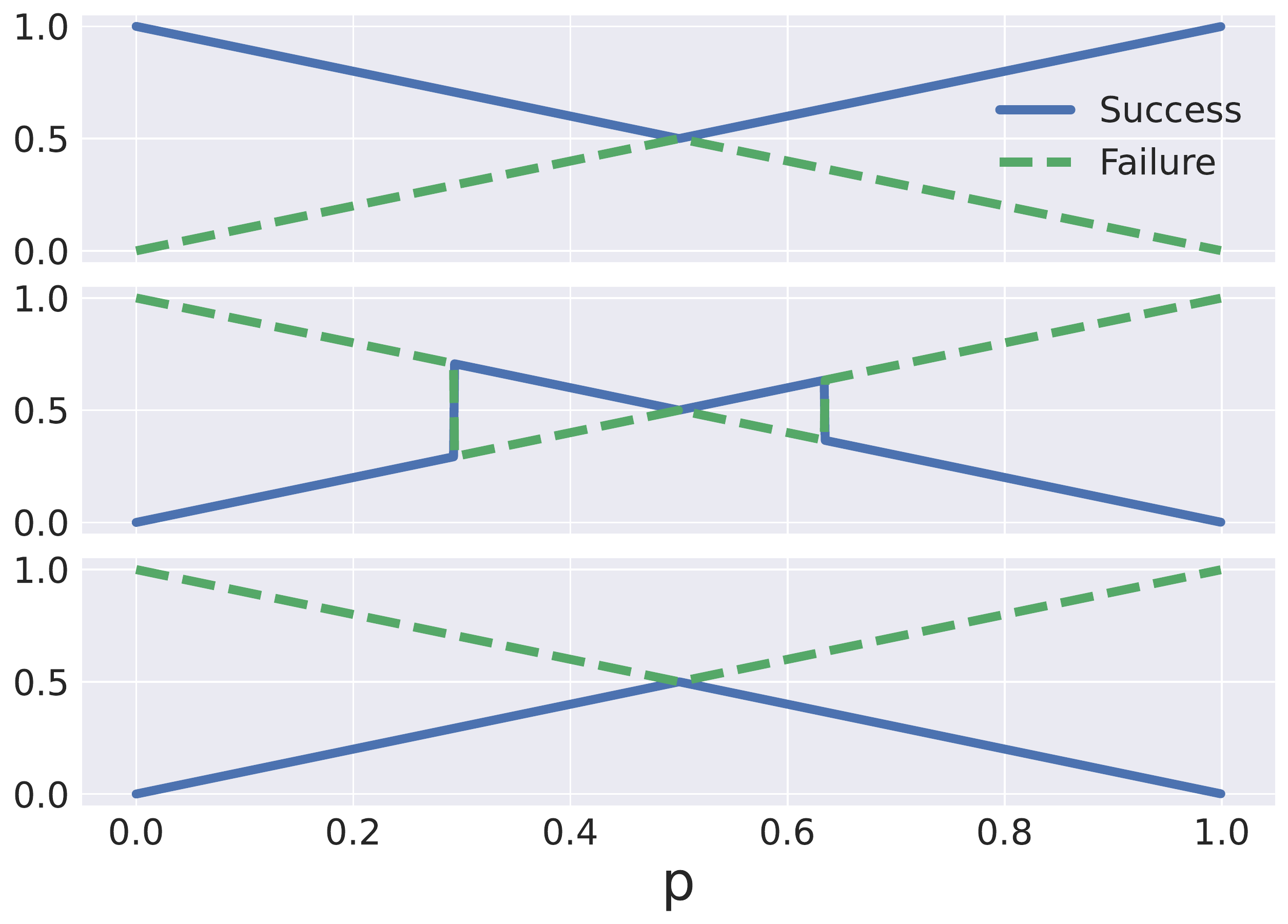}
        \caption{ Success rate of optimal policies in the chain-walk MDP for different unsafe state penalties: \textbf{(top)} $\rbarmin - \epsilon$, \textbf{(middle)} $\min \{\rmin, (\rmin - $$\rmax)\frac{1}{C} \}$, \textbf{(bottom)} $\min \{\rmin, (\rmin - \rmax)D \}$. 
        % $p=0.5$ and $p=1$ are excluded since their corresponding controllability is $0$.
        }
        \label{fig:success}
    \end{subfigure}%
    \caption{Effect of stochasticity ($p$) on the controllability ($C$), Minmax penalty ($\rbar_{MIN}$), and success rate of optimal policies in the chain-walk MDP. The diameter and optimal policies for varying $p$ are obtained via value iteration \protect\citep{sutton1998introduction}. We use $\epsilon=10^{-10}$ in (c).  }
    \label{fig:stochasticity}
\end{figure*}%

Clearly, the size of the penalty that needs to be given for unsafe states also depends on the \textit{size} of the MDP. We define this size as the \textit{diameter} of the MDP, which is the highest number of expected timesteps required to reach an absorbing state from an internal state:

\begin{definition}[Diameter]
\label{def:diameter}
Define the diameter of an MDP as 
\[
D = \max_{s \in \state} \max_{\pi \in \Pi} \E \left[ T(s_T \in \goals | \pi) \right],
\]
where $T$ is the number of timesteps taken to first reach $\goals$ from $s$ under a proper policy $\pi$.
\end{definition}

Given the controllability and diameter of an MDP, we can now obtain the upper bound of rewards which minimise the probability of reaching unsafe states. We call this upper bound the \textit{Minmax penalty}:

\begin{definition}[Minmax Penalty]
\label{def:penalty}
Define the Minmax penalty for a controllable MDP ($C>0$) as
$$\rbarmin = \min \{\rmin, (\rmin - \rmax)\frac{D}{C} \}.$$
\end{definition}

This choice of $\rbarmin$ says that since stochastic shortest path tasks require an agent to learn to achieve desired terminal states, if the agent enters an unsafe terminal state, it should receive the largest penalty possible by a proper policy.
For example, Figure \ref{fig:penalty} shows the Minmax penalty for varying choices of $p$.
We now show in Theorem \ref{theorem:safety} that $\rbarmin$ is indeed the upper bound of rewards that minimises the probability of reaching unsafe terminal states.

% \begin{figure}[h!]
%     \centering
%     \includegraphics[width=\linewidth]{images/penalty.pdf}
%     \caption{Minmax penalty for the chain-walk MDP for varying $p$. $p=0.5$ and $p=1$ are excluded since their corresponding controllability is $0$. They respectively lead to a completely stochastic MDP and one with a zero probability of reaching the safe goal $\sgood$. }
%     \label{fig:penalty}
% \end{figure}%

\begin{theorem}[Safety Bounds]
\label{theorem:safety}
% Let $\pistar$ be the optimal policy for an MDP whose rewards at unsafe goal states are bounded above by $\rbarmin$.
Consider a controllable MDP $\langle \state, \action, P, \reward \rangle$ with a non-empty set of unsafe goal states $\unsafegoals \subset \goals$. Let $\pistar$ be an optimal policy for the modified MDP with possibly different rewards in $\unsafegoals$: $\langle \state, \action, P, \rbar \rangle$ where $\rbar(s,a,s')=\reward(s,a,s')$ for all $s'\not\in\unsafegoals$.
\begin{enumerate}[label=(\roman*)]
  \item If  $\rbar(s,a,s') < \rbarmin$ ~ for all $s'\in\unsafegoals$, then $\pistar$ is safe for all $R$;
  \item If  $\rbar(s,a,s') > \rbarmin$ ~ for all $s'\in\unsafegoals$, then $\pistar$ is unsafe for all $R$.
\end{enumerate}
  
\end{theorem}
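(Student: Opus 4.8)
The plan is to prove both parts by a value-comparison argument carried out at a worst-case state, exploiting that once the rewards on transitions into $\unsafegoals$ are fixed to some penalty $\rho$, the task reward $R$ can only vary the rewards on transitions into $\state\setminus\unsafegoals$, all of which lie in $[\rmin,\rmax]$. For part (i), fix an arbitrary $R$ and suppose for contradiction that the optimal policy $\pistar$ of $\langle\state,\action,P,\rbar\rangle$ is not safe. Pick a safe policy $\pi^{\dagger}\in\Pi$ (the $\argmin$ in Definition~\ref{def:mastery} is attained). Since $\pistar$ is unsafe, $\pistar\neq\pi^{\dagger}$, so Definition~\ref{def:controllability} supplies a state $s^{\star}$ with $|\Delta P_{s^{\star}}(\pistar,\pi^{\dagger})|\geq C$; as $\pi^{\dagger}$ minimises the unsafe-reaching probability everywhere, the gap is signed, giving $P^{\pistar}_{s^{\star}}(s_T\in\unsafegoals)-P^{\pi^{\dagger}}_{s^{\star}}(s_T\in\unsafegoals)\geq C$.

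I would then lower-bound $V^{\pi^{\dagger}}(s^{\star})-V^{\pistar}(s^{\star})$ by decomposing each return into the reward accumulated on internal transitions and the reward on the single absorbing transition. The penalty $\rho$ acts only through the latter, and only on the trajectories that reach $\unsafegoals$, so the controllability gap makes the terminal term favour $\pi^{\dagger}$ in proportion to $C|\rho|$, whereas the diameter (Definition~\ref{def:diameter}) caps the internal-reward deficit that $\pi^{\dagger}$ can incur by taking a longer route at $(\rmax-\rmin)D$. Balancing these two effects in the worst case over $R$ (safe-goal reward pushed down to its least favourable value) yields precisely the threshold $\rbarmin=\min\{\rmin,(\rmin-\rmax)D/C\}$ of Definition~\ref{def:penalty}: for $\rho<\rbarmin$ the difference is strictly positive, so $V^{\pi^{\dagger}}(s^{\star})>V^{\pistar}(s^{\star})$, contradicting the optimality of $\pistar$. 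As $R$ was arbitrary, $\pistar$ is safe for all $R$.

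For part (ii) I would re-run the comparison in the opposite direction at the state $s^{\star}$ that simultaneously attains the diameter $D$ and the controllability gap $C$, aiming to show that when $\rho>\rbarmin$ every optimal policy places strictly more unsafe-reaching probability at $s^{\star}$ than a safe policy does, and hence fails Definition~\ref{def:mastery} and is unsafe. To keep the conclusion free of $R$, I would sandwich $V^{\pistar}$ between the value functions of the extremal tasks $\rbig$ and $\rsmall$ and compare, at $s^{\star}$, the return of the cheapest route into $\unsafegoals$ with that of the best admissible safe route: the safe route forgoes up to $(\rmax-\rmin)D$ of internal reward to buy only $C$ of reduced unsafe probability, so once $\rho$ passes the break-even point $\rbarmin$ the unsafe route is preferred.

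The step I expect to be the main obstacle is exactly this uniformity over $R$ in part (ii). The break-even comparison is tight only when $R$ makes the safe goal maximally unattractive, which is the reward configuration that defines $\rbarmin$, whereas an $R$ that places large reward on the safe goal can keep the optimal policy safe even for $\rho>\rbarmin$. Proving the \emph{universal} form therefore hinges on showing that, at $s^{\star}$, the unfavourable extremal bound (the $\rsmall$ side of the sandwich) is the one that governs the optimal action for every $R$, rather than merely producing a single adversarial $R$; pinning down this $R$-independent preference, and verifying that the same $s^{\star}$ simultaneously realises $C$, $D$, and this worst-case terminal reward, is the delicate crux on which the statement as worded rests.
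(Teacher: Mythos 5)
Your plan for part (i) is essentially the paper's own proof: assume $\pistar$ unsafe, take a comparator policy with strictly larger safe-reaching probability, move to the state maximising $\Delta P$ (where the signed gap is at least $C$), decompose each value into the internal return $G^{T-1}$ plus the terminal reward weighted by the reaching probabilities, and play the terminal advantage (proportional to the gap times the penalty, hence at least $C$ times it) against the internal deficit, which Definition~\ref{def:diameter} bounds because $\rmin D \le \E_s^{\pi}[G^{T-1}] \le \rmax D$ for every proper $\pi$. The paper runs exactly this chain and lands on $\E_s^{\pistar}[G^{T-1}] > \rmax D$, the same contradiction you describe, so for (i) you coincide with the paper.

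The obstacle you flag in part (ii) is not a loose end you failed to tie off; it is fatal to the statement as quantified, and your suspicion that ``an $R$ that places large reward on the safe goal can keep the optimal policy safe even for $\rho>\rbarmin$'' is simply correct. Concretely: take a deterministic MDP with internal states $s_0,s_1$, safe goal $g$, unsafe goal $u$, where from each internal state one action steps along $s_0\to s_1\to g$ and the other jumps directly to $u$; let internal transitions reward $-1$ and the transition into $g$ reward $0$, so $\rmin=-1$, $\rmax=0$, $C=1$, $D=2$ and $\rbarmin=\min\{-1,-2\}=-2$. With $\rbar=-\tfrac{3}{2}>\rbarmin$ on transitions into $u$, the optimal policy still goes to $g$ from every state (value $-1$ from $s_0$ versus $-\tfrac{3}{2}$), i.e.\ $\pistar$ is safe, contradicting (ii) as written. (Even the paper's own chain-walk at $p=0$ behaves this way: $\rbarmin=-2$, yet any penalty below $-1$ already makes the safe policy optimal.) What survives is only the existential/tightness version: for $\rho>\rbarmin$ there \emph{exists} an adversarial $R$ — safe goals reachable only along routes of expected length realising $D$ with internal rewards at $\rmin$, probability gap exactly $C$ — under which $\pistar$ is unsafe, which is what your break-even calculation actually delivers. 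You should also know that the paper's appendix ``proves'' the universal form of (ii) only via a sign error: having assumed $\pistar$ safe, the quantity $P^{\pistar}_{s}(s_{T}\in\unsafegoals)-P^{\pi}_{s}(s_{T}\in\unsafegoals)$ is nonpositive, yet the step labelled ``by definition of $C$'' replaces $\bigl(P^{\pistar}_{s}(s_{T}\in\unsafegoals)-P^{\pi}_{s}(s_{T}\in\unsafegoals)\bigr)(\rmax-\rmin)\frac{D}{C}$, a nonpositive term, by the nonnegative $(\rmax-\rmin)D$ on the smaller side of a strict inequality — it uses $|\Delta P_s|\ge C$ while silently dropping the sign. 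So do not spend effort trying to manufacture the $R$-independent preference at a single $s^\star$ (a state simultaneously realising $C$ and $D$ need not exist anyway); prove the existential version, and note explicitly that the theorem's ``unsafe for all $R$'' should read ``unsafe for some $R$''.
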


\looseness=-1
Theorem \ref{theorem:safety} says that for any MDP whose rewards at unsafe goal states are bounded above by $\rbarmin$, the optimal policy minimises the probability of reaching those states.
Interestingly, this is a strict upper bound, and any reward higher than it will have optimal policies that do not minimise the probability of reaching unsafe states.
Hence any penalty $\rbarmin - \epsilon$, where $\epsilon>0$ can be arbitrarily small, will guarantee safe optimal policies.
This is demonstrated in Figure~\ref{fig:success} which shows that the Minmax penalty (the top plot) maximises the success rate of the resulting optimal policies. 
It also demonstrates why taking into account both the diameter and controllability of an MDP in calculating the Minmax penalty is necessary, because either one alone does not always maximise the success rate.
In fact, using the diameter alone is the worst choice for the chain-walk MDP (the bottom plot).
Interestingly, and perhaps unexpectedly, we notice that 
the small difference between the inverse controllability ($1/C$) and the Minmax penalty when $p$ is close to $0.5$ is sufficient to learn safe optimal policies (the middle plot); however, the even smaller difference between the diameter and the Minmax penalty when $p$ is close to $1$ results in unsafe policies. This suggests that the controllability of an MDP may be the most critical factor to consider when designing safe agents, at least compared to the diameter.

\section{Estimating the Minmax Penalty}\label{LearningMinMax}

The Minmax penalty of an MDP can be inferred by estimating its controllability and diameter. However, estimating the controllability requires knowledge of the termination probabilities of all proper policies over all states. Similarly, estimating the diameter requires involves the expected timesteps to termination of all proper policies over all states. These are clearly difficult quantities to estimate from experience. While they can be determined using dynamic programming (in the case of known dynamics), it is still impractical to do so for larger state and policy spaces.
This is further complicated by the need to also learn the true optimal policy for the estimated Minmax penalty.
% as we did for the chain-walk example in Figure \ref{fig:stochasticity}

% We can do these straightforwardly using dynamic programming (if we know the dynamics) as we did for the chain-walk example (Figure \ref{fig:stochasticity}), but even then it is still impractical the larger the state and policy space get. This is without even considering the fact that we also need to learn the actual optimal policy for the estimated Minmax penalty.

\looseness=-1
Given the above challenges, we require a practical method for estimating the Minmax penalty. Ideally, this method should require no knowledge of the environment dynamics, and easily integrate with existing RL approaches.
We therefore propose an approach that makes use of the value function being learned by any off-the-shelf model-free RL algorithm. 
% Despite the issues mentioned above, we still want an easy way to estimate the Minmax penalty. Ideally in a model-free manner using the current value function we are learning, so that it is easily applicable to any RL algorithm that learns a value function. 
To achieve this, we first note that $(\rmin - \rmax)\frac{D}{C} = (D \rmin - D \rmax)\frac{1}{C} = (V_{MIN} - V_{MAX})\frac{1}{C}$. 
Hence, in practice, we need only estimate $V_{MIN}$ and $V_{MAX}$, since $C=1$ if safe goals are reachable and the environment is deterministic. If the environment is not deterministic, then the agent will still slowly increase its penalty every time it accidentally enters an unsafe state (as the value function estimate gets more negative or less positive). We demonstrate this in the \lava experiments in the next section (Figure \ref{fig:exp_2_penalty}).

\looseness=-1
The Minmax penalty can easily be estimated using observations of the reward and an agent's estimate of the value function, as shown in Algorithm \ref{alg:minmax}. This method requires initial estimates of $\rmin$ and $\rmax$, which in this work are initialised to $0$. 
The agent receives a reward after each environment interaction and updates the estimate of the Minmax penalty $\rbarmin$ as $\rbarmin=\min(\rmin,\vmin-\vmax)$. Whenever an agent encounters an unsafe state, the reward can be replaced by $\rbarmin$ so as to disincentivise unsafe behaviour. 
% Algorithm \ref{alg:minmax} shows the pseudo-code for this whole process.
Since $\vmax$ is estimated using $\vmax \gets  \max(\vmax,\rmax,V(s_t))$ where $V(s_t)$ is the learned value function at time step $t$, it leads to an optimistic estimation of $\rbarmin$. Hence, in practice, we find no need to add $\epsilon>0$ to $\rbarmin$.

\SetAlgoNoLine
\begin{algorithm}
\caption{RL while learning Minmax penalty}
 \label{alg:minmax}
\DontPrintSemicolon
    \SetKwInOut{Input}{Input}
    \SetKwInOut{Initialise}{Initialise}
 \Input{RL algorithm $\mathbf{A}$, max timesteps $T$ \;}
 \Initialise{ $\rmin=0,\rmax=0,\vmin=\rmin,\vmax=\rmax$, $\pi$ and $V$ as per $\mathbf{A}$ \;}
 \begin{algorithmic}
    \STATE \textbf{observe} $s_0$
    \FOR{t in T}
        \STATE \textbf{select action} $a_t$ using $\pi$ as per $\mathbf{A}$ 
        \STATE \textbf{observe} $s_{t+1},r_t$
        \STATE $\rmin \gets  \min(\rmin,r_t)$
        \STATE $\rmax \gets  \max(\rmax,r_t)$
        \STATE $\vmin \gets  \min(\vmin,\rmin,V(s_t))$
        \STATE $\vmax \gets  \max(\vmax,\rmax,V(s_t))$
        \STATE $\rbarmin=\min(\rmin,\vmin-\vmax)$
        \IF{$s_{t+1}$ is unsafe}
            \STATE $r_t \gets \rbarmin$
        \ENDIF
        \STATE \textbf{update} $\pi$ and $V$ with $(s_t,a_t,s_{t+1},r_t)$ as per $\mathbf{A}$
    \ENDFOR
 \end{algorithmic}
\end{algorithm}

% \begin{algorithm}[tb]
%     \caption{Learning the Minmax Penalty}
%     \label{alg:algorithm}
%     \textbf{Input}: Your algorithm's input\\
%     \textbf{Parameter}: Optional list of parameters\\
%     \textbf{Output}: Your algorithm's output
%     \begin{algorithmic}[1] %[1] enables line numbers
%         \STATE Let $t=0$.
%         \WHILE{condition}
%         \STATE Do some action.
%         \IF {conditional}
%         \STATE Perform task A.
%         \ELSE
%         \STATE Perform task B.
%         \ENDIF
%         \ENDWHILE
%         \STATE \textbf{return} solution
%     \end{algorithmic}
% \end{algorithm}

\section{Experiments}
To demonstrate that an agent estimating the Minmax penalty for unsafe states is able to learn an optimally safe policy, we evaluate our approach across three domains, as illustrated by Figure~\ref{fig:domains}.

% \subsection{Domains}

\begin{figure*}[t!]
    \centering
    \begin{subfigure}[t]{0.3\textwidth}
        \centering
        \includegraphics[width=\linewidth]{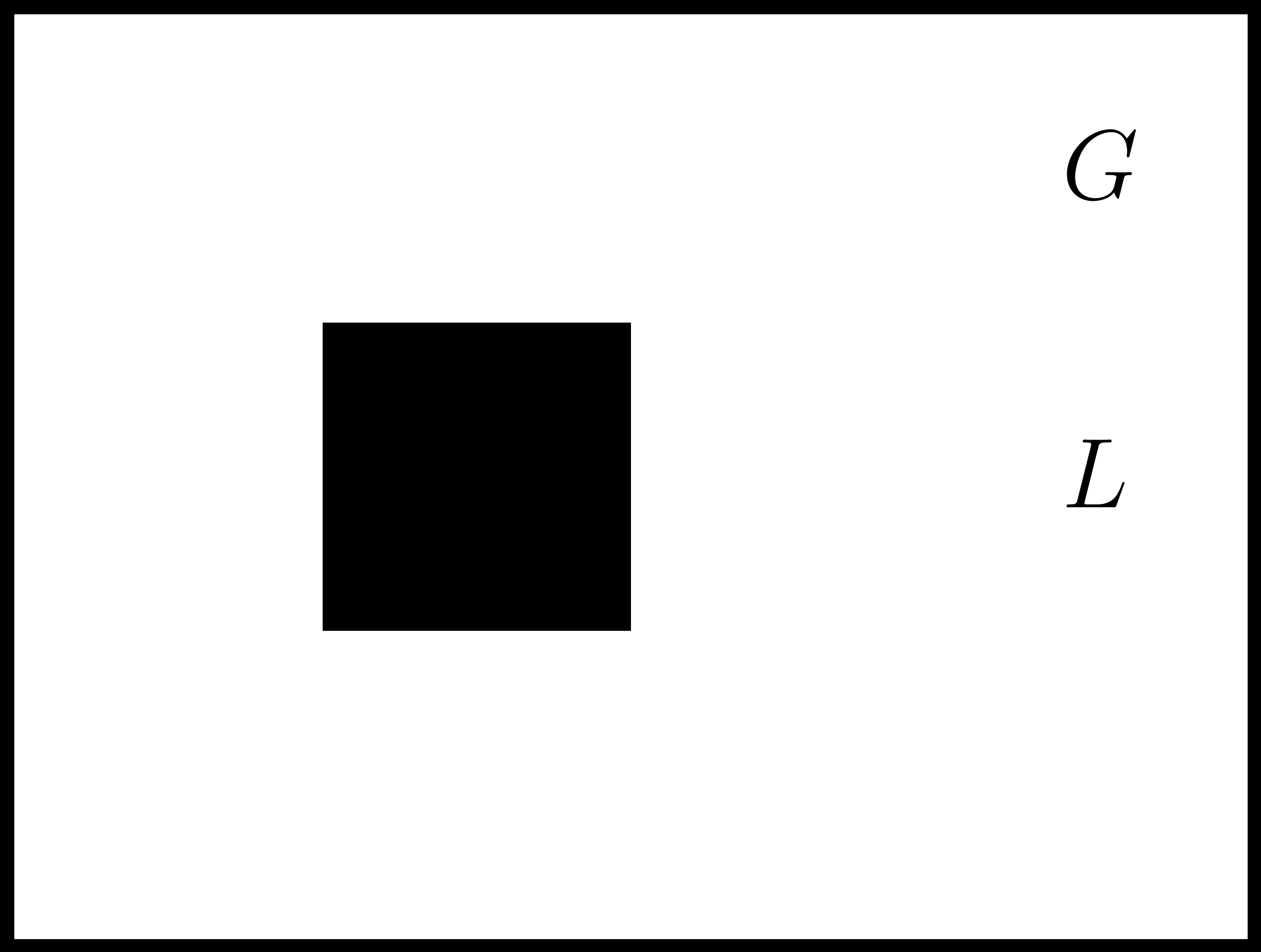}
        \caption{\lava}
        \label{fig:lava}
    \end{subfigure}%
    \quad
    \centering
    \begin{subfigure}[t]{0.3\textwidth}
        \centering
        \includegraphics[width=\linewidth]{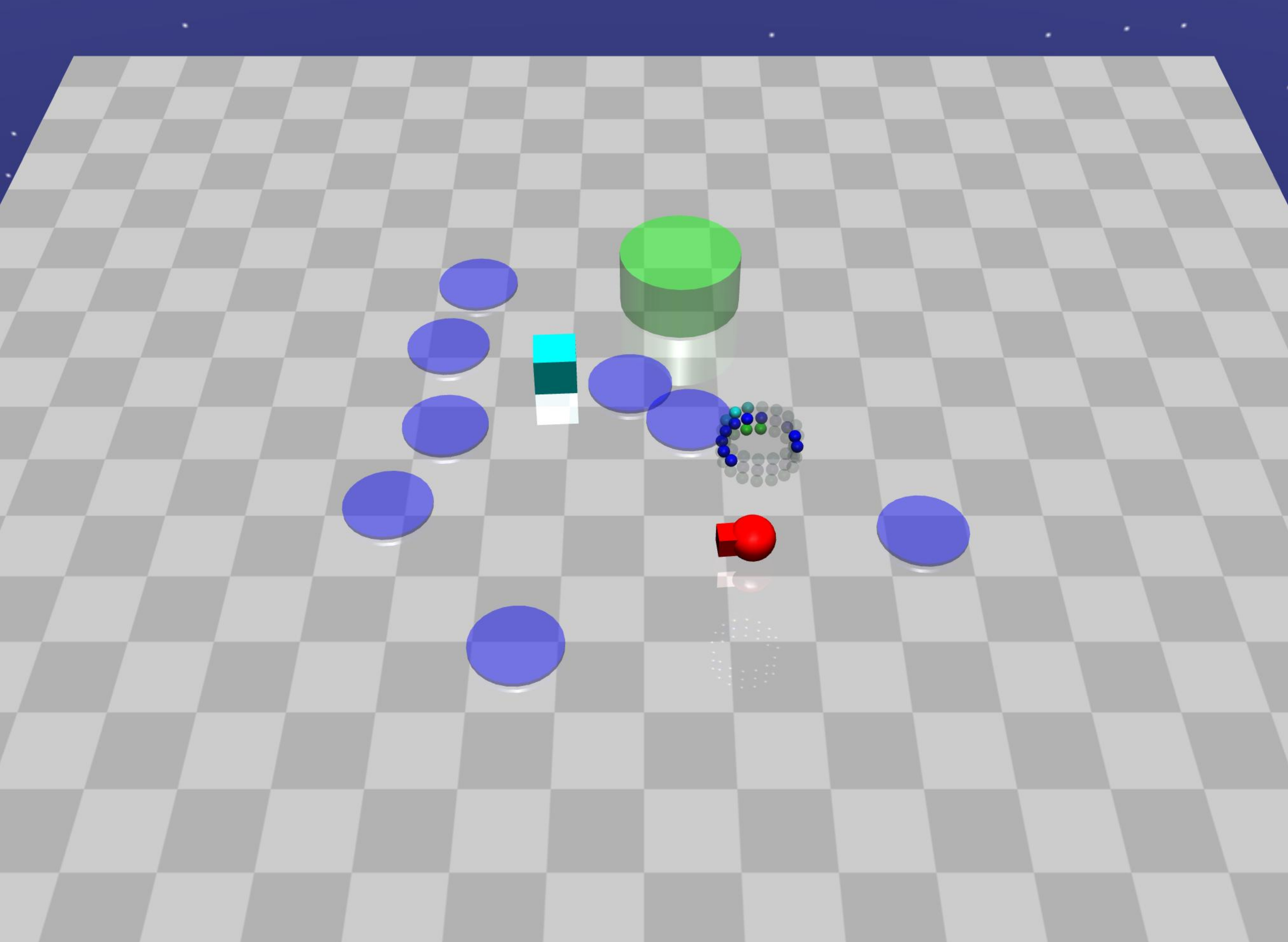}
        \caption{Safety Gym \pointgoalhard}
        \label{fig:goal1}
    \end{subfigure}%
    \quad
    \centering
    \begin{subfigure}[t]{0.3\textwidth}
        \centering
        \includegraphics[width=\linewidth]{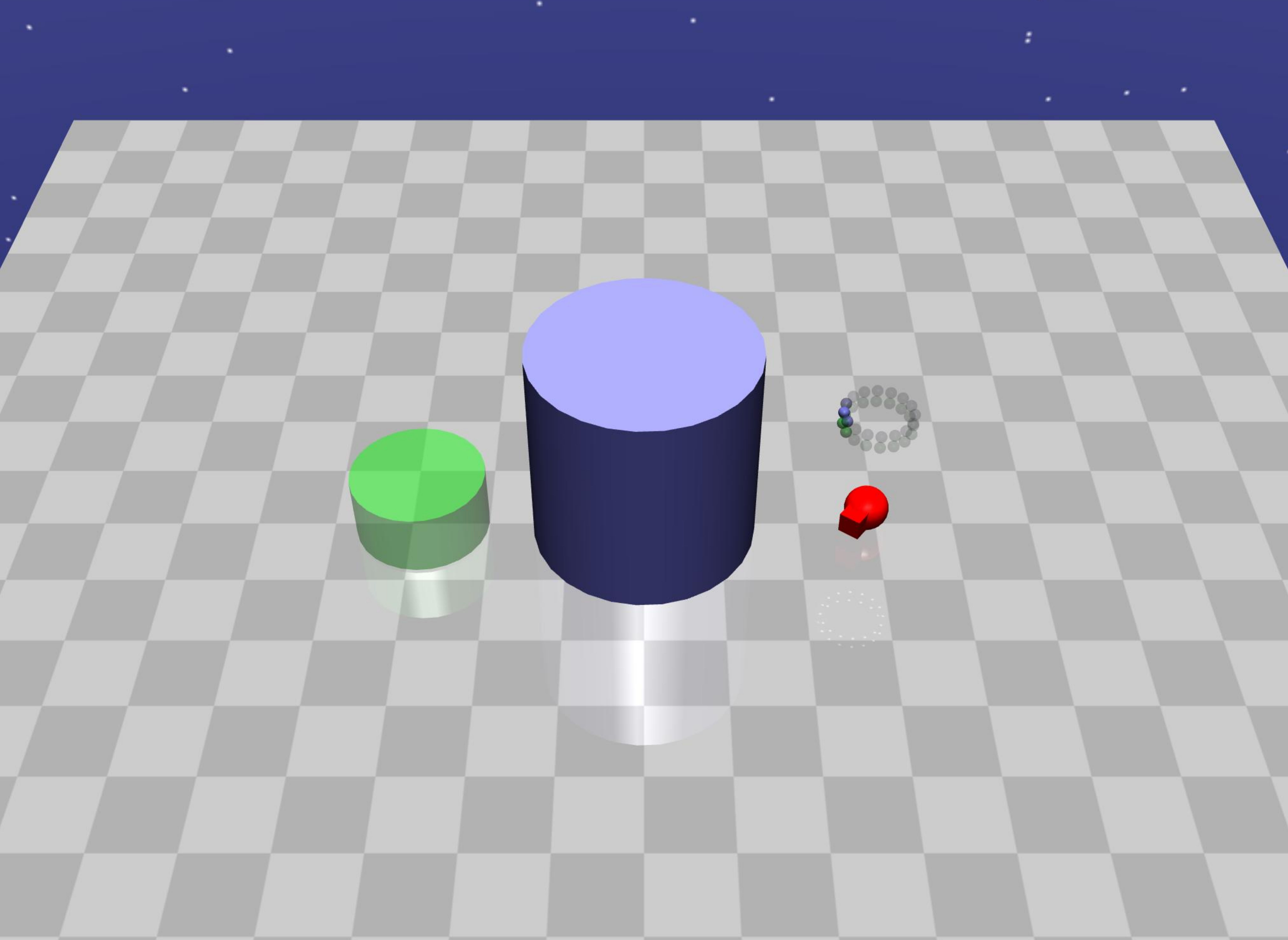}
        \caption{ Safety Gym Pillar}
        \label{fig:pillar}
    \end{subfigure}%
    \caption{Experiments domains. (a) An agent must navigate a gridworld to goal $G$ while avoiding the lava, denoted $L$. The black region represents an impassable wall that must be navigated around. (b) and (c) The agent in red must navigate to the green region, while avoiding the blue hazards.}
    \label{fig:domains}
\end{figure*}%

% \subsubsection{Lava GridWorld}
\paragraph{Lava Gridworld}

The first domain is a simple tabular environment, the \lava domain, in which an agent must reach a goal location while avoiding a lava location. A wall is also present in the environment and, while not unsafe, must be navigated around. The environment has a \textit{slip probability (sp)} of 0.25, so that with probability 0.25 the agent's action is overridden with a random action. The agent receives a positive reward (+1) for reaching the goal, as well as negative rewards (-0.1) at each timestep to incentivise taking the shortest path to the goal.
To test our approach, we modify Q-learning \citep{watkins89} with $\epsilon$-greedy exploration such that the agent updates its estimate of the Minmax penalty as learning progresses and uses it as the reward whenever the lava state is reached, following the procedure outlined in Section \ref{LearningMinMax}. The action-value function is initialised to $0$ for all states and actions, $\epsilon = 0.1$ and the learning rate $\alpha=0.1$. The experiments are run over 10,000 episodes and averaged over 70 runs.

\paragraph{Safety Gym PointGoal1-Hard}\label{SafetyGym}

The second domain is a modified version of the \pointgoal task from OpenAI's Safety Gym environments \citep{Ray2019}, which represents a complex, high-dimensional, continuous control task. Here a simple robot must navigate to a goal location across a 2D plane while avoiding a number of obstacles. The agent uses \textit{pseudo-lidar} to observe the distance to objects around it, and the action space is continuous over two actuators controlling the direction and forward velocity. The goal's location is randomly reset when the agent reaches it, while the locations of the obstacles remain unchanged. The agent is rewarded for reaching the goal, as well as for moving towards it. We modify the environment so that any collision with a hazard results in episode termination with a reward of $-1$, thereby making the problem much harder.

\looseness=-1
As a baseline representative of typical RL approaches, we use Trust Region Policy Optimisation (TRPO) \citep{schulman2015trust}. Our approach modifies TRPO (denoted TRPO-Minmax) to use the estimate of the Minmax penalty as described in Algorithm \ref{alg:minmax}. To represent constraint-based approaches, we compare against Constrained Policy Optimisation (CPO) \citep{achiam2017constrained} and TRPO with Lagrangian constraints (TRPO-Lagrangian) \citep{Ray2019}. All baselines use the implementations provided by \citet{Ray2019}.
As in \citet{Ray2019}, all approaches use feed-forward MLPs, value networks of size ($256$,$256$), and $tanh$ activation functions. Results are averaged over 10 runs with different seeds, where episode lengths are capped at 1000 interactions. The cost threshold for the constrained algorithms is set to 0, 
the best we found
% the same as \citet{Ray2019} 
(see the Appendix for more experiment variations). The initial values of $\rmin$ and $\rmax$ for our approach are set to 0.

\paragraph{Safety Gym Pillar}
The final domain we consider is a custom Safety Gym \pillar environment, in which the simple robot must navigate to a goal location around a large hazard. Aside from the locations of the goal and hazard, the environment is identical to the \pointgoalhard environment. Results are averaged over 5 runs with different seeds.

% \subsubsection{Robot Arm}
% \textbf{TODO: Maybe remove}
%  \citep{haarnoja2018composable} \href{https://github.com/PaulDanielML/MuJoCo_RL_UR5}{Example 1},
%  \href{https://openai.com/blog/ingredients-for-robotics-research/}{Example 2}

% \subsection{Baselines}

% - Q-learning \citep{watkins89} for the tabular experiments 

% - Trust Region Policy Optimisation (TRPO) \citep{schulman2015trust} for function approximation

% \subsubsection{TRPO with Constant Penalty (TRPO-CP)}

% \subsubsection{TRPO with Lagrangian Constrain (TRPO-LC)}
% \citep{Ray2019}

% \subsubsection{Constrained Policy Optimization (CPO)}
% \citep{achiam2017constrained}

% \subsection{Results}

\subsection{What is the tradeoff between safety and sample efficiency with the Minmax penalty?}
% \subsection{Does the learned Minmax penalty give a good tradeoff between safety and sample efficiency?}
To answer this question, we examine the average episode length (converged timesteps) and failure rate (converged failure rate) of an agent that has converged to a policy in the Lava Gridworld domain, as well as the number of steps to converge (total timesteps). These metrics are compared against the size of the penalty for entering the lava, ranging from -5 to 0. Normalised results are plotted in Figure \ref{fig:tradeoff}.
These results clearly show the tradeoff between choices of penalty size, with large penalties resulting in longer convergence times and small penalties resulting in unsafe policies that cause the episodes to terminate early. Furthermore, these results show that the learned estimate of the Minmax penalty provides a good balance between the failure rate and the convergence time.

\begin{figure*}[b!]
    \centering
    \begin{subfigure}[t]{0.24\textwidth}
        \centering
        \includegraphics[width=\linewidth]{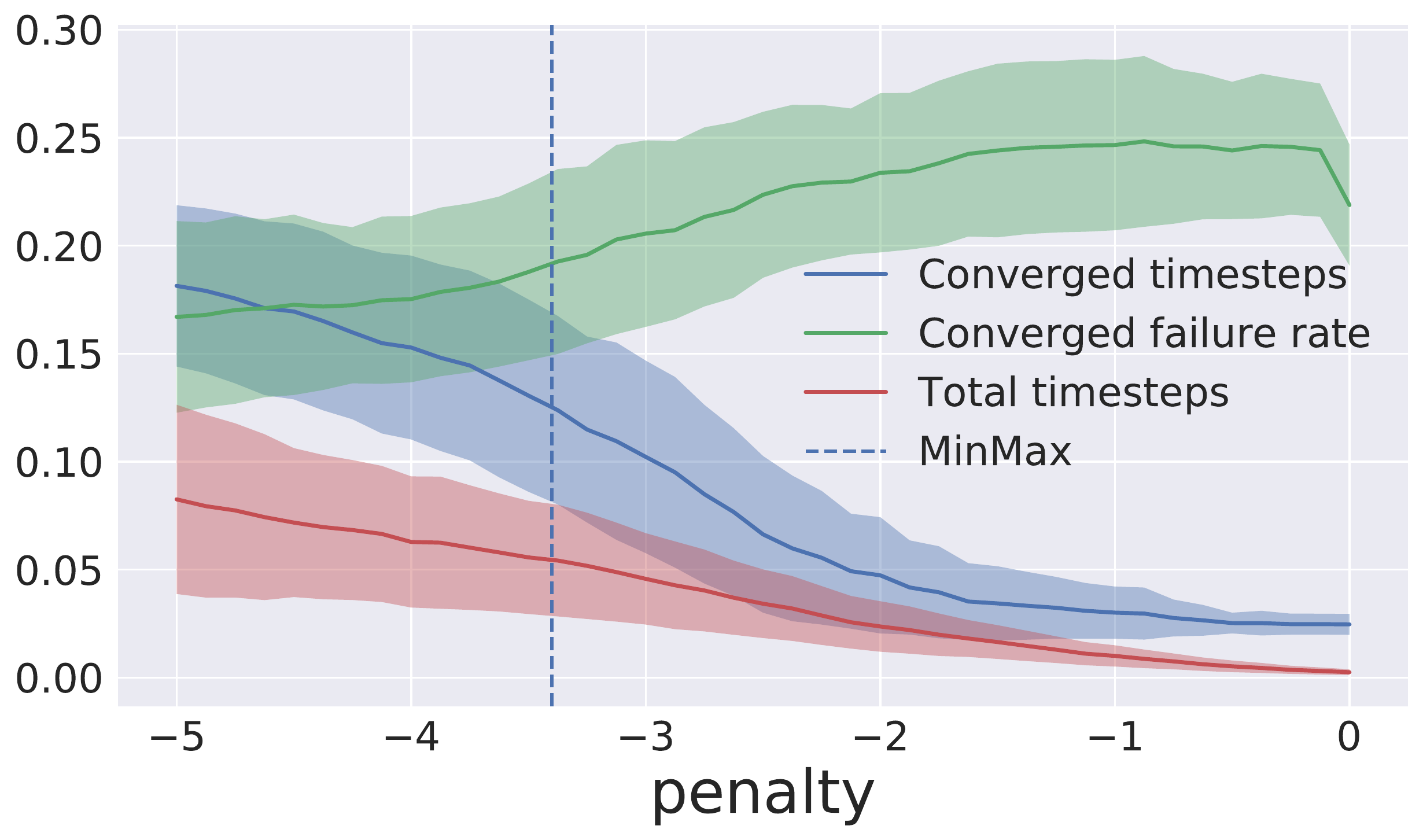}
        \caption{$sp=0.25$}
        \label{fig:tradeoff}
    \end{subfigure}%
    ~
    \begin{subfigure}[t]{0.24\textwidth}
        \centering
        \includegraphics[width=\linewidth]{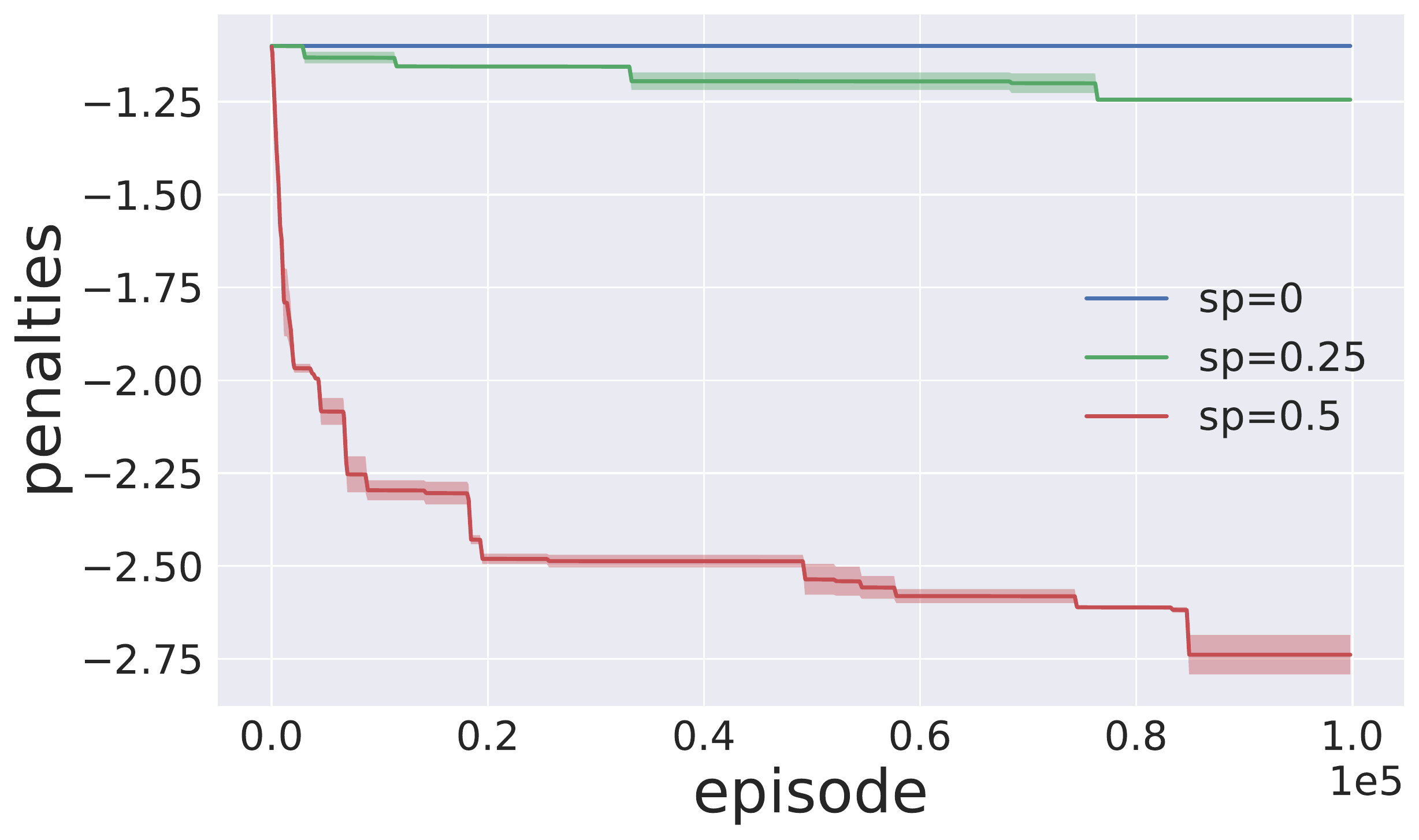}
        \caption{Learned penalty}
        \label{fig:exp_2_penalty}
    \end{subfigure}%
    ~
    \begin{subfigure}[t]{0.24\textwidth}
        \centering
        \includegraphics[width=\linewidth]{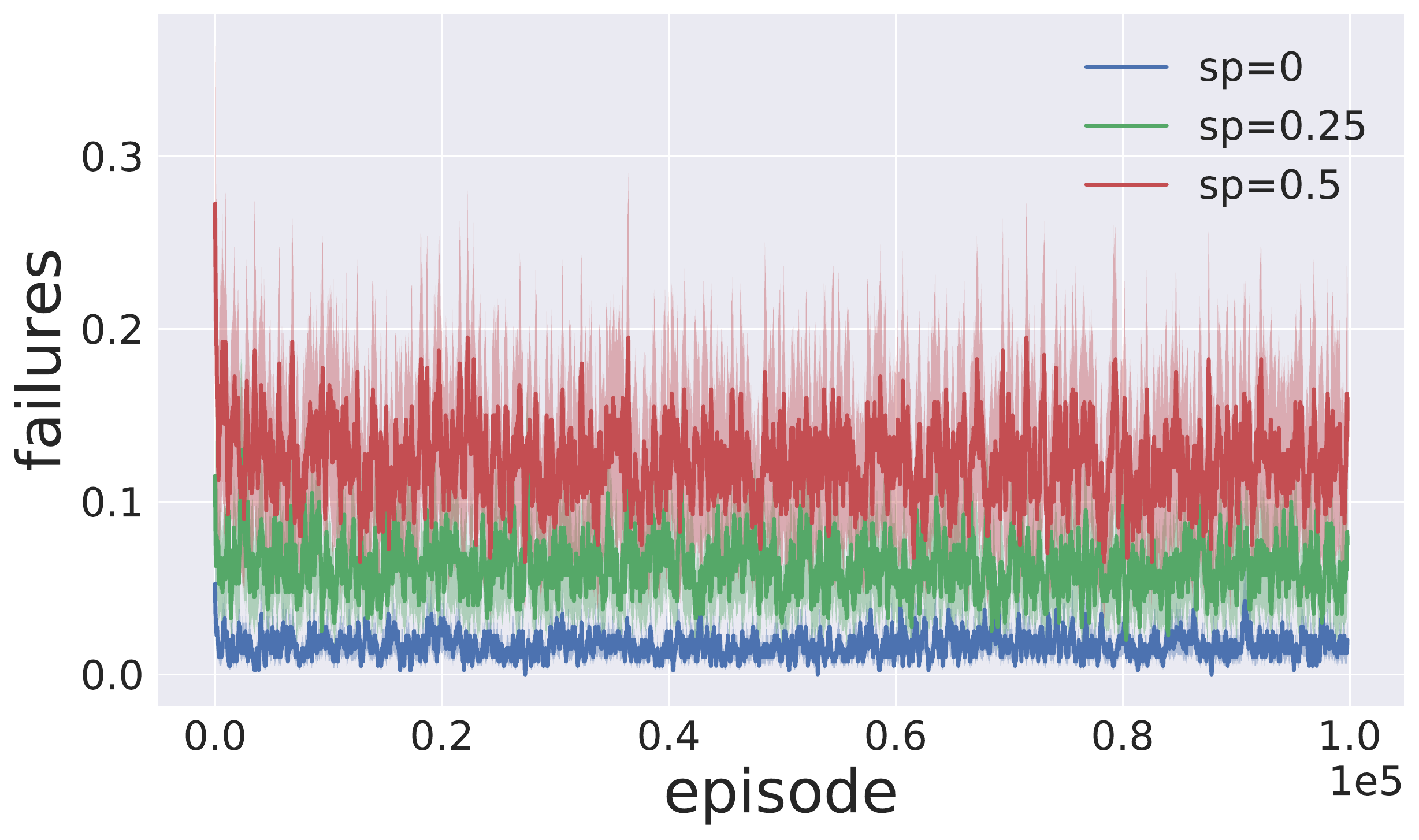}
        \caption{Failure rate}
        \label{fig:exp_2_failures}
    \end{subfigure}%
    ~
    \begin{subfigure}[t]{0.24\textwidth}
        \centering
        \includegraphics[width=\linewidth]{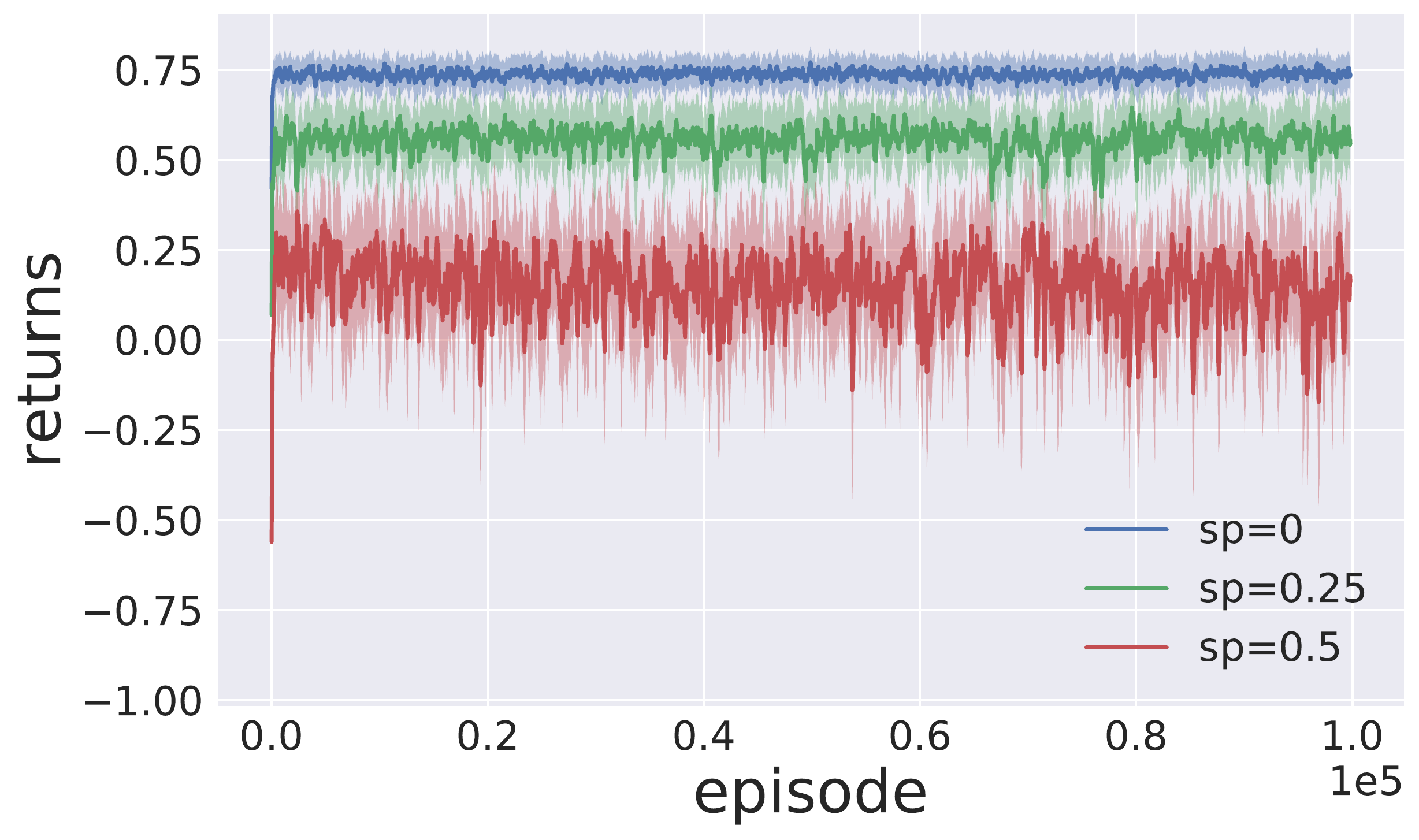}
        \caption{Average returns}
        \label{fig:exp_2_returns}
    \end{subfigure}%
    \caption{Effect of increase in the slip probability of the \lava on the learned Minmax penalty and corresponding failure rate and returns. The timesteps in (a) are normalised for clarity.}
    \label{fig:slip probability}
\end{figure*}%

\begin{figure*}[b!]
    \centering
    \includegraphics[width=0.7\linewidth]{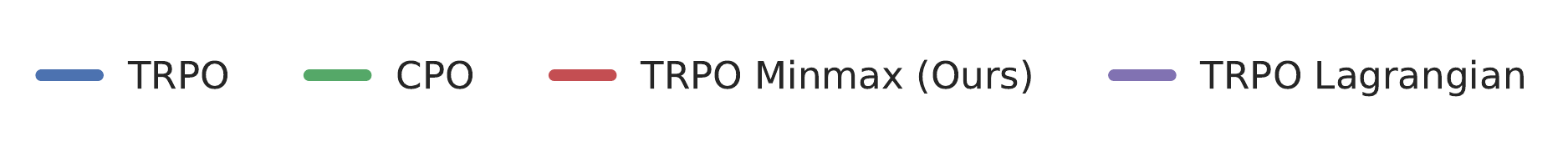}
    \\
    \begin{subfigure}[t]{0.43\textwidth}
        \centering
        \includegraphics[width=\linewidth]{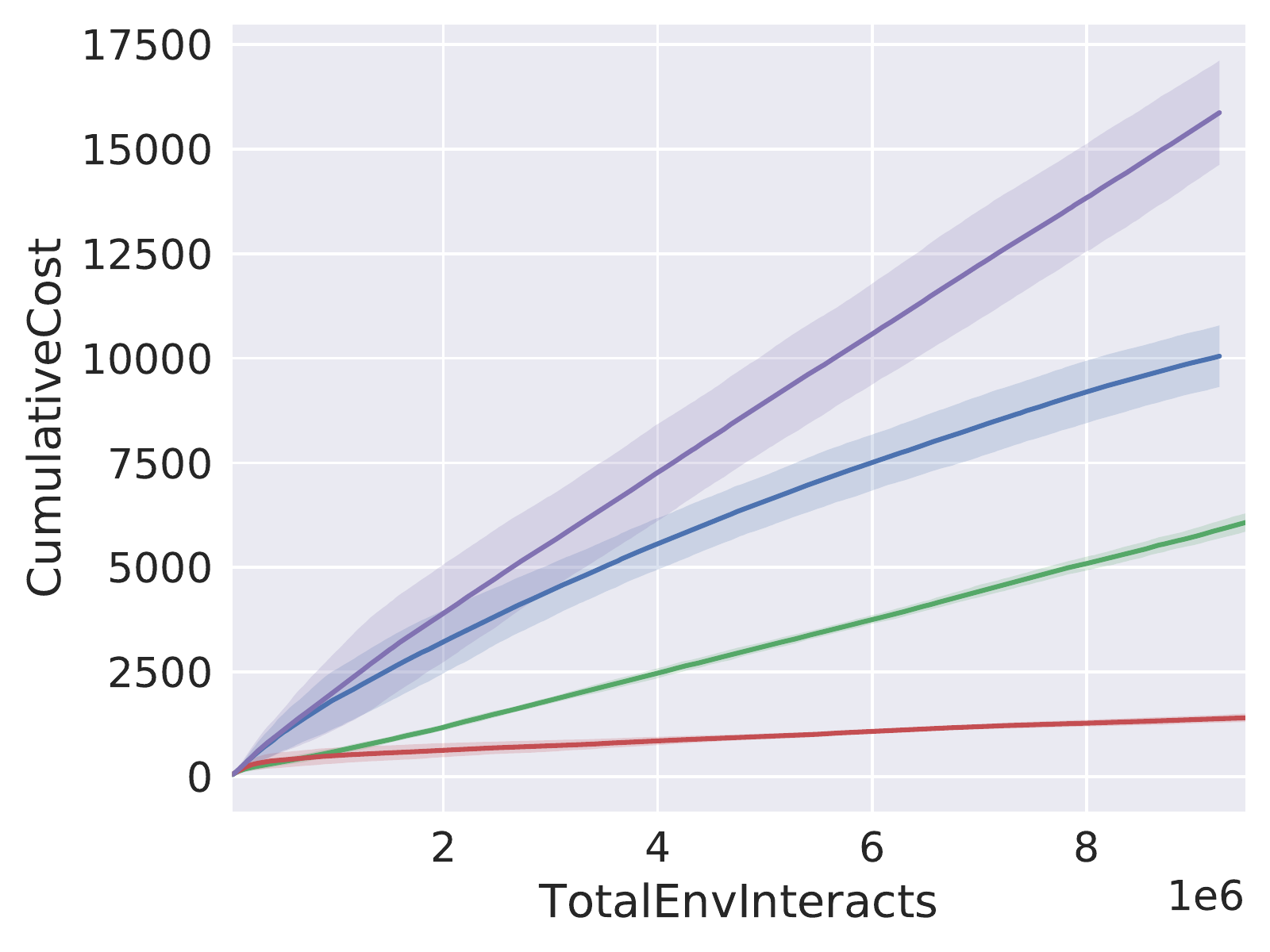}
        \caption{The cumulative cost.}
        \label{fig:cumulative_cost_hard_cost0}
    \end{subfigure}%
    \quad
    \begin{subfigure}[t]{0.43\textwidth}
        \centering
        \includegraphics[width=\linewidth]{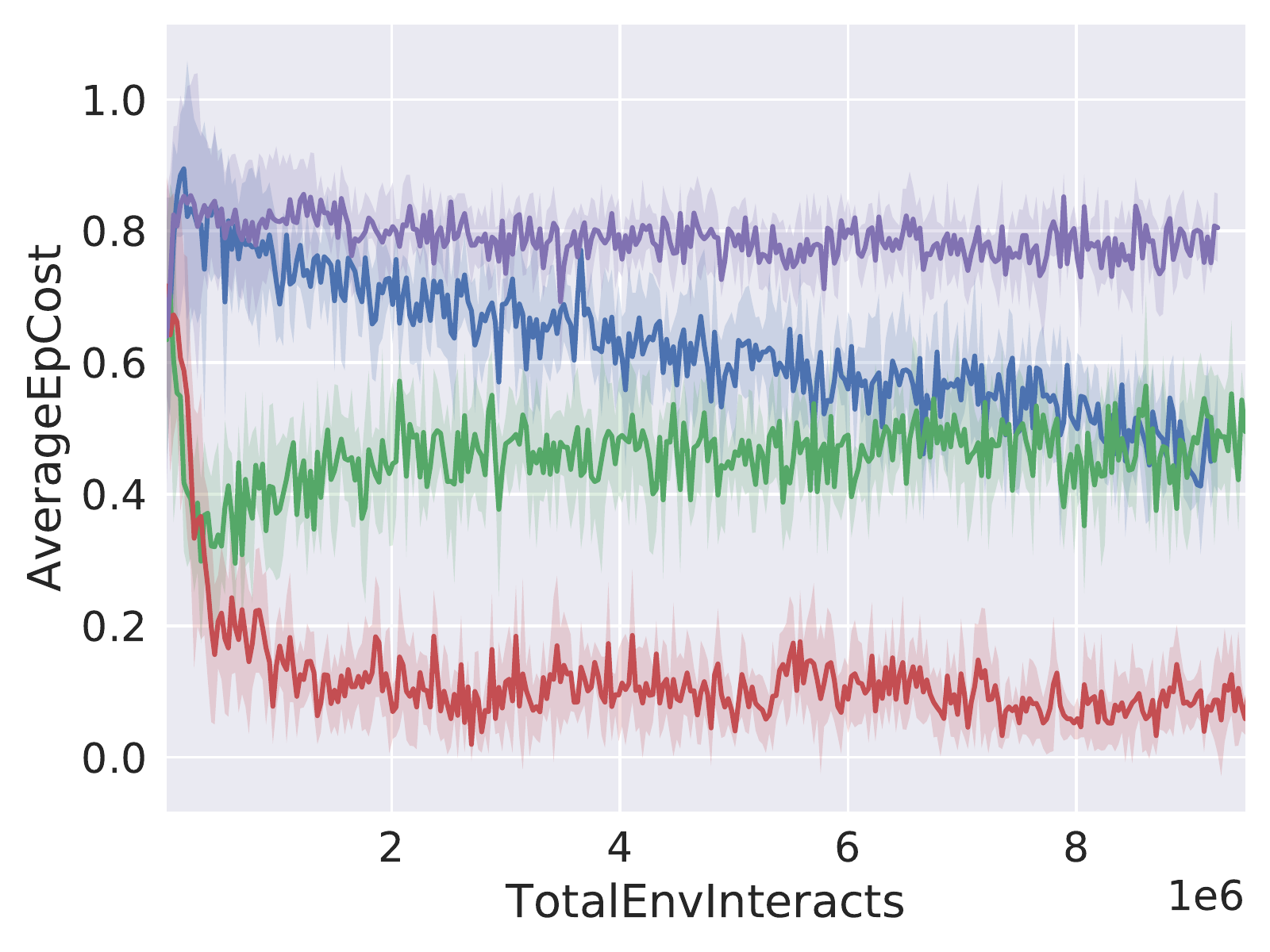}
        \caption{Failure rate}
        \label{fig:baseline_cost_hard_cost0}
    \end{subfigure}%
    \\
    \begin{subfigure}[t]{0.43\textwidth}
        \centering
        \includegraphics[width=\linewidth]{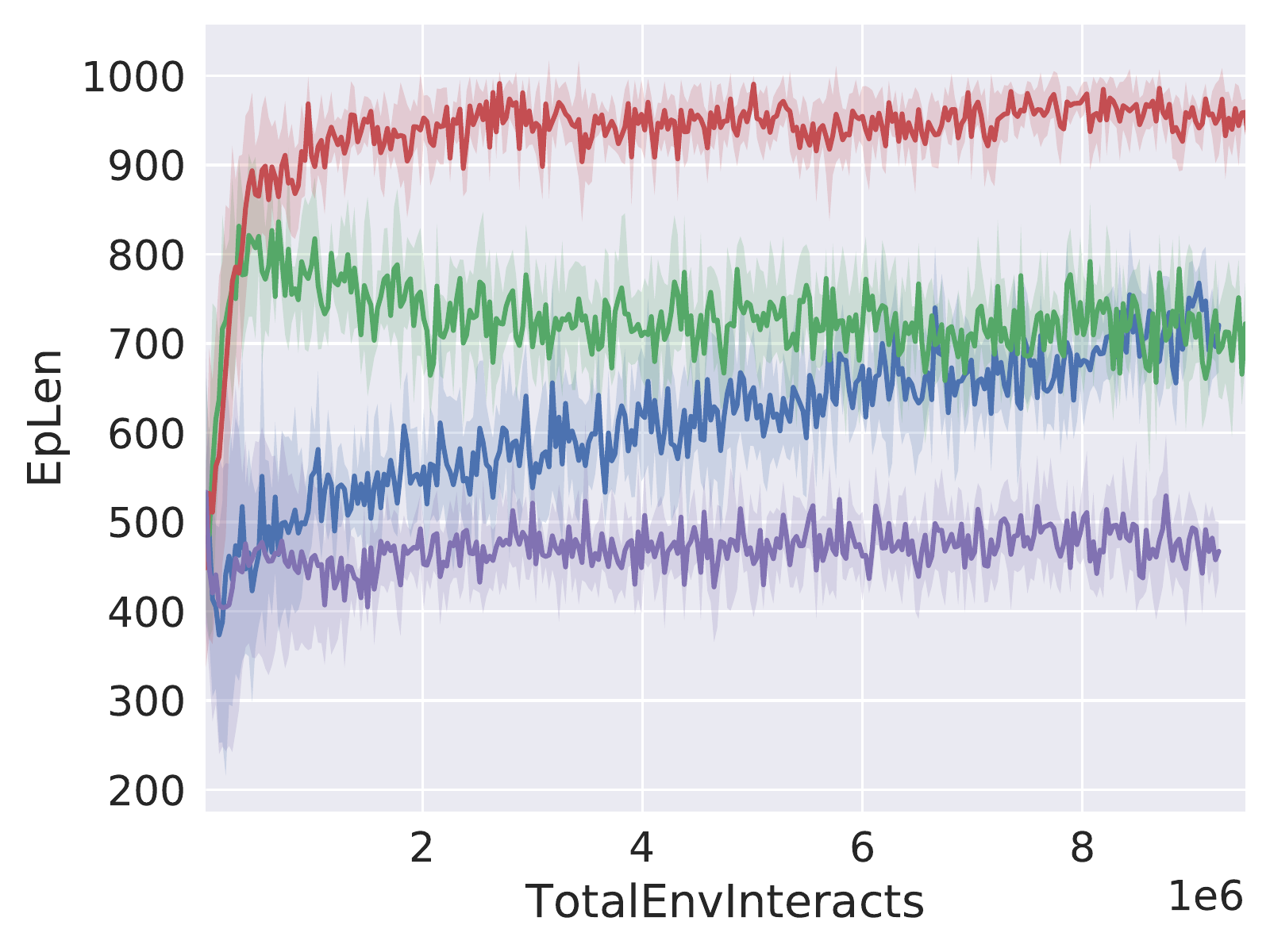}
        \caption{Average episode length}
        \label{fig:baseline_length_hard_cost0}
    \end{subfigure}%
    \quad
    \begin{subfigure}[t]{0.43\textwidth}
        \centering
        \includegraphics[width=\linewidth]{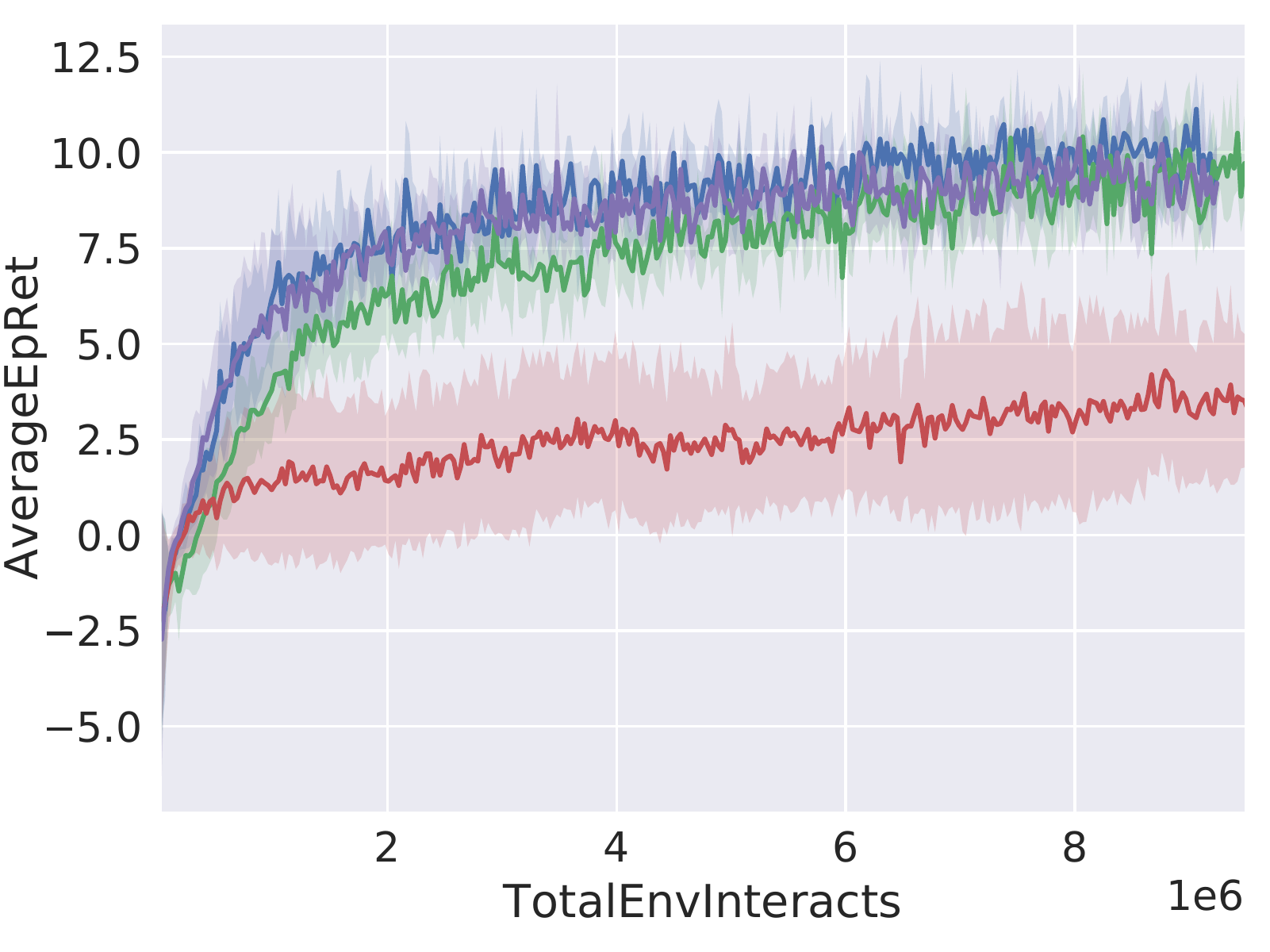}
        \caption{Average returns}
        \label{fig:baseline_return_hard_cost0}
    \end{subfigure}%
    \caption{Comparison with baselines in the Safety Gym \pointgoalhard environment. The cost threshold for TRPO lagrangian and CPO is set to $0$.}
    \label{fig:safety_baselines_hard_cost0}
\end{figure*}%

\subsection{How does stochasticity affect the Minmax penalty?}\label{LavaStochastic}
To answer this question, we compare the performance of our modified Q-learning approach across three values of the slip probability of the \lava. A slip probability of 0 represents a fully deterministic environment, while a slip probability of 0.5 represents a more stochastic environment. Results are plotted in Figure \ref{fig:exp_2_penalty}.
In the case of the fully deterministic environment, the agent is able to use a relatively small penalty to consistently minimise failure rate and maximise returns (Figures \ref{fig:exp_2_failures} and \ref{fig:exp_2_returns}). As the stochasticity of the environment increases, a larger penalty is learned to incentivise longer, safer policies. Given the starting position of the agent next to the lava, the failure rate inevitably increases with increased stochasticity.
We can therefore conclude that increased stochasticity in an environment, which can increase the risk of entering unsafe states, necessitates learning a larger Minmax penalty in order to incentivise more risk-averse behaviour.

\subsection{Does the learned Minmax penalty improve safety in high-dimensional domains requiring function approximation?}\label{sec:exp:PointGoal1Hard}

\looseness=-1
To answer this question we turn to our major results in the \pointgoalhard environment, plotted in Figure \ref{fig:safety_baselines_hard_cost0}.
The baselines all achieve similar performance, maximising returns but maintaining a relatively high failure rate  (Figures \ref{fig:baseline_return_hard_cost0} and \ref{fig:baseline_cost_hard_cost0}). By examining the average episode length (Figure \ref{fig:baseline_length_hard_cost0}), we can conclude that the baselines have learned risky policies that maximise rewards over short trajectories that are highly likely to result in collisions.
These results are also consistent with the benchmarks of \cite{Ray2019} where the cumulative cost of TRPO is much greater than that of TRPO-Lagrangian, which is greater than that of CPO (Figure \ref{fig:cumulative_cost_hard_cost0}).
They are however unable to avoid unsafe states because of the difficulty of maximising rewards while minimising the cost.
By comparison, TRPO-Minmax uses the estimated Minmax penalty to dramatically reduce the failure rate (and thus cumulative failures). 
% and as a result, is able to maintain significantly longer trajectories. 
The average returns achieved, as well as the trajectories observed (see, for example, Figure \ref{fig:trajectory_safe}), indicate that the improved safety does not compromise the agent's ability to learn to complete the task, although returns are lower due to the dense reward function that incentivises moving towards the goal. 
%
% We can conclude therefore that our approach is able to produce safe goal-reaching policies even in complex, high-dimensional domains that require function approximation.

\subsection{Is the learned Minmax penalty robust to noisy, continuous dynamics?}
To examine the effect of noisy continuous dynamics on the performance of TRPO-Minmax, we vary the levels of stochasticity in the \pillar environment, similarly to the experiments in Section \ref{LavaStochastic}. Once again, the value of the slip probability denotes the probability of overriding the agent's action with a random action. Results are plotted in Figure \ref{fig:pillar fig}.
The value of the slip probability appears to have little effect on both the failure rate of and average returns obtained by the agent, indicating that in an environment with a relatively simple hazard configuration such as the \pillar environment, the learned Minmax penalty is indeed robust to noisy continuous dynamics. Figure \ref{fig:pil_trajectory} shows a sample trajectory representative of the trajectories observed for all 3 noise levels.

\begin{figure}[b!]
    \centering
    \begin{subfigure}[t]{0.3\textwidth}
        \centering
        \includegraphics[width=\linewidth]{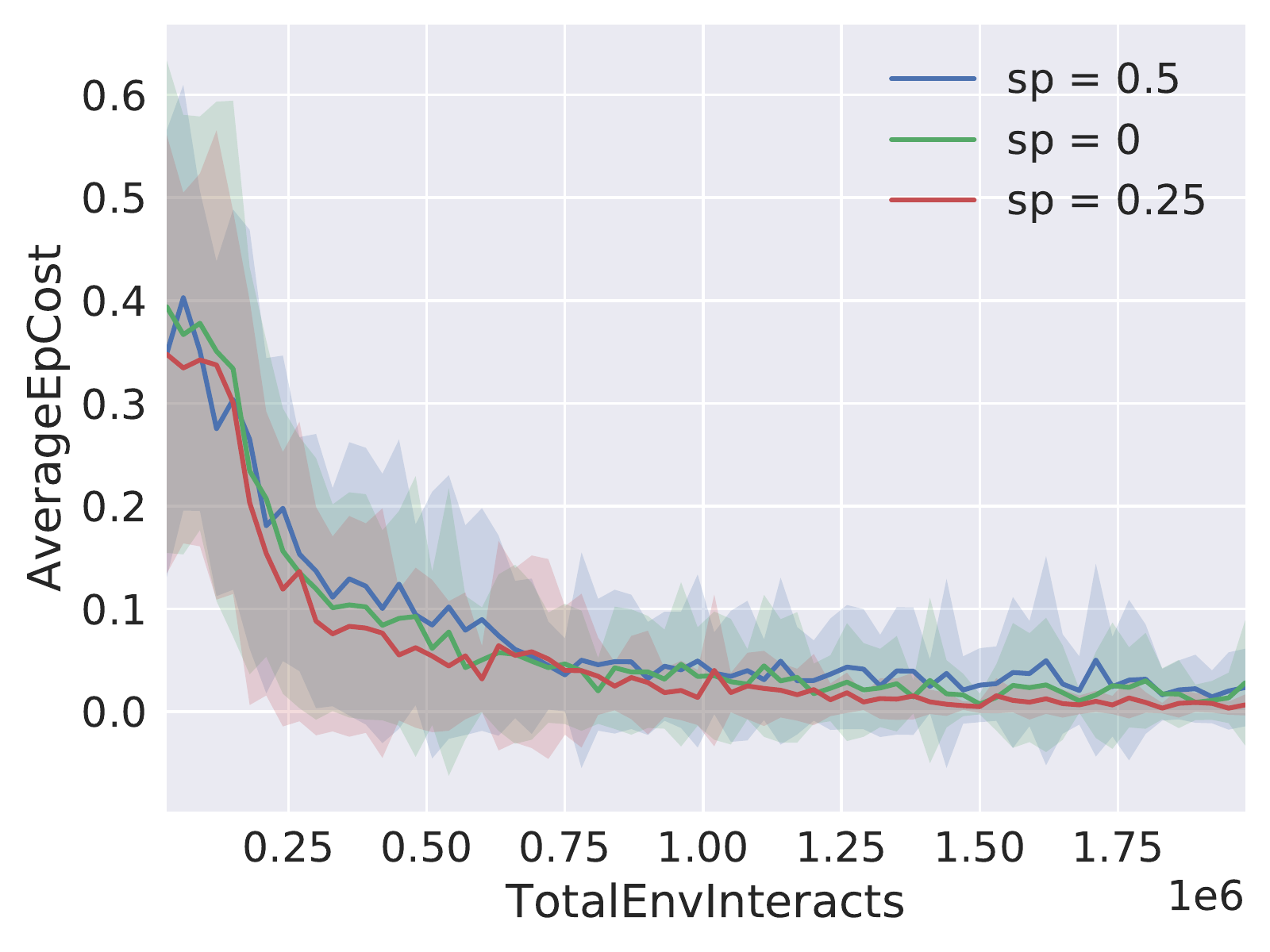}
        \caption{Failure rate}
        \label{fig:pil_fail_rate}
    \end{subfigure}%
    ~
    \begin{subfigure}[t]{0.3\textwidth}
        \centering
        \includegraphics[width=\linewidth]{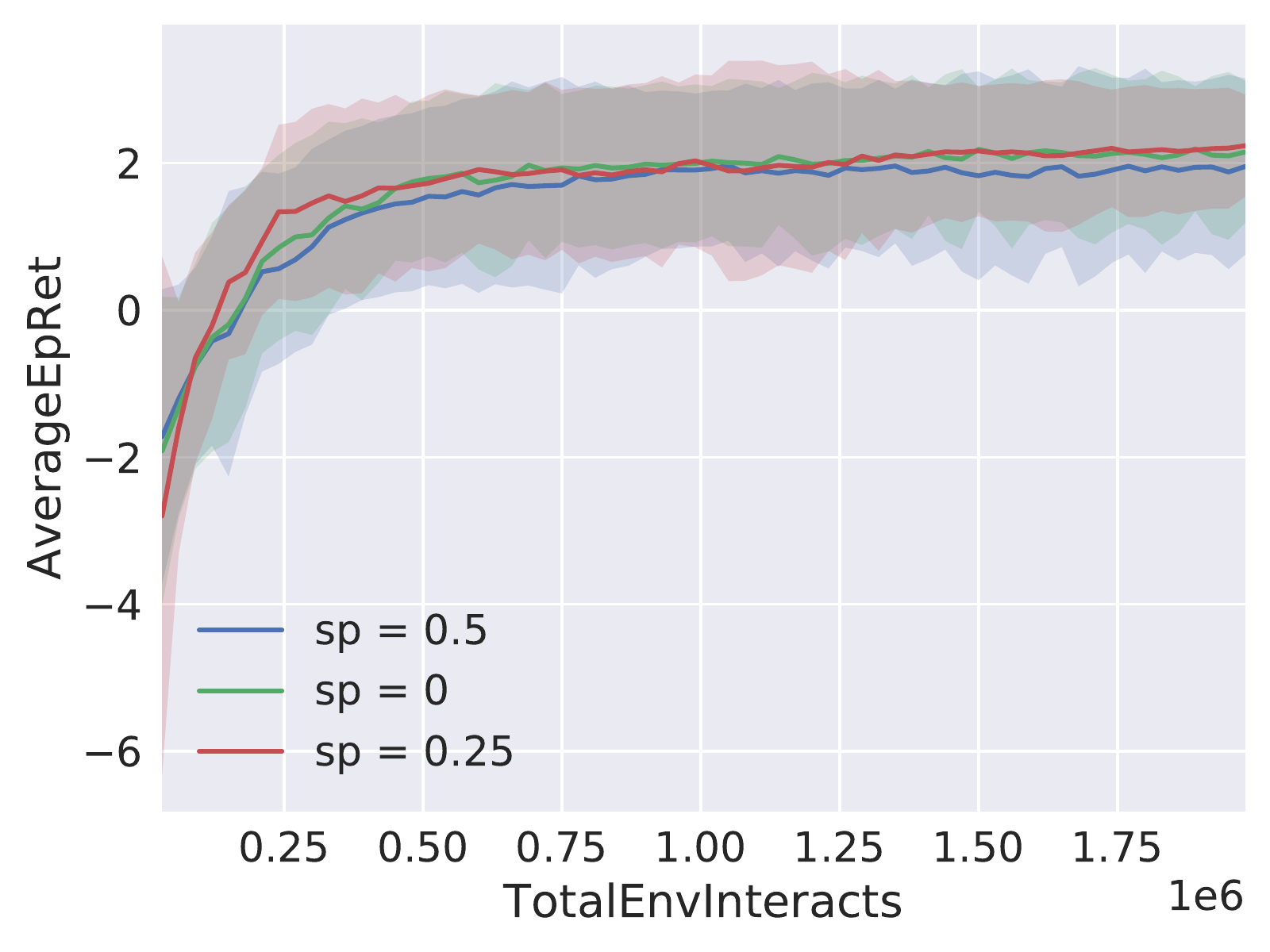}
        \caption{Average returns}
        \label{fig:pil_average_returns}
    \end{subfigure}%
    ~~
    \begin{subfigure}[t]{0.3\textwidth}
        \centering
        \includegraphics[trim=20 300 20 300, clip, width=\linewidth]{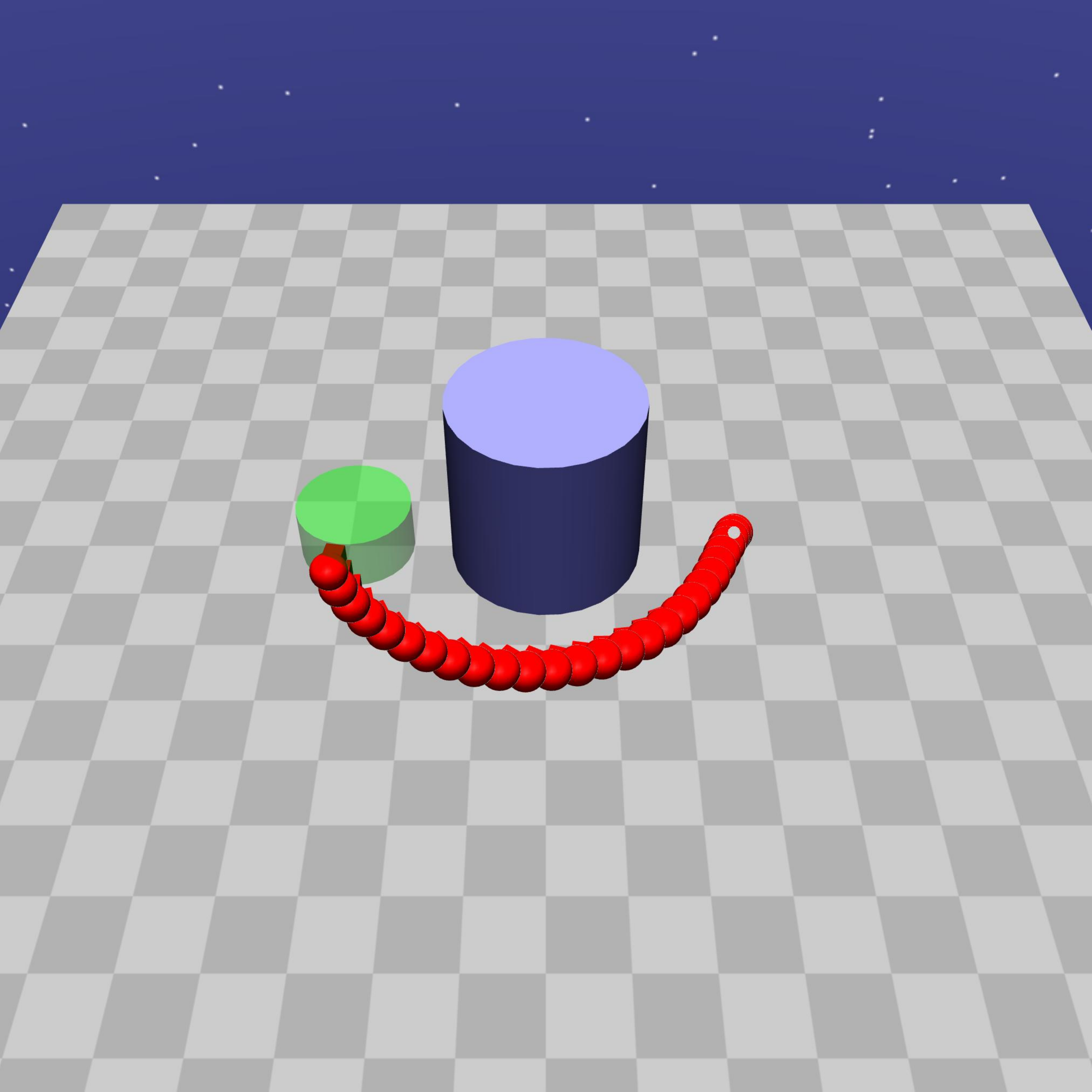}
        \caption{Sample trajectory}
        \label{fig:pil_trajectory}
    \end{subfigure}%
    \caption{Comparisons of TRPO-Minmax with varying slip probabilities in the \pillar environment.}
    \label{fig:pillar fig}
\end{figure}%

\section{Limitations}

\textbf{Handling unsafe non-terminal states:} 
This paper investigates a new approach towards safe-RL by asking the question: \emph{Is a scalar reward enough to solve tasks safely?} While we show that it is indeed enough, both theoretically and experimentally, the current approach is only applicable to unsafe terminal states---which only covers tasks that can be naturally represented by stochastic-shortest path MDPs. Hence this may not be applicable to all conceivable safe-RL settings. 
However, since our approach relies on the concept of MDP diameter and controllability which are properties of the entire MDP and not just the terminal states, our approach may also be extendable in future work to other settings.
Finally, given that other popular RL settings like discounted MDPs can be converted to stochastic shortest path MDPs \citep{bertsekas1987dynamic,sutton1998introduction}, a promising future direction could be to find the dual of our results for other theoretically equivalent settings.

\textbf{Handling degrees of safety:} In this work, we only consider terminal states that are maximally unsafe. This leads to very risk-averse policies, as shown by the trajectories produced by our TRPO-Minmax agent (see appendix).
% (Figure \ref{fig:all_trajectories_1}).
Additionally, it may be desirable for an agent to accommodate different degrees of safety---for example "breaking a vase" is less unsafe than "hitting a baby". Our focus on scalar rewards at unsafe states leads to a natural future extension to the case of different degrees of safety. Since the Minmax reward is the least negative reward that guarantees safety ($\bar{R}_{MIN} - \epsilon$), we may assign weights to it corresponding to different levels of unsafety. Here, smaller weights would lead to policies that pass near the lava and hazards, and large weights would lead to the safest policy that chooses the longer path to the goal. 

\section{Related Work}

\looseness=-1
\textbf{Reward shaping}: The problem of designing reward functions to produce desired policies in RL settings is well-studied by \cite{singh2009rewards}. Particular focus has been placed on the practice of \textit{reward shaping}, in which an initial reward function provided by an MDP is augmented in order to improve the rate at which an agent learns the same optimal policy \citep{ng1999policy,devidze2021explicable}. 
While sacrificing some optimality, other approaches like \cite{lipton2016combating} propose shaping rewards using an idea of intrinsic fear. Here, the agent trains a supervised fear model representing the probability of reaching unsafe states in a fixed horizon, scales said probabilities by a fear factor, and then subtracts the scaled probabilities from Q-learning targets.
These approaches differ from ours in that they seek to find reward functions that improve convergence while preserving the optimality from an initial reward function. In contrast, ours seeks to determine the optimal rewards for terminal states in order to minimise undesirable behaviours irrespective of the original reward function and optimal policy.

\textbf{Constrained RL}: Disincentivising or preventing undesirable behaviours is core to the field of safe RL. A popular approach is to define constraints on the behaviour of an agent \citep{altman1999constrained,achiam2017constrained,chow2018lyapunov,Ray2019, hasanzadezonuzy2021learning}. Agents in these settings are tasked with limiting the accumulation of costs associated with violating safety constraints while simultaneously maximising reward. Examples of these approaches include the CPO \citep{achiam2017constrained} and TRPO-Lagrangian \citep{Ray2019} algorithms outlined in Section \ref{SafetyGym}.

\looseness=-1
\textbf{Shielding}: Another important line of work involves relying on interventions from a model \citep{dalal2018safe,wagener2021safe} or human \citep{tennenholtz2022reinforcement} to prevent unsafe actions from being considered by the agent (shielding the agent) or prevent the environment from executing those unsafe actions by correcting them (shielding the environment). Other approaches here also look at using temporal logics to define or enforce safety constraints on the actions considered or selected by the agent \citep{alshiekh2018safe}. These approaches fit seamlessly into our proposed reward-only framework since they are primarily about modifications on the transition dynamics and not the reward function---for example, unsafe actions here can simply lead to unsafe goal states.

% \textbf{TODO: Not sure if the following is necessary}

% \begin{itemize}
% \item RL
%     \begin{itemize}
%         \item Aquatic robot navigation \citep{marchesini2021benchmarking}
%         \item Robot navigation \citep{chen2017decentralized}
%         \item Shipping collision avoidance \citep{liu2020reinforcement}
%     \end{itemize}
%     \item Non-RL
%     \begin{itemize}
%         \item ORCA \citep{berg2011reciprocal}
%     \end{itemize}
% \end{itemize}

\section{Conclusion}
In this work, we present the Minmax penalty, which takes into account the diameter and controllability of an environment in order to minimise the probability of encountering unsafe states. We prove that the penalty does indeed minimise this probability, and present a method that uses an agent's value estimates to learn an estimate of the penalty.
Our results in tabular and high-dimensional continuous settings have demonstrated that, by encoding the safe behaviour directly in the reward function via the Minmax penalty, our method is able to solve tasks while prioritising safety, learning safer policies than popular constraint-based approaches.
Our method is also easy to incorporate with any off-the-shelf RL algorithms that maintain value estimates, requiring no changes to the algorithms themselves. By autonomously learning the penalty, our method also alleviates the need for a human designer to manually tweak rewards or cost functions to elicit safe behaviour.
While it may be feasible to handcraft reward or cost functions to induce safe behaviour for individual tasks, our ultimate aim is to have general agents capable of operating safely in a variety of environments, and thus we cannot rely on human-crafted reward or cost functions. We see this as a step towards truly autonomous agents capable of independently learning to solve tasks safely.

\newpage
\looseness=-1

\bibliographystyle{named}
\bibliography{references}

\appendix

\section{Proofs of Theoretical Results}

\setcounter{theorem}{0}

\begin{theorem}[Safety Bounds]
% Let $\pistar$ be the optimal policy for an MDP whose rewards at unsafe goal states are bounded above by $\rbarmin$.
Consider a controllable MDP $\langle \state, \action, P, \reward \rangle$ with a non-empty set of unsafe goal states $\unsafegoals \subset \goals$. Let $\pistar$ be an optimal policy for the modified MDP with possibly different rewards in $\unsafegoals$: $\langle \state, \action, P, \rbar \rangle$ where $\rbar(s,a,s')=\reward(s,a,s')$ for all $s'\not\in\unsafegoals$.
\begin{enumerate}[label=(\roman*)]
  \item If  $\rbar(s,a,s') < \rbarmin$ ~ for all $s'\in\unsafegoals$, then $\pistar$ is safe for all $R$;
  \item If  $\rbar(s,a,s') > \rbarmin$ ~ for all $s'\in\unsafegoals$, then $\pistar$ is unsafe for all $R$.
\end{enumerate}
  
\end{theorem}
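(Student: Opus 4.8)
The plan is to reduce both claims to a single value-decomposition identity and then read off the threshold. Since every proper policy reaches an absorbing state, for a fixed task reward $R$ and a modified MDP whose only changed rewards are the penalties on transitions into $\unsafegoals$, I would first write, for any $\pi \in \Pi$ and any internal state $s$,
\[
\vbar^{\pi}(s) = W^{\pi}(s) + P^{\pi}_{s}(s_T \in \unsafegoals)\,\rho,
\]
where $\rho$ is the (uniform, for now) unsafe penalty and $W^{\pi}(s)$ collects the expected reward of every transition except the terminal one into $\unsafegoals$. The key structural point is that $W^{\pi}(s)$ depends only on $R$ and the dynamics, never on $\rho$, so all dependence of the value on the penalty is isolated in the single term $p^{\pi}_s\,\rho$ with $p^{\pi}_s \coloneqq P^{\pi}_{s}(s_T\in\unsafegoals)$. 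For non-uniform penalties I would replace $\rho$ by its expectation over terminal unsafe transitions and bound it using the hypothesis $\rbar<\rbarmin$ (resp.\ $>$).

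Next I would bound the two competing effects separately. Fixing a state $s$, compare a policy $\pi_1$ that minimises $p^{\pi}_s$ against any $\pi_2$ with $p^{\pi_2}_s > p^{\pi_1}_s$; subtracting the identity gives $\vbar^{\pi_1}(s) - \vbar^{\pi_2}(s) = [W^{\pi_1}(s) - W^{\pi_2}(s)] + \delta(-\rho)$ with $\delta = p^{\pi_2}_s - p^{\pi_1}_s = \Delta P_s(\pi_1,\pi_2) > 0$. The \emph{diameter} controls the first bracket: since any proper policy terminates in expected time at most $D$ and per-step rewards lie in $[\rmin,\rmax]$, every $W^{\pi}(s)$ lies in $[D\rmin, D\rmax] = [V_{MIN},V_{MAX}]$, so $W^{\pi_2}(s)-W^{\pi_1}(s) \le V_{MAX}-V_{MIN}$. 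The \emph{controllability} controls $\delta$: by Definition \ref{def:controllability}, any two distinct proper policies differ in safe-reaching probability by at least $C$ at their worst-case state, which lower-bounds the relevant gap by $C$. Combining, whenever $-\rho > (V_{MAX}-V_{MIN})/C$, i.e.\ $\rho < (V_{MIN}-V_{MAX})/C = (\rmin-\rmax)\tfrac{D}{C}$, the safe policy strictly dominates; the extra $\min\{\rmin,\cdot\}$ in $\rbarmin$ only matters in the degenerate regime $\rmax<0$ and is absorbed by requiring the penalty to undercut every task reward. This proves (i): with $\rbar<\rbarmin$, the maximiser $\pistar$ must minimise $p^{\pi}_s$ at every state, hence is safe in the sense of Definition \ref{def:mastery}, and since the bound $V_{MAX}-V_{MIN}$ is the worst case over all task rewards, the conclusion holds for all $R$.

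For (ii) I would run the same identity in reverse to show the bound is strict: exhibit (or invoke the existence of) a task reward and a pair of policies that simultaneously realise the extremes $W^{\pi_2}(s)-W^{\pi_1}(s) = V_{MAX}-V_{MIN}$ and $\delta = C$. Then for any $\rho > \rbarmin$ one gets $\vbar^{\pi_2}(s) - \vbar^{\pi_1}(s) = (V_{MAX}-V_{MIN}) - C(-\rho) > 0$, so the unsafe $\pi_2$ is strictly preferred and $\pistar$ fails to minimise $p^{\pi}_s$, establishing that $\rbarmin$ is a strict upper bound (the \say{for all $R$} here is realised by this worst-case reward).

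The main obstacle I anticipate is the controllability step: $C$ is a \emph{min--max} over policy pairs, so it lower-bounds the \emph{maximum-over-states} gap between two policies, not necessarily the gap at the particular state where the optimal policy misbehaves. Making the single-state comparison rigorous therefore requires either evaluating the comparison at the state that realises the controllability and propagating via a policy-improvement / Bellman-optimality argument, or arguing that an optimal policy failing to be safe at some state can be strictly improved there. Handling the signed-reward bookkeeping in the $[V_{MIN},V_{MAX}]$ bound when $\rmax<0$ (the reason for the outer $\min$ with $\rmin$) is the secondary technical point.
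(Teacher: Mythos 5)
Your plan for part (i) is essentially the paper's own argument: the paper likewise isolates the terminal penalty from the rest of the return (its $G^{T-1}$ plays the role of your $W^{\pi}$), bounds the non-penalty part of any proper policy's return within $[\rmin D,\ \rmax D]$ via the diameter, lower-bounds the probability gap by $C$, and derives the contradiction $\E_s^{\pistar}\left[G^{T-1}\right] > \rmax D$. The obstacle you flag---that $C$ is a minimum over policy pairs of a maximum over states, so it controls the gap at the worst-case state rather than at the state where $\pistar$ misbehaves---is real, and the paper does not actually resolve it: it simply declares ``without loss of generality, let $s$ be the state that maximises $\Delta P_s(\pistar,\pi)$,'' which silently assumes the maximising state is also one where $\pistar$ is strictly less safe than $\pi$; your suggestion of a policy-improvement argument at the offending state is what a rigorous version would need. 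Where you genuinely diverge is part (ii): the paper proves it by running the identical contradiction in reverse (assume $\pistar$ is safe, conclude $\E_s^{\pi}\left[G^{T-1}\right] < \rmin D$, contradicting that $\rmin D$ is the smallest possible expected return of a proper policy), which yields the claimed ``unsafe for all $R$''; your plan instead constructs a worst-case $R$ and a policy pair realising the extremes simultaneously, which only establishes that the bound is tight for \emph{some} $R$---a weaker, existential statement than the theorem asserts (though arguably the one that is actually defensible). For (ii) you should therefore either adopt the paper's symmetric contradiction or state explicitly that you are proving tightness rather than the universally quantified claim.
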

\begin{proof}
Since $\pistar$ is optimal, it is also proper and hence must reach $\goals$. 

\textit{(i)} Assume $\pistar$ is unsafe. Then there exists another proper policy $\pi$ such that 
\[
P^{\pistar}_{s}(s_{T} \not\in \unsafegoals) < P^{\pi}_{s}(s_{T} \not\in \unsafegoals) \quad \text{for some } s\in\state.
\]

Without loss of generality, let $s$ be the state that maximises $\Delta P_s(\pistar,\pi)$. Then,
\begingroup
\allowdisplaybreaks
\begin{align*}
    &V^{\pistar}(s) \geq V^{\pi}(s) \\
    \implies 
    &\E_s^{\pistar} \left[ \sum_{t=0}^{\infty} \gamma^{t} \rbar(s_t,a_t,s_{t+1}) \right]
    \geq \E_s^{\pi} \left[ \sum_{t=0}^{\infty} \gamma^{t} \rbar(s_t,a_t,s_{t+1}) \right] \\
    \implies 
    &\E_s^{\pistar} \left[ G^{T-1} + \rbar(s_T,a_T,s_{T+1}) \right] \geq \E_s^{\pi} \left[ G^{T-1} + \rbar(s_T,a_T,s_{T+1}) \right], \\
    &\text{ where } G^{T-1} = \sum_{t=0}^{T-1} \gamma^{t} \rbar(s_t,a_t,s_{t+1}) \text{ and } T \text{ is a random variable denoting when } s_{T+1} \in \goals. \\
    \implies 
    &\E_s^{\pistar} \left[ G^{T-1} \right] + \left( P^{\pistar}_{s}(s_{T}  \not\in \unsafegoals) \rbar(s_T, a_T, s_{T+1}) + P^{\pistar}_{s}(s_{T}  \in \unsafegoals) \rbar_\text{unsafe}(s_T, a_T, s_{T+1}) \right) \\
    &\geq \E_s^{\pi} \left[ G^{T-1} \right] + \left( P^{\pi}_{s}(s_{T}  \not\in \unsafegoals) \rbar(s_T, a_T, s_{T+1}) + P^{\pi}_{s}(s_{T}  \in \unsafegoals) \rbar_\text{unsafe}(s_T, a_T, s_{T+1}) \right), \\
    &\text{ where } \rbar_\text{unsafe}(s,a) \text{ denotes the rewards in $\unsafegoals$ and } a_T = \pistar(g). \\
    \implies 
    &\E_s^{\pistar} \left[ G^{T-1} \right] + \left( P^{\pistar}_{s}(s_{T}  \in \unsafegoals) - P^{\pi}_{s}(s_{T}  \in \unsafegoals) \right) \rbar_\text{unsafe}(s_T, a_T, s_{T+1}) \\
    &\geq \E_s^{\pi} \left[ G^{T-1} \right] + \left( P^{\pi}_{s}(s_{T}  \not\in \unsafegoals) - P^{\pistar}_{s}(s_{T}  \not\in \unsafegoals) \right) \rbar(s_T, a_T, s_{T+1}) \\
    \implies 
    &\E_s^{\pistar} \left[ G^{T-1} \right] + \left( P^{\pistar}_{s}(s_{T}  \in \unsafegoals) - P^{\pi}_{s}(s_{T}  \in \unsafegoals) \right) \rbarmin \\
    &> \E_s^{\pi} \left[ G^{T-1} \right] + \left( P^{\pi}_{s}(s_{T}  \not\in \unsafegoals) - P^{\pistar}_{s}(s_{T}  \not\in \unsafegoals) \right) \rbar(s_T, a_T, s_{T+1}), \text{ since } \rbar_\text{unsafe}(s,a) < \rbarmin.\\
    \implies 
    &\E_s^{\pistar} \left[ G^{T-1} \right] + \left( P^{\pistar}_{s}(s_{T}  \in \unsafegoals) - P^{\pi}_{s}(s_{T}  \in \unsafegoals) \right) (\rmin - \rmax)\frac{D}{C} \\
    &> \E_s^{\pi} \left[ G^{T-1} \right] + \left( P^{\pi}_{s}(s_{T}  \not\in \unsafegoals) - P^{\pistar}_{s}(s_{T}  \not\in \unsafegoals) \right) \rbar(s_T, a_T, s_{T+1}), \text{ using definition of } \rbarmin.  \\
    \implies 
    &\E_s^{\pistar} \left[ G^{T-1} \right] + (\rmin - \rmax)D \\
    &> \E_s^{\pi} \left[ G^{T-1} \right] + \left( P^{\pi}_{s}(s_{T}  \not\in \unsafegoals) - P^{\pistar}_{s}(s_{T}  \not\in \unsafegoals) \right) \rbar(s_T, a_T, s_{T+1}), \text{ using definition of } C.  \\
    \implies 
    &\E_s^{\pistar} \left[ G^{T-1} \right] - \rmax D \\
    &> \E_s^{\pi} \left[ G^{T-1} \right] + \left( P^{\pi}_{s}(s_{T}  \not\in \unsafegoals) - P^{\pistar}_{s}(s_{T}  \not\in \unsafegoals) \right) \rbar(s_T, a_T, s_{T+1}) - \rmin D  \\
    \implies 
    &\E_s^{\pistar} \left[ G^{T-1} \right] - \rmax D > 0, \\
    &\text{ since } \E_s^{\pi} \left[ G^{T-1} \right] + \left( P^{\pi}_{s}(s_{T}  \not\in \unsafegoals) - P^{\pistar}_{s}(s_{T}  \not\in \unsafegoals) \right) \rbar(s_T, a_T, s_{T+1}) \geq \rmin D \\
    \implies 
    &\E_s^{\pistar} \left[ G^{T-1} \right] > \rmax D.
\end{align*}
\endgroup
But this is a contradiction since the expected return of following an optimal policy up to a terminal state without the reward for entering the terminal state must be less than receiving $\rmax$ for every step of the longest possible trajectory to $\goals$. 
Hence we must have 
$
\pistar \in \argmin\limits_{\pi} P^{\pi}_{s}(s_{T} \in \unsafegoals).
$

\textit{(ii)} Assume $\pistar$ is safe. Then, 
$
P^{\pistar}_{s}(s_{T} \not\in \unsafegoals) \geq P^{\pi}_{s}(s_{T} \not\in \unsafegoals)
$ 
for all $ s\in\state$, $\pi \in \Pi$.
Similarly to (i), % we have

\begingroup
\allowdisplaybreaks
\begin{align*}
    &V^{\pistar}(s) \geq V^{\pi}(s) \\
    \implies 
    &\E_s^{\pistar} \left[ G^{T-1} \right] + \left( P^{\pistar}_{s}(s_{T}  \in \unsafegoals) - P^{\pi}_{s}(s_{T}  \in \unsafegoals) \right) \rbar_\text{unsafe}(s_T, a_T, s_{T+1}) \\
    &\geq \E_s^{\pi} \left[ G^{T-1} \right] + \left( P^{\pi}_{s}(s_{T}  \not\in \unsafegoals) - P^{\pistar}_{s}(s_{T}  \not\in \unsafegoals) \right) \rbar(s_T, a_T, s_{T+1}) \\
    \implies 
    &\E_s^{\pi} \left[ G^{T-1} \right] + \left( P^{\pi}_{s}(s_{T}  \in \unsafegoals) - P^{\pistar}_{s}(s_{T}  \in \unsafegoals) \right) \rbar_\text{unsafe}(s_T, a_T, s_{T+1}) \\
    &\leq \E_s^{\pistar} \left[ G^{T-1} \right] + \left( P^{\pistar}_{s}(s_{T}  \not\in \unsafegoals) - P^{\pi}_{s}(s_{T}  \not\in \unsafegoals) \right) \rbar(s_T, a_T, s_{T+1}) \\
    \implies 
    &\E_s^{\pi} \left[ G^{T-1} \right] + \left( P^{\pi}_{s}(s_{T}  \in \unsafegoals) - P^{\pistar}_{s}(s_{T}  \in \unsafegoals) \right) \rbarmin \\
    &< \E_s^{\pistar} \left[ G^{T-1} \right] + \left( P^{\pistar}_{s}(s_{T}  \not\in \unsafegoals) - P^{\pi}_{s}(s_{T}  \not\in \unsafegoals) \right) \rbar(s_T, a_T, s_{T+1}), \text{ since } \rbar_\text{unsafe} > \rbarmin.\\
    \implies 
    &\E_s^{\pi} \left[ G^{T-1} \right] + \left( P^{\pi}_{s}(s_{T}  \in \unsafegoals) - P^{\pistar}_{s}(s_{T}  \in \unsafegoals) \right) (\rmin - \rmax)\frac{D}{C} \\
    &< \E_s^{\pistar} \left[ G^{T-1} \right] + \left( P^{\pistar}_{s}(s_{T}  \not\in \unsafegoals) - P^{\pi}_{s}(s_{T}  \not\in \unsafegoals) \right) \rbar(s_T, a_T, s_{T+1}), \text{ by definition of } \rbarmin.  \\
    \implies 
    &\E_s^{\pi} \left[ G^{T-1} \right] + \left( P^{\pistar}_{s}(s_{T}  \in \unsafegoals) - P^{\pi}_{s}(s_{T}  \in \unsafegoals) \right) (\rmax-\rmin)\frac{D}{C} \\
    &< \E_s^{\pistar} \left[ G^{T-1} \right] + \left( P^{\pistar}_{s}(s_{T}  \not\in \unsafegoals) - P^{\pi}_{s}(s_{T}  \not\in \unsafegoals) \right) \rbar(s_T, a_T, s_{T+1})  \\
    \implies 
    &\E_s^{\pi} \left[ G^{T-1} \right] + (\rmax-\rmin)D \\
    &< \E_s^{\pistar} \left[ G^{T-1} \right] + \left( P^{\pistar}_{s}(s_{T}  \not\in \unsafegoals) - P^{\pi}_{s}(s_{T}  \not\in \unsafegoals) \right) \rbar(s_T, a_T, s_{T+1}), \text{ by definition of } C.  \\
    % \implies 
    % &\E_s^{\pi} \left[ G^{T-1} \right] - \rmax D \\
    % &< \E_s^{\pistar} \left[ G^{T-1} \right] + \left( P^{\pistar}_{s}(s_{T}  \not\in \unsafegoals) - P^{\pi}_{s}(s_{T}  \not\in \unsafegoals) \right) \rbar(s_T, a_T, s_{T+1}) - \rmin D  \\
    \implies 
    &\E_s^{\pi} \left[ G^{T-1} \right] - \rmin D < 0, \\
    &\text{ since } \E_s^{\pistar} \left[ G^{T-1} \right] + \left( P^{\pistar}_{s}(s_{T}  \not\in \unsafegoals) - P^{\pi}_{s}(s_{T}  \not\in \unsafegoals) \right) \rbar(s_T, a_T, s_{T+1}) \leq \rmax D \\
    \implies 
    &\E_s^{\pi} \left[ G^{T-1} \right] < \rmin D.
\end{align*}
\endgroup
But this is a contradiction since $\rmin D$ is the smallest expected return under a proper policy. 
% Hence we must have
% $
% \pistar \in \argmin\limits_{\pi} P^{\pi}_{s}(s_{T} \not\in \unsafegoals).
% $

\end{proof}

\newpage
\section{Additional Figures}

% \section{Additional Experiments}
% In this section, we consider variations on the Safety-gym experiments from Section \ref{sec:exp:PointGoal1Hard}. In particular, we compare our safe RL approach to the baselines in the following domains:

% \paragraph{Safety Gym \pointgoal}\label{SafetyGym}

% \paragraph{Safety Gym \pointgoalflag}\label{SafetyGym}

% \paragraph{Safety Gym \pointgoalhard }\label{SafetyGym}

%

%

%
\begin{figure*}[b!]
    \centering
    \begin{subfigure}[t]{\textwidth}
        \centering
        \includegraphics[trim=0 300 0 500, clip, width=0.24\linewidth]{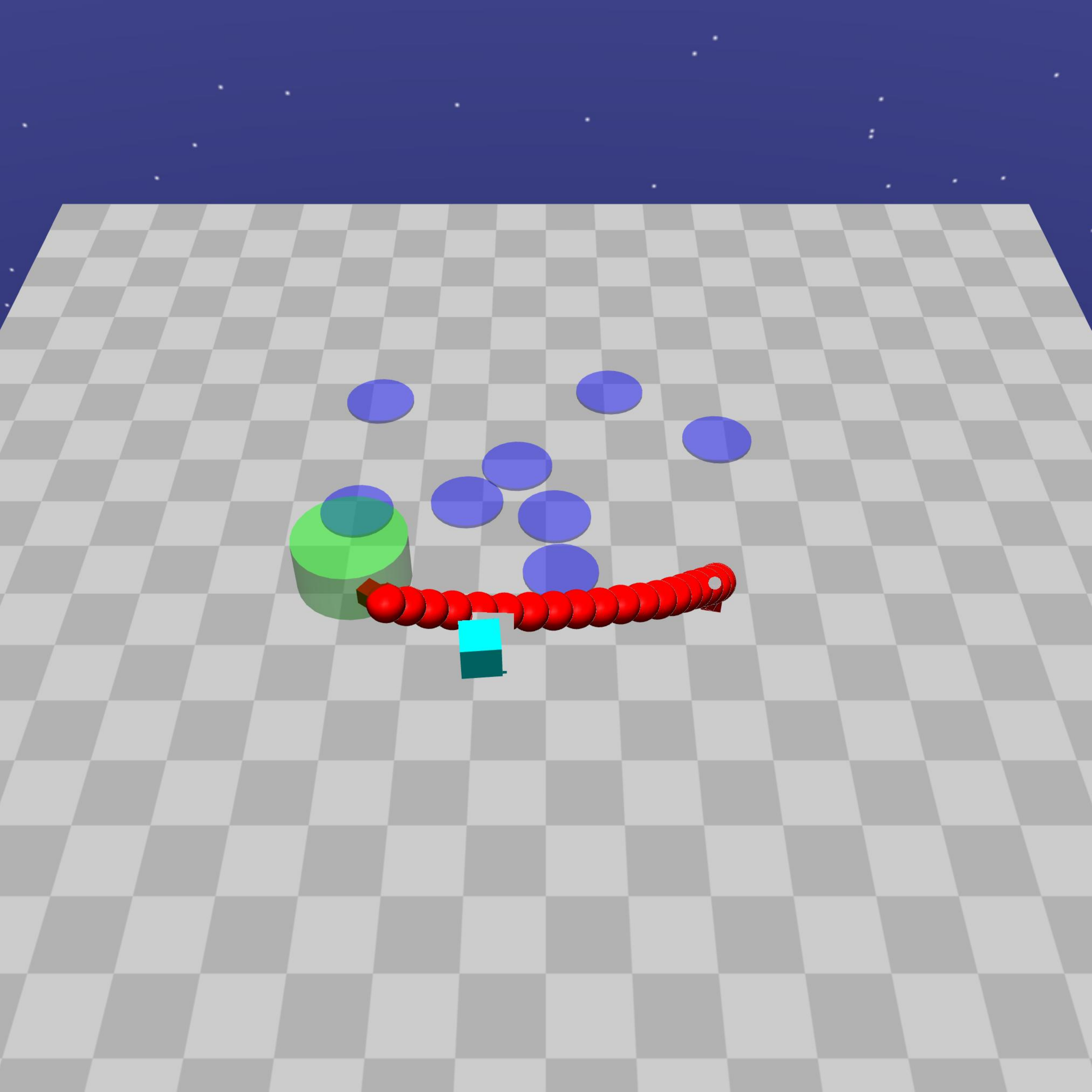}
        \includegraphics[trim=0 300 0 500, clip, width=0.24\linewidth]{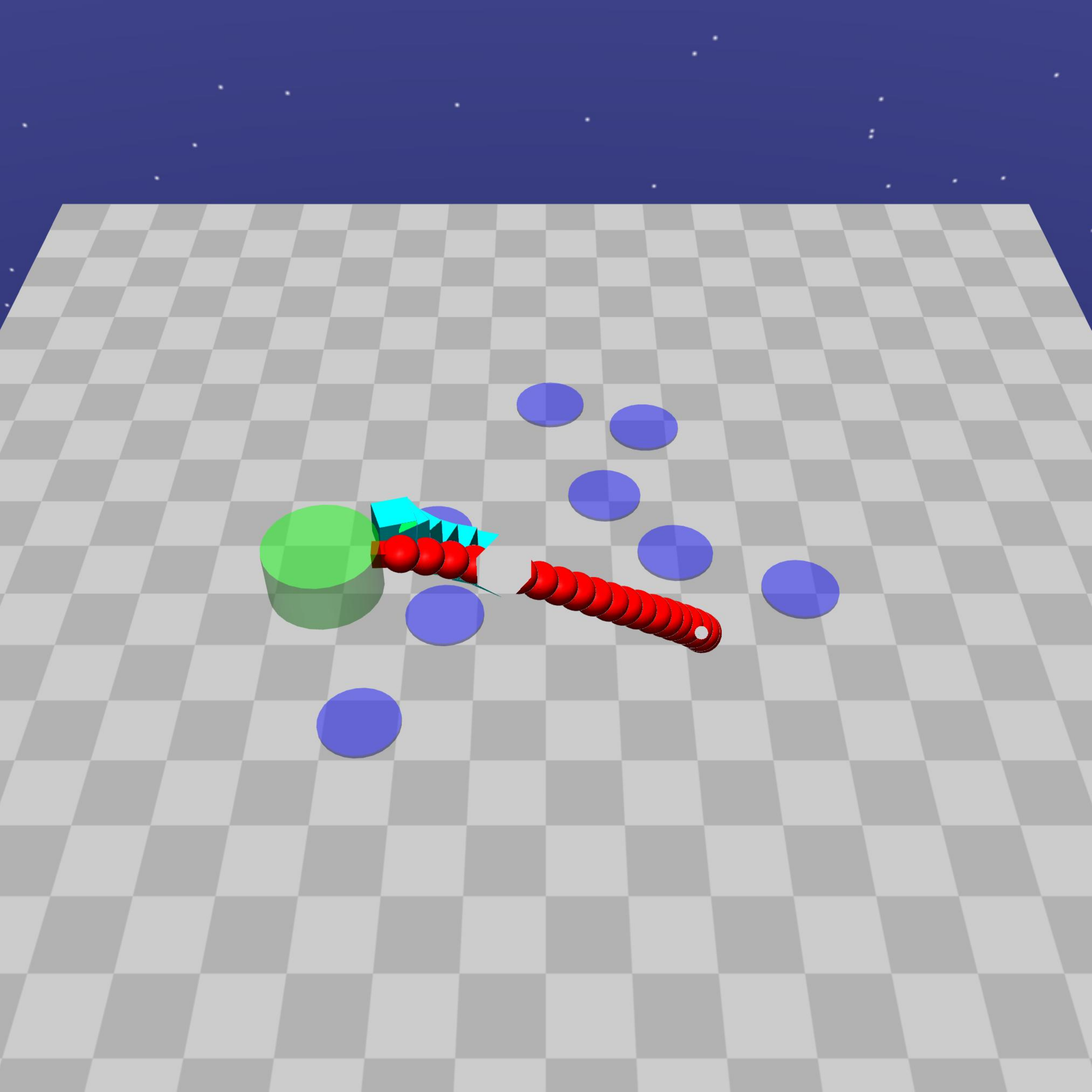}
        \includegraphics[trim=0 300 0 500, clip, width=0.24\linewidth]{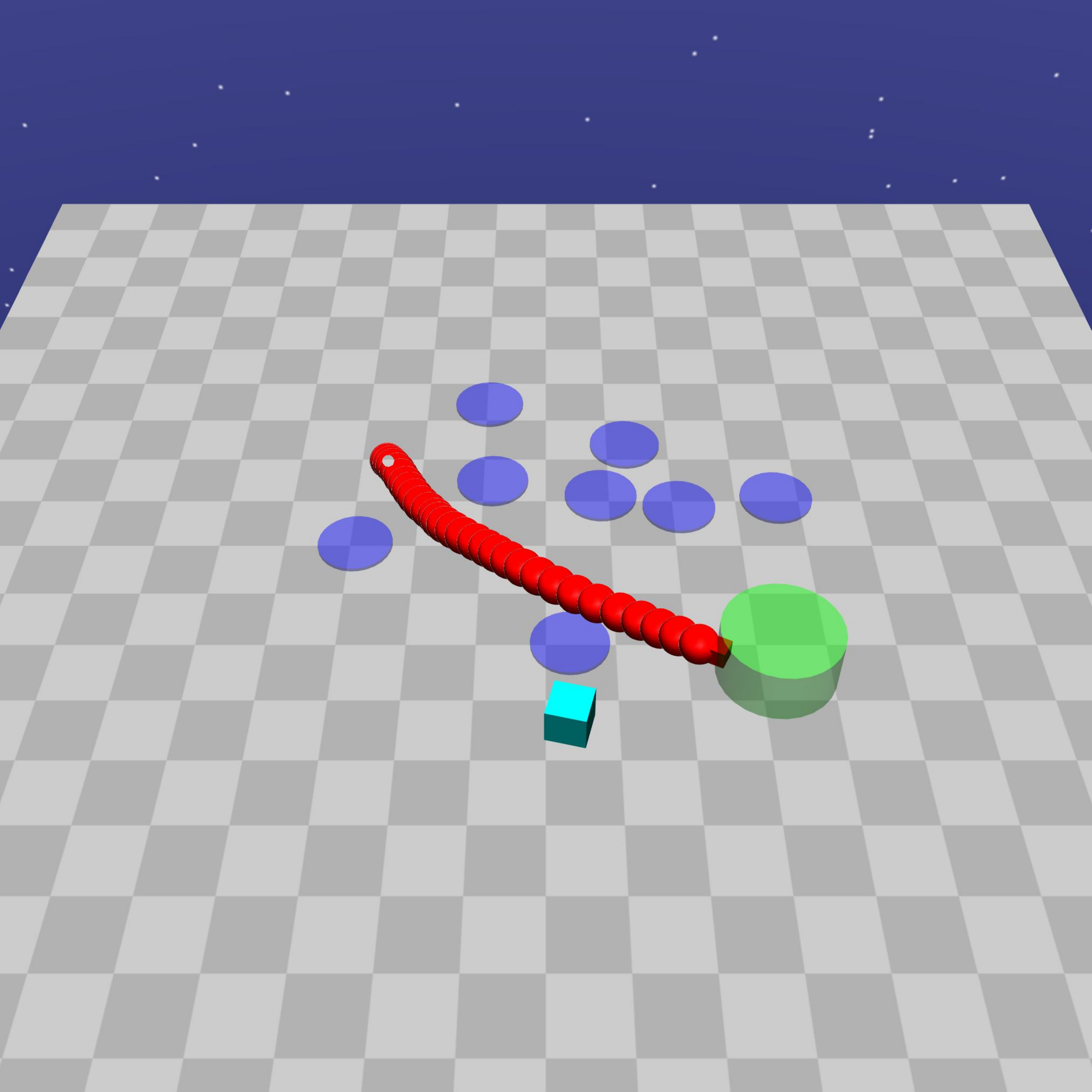}
        \includegraphics[trim=0 300 0 500, clip, width=0.24\linewidth]{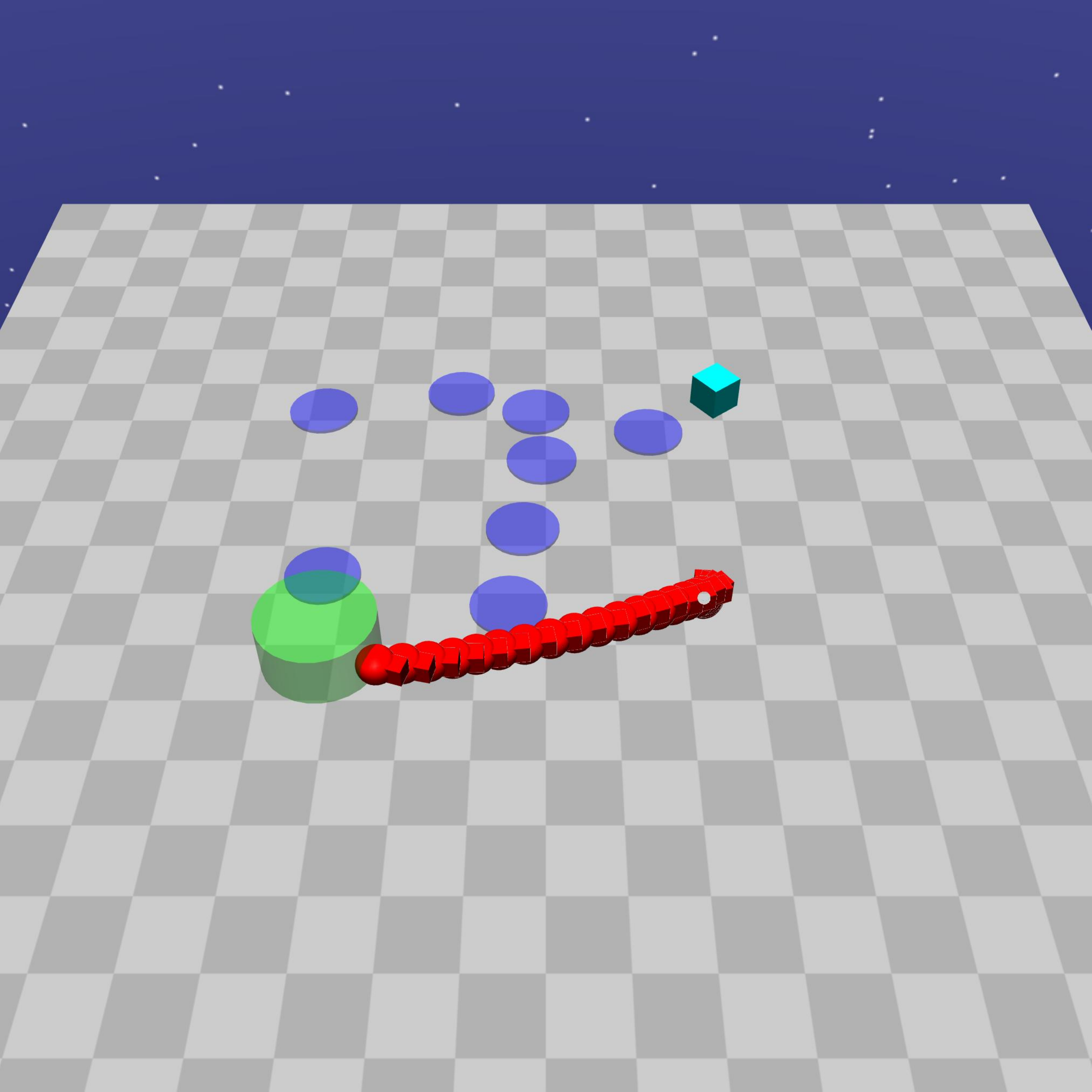}
        \\
        \includegraphics[trim=0 300 0 500, clip, width=0.24\linewidth]{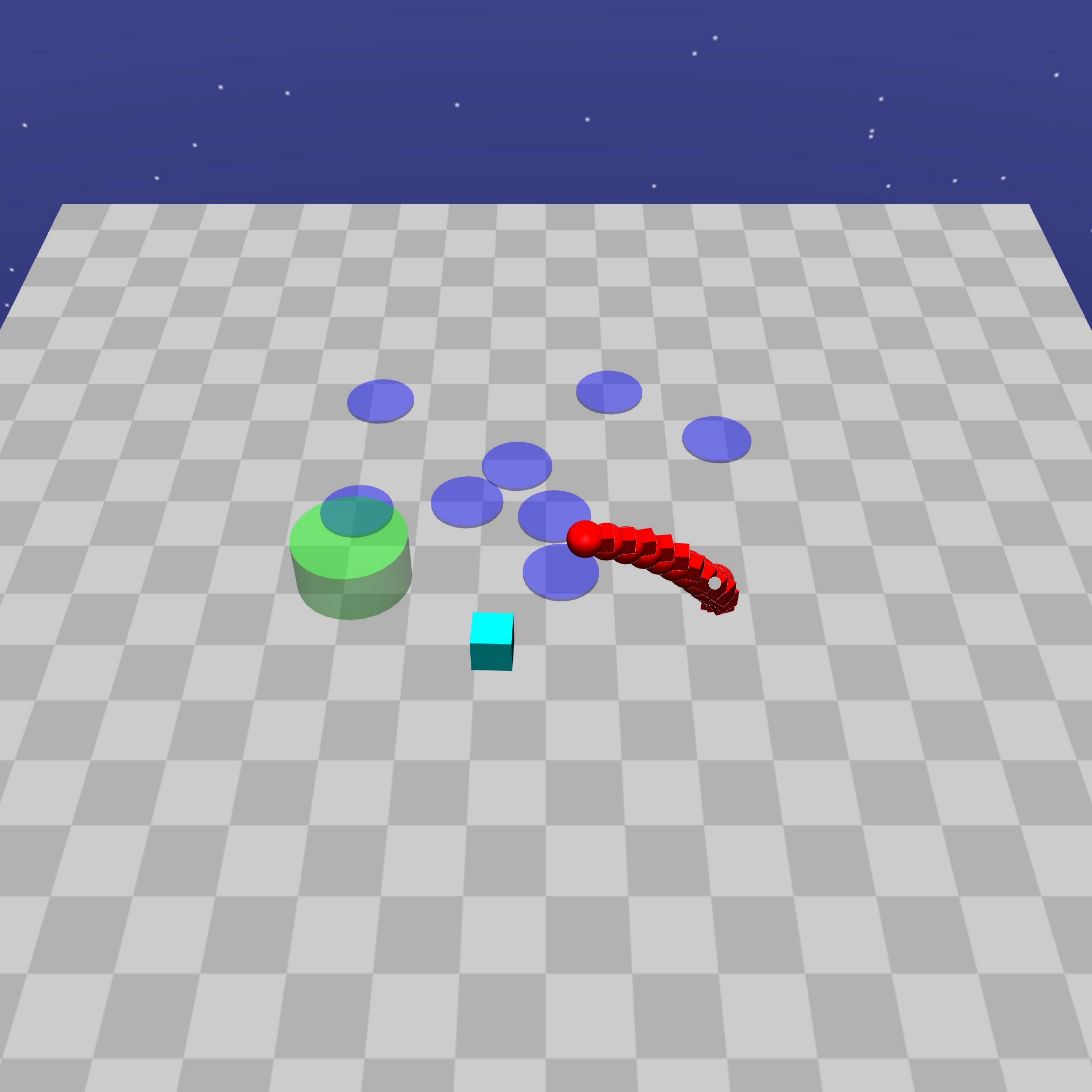}
        \includegraphics[trim=0 300 0 500, clip, width=0.24\linewidth]{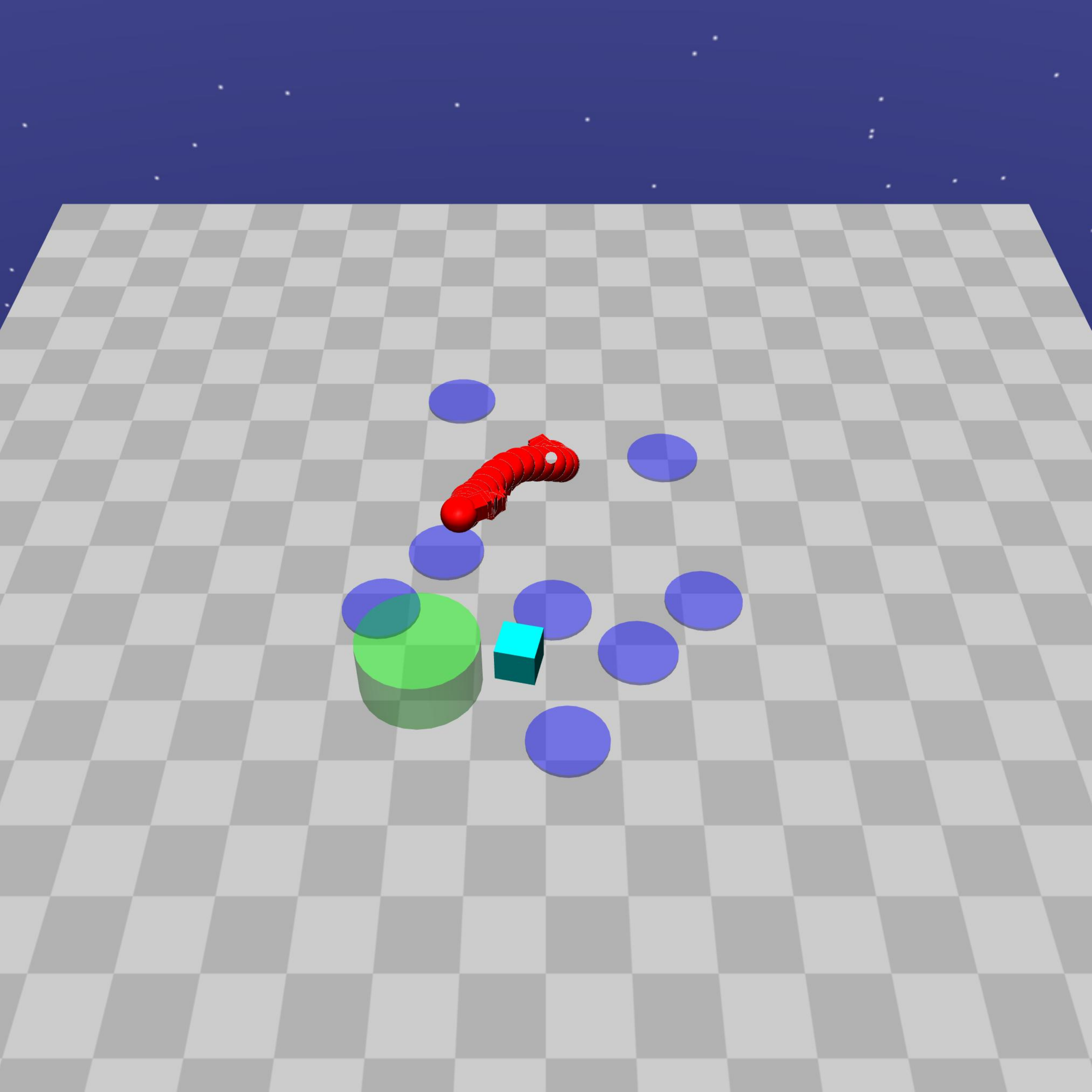}
        \includegraphics[trim=0 300 0 500, clip, width=0.24\linewidth]{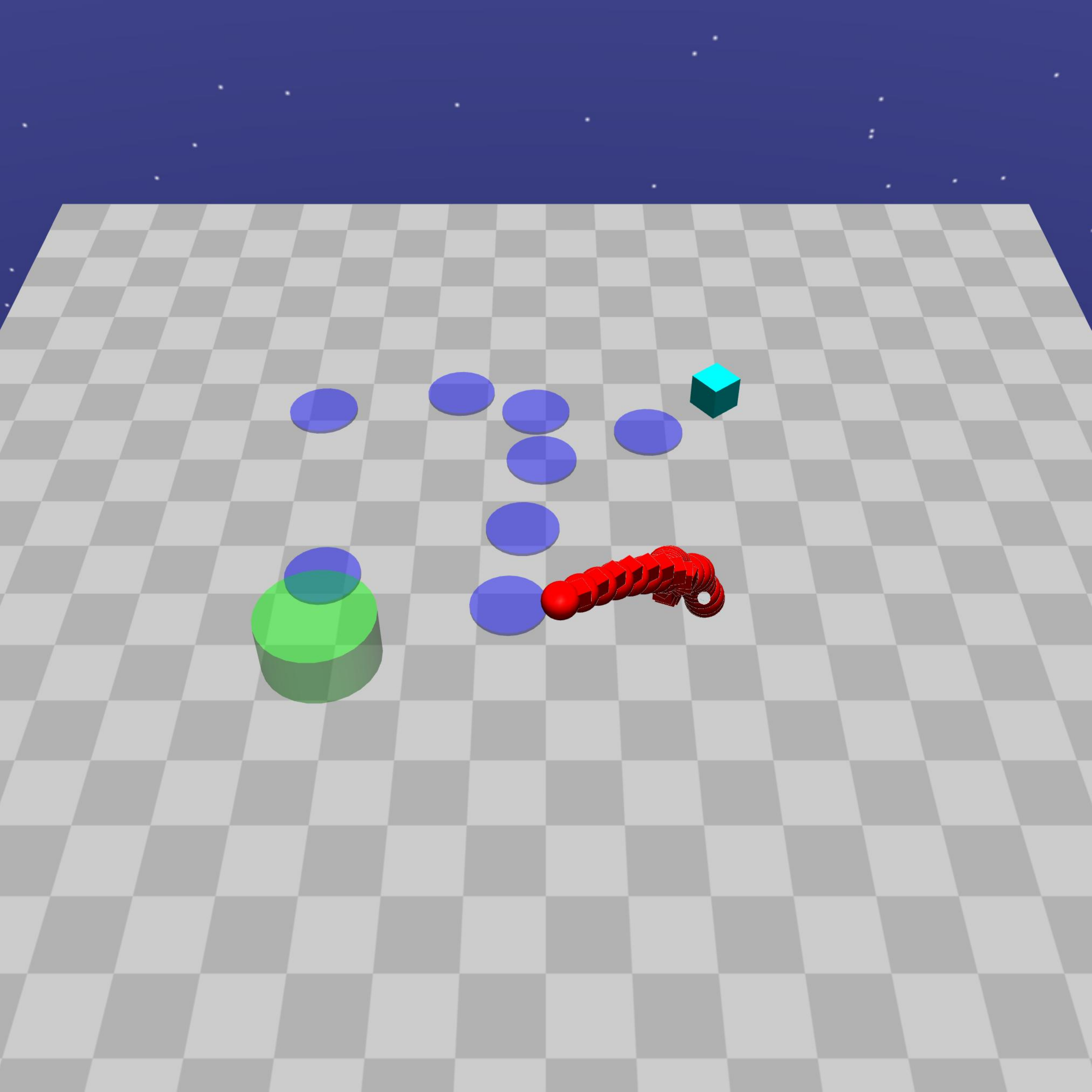}
        \includegraphics[trim=0 300 0 500, clip, width=0.24\linewidth]{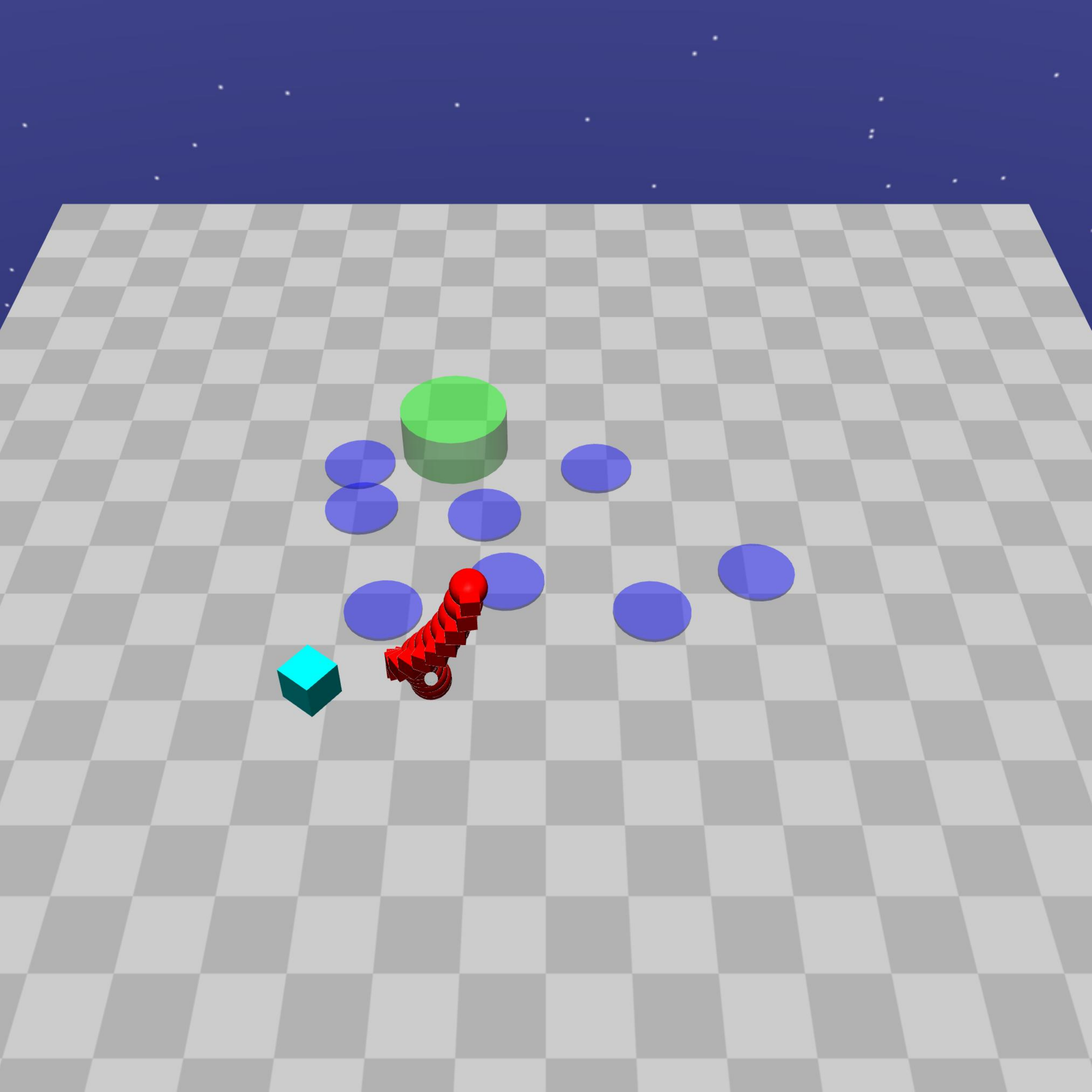}
        \caption{TRPO successes (top) and failures (bottom)}
    \end{subfigure}%
    \\
    \begin{subfigure}[t]{\textwidth}
        \centering
        \includegraphics[trim=0 300 0 500, clip, width=0.24\linewidth]{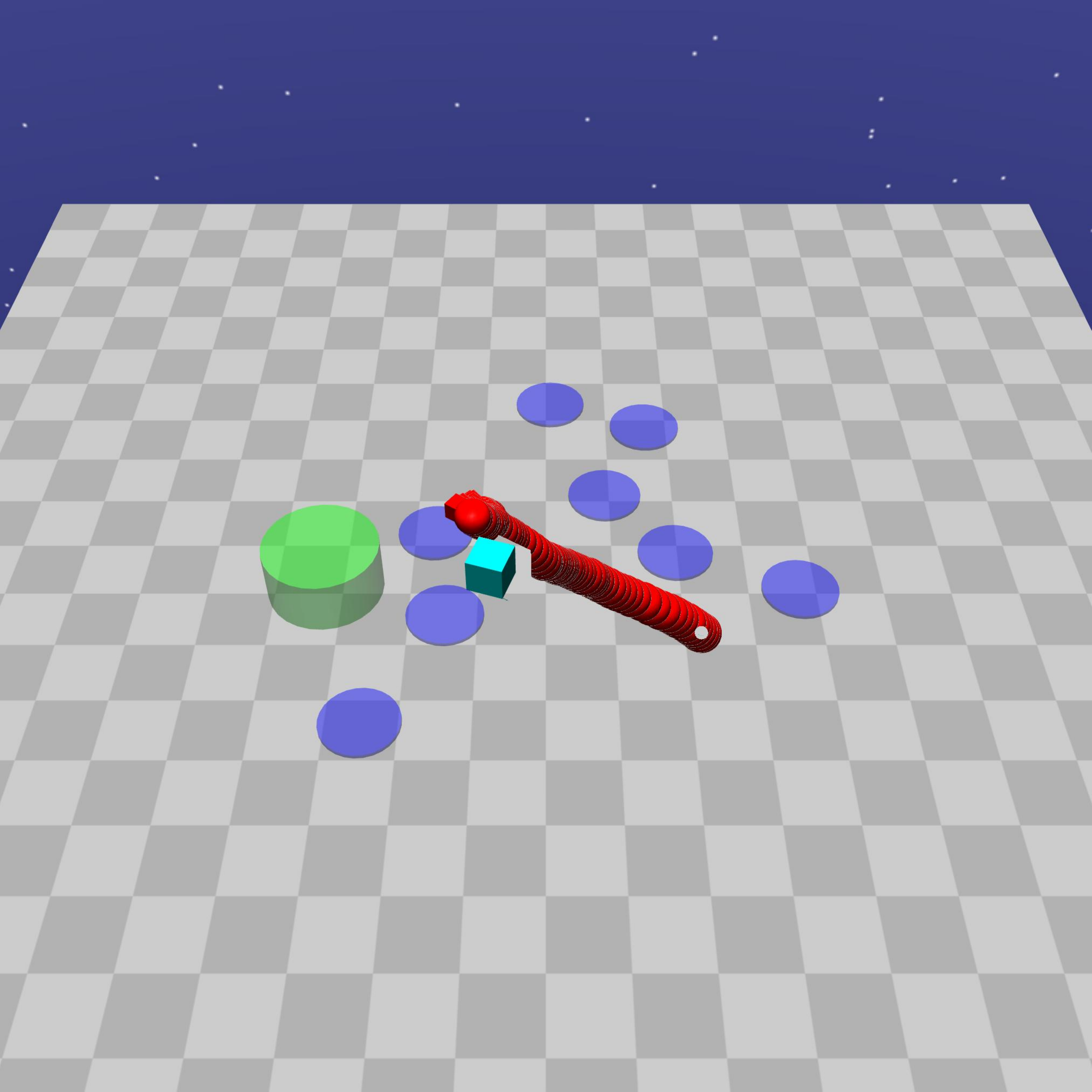}
        \includegraphics[trim=0 300 0 500, clip, width=0.24\linewidth]{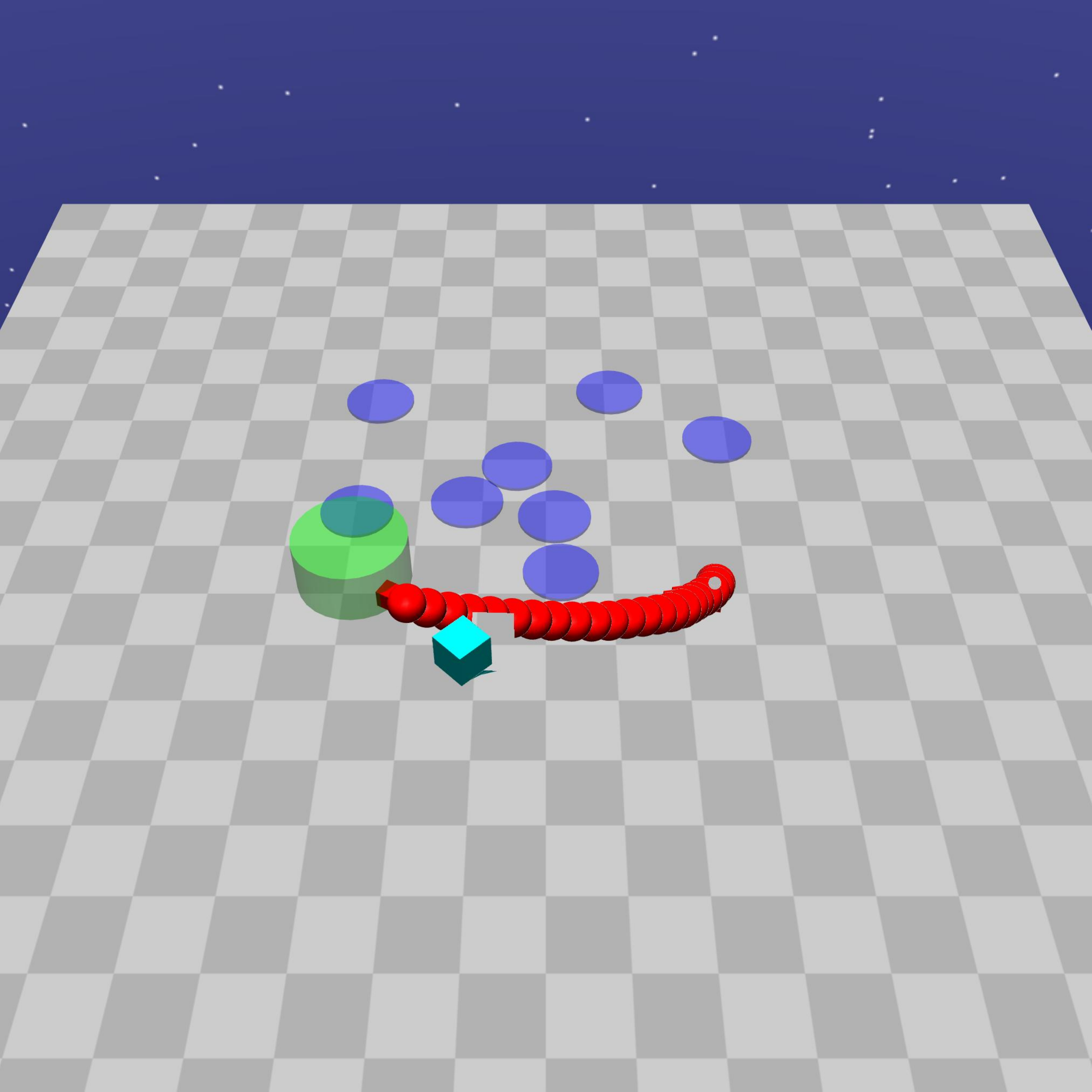}
        \includegraphics[trim=0 300 0 500, clip, width=0.24\linewidth]{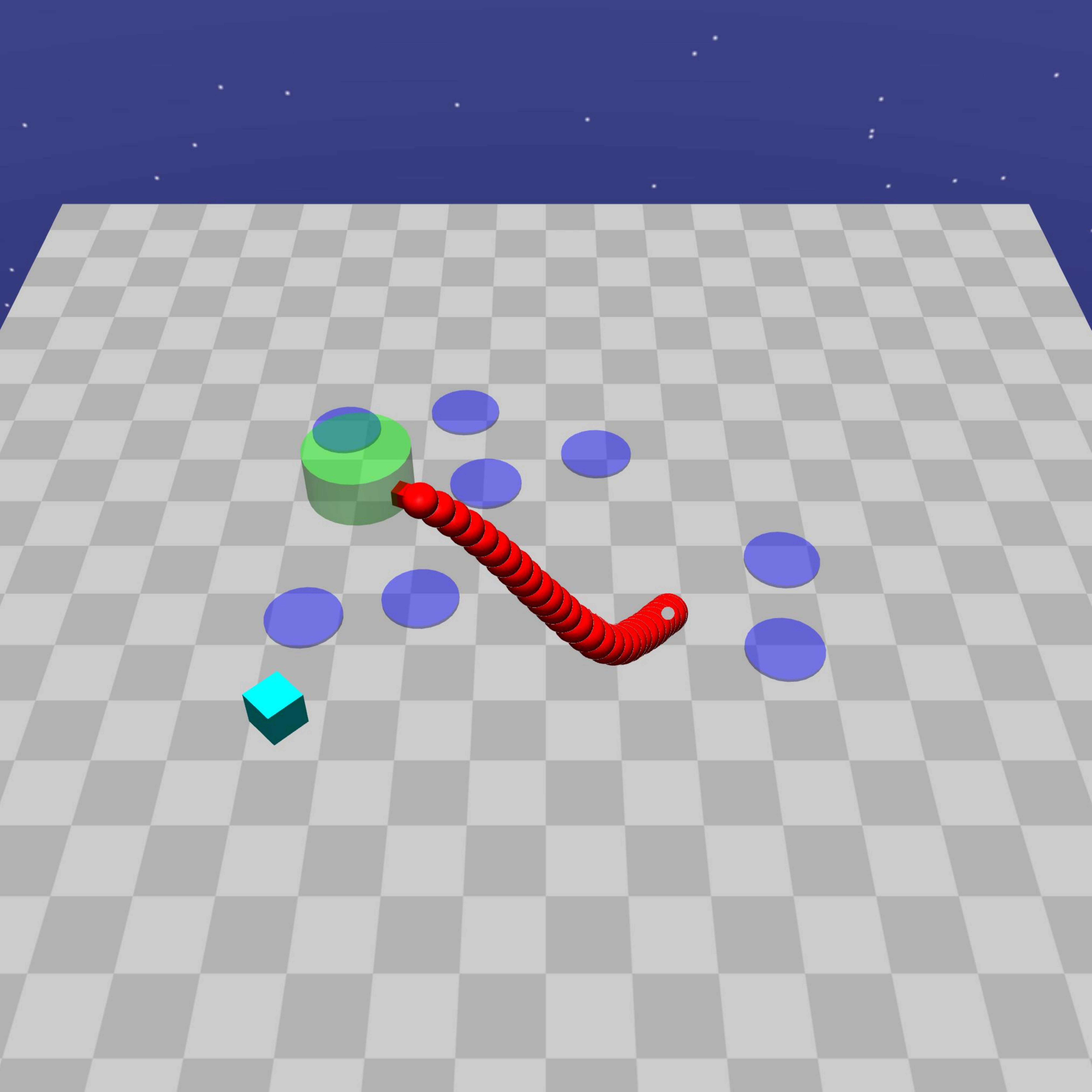}
        \includegraphics[trim=0 300 0 500, clip, width=0.24\linewidth]{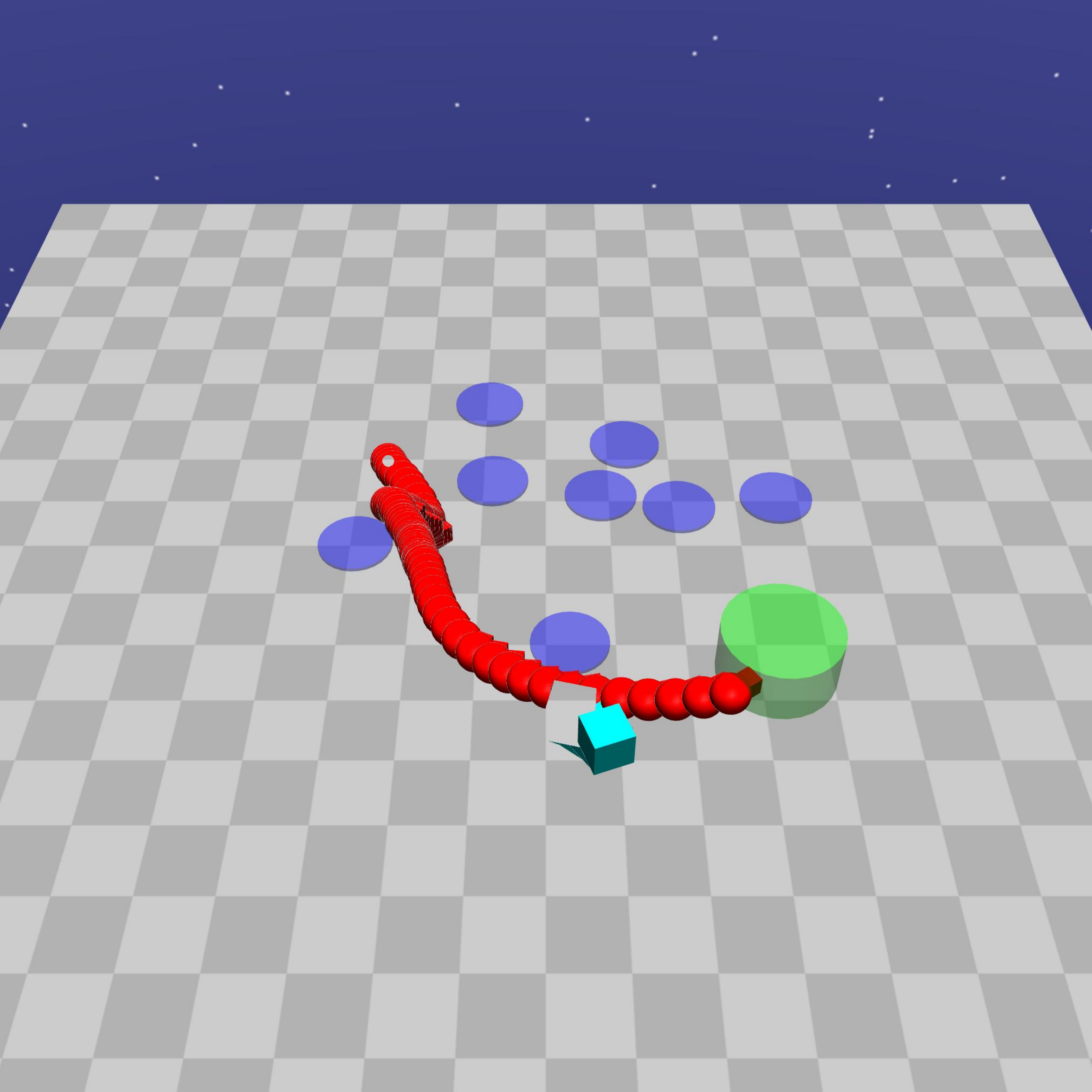}
        \\
        \includegraphics[trim=0 300 0 500, clip, width=0.24\linewidth]{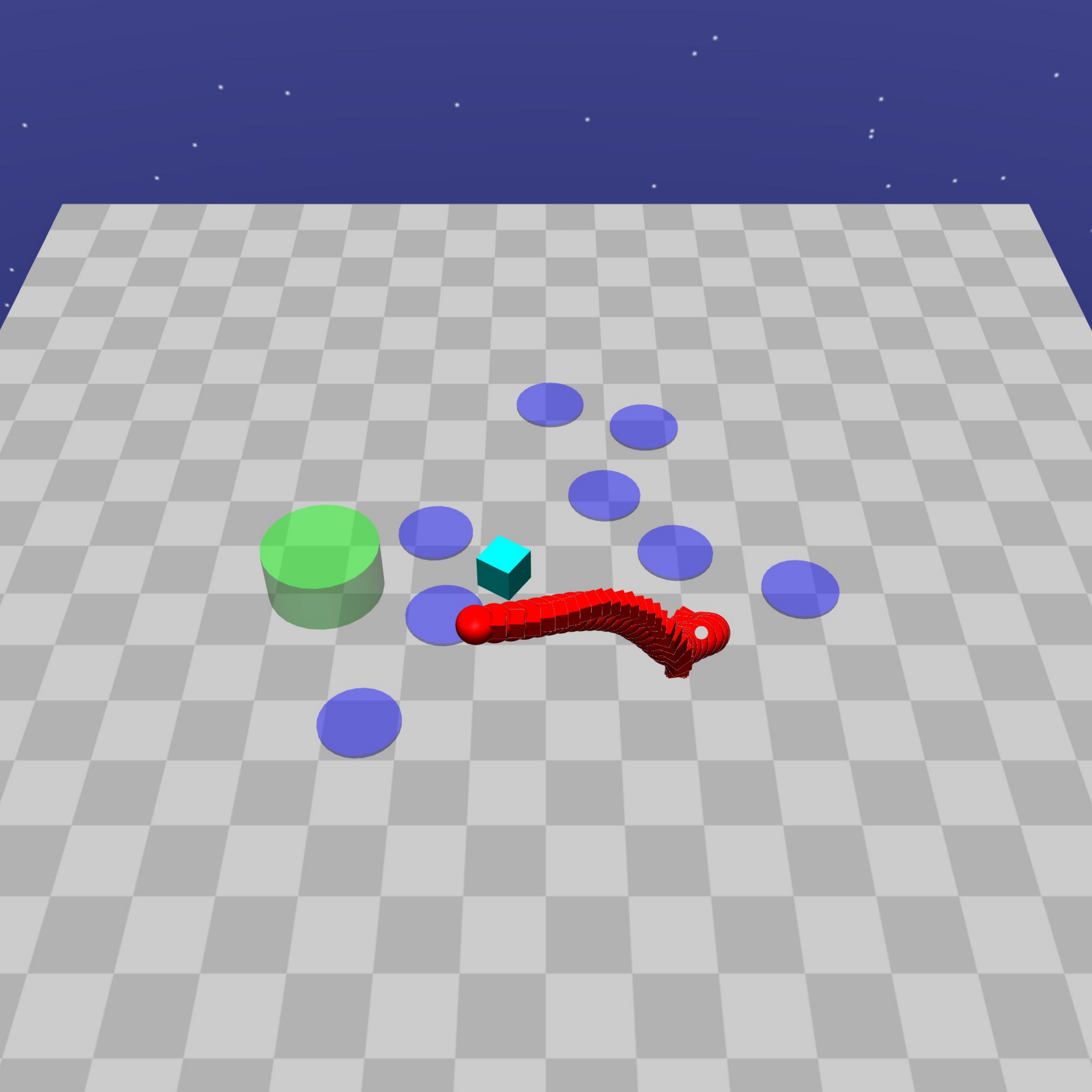}
        \includegraphics[trim=0 300 0 500, clip, width=0.24\linewidth]{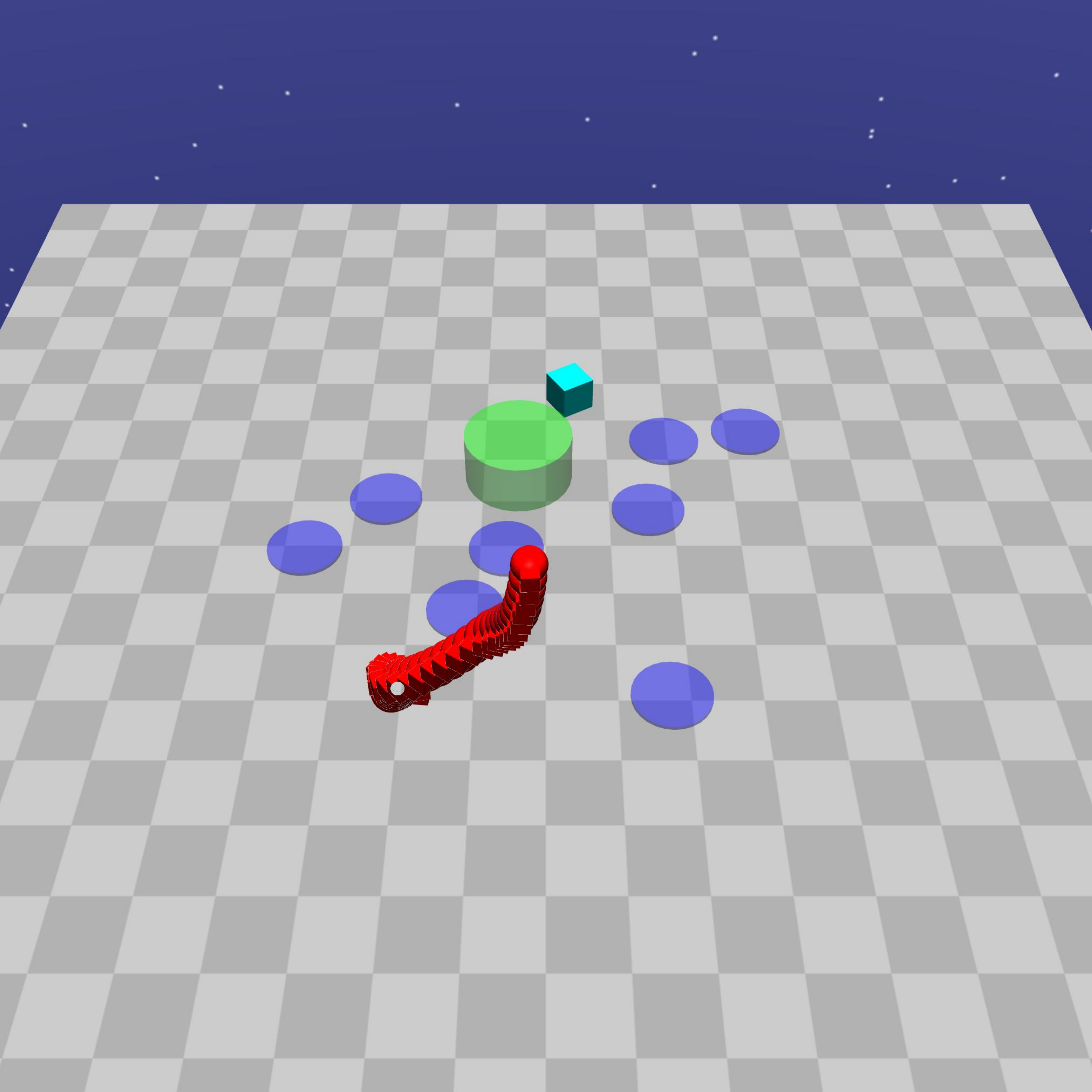}
        \includegraphics[trim=0 300 0 500, clip, width=0.24\linewidth]{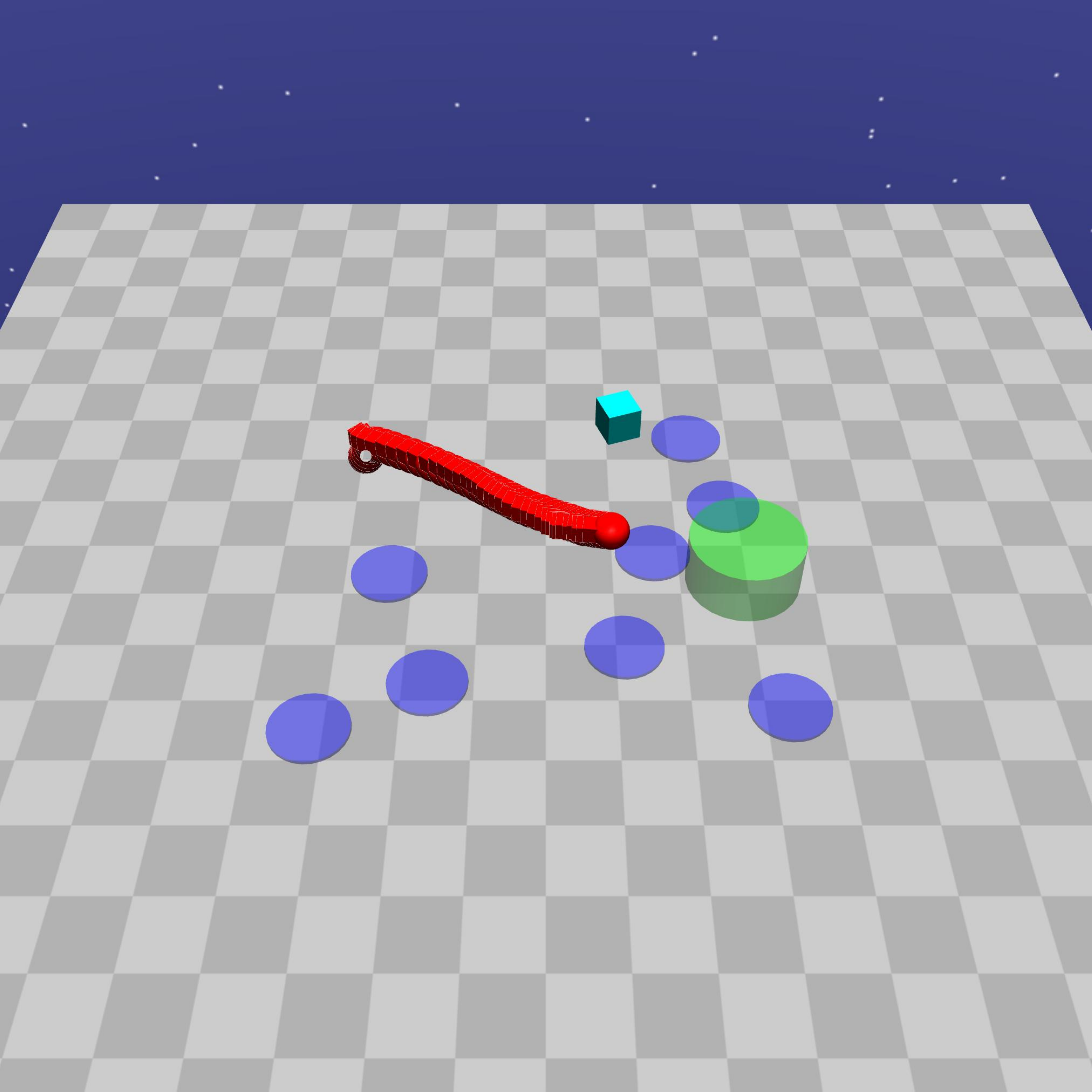}
        \includegraphics[trim=0 300 0 500, clip, width=0.24\linewidth]{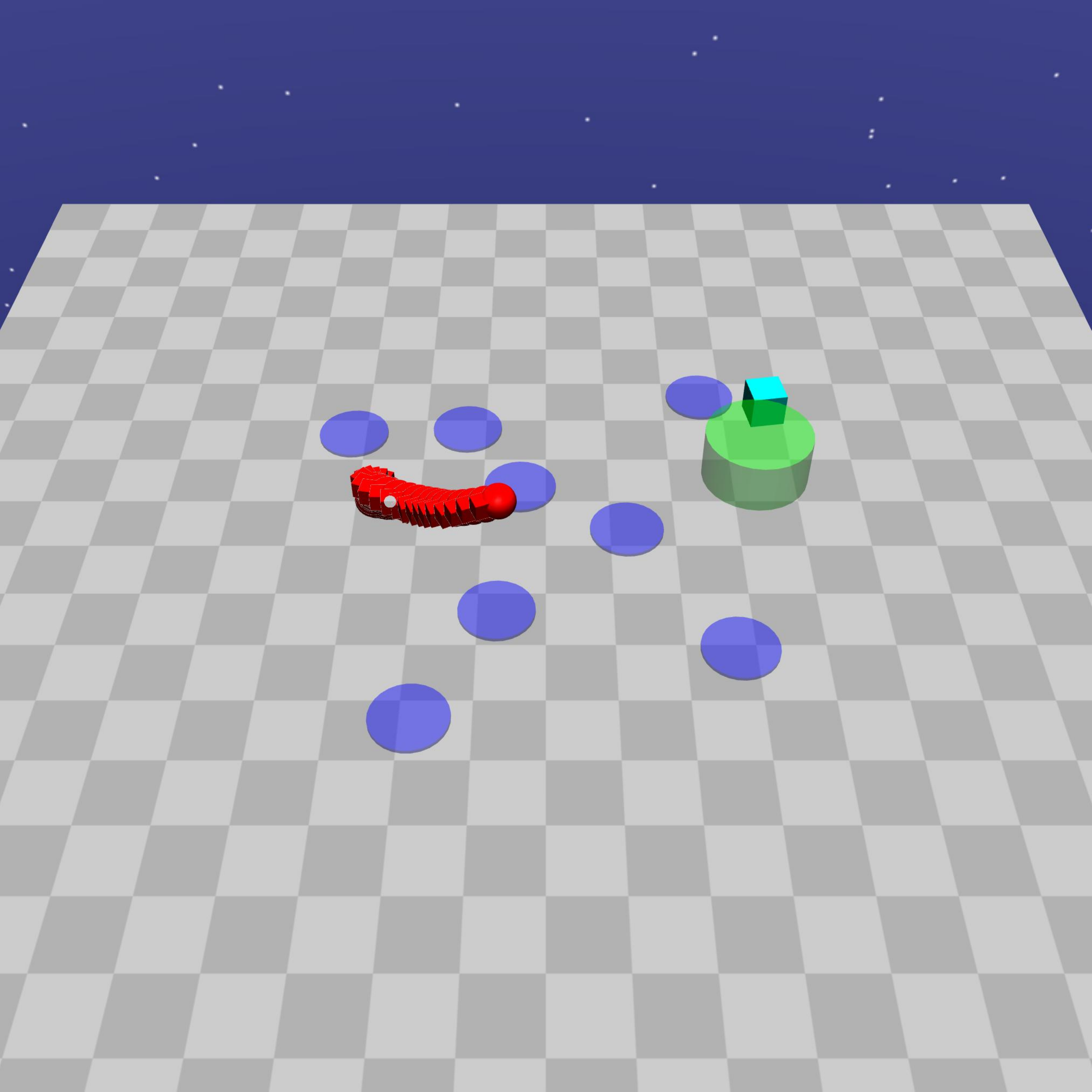}
        \caption{TRPO-Lagrangian successes (top) and failures (bottom)}
    \end{subfigure}%
    \\
    \begin{subfigure}[t]{\textwidth}
        \centering
        \includegraphics[trim=0 300 0 500, clip, width=0.24\linewidth]{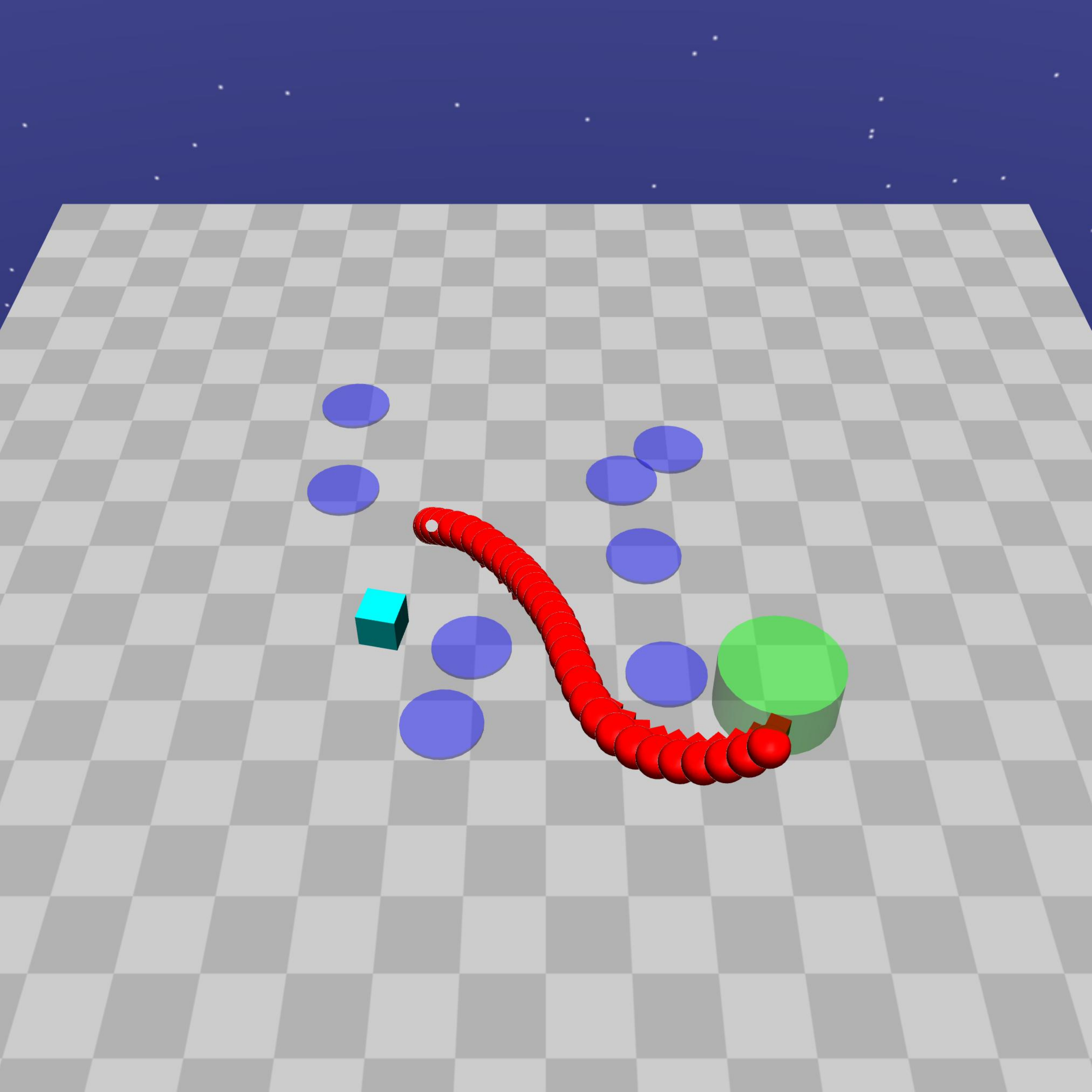}
        \includegraphics[trim=0 300 0 500, clip, width=0.24\linewidth]{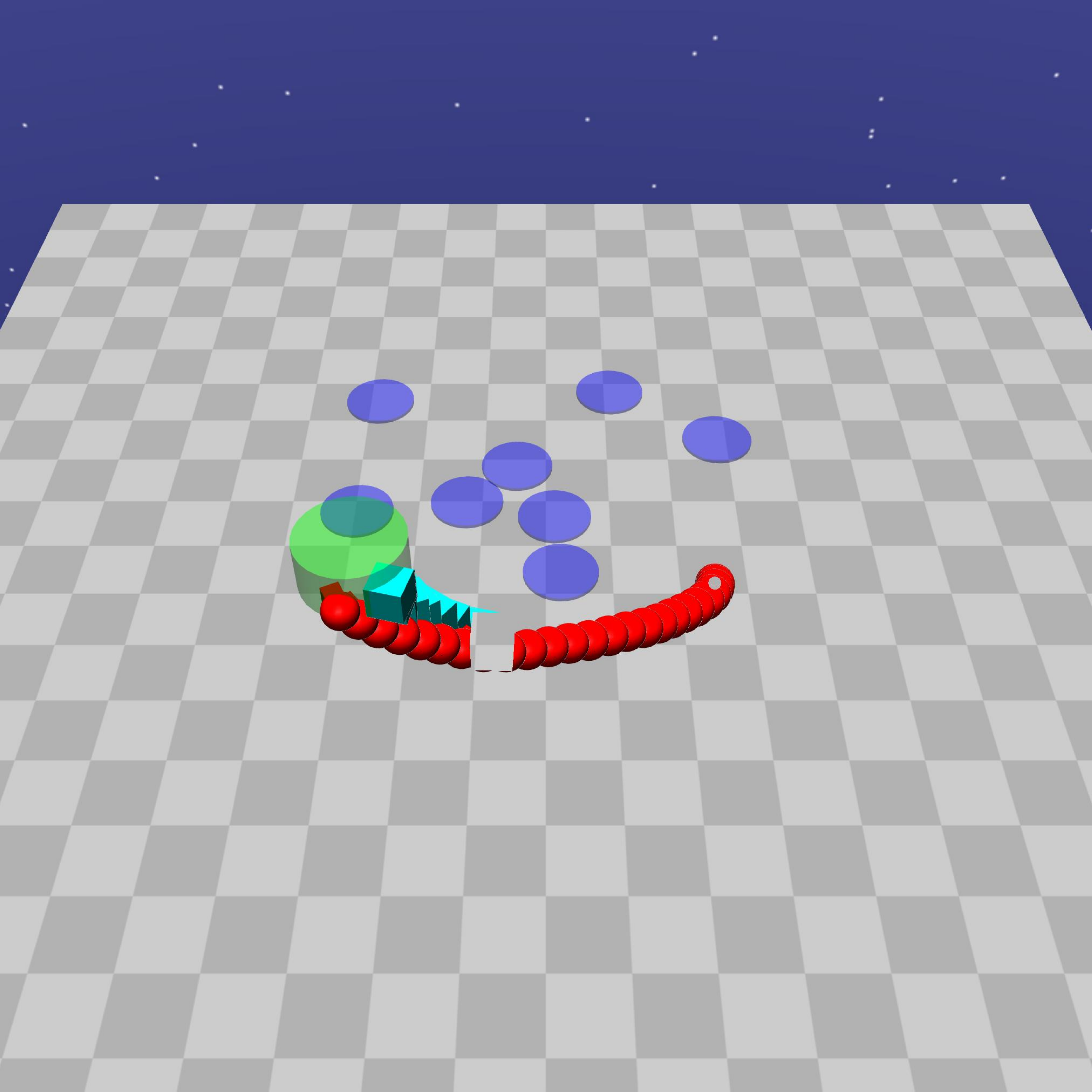}
        \includegraphics[trim=0 300 0 500, clip, width=0.24\linewidth]{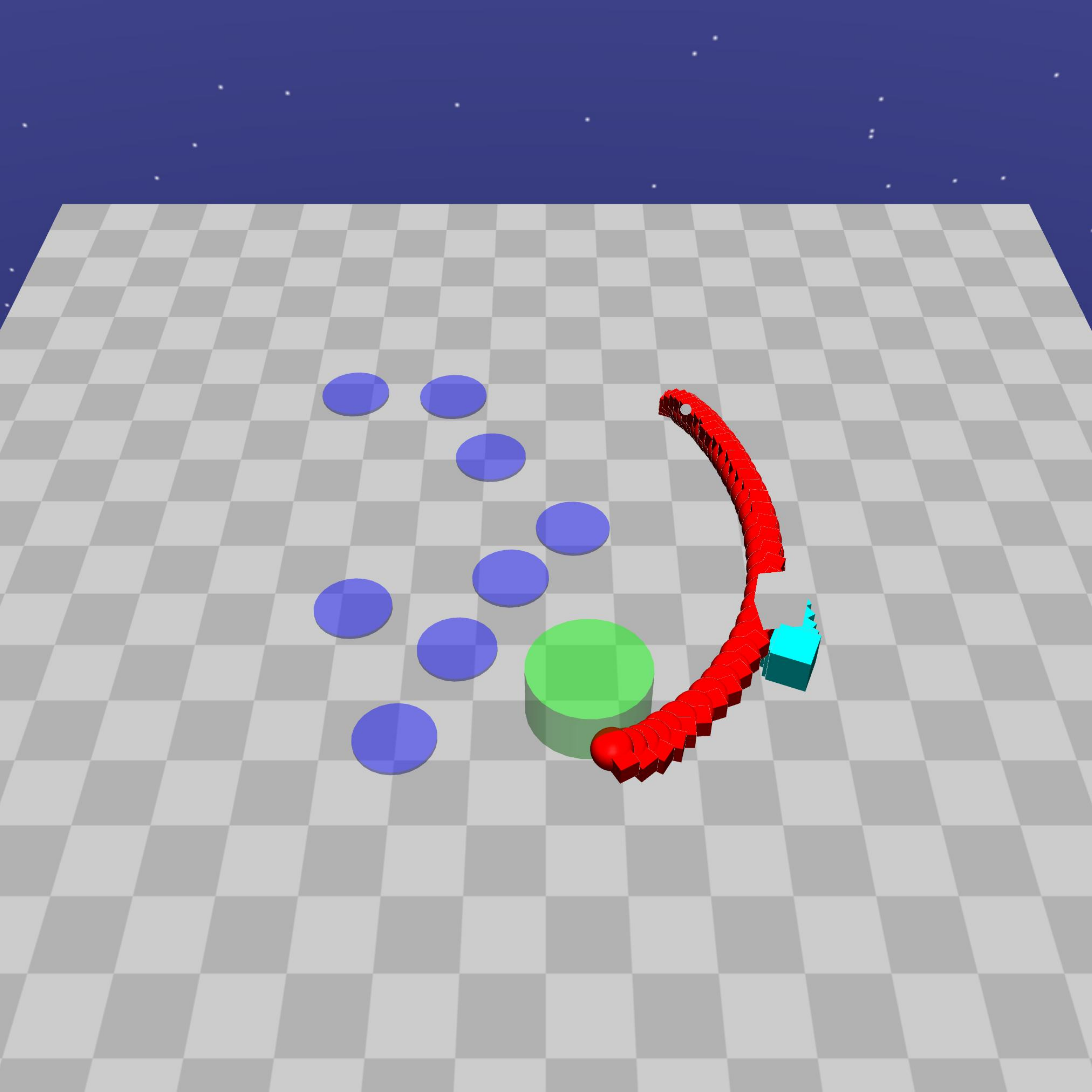}
        \includegraphics[trim=0 300 0 500, clip, width=0.24\linewidth]{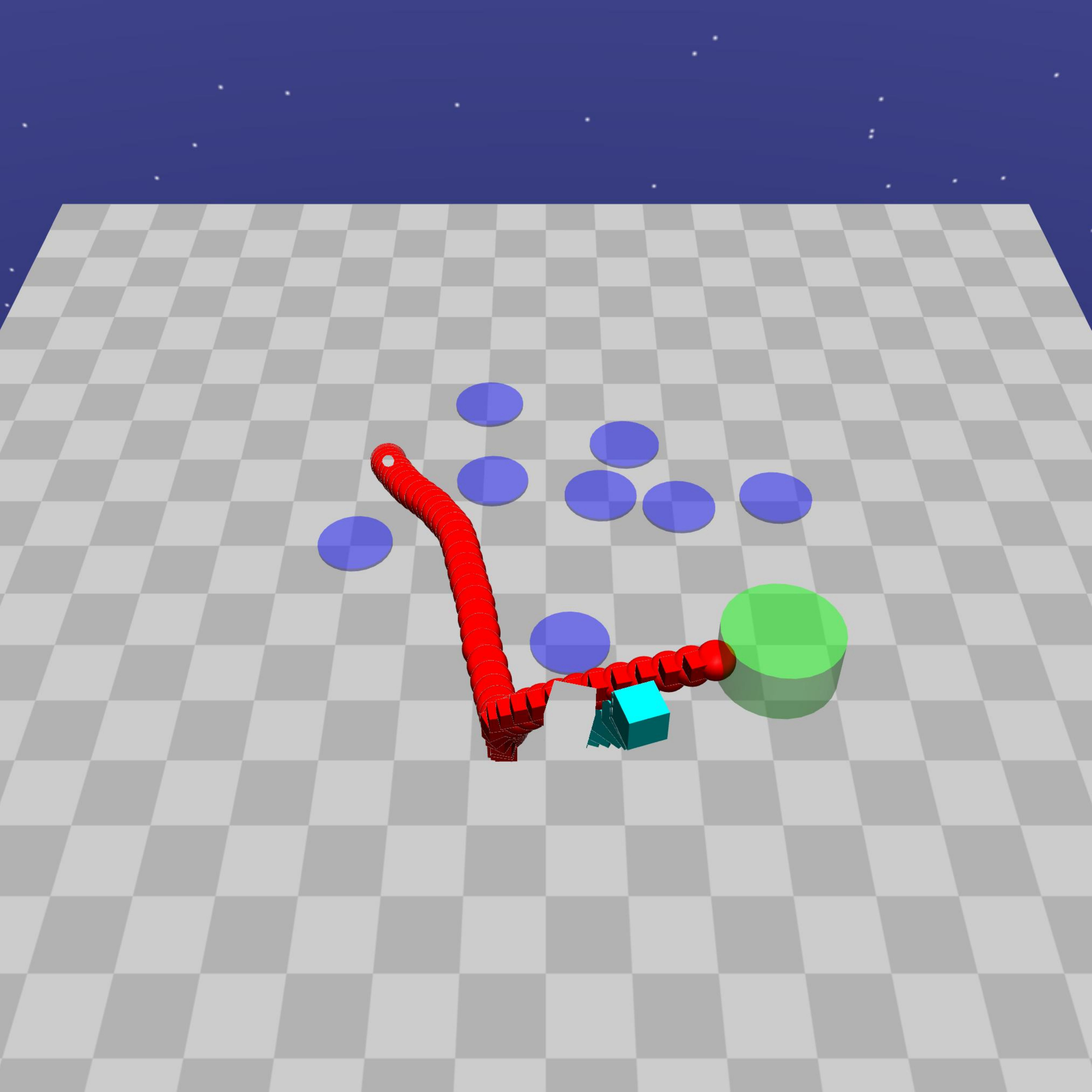}
        \\
        \includegraphics[trim=0 300 0 500, clip, width=0.24\linewidth]{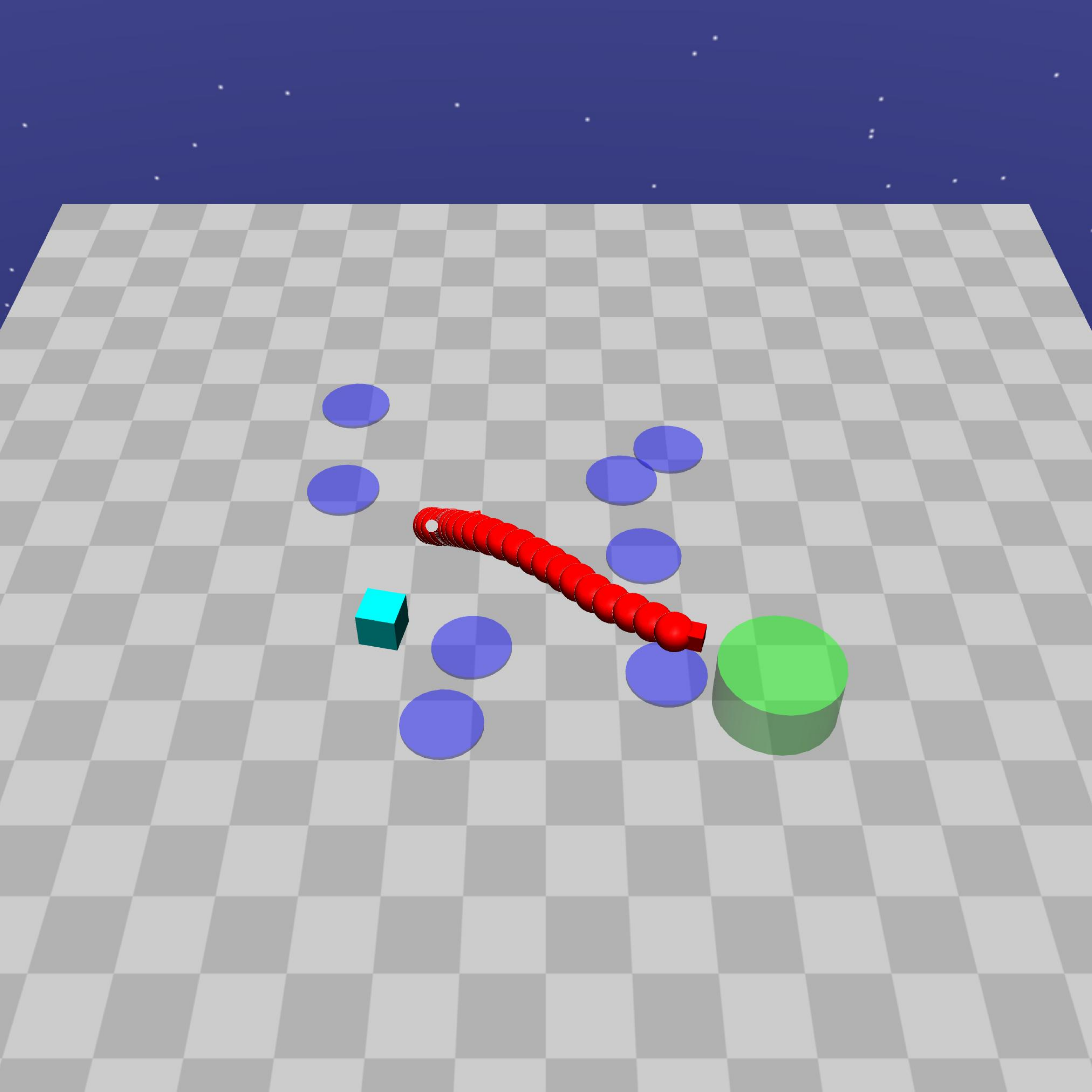}
        \includegraphics[trim=0 300 0 500, clip, width=0.24\linewidth]{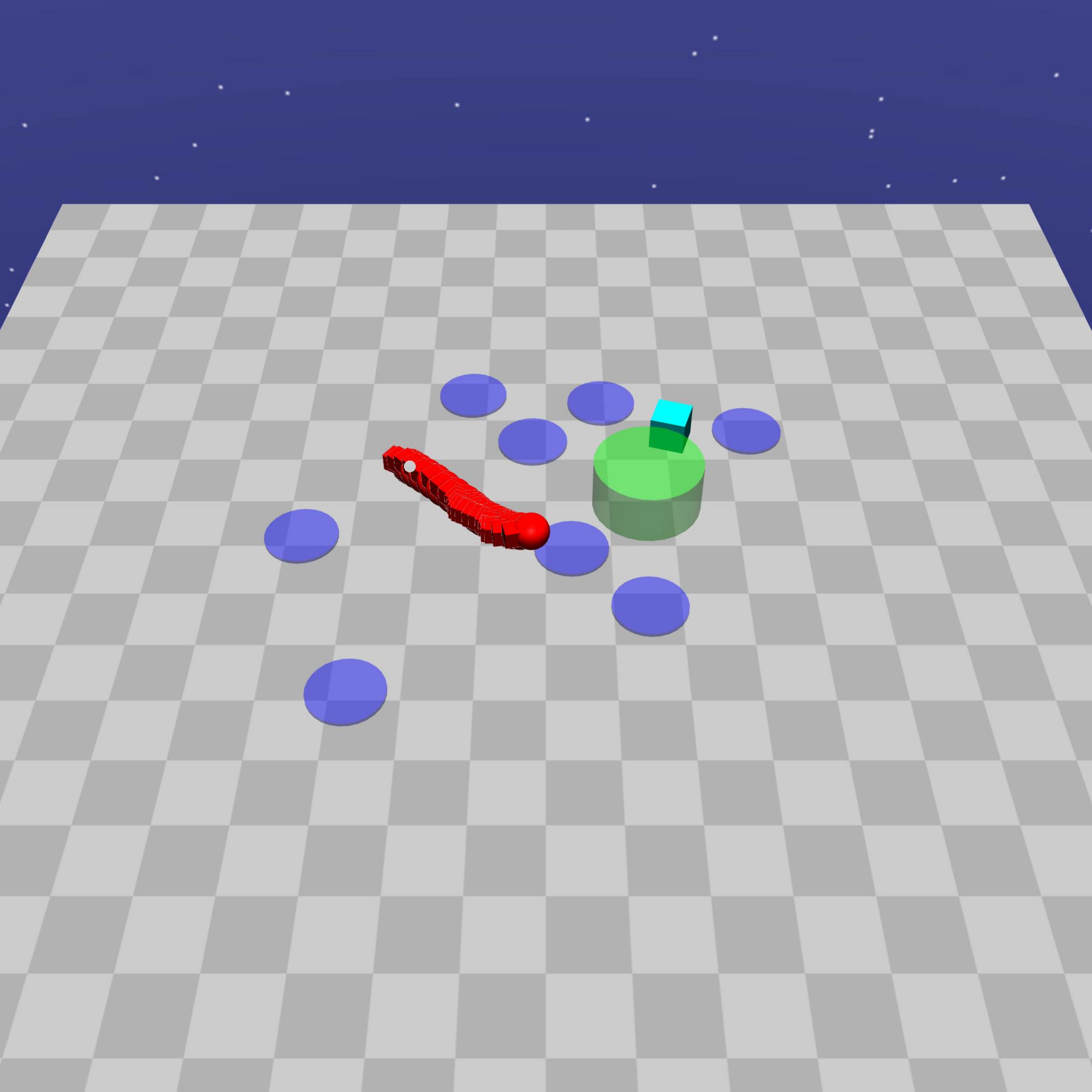}
        \includegraphics[trim=0 300 0 500, clip, width=0.24\linewidth]{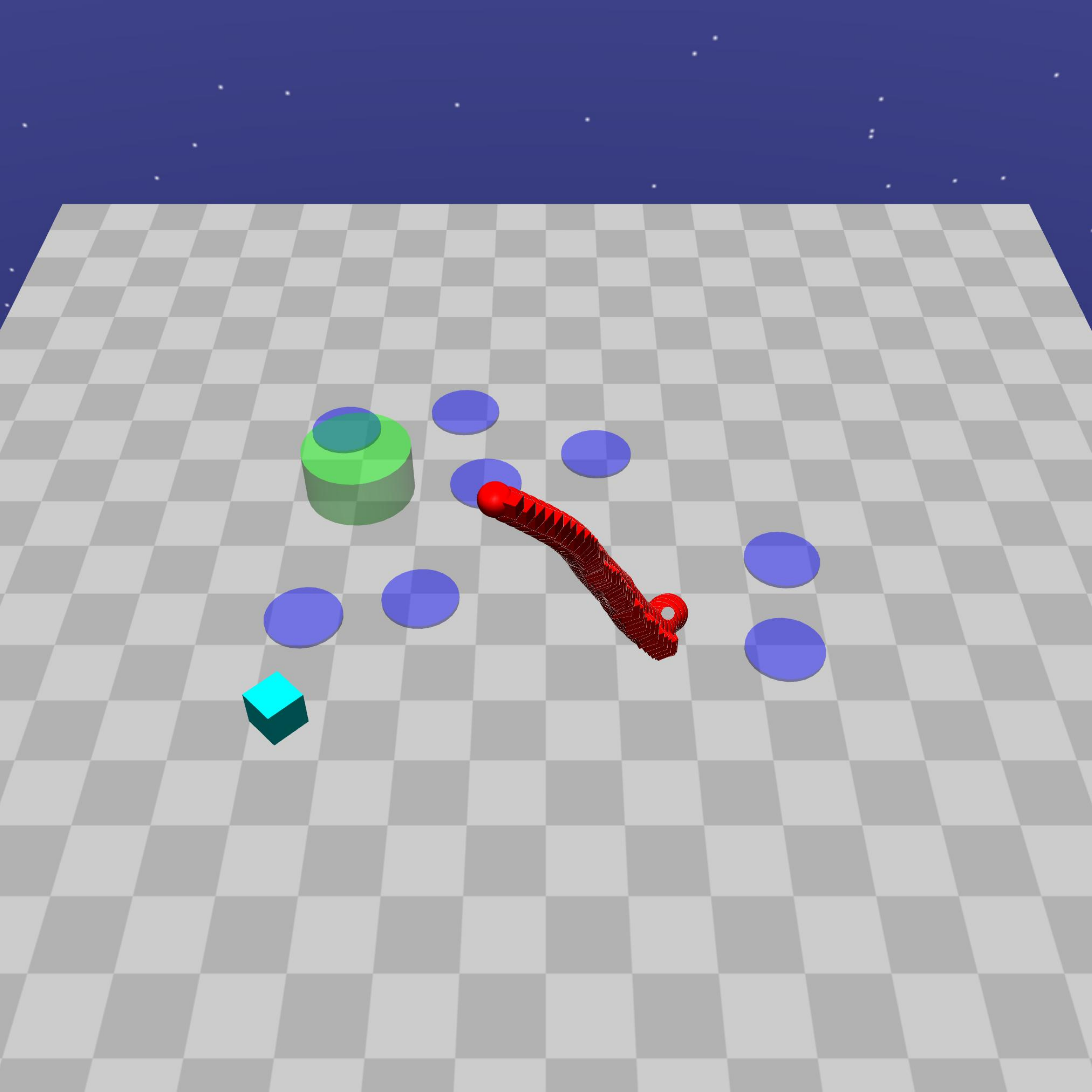}
        \includegraphics[trim=0 300 0 500, clip, width=0.24\linewidth]{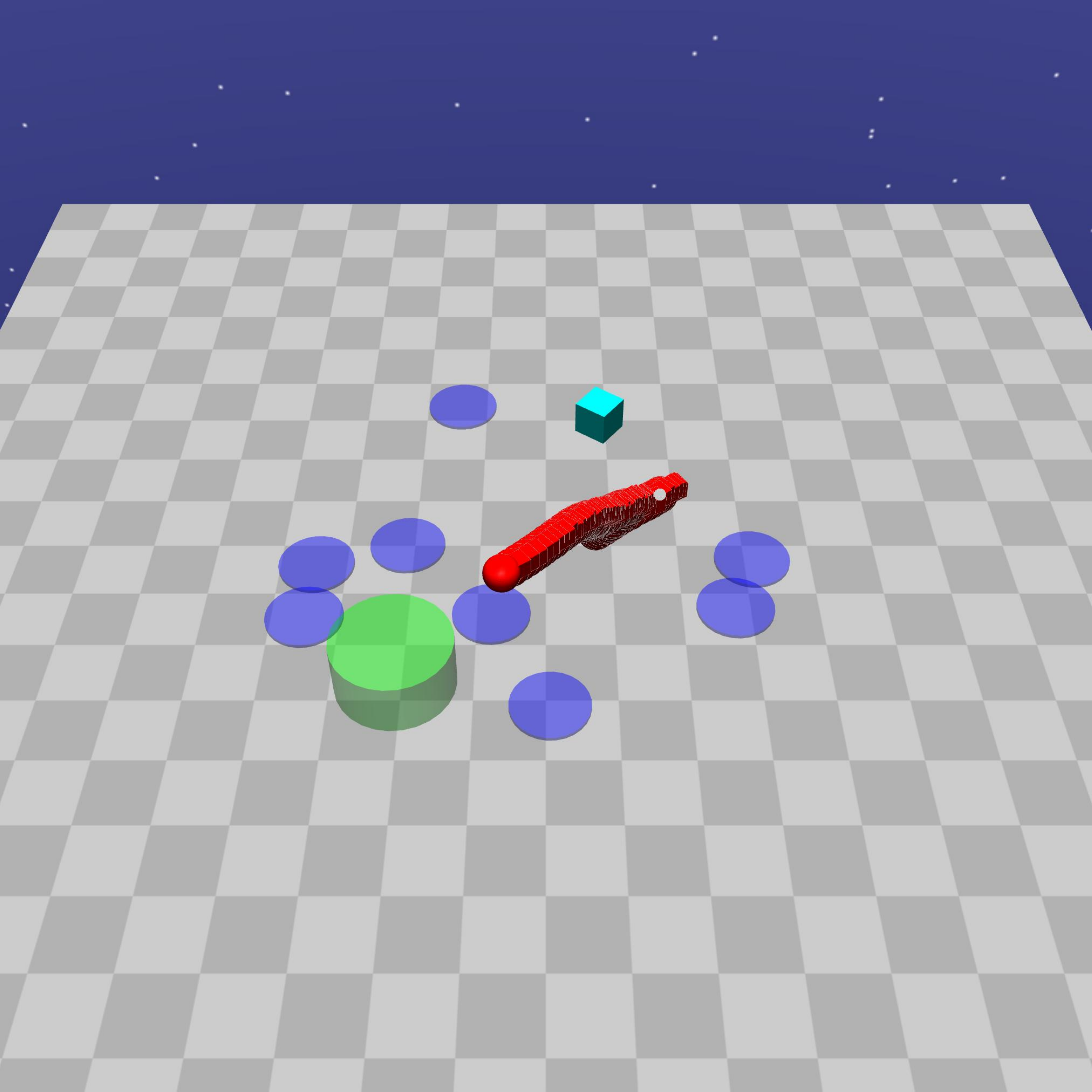}
        \caption{CPO successes (top) and failures (bottom)}
    \end{subfigure}%
    \\
    \begin{subfigure}[t]{\textwidth}
        \centering
        \includegraphics[trim=0 300 0 300, clip, width=0.24\linewidth]{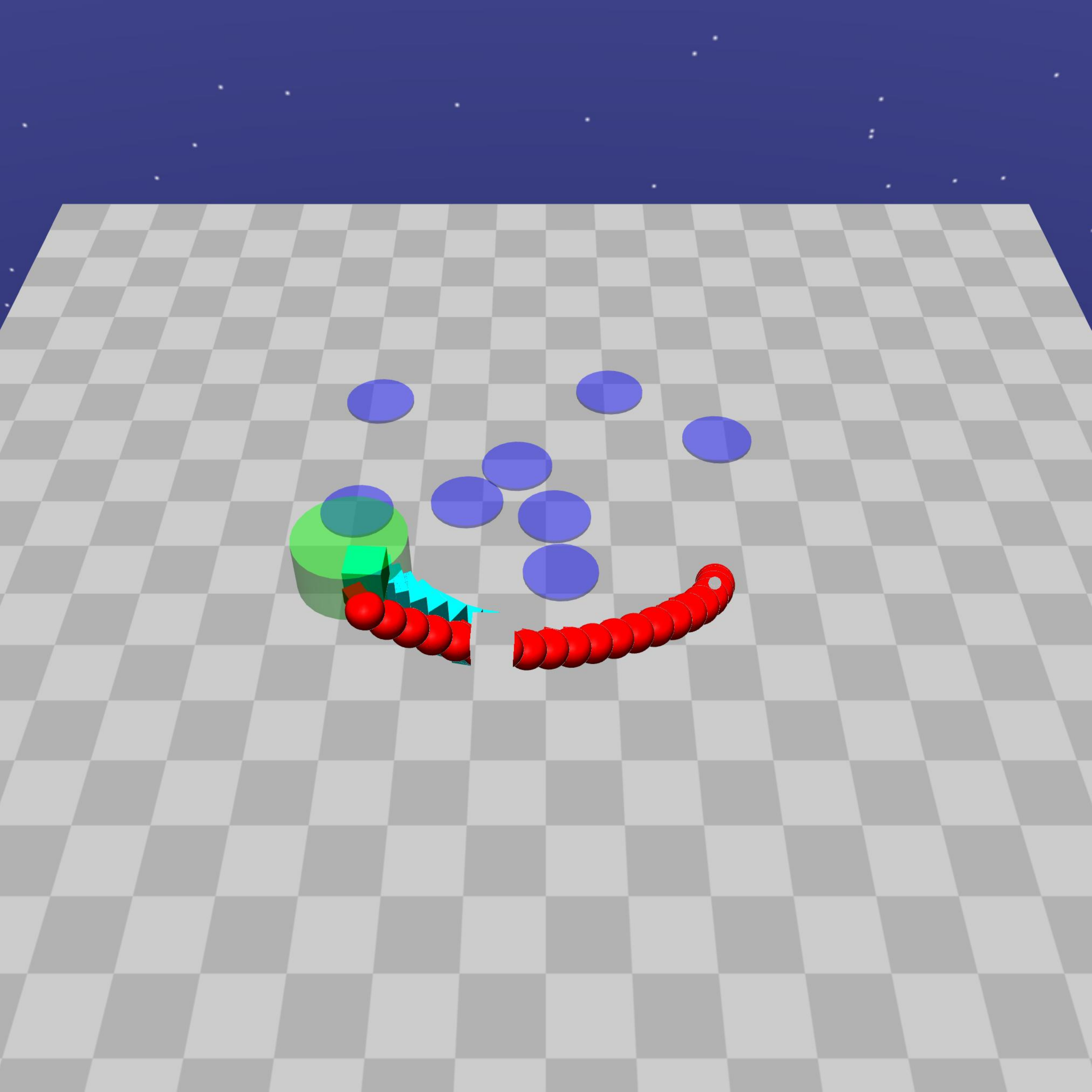}
        \includegraphics[trim=0 300 0 300, clip, width=0.24\linewidth]{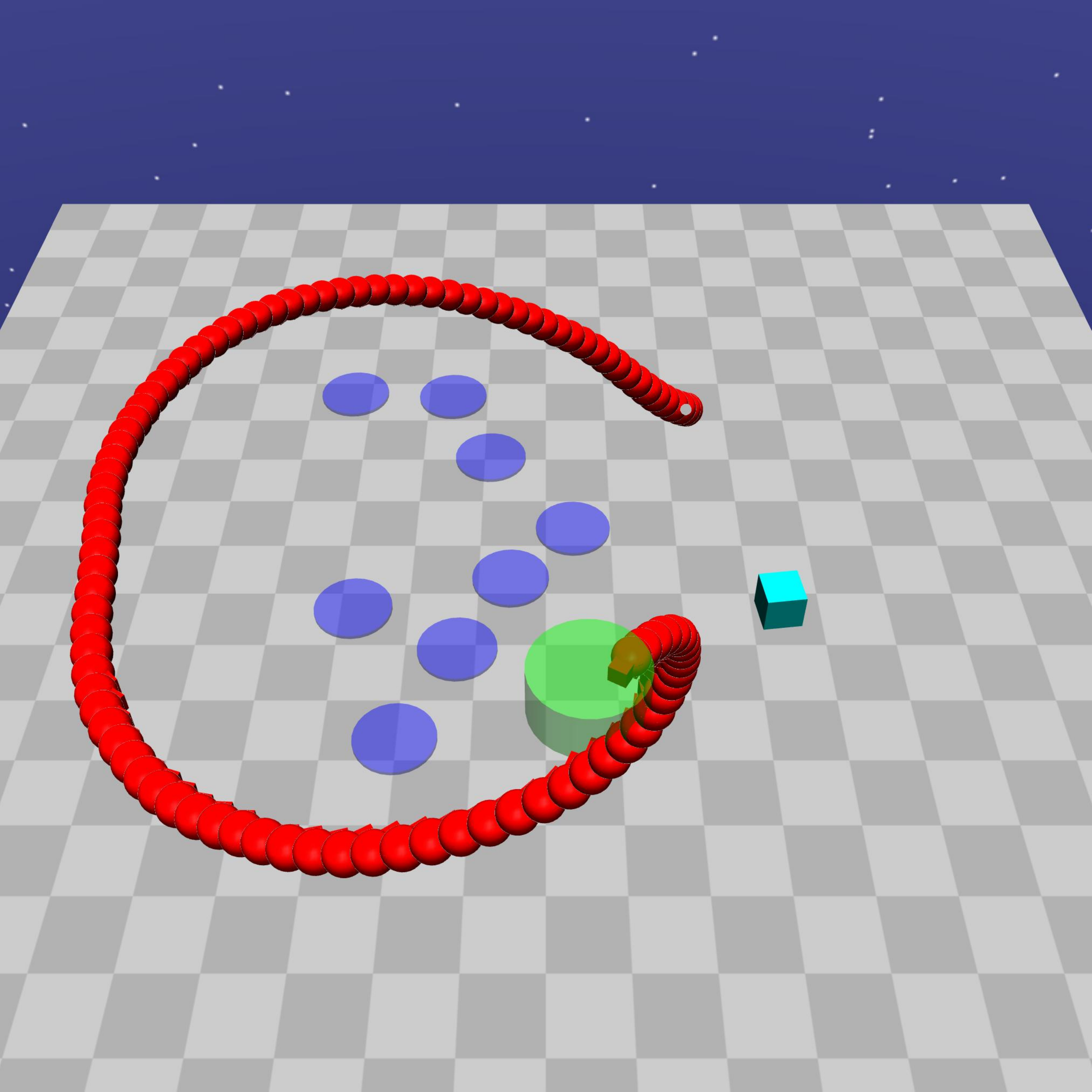}
        \includegraphics[trim=0 300 0 300, clip, width=0.24\linewidth]{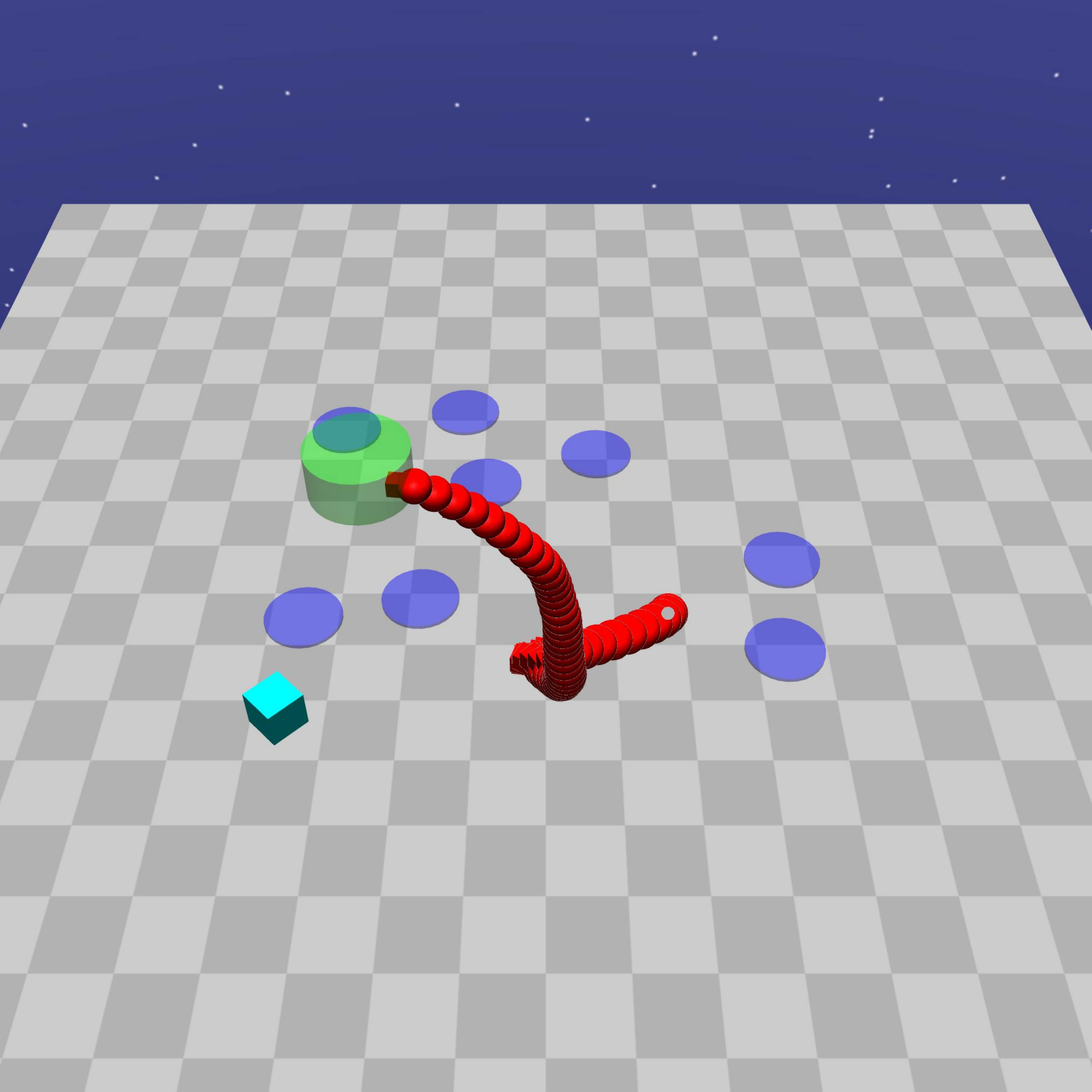}
        \includegraphics[trim=0 300 0 300, clip, width=0.24\linewidth]{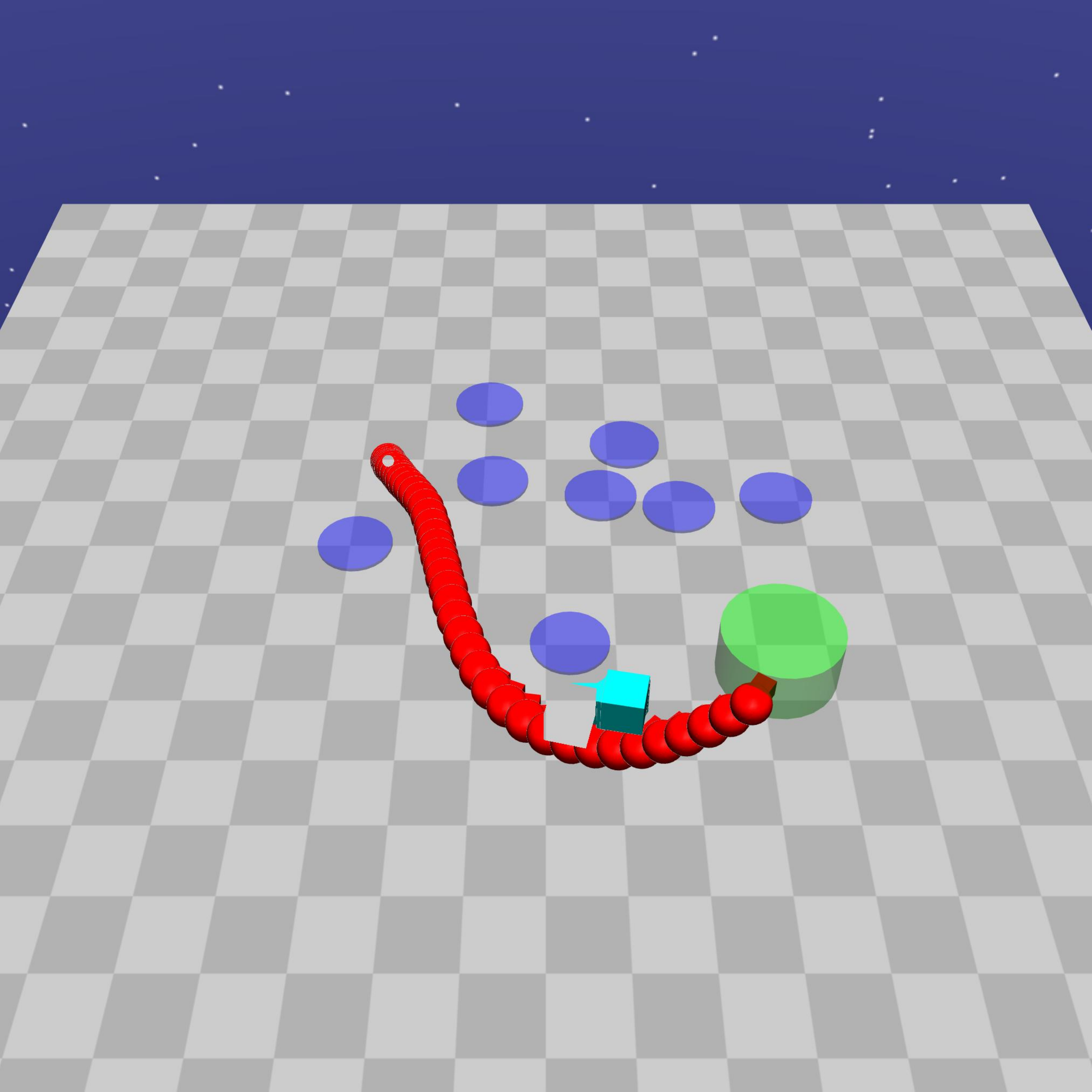}
        \\
        \includegraphics[trim=20 0 20 300, clip, width=0.24\linewidth]{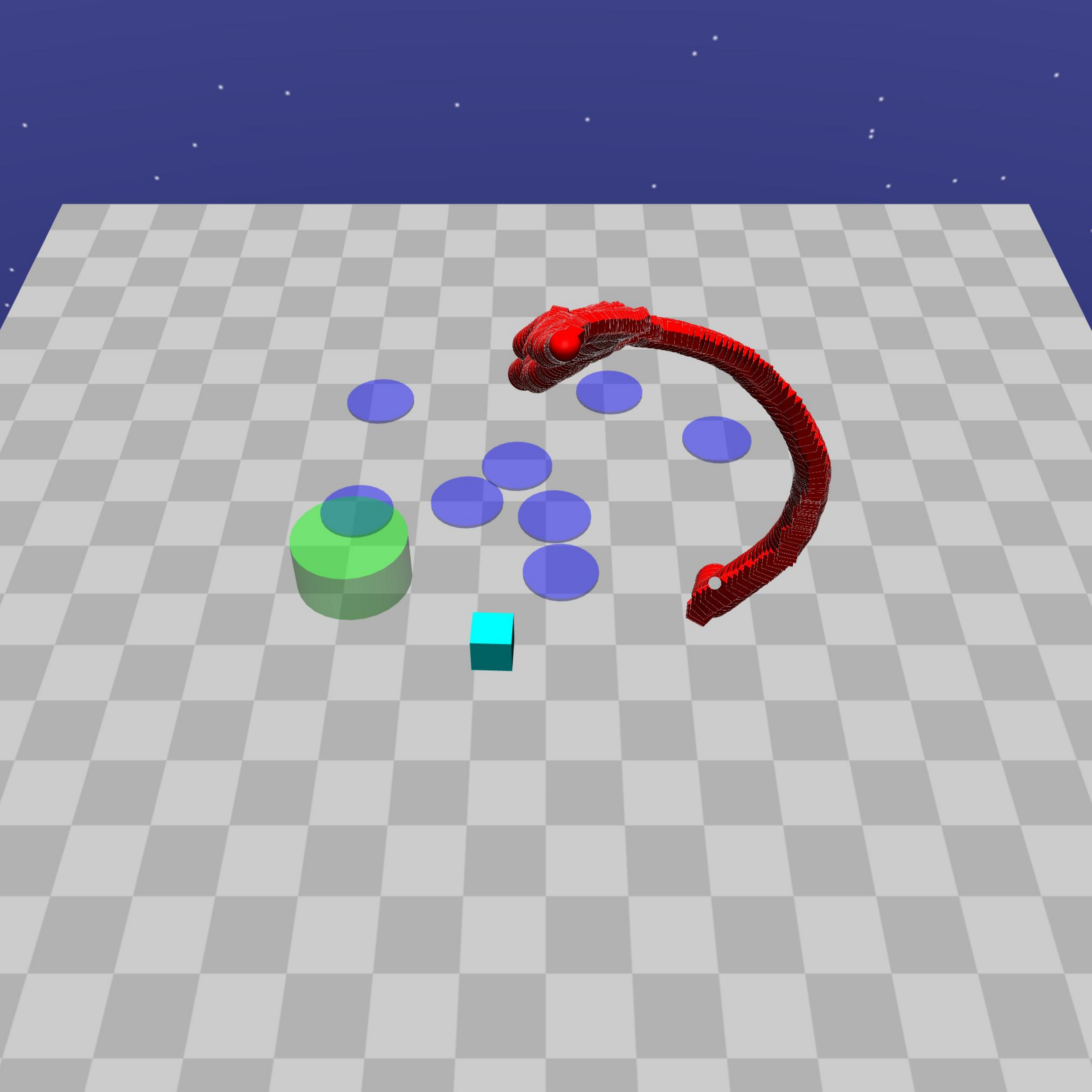}
        \includegraphics[trim=20 0 20 300, clip, width=0.24\linewidth]{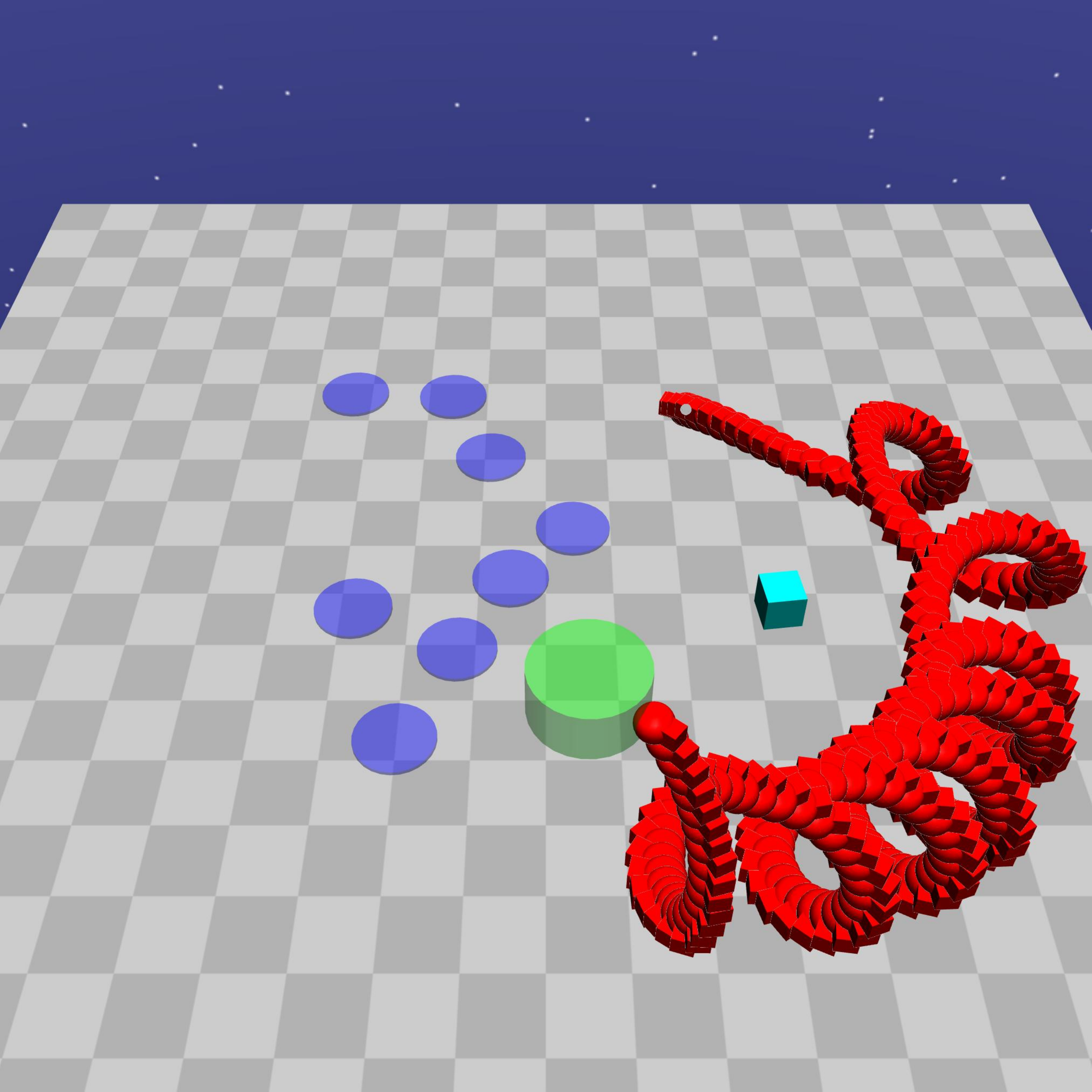}
        \includegraphics[trim=20 0 20 300, clip, width=0.24\linewidth]{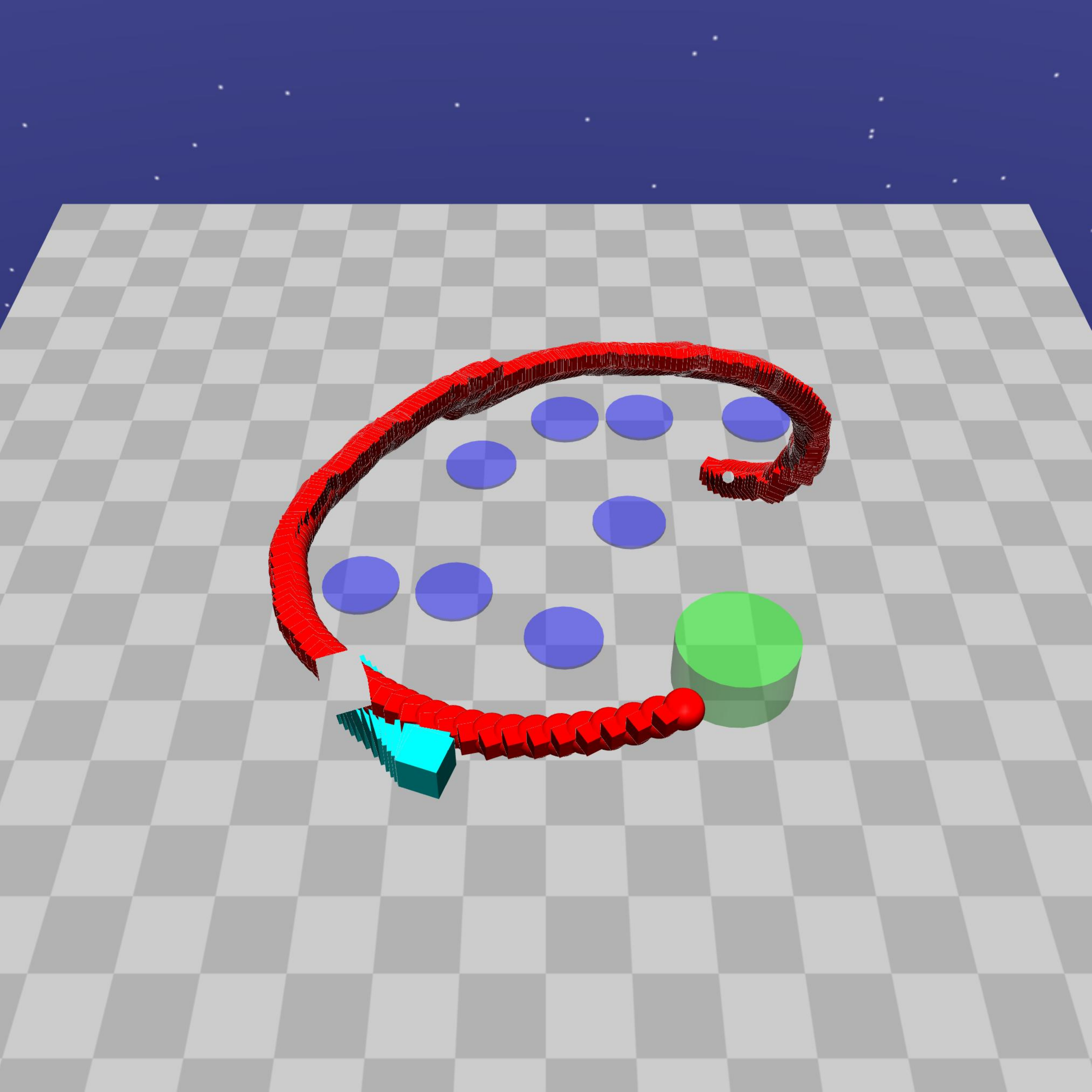}
        \includegraphics[trim=20 0 20 300, clip, width=0.24\linewidth]{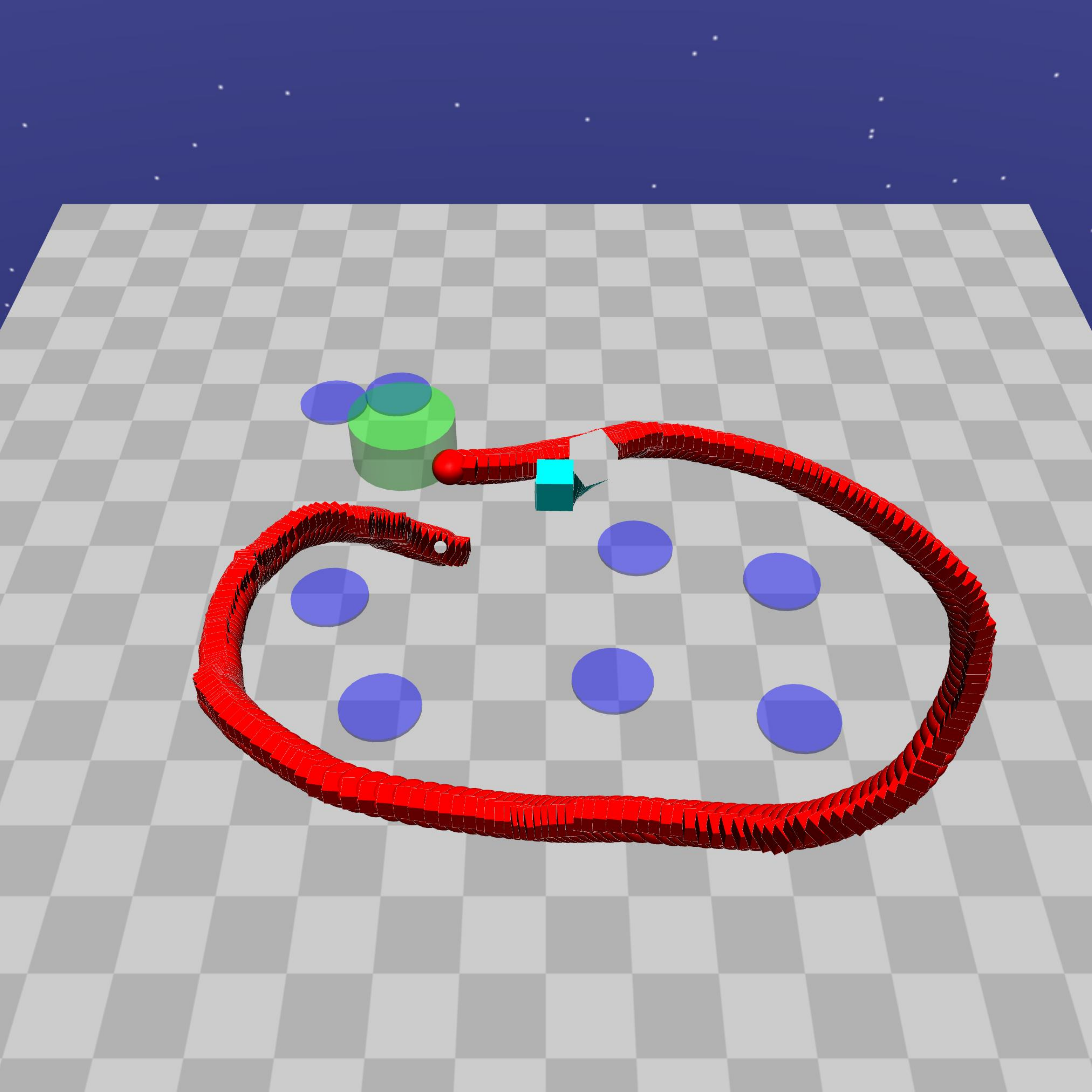}
        \caption{TRPO-Minmax successes (top) and failures (bottom)}
    \end{subfigure}%
    \caption{Sample trajectories of policies learned by each baseline and our Minmax approach in the Safety Gym \pointgoalhard domain, in the experiments of Figure \ref{fig:safety_baselines_hard_cost0}.
    Trajectories that hit hazards or take more than 1000 timesteps to reach the goal location are considered failures.   
    }
    \label{fig:all_trajectories_hard_cost0}
\end{figure*}%

\begin{figure*}[b!]
    \centering
    \includegraphics[width=0.9\linewidth]{images/_legend.pdf}
    \\
    \begin{subfigure}[t]{0.47\textwidth}
        \centering
        \includegraphics[width=\linewidth]{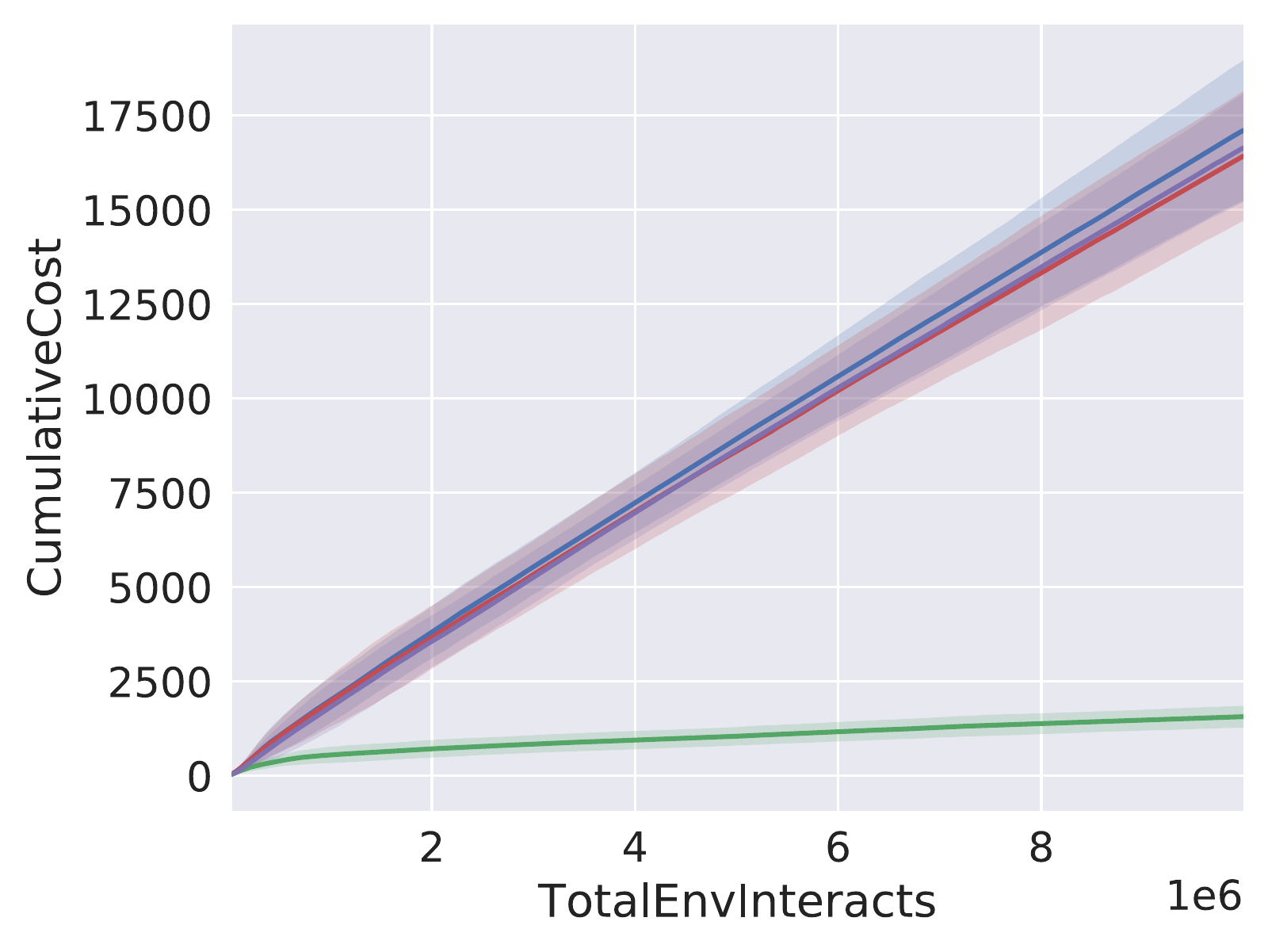}
        \caption{The cumulative cost.}
        \label{fig:cumulative_cost}
    \end{subfigure}%
    \quad
    \begin{subfigure}[t]{0.47\textwidth}
        \centering
        \includegraphics[width=\linewidth]{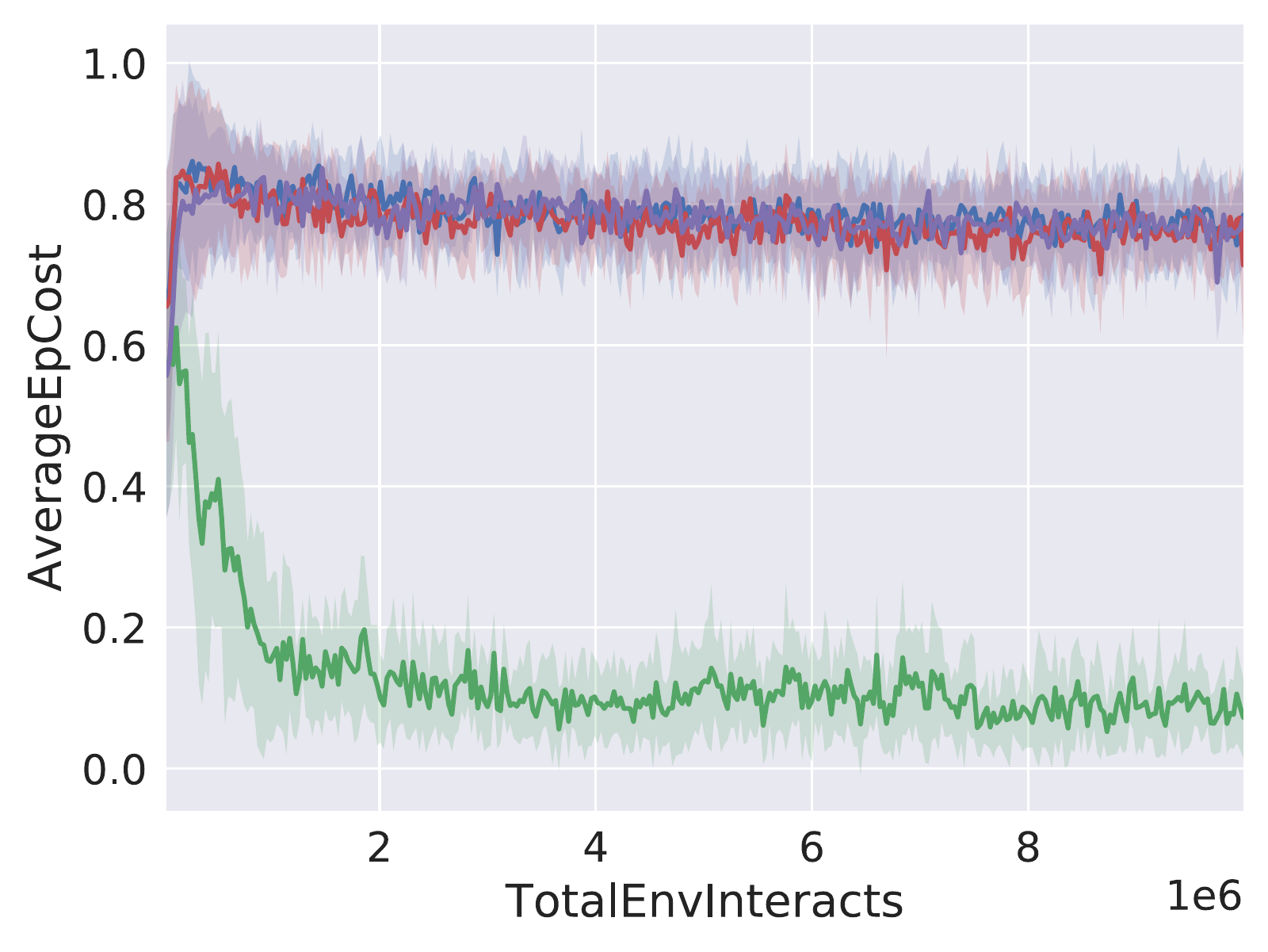}
        \caption{Failure rate}
        \label{fig:baseline_cost}
    \end{subfigure}%
    \\
    \begin{subfigure}[t]{0.47\textwidth}
        \centering
        \includegraphics[width=\linewidth]{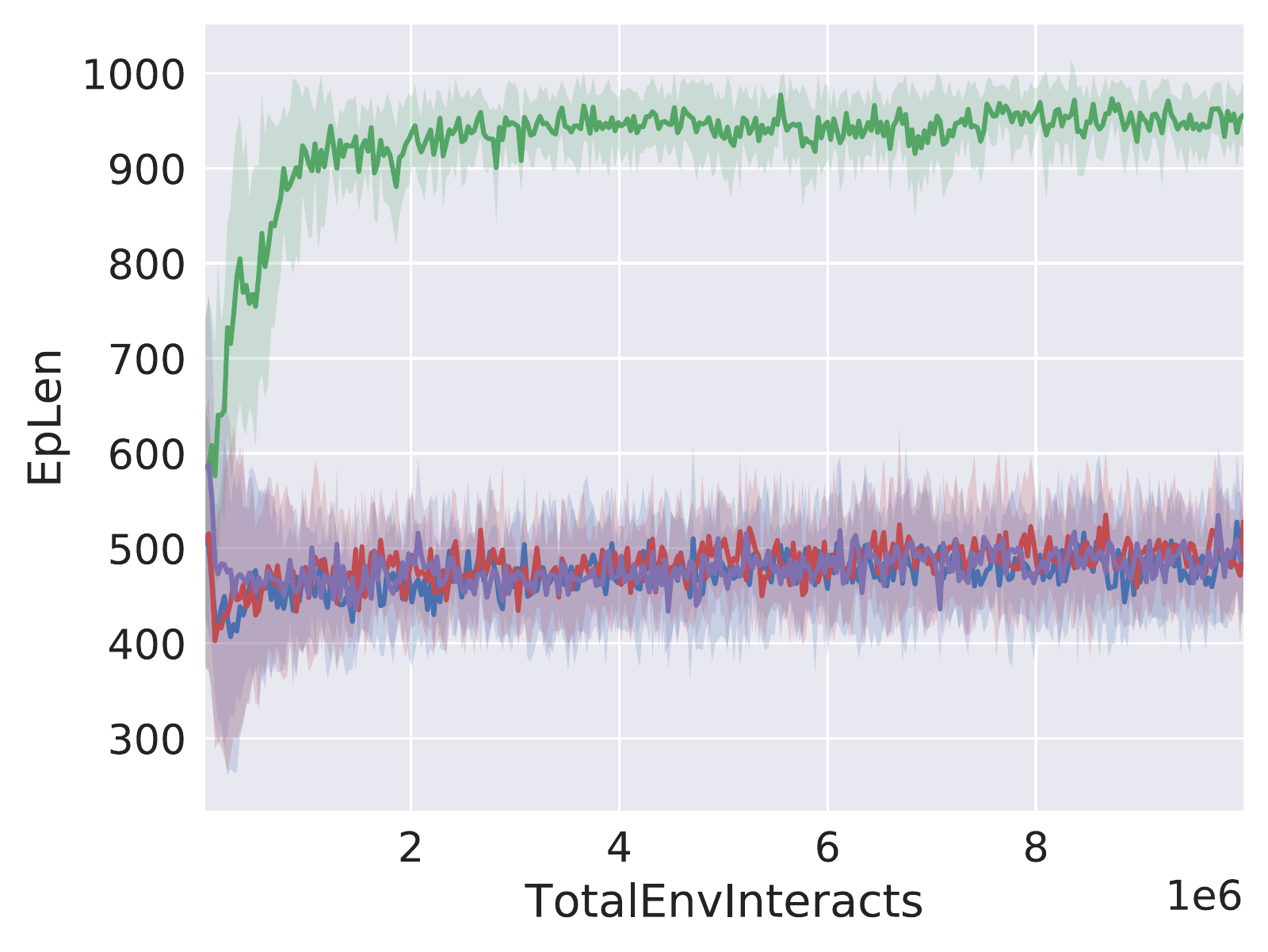}
        \caption{Average episode length}
        \label{fig:baseline_length}
    \end{subfigure}%
    \quad
    \begin{subfigure}[t]{0.47\textwidth}
        \centering
        \includegraphics[width=\linewidth]{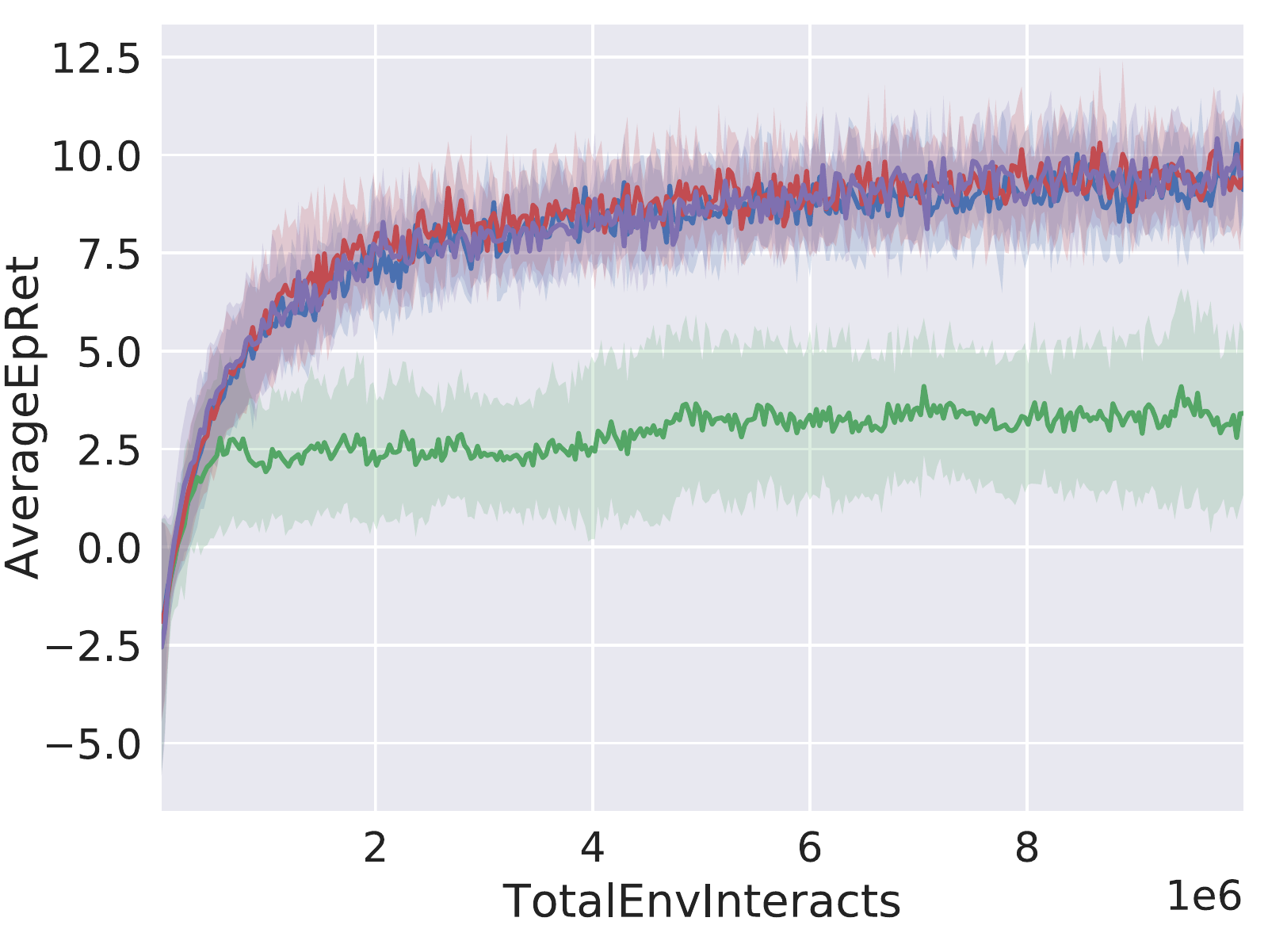}
        \caption{Average returns}
        \label{fig:baseline_return}
    \end{subfigure}%
    \caption{Comparison with baselines in the \pointgoalhard environment. The cost threshold for TRPO lagrangian and CPO is set to $25$ as in \citet{Ray2019}.}
    \label{fig:safety_baselines}
\end{figure*}%

\begin{figure*}[b!]
    \centering
    \begin{subfigure}[t]{\textwidth}
        \centering
        \includegraphics[trim=0 300 0 500, clip, width=0.24\linewidth]{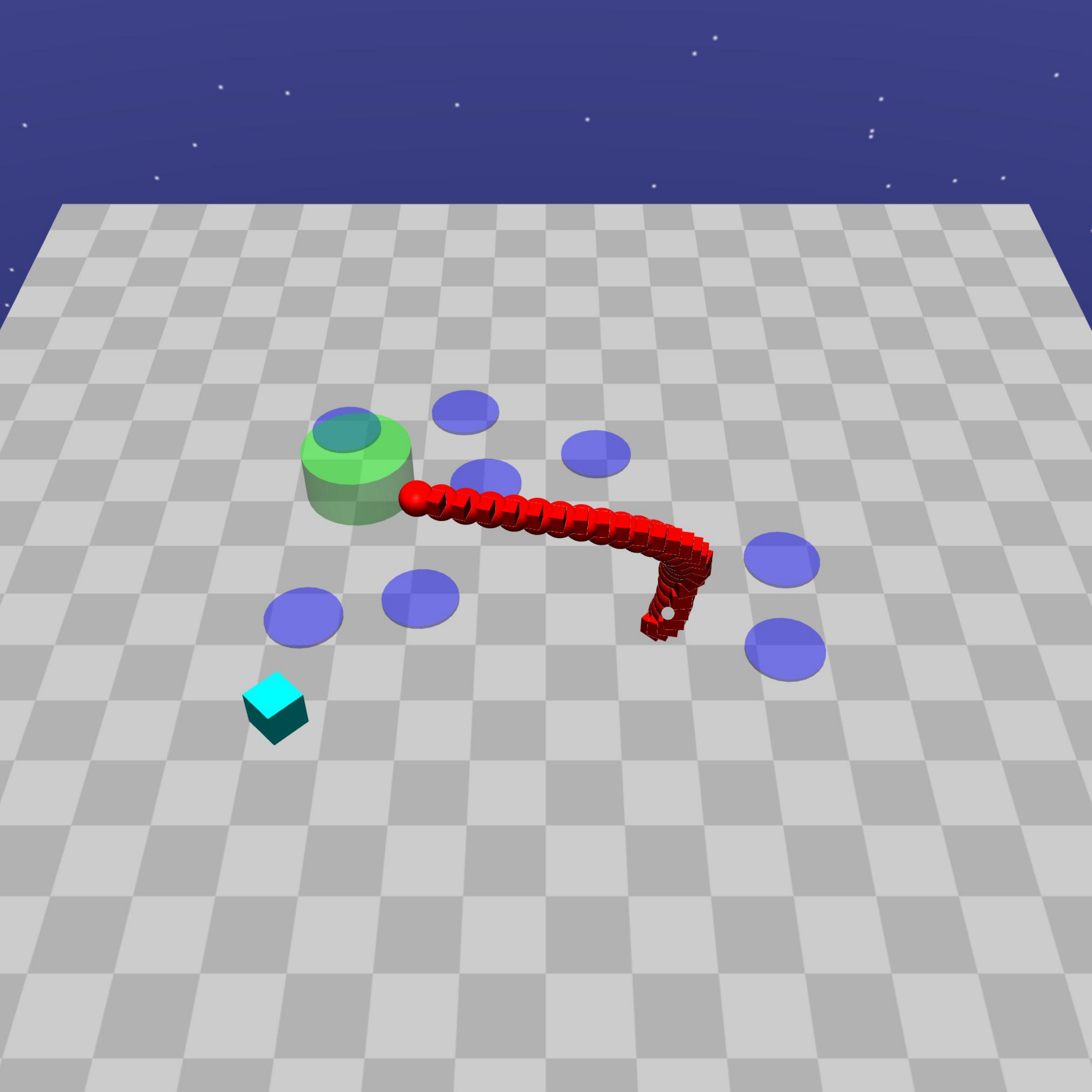}
        \includegraphics[trim=0 300 0 500, clip, width=0.24\linewidth]{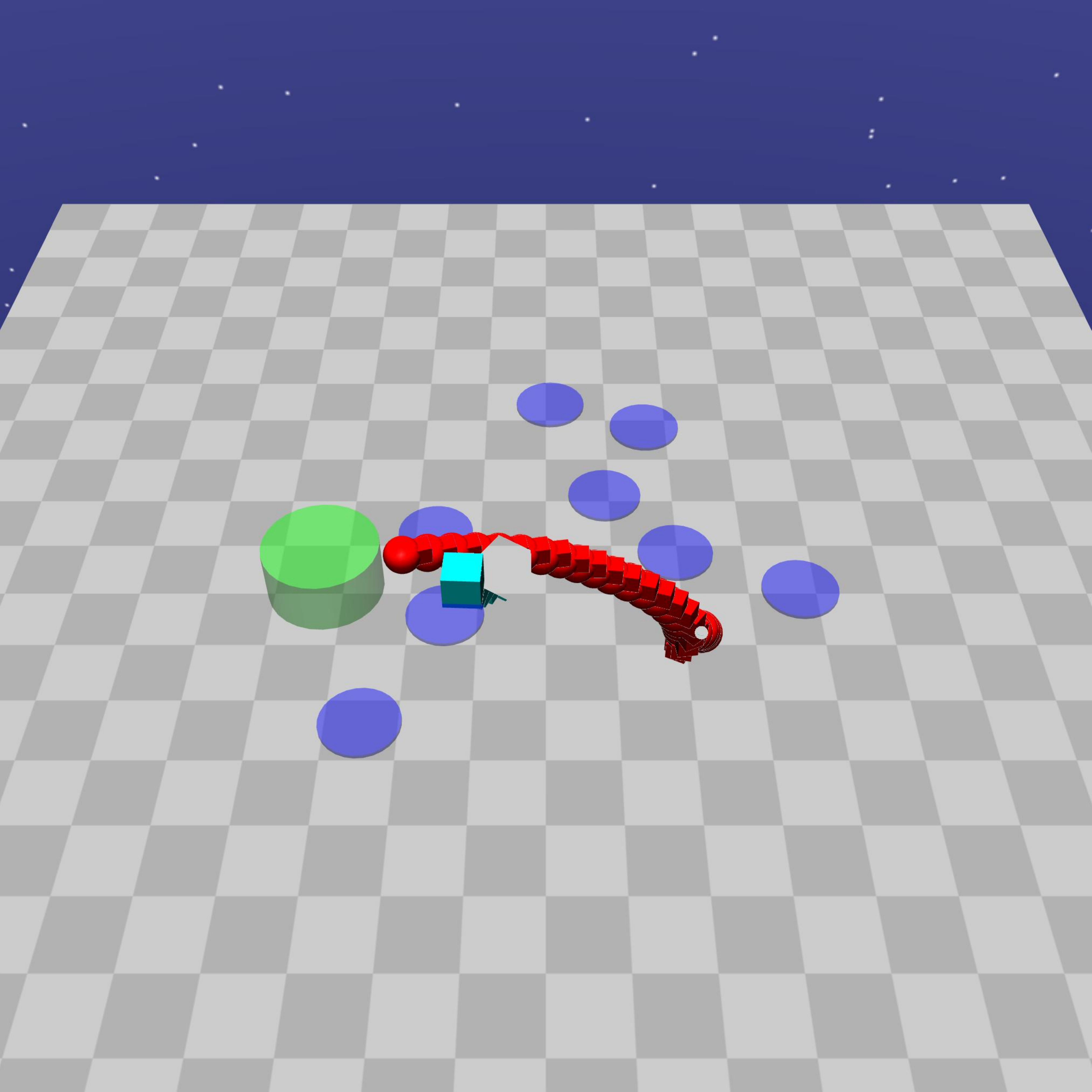}
        \includegraphics[trim=0 300 0 500, clip, width=0.24\linewidth]{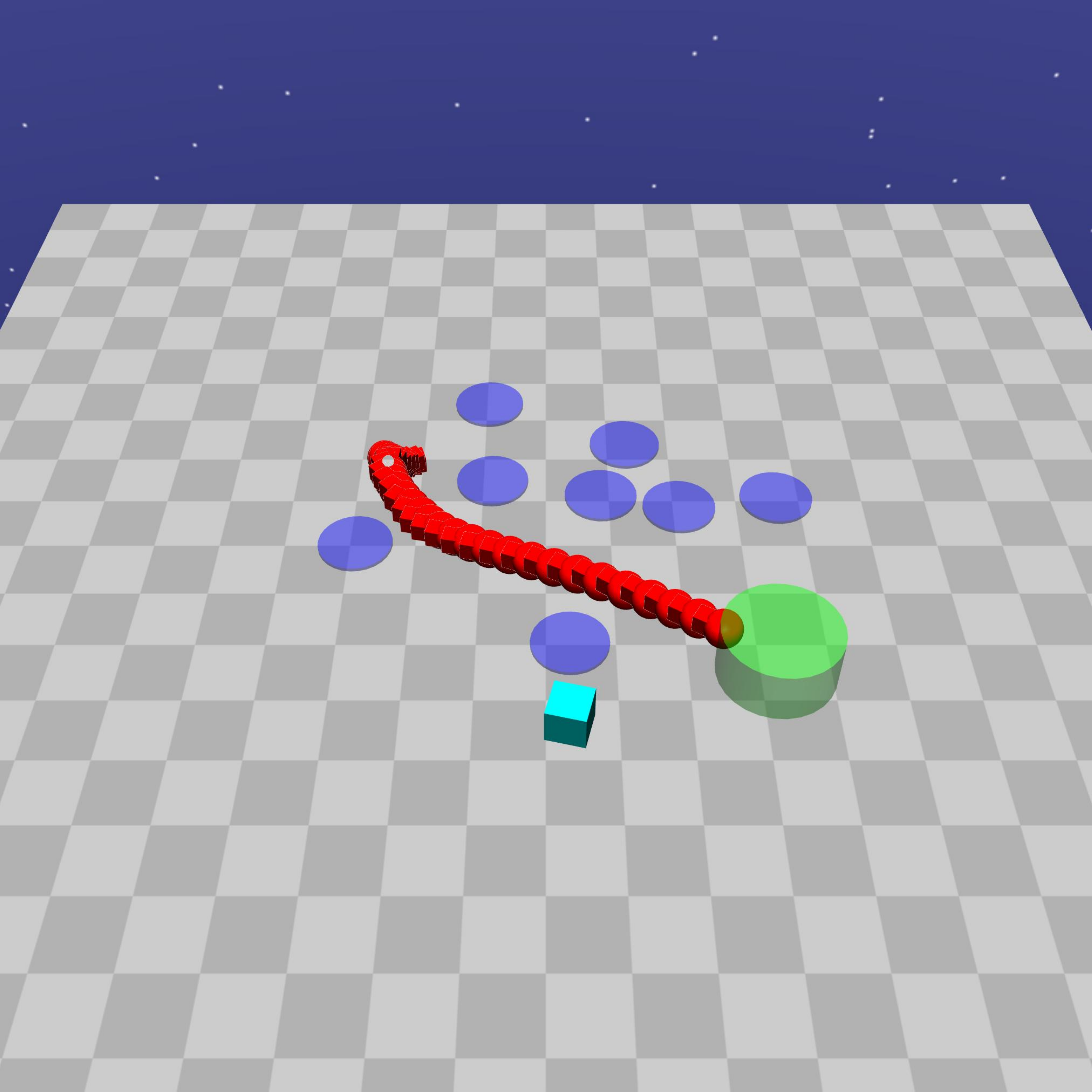}
        \includegraphics[trim=0 300 0 500, clip, width=0.24\linewidth]{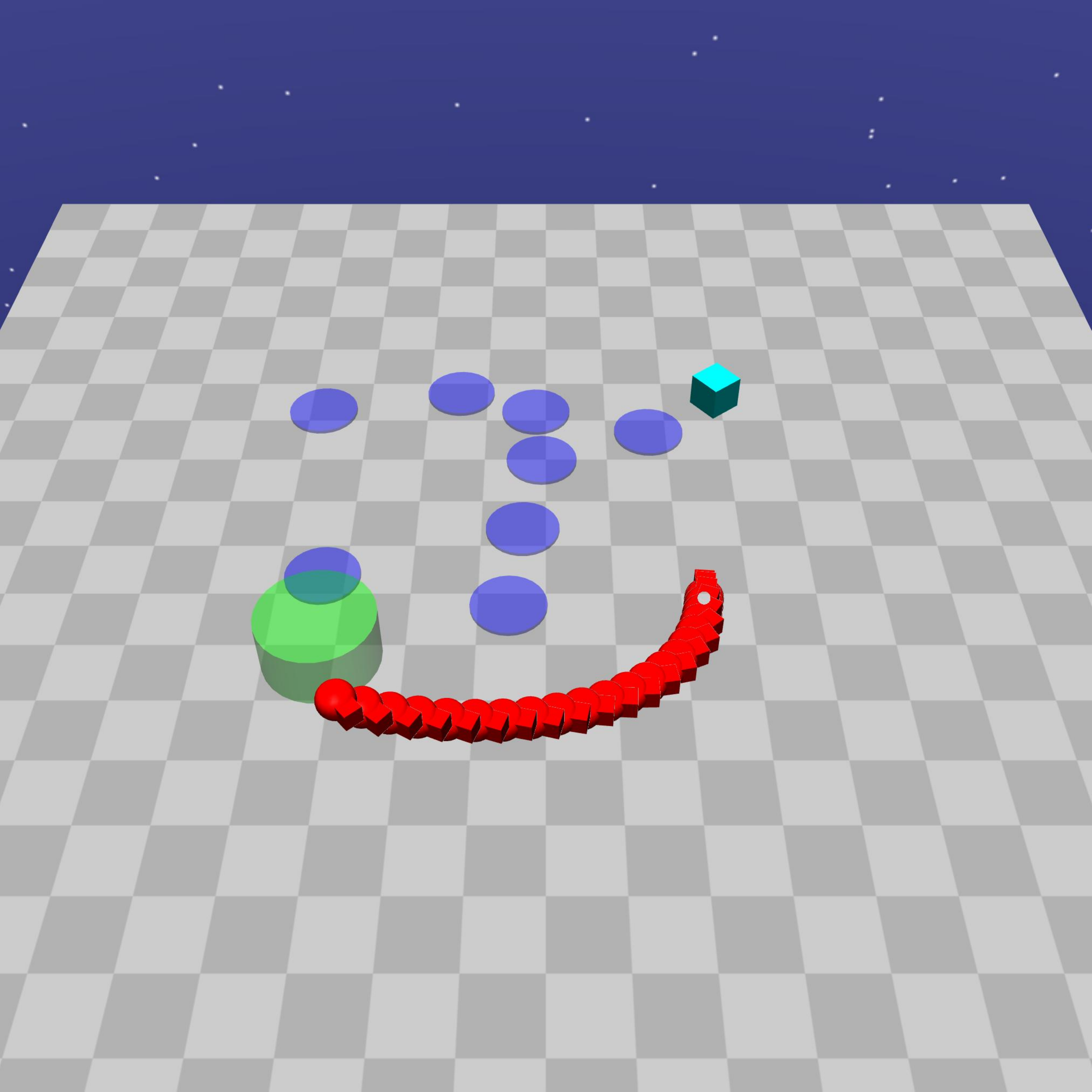}
        \\
        \includegraphics[trim=0 300 0 500, clip, width=0.24\linewidth]{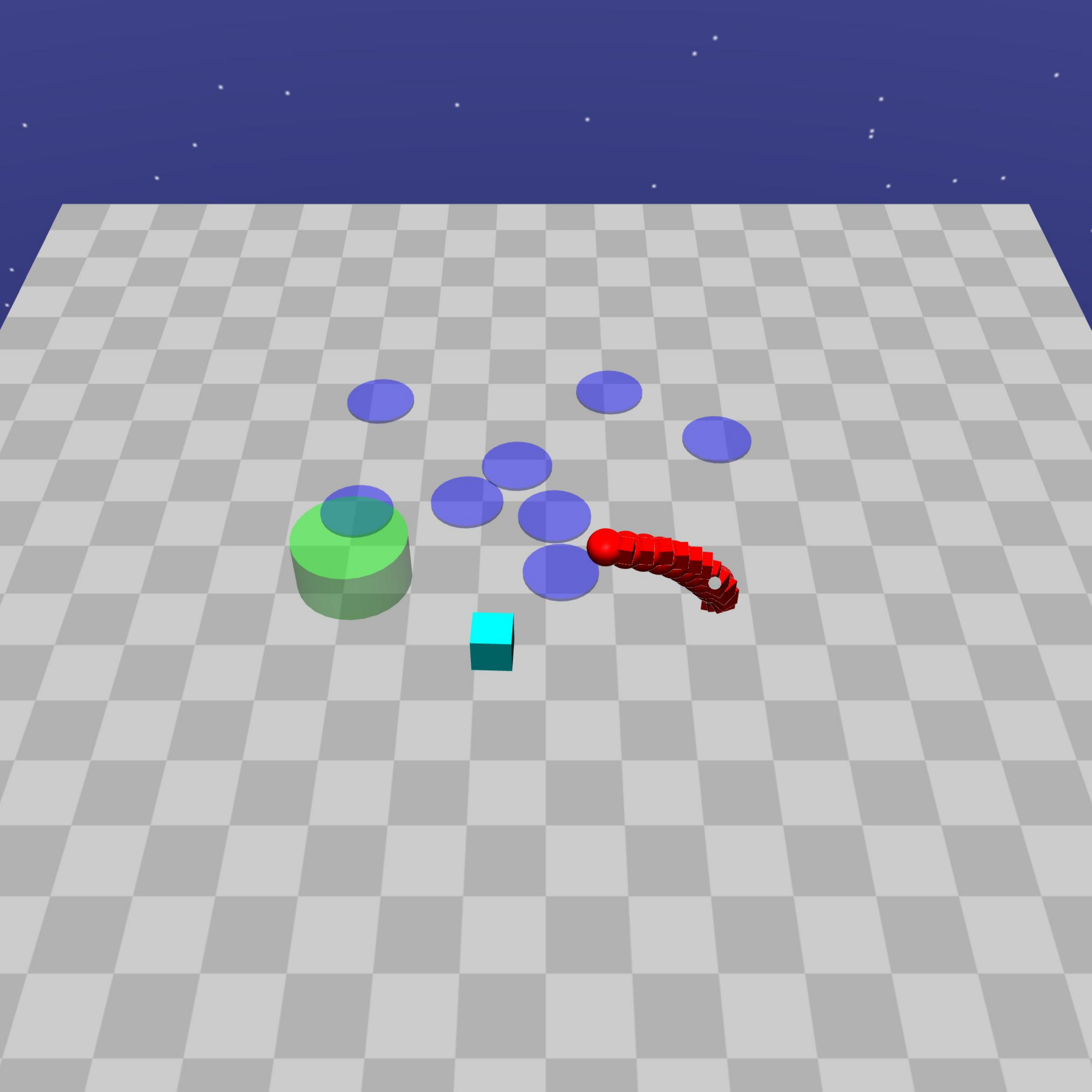}
        \includegraphics[trim=0 300 0 500, clip, width=0.24\linewidth]{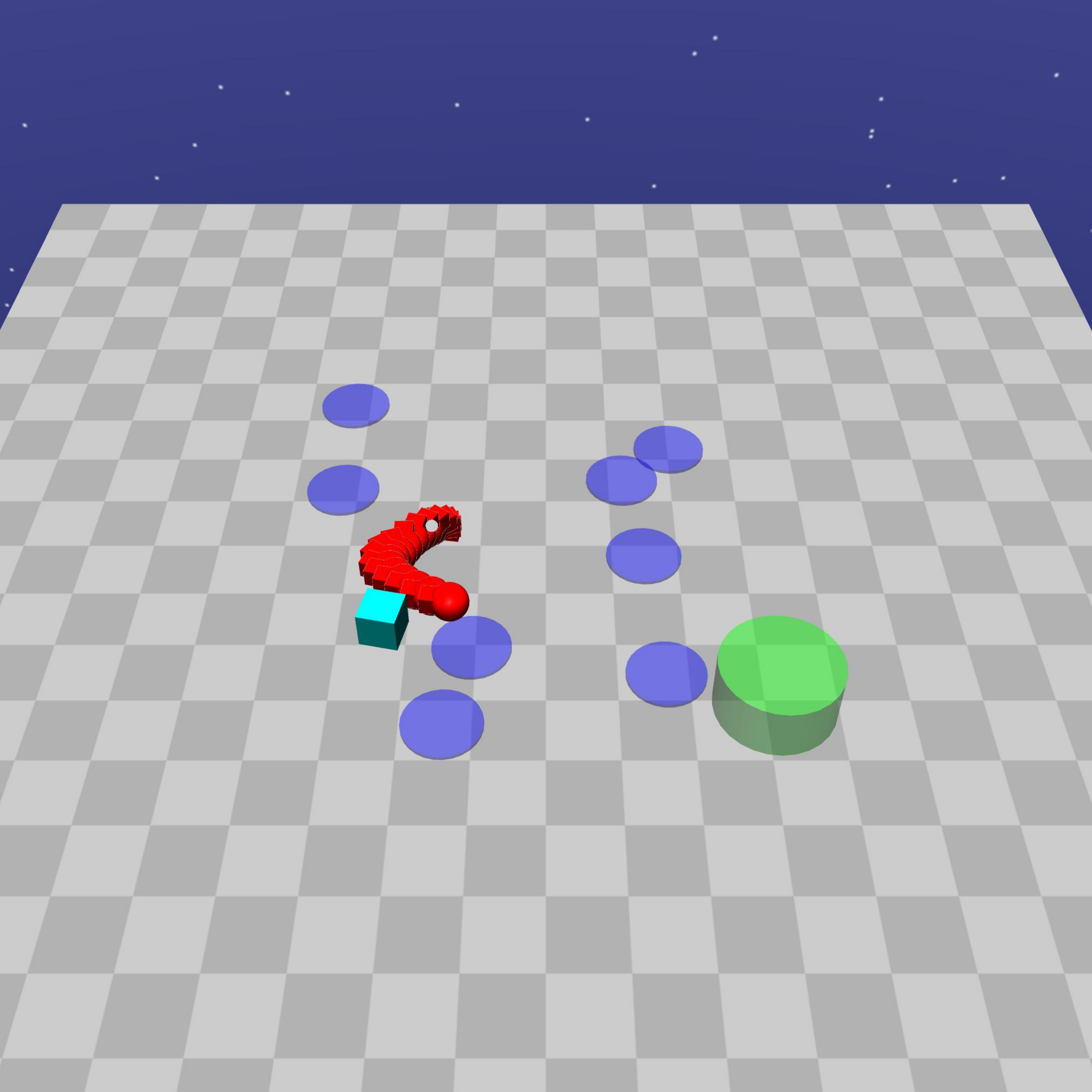}
        \includegraphics[trim=0 300 0 500, clip, width=0.24\linewidth]{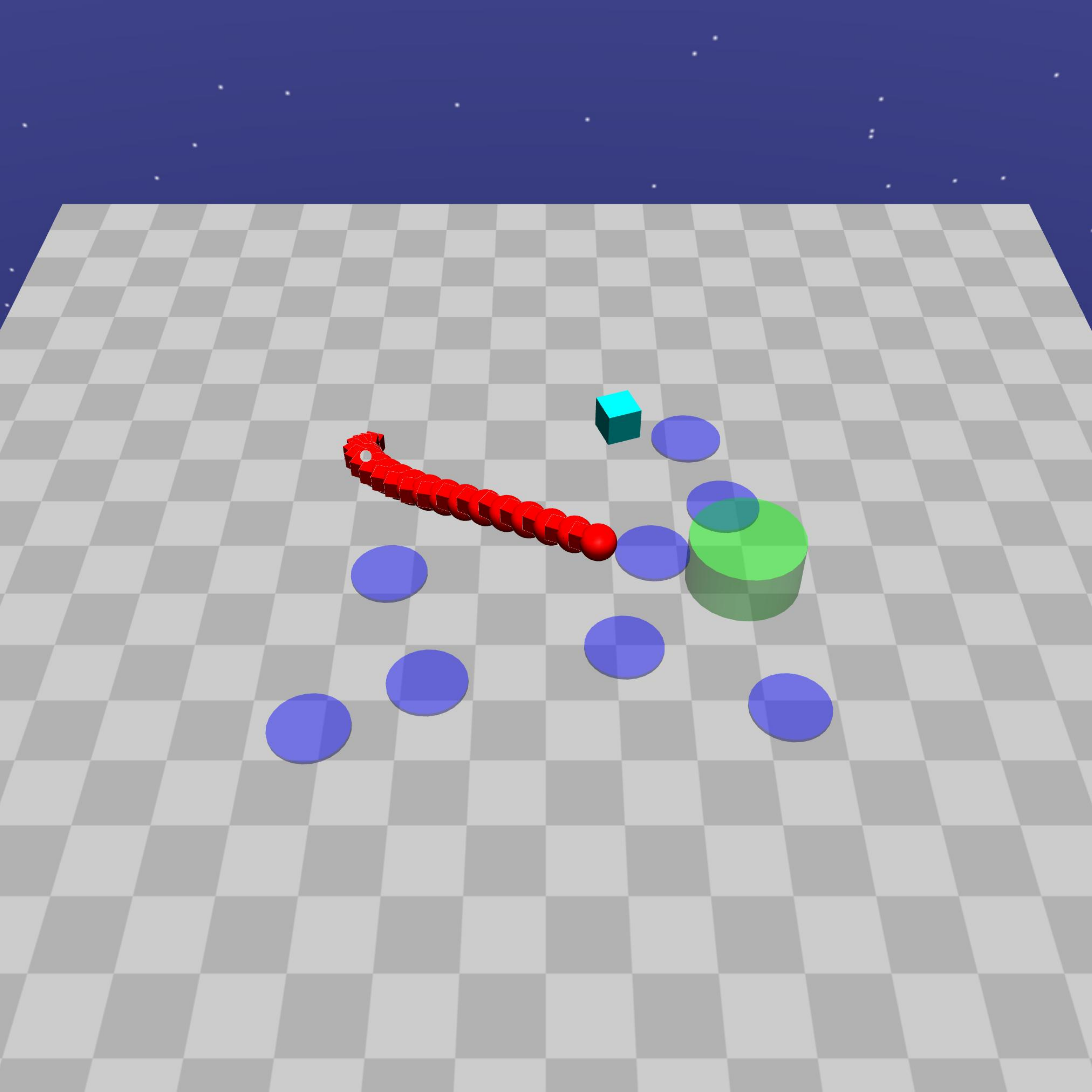}
        \includegraphics[trim=0 300 0 500, clip, width=0.24\linewidth]{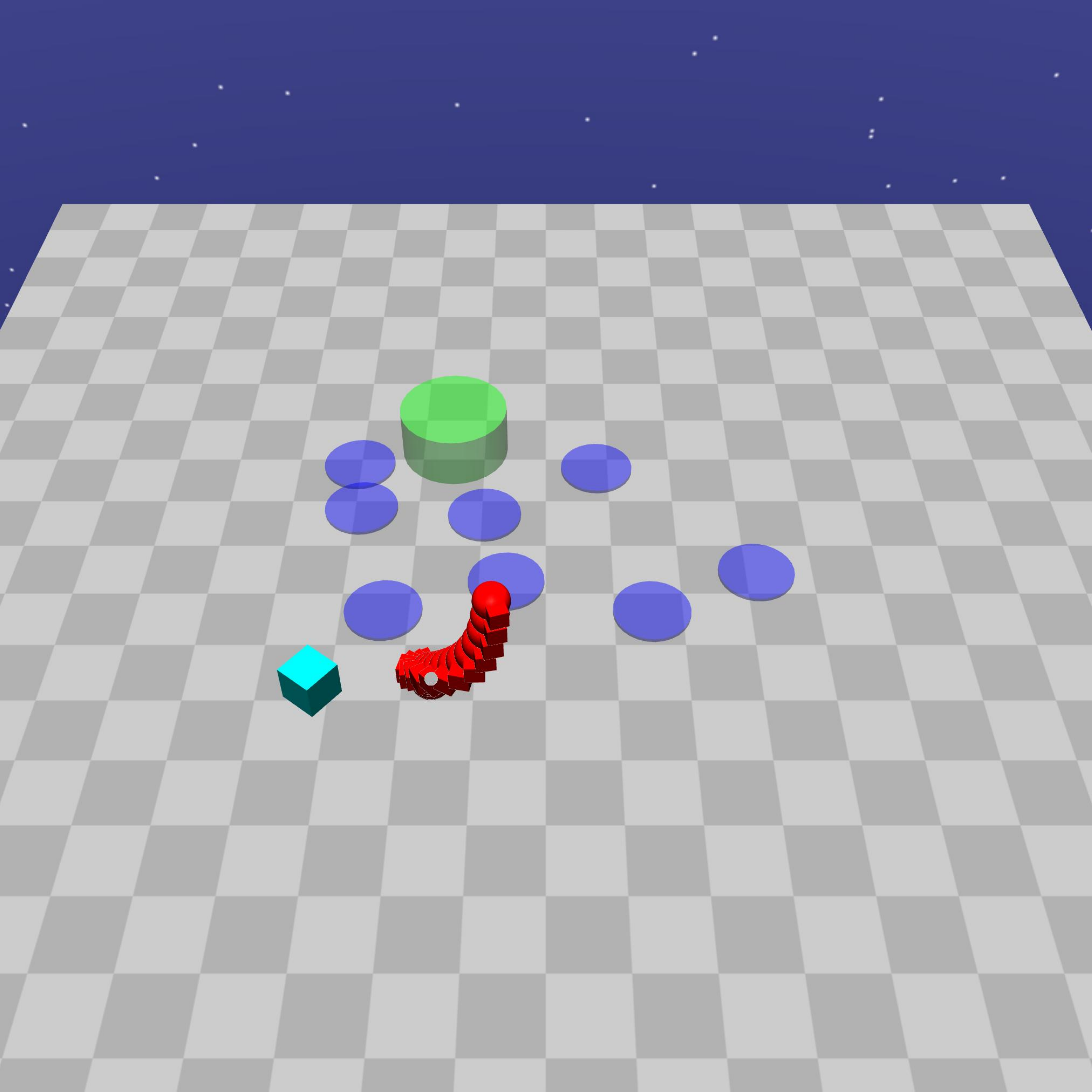}
        \caption{TRPO successes (top) and failures (bottom)}
    \end{subfigure}%
    \\
    \begin{subfigure}[t]{\textwidth}
        \centering
        \includegraphics[trim=0 300 0 500, clip, width=0.24\linewidth]{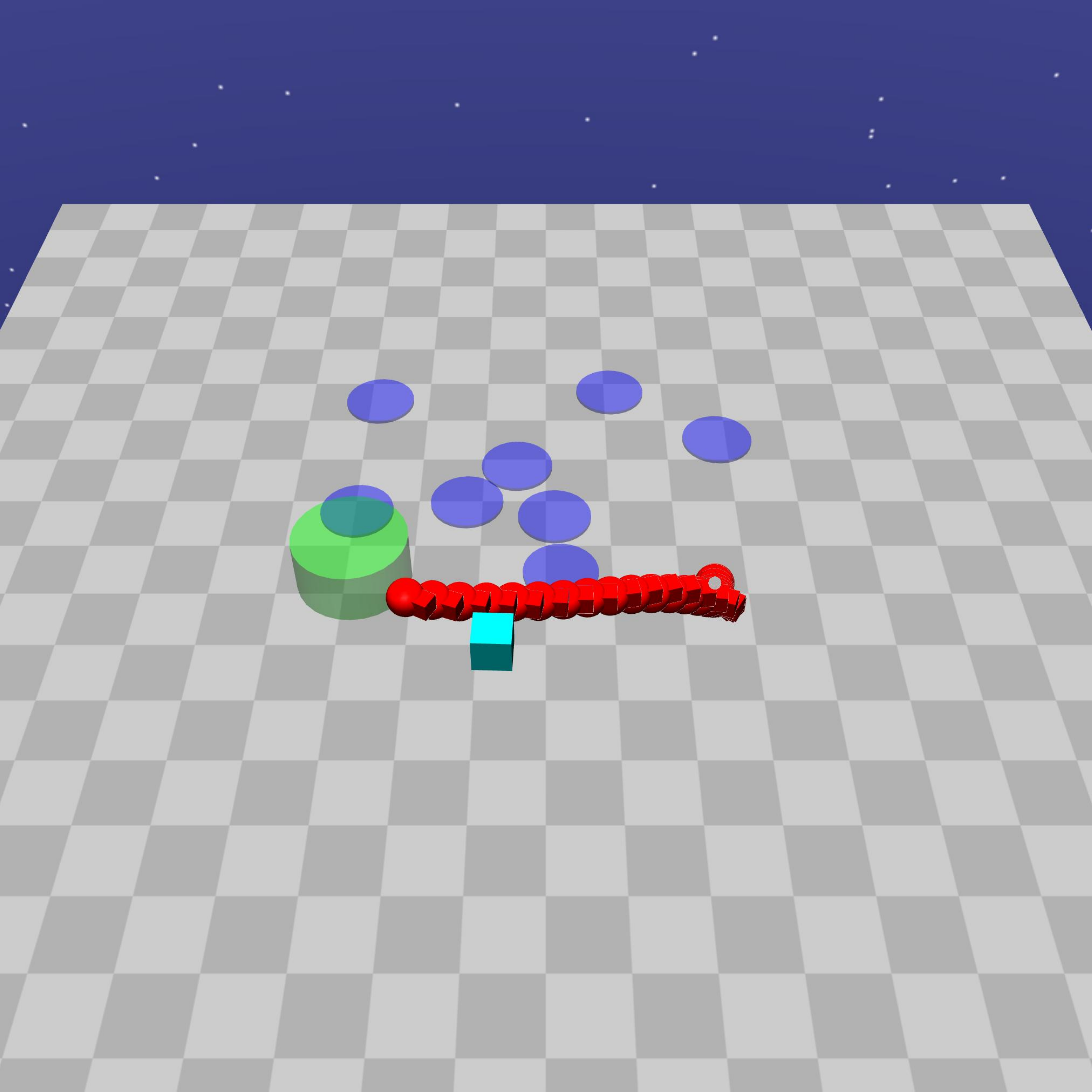}
        \includegraphics[trim=0 300 0 500, clip, width=0.24\linewidth]{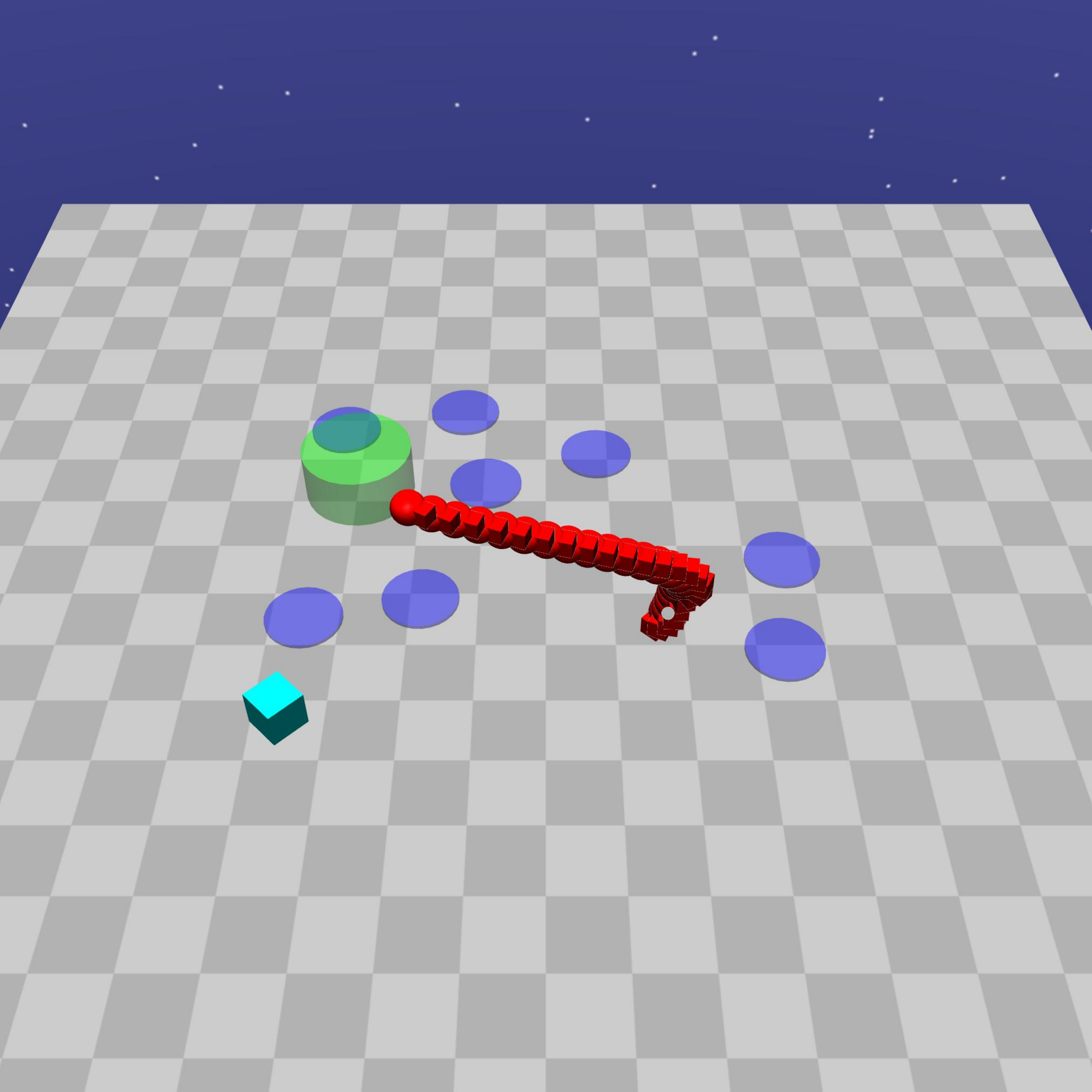}
        \includegraphics[trim=0 300 0 500, clip, width=0.24\linewidth]{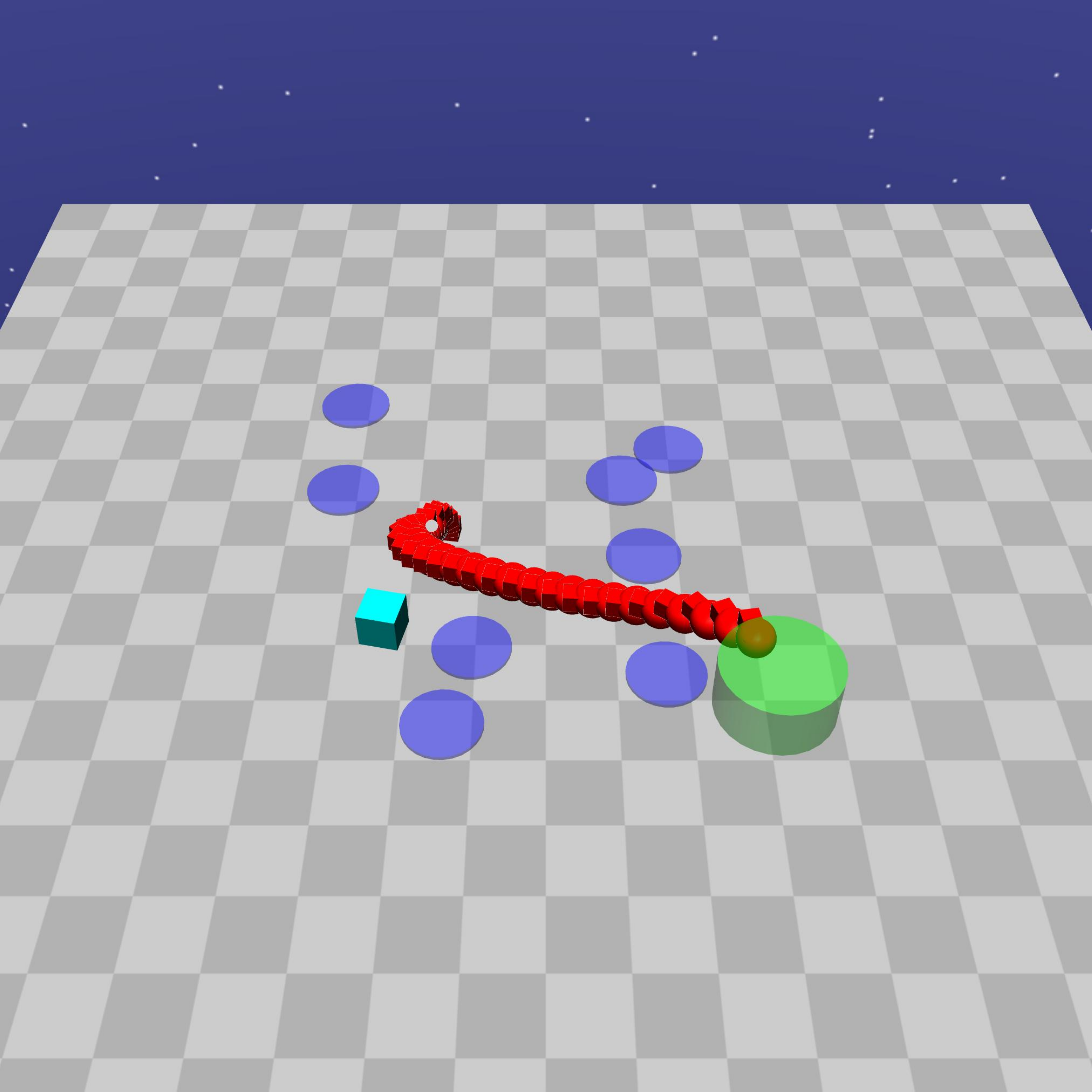}
        \includegraphics[trim=0 300 0 500, clip, width=0.24\linewidth]{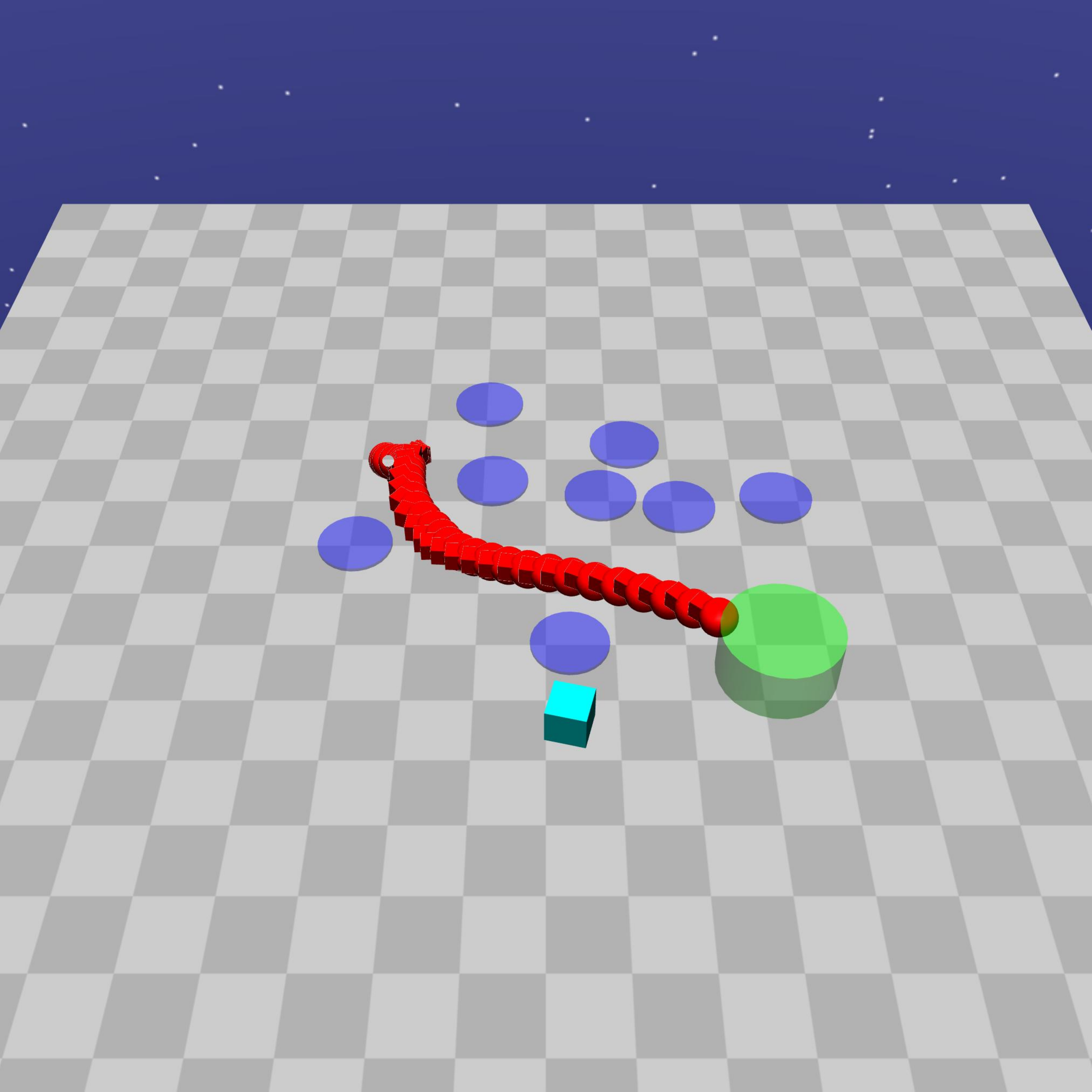}
        \\
        \includegraphics[trim=0 300 0 500, clip, width=0.24\linewidth]{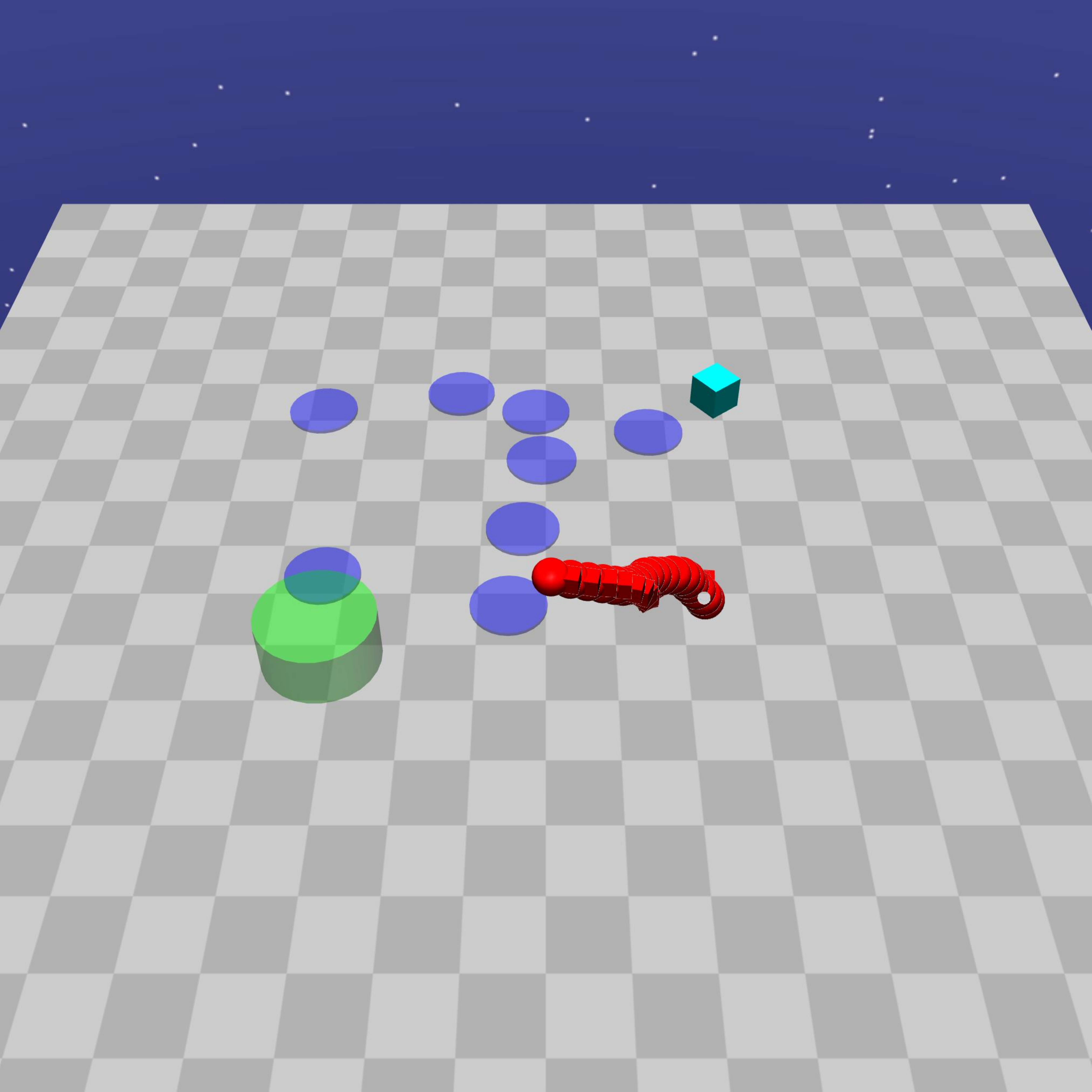}
        \includegraphics[trim=0 300 0 500, clip, width=0.24\linewidth]{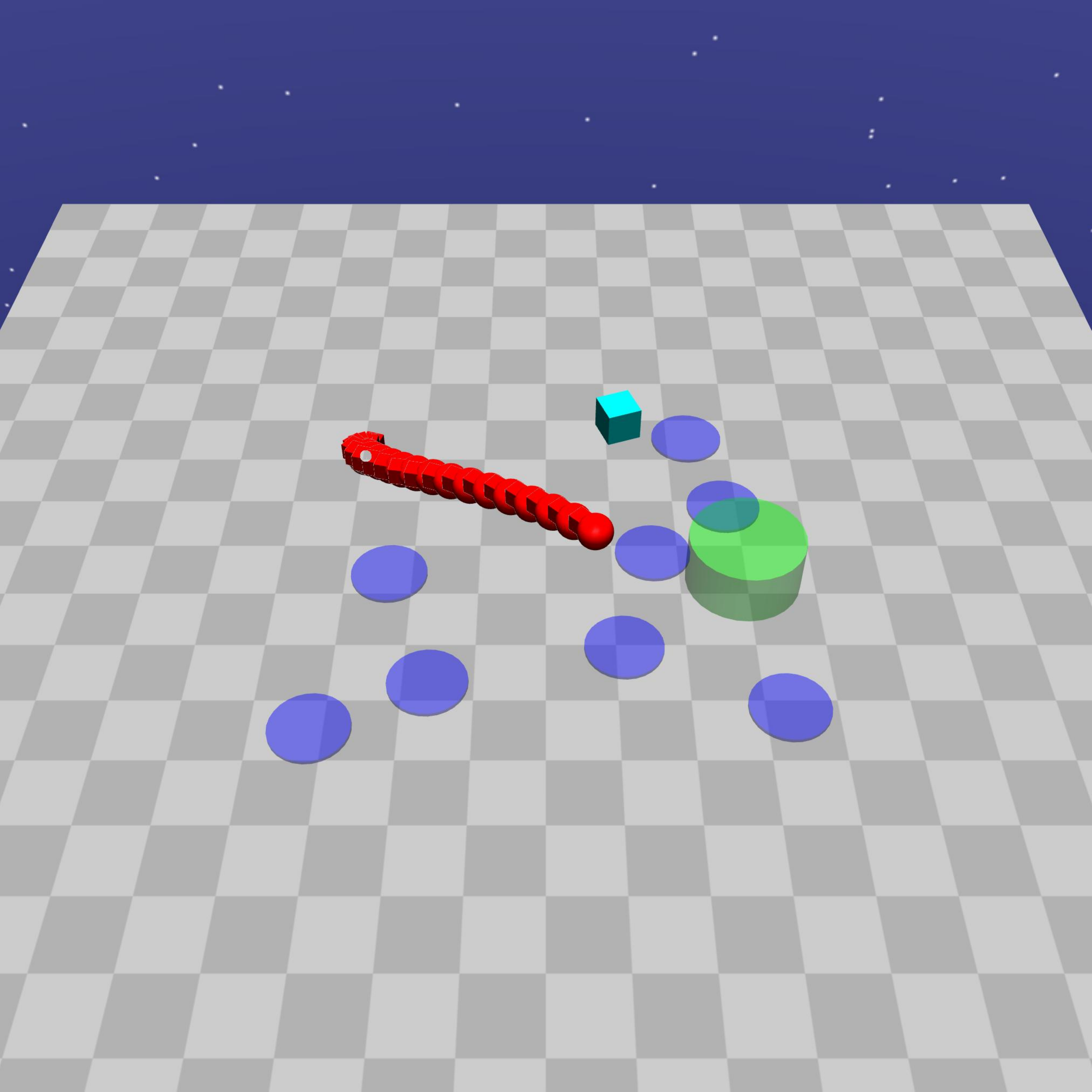}
        \includegraphics[trim=0 300 0 500, clip, width=0.24\linewidth]{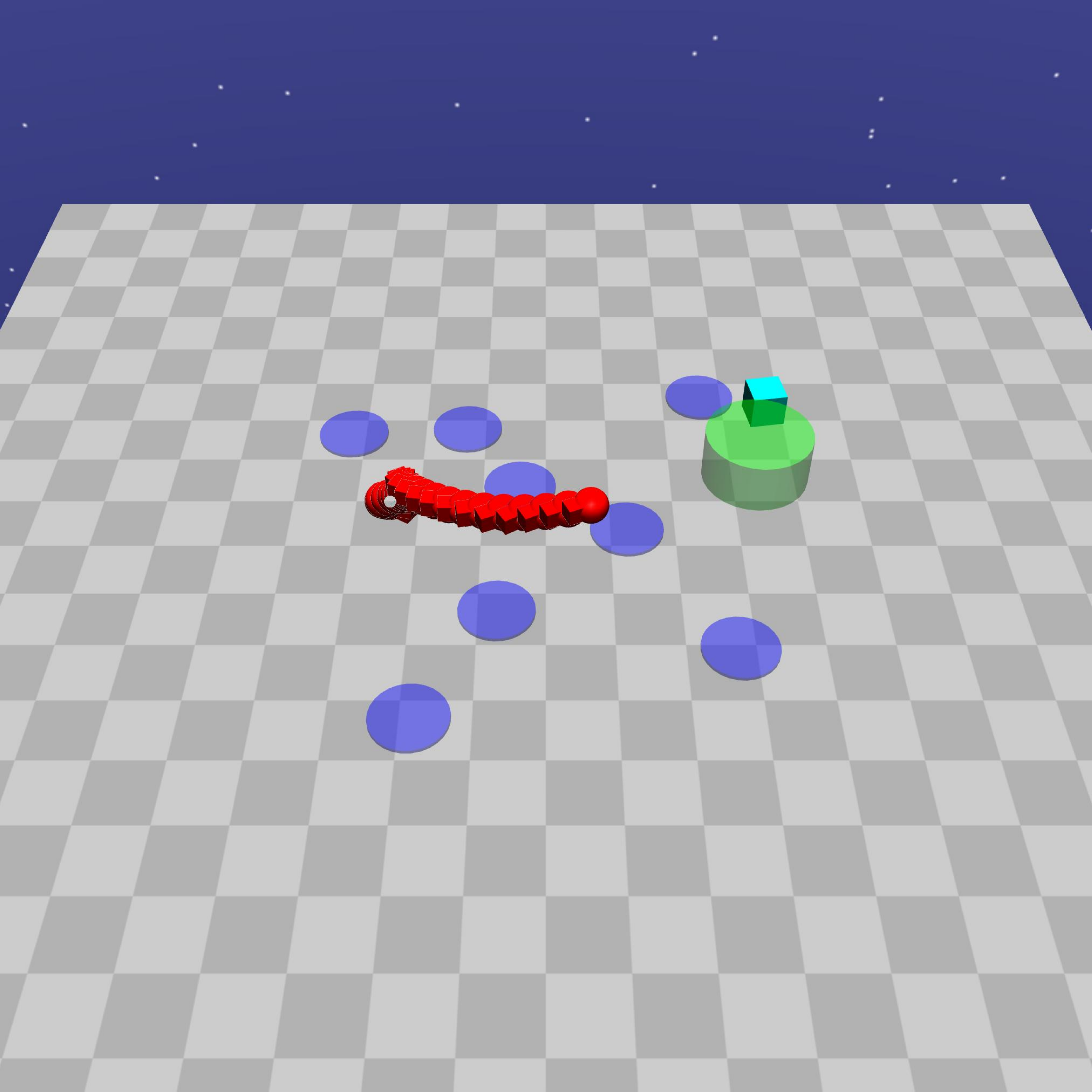}
        \includegraphics[trim=0 300 0 500, clip, width=0.24\linewidth]{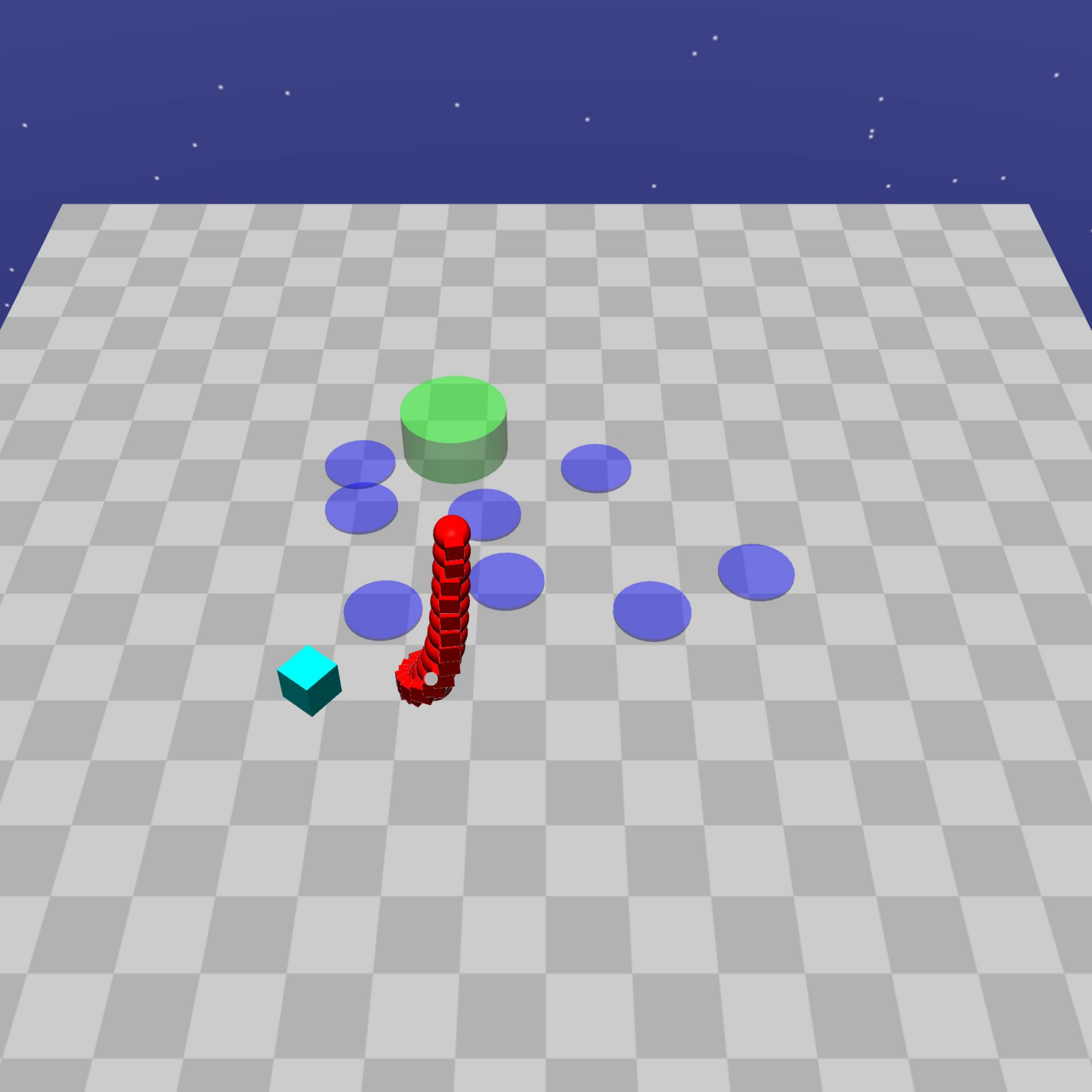}
        \caption{TRPO-Lagrangian successes (top) and failures (bottom)}
    \end{subfigure}%
    \\
    \begin{subfigure}[t]{\textwidth}
        \centering
        \includegraphics[trim=0 300 0 500, clip, width=0.24\linewidth]{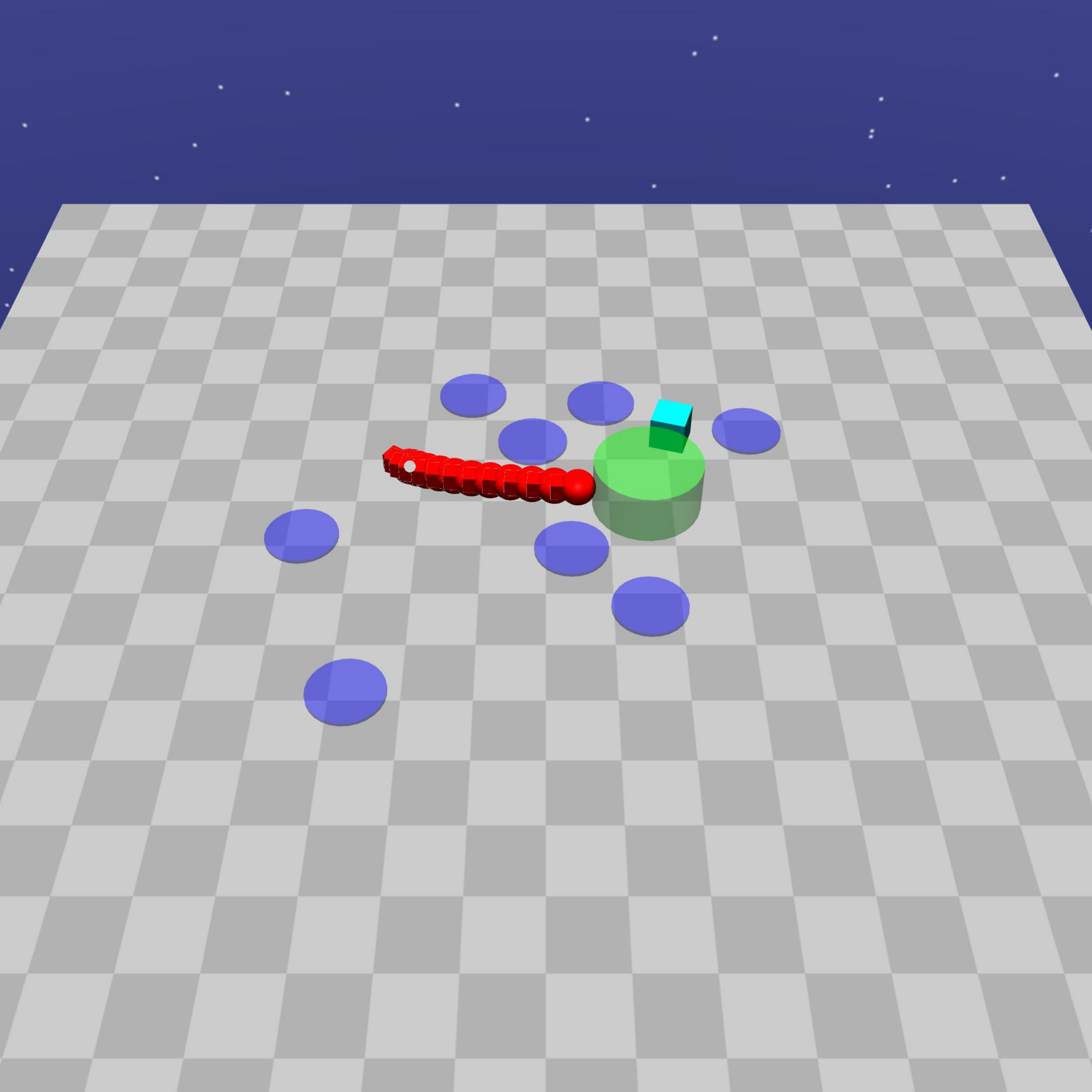}
        \includegraphics[trim=0 300 0 500, clip, width=0.24\linewidth]{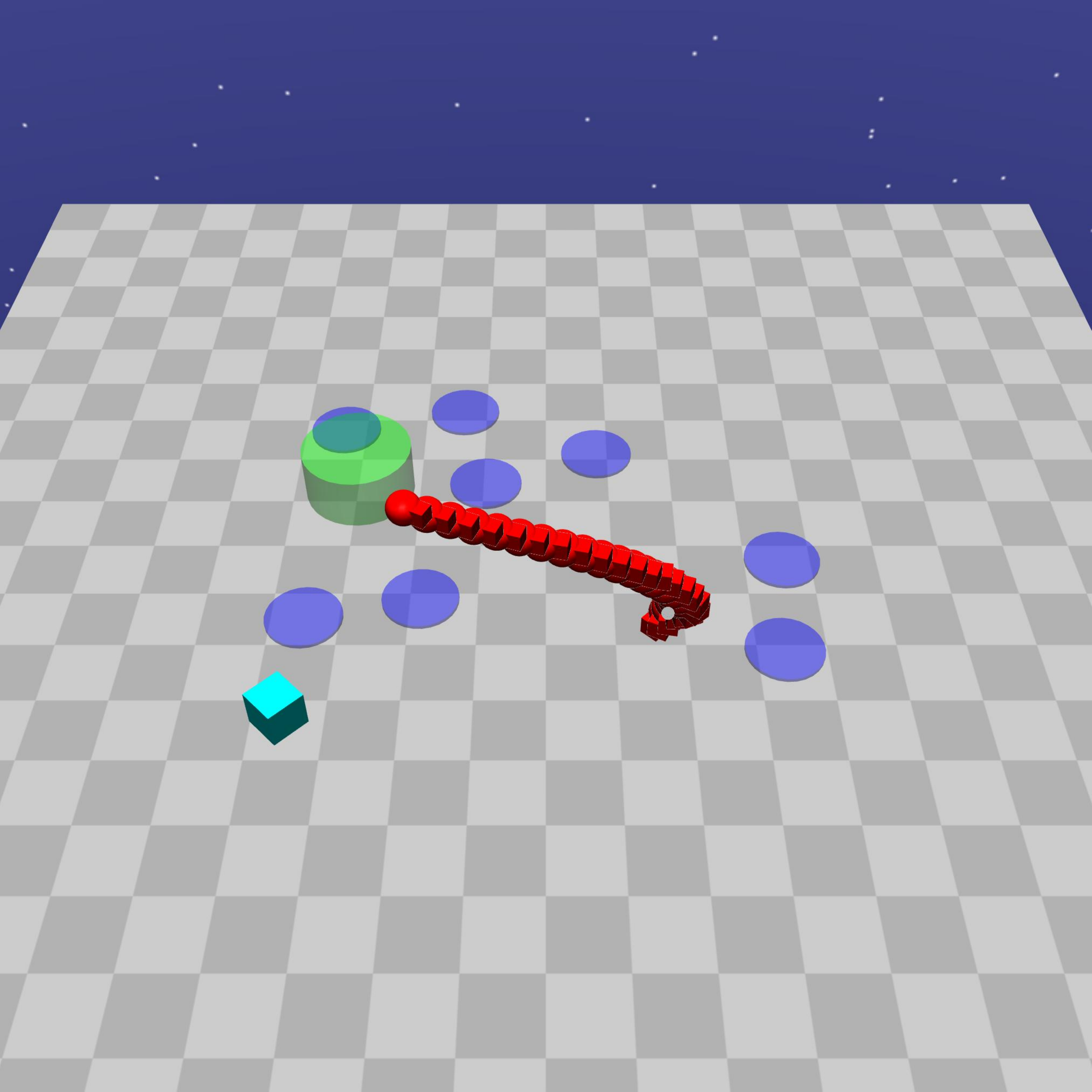}
        \includegraphics[trim=0 300 0 500, clip, width=0.24\linewidth]{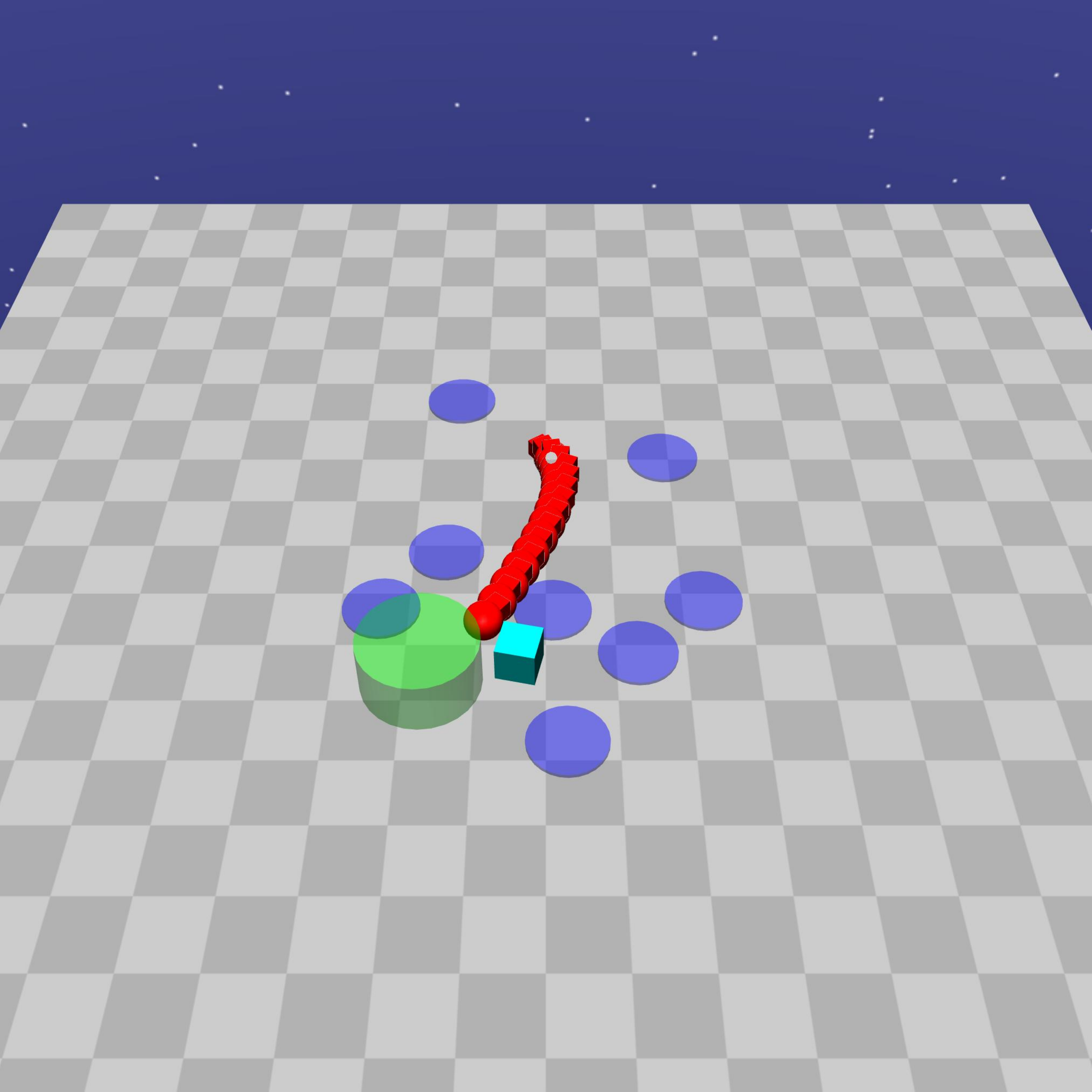}
        \includegraphics[trim=0 300 0 500, clip, width=0.24\linewidth]{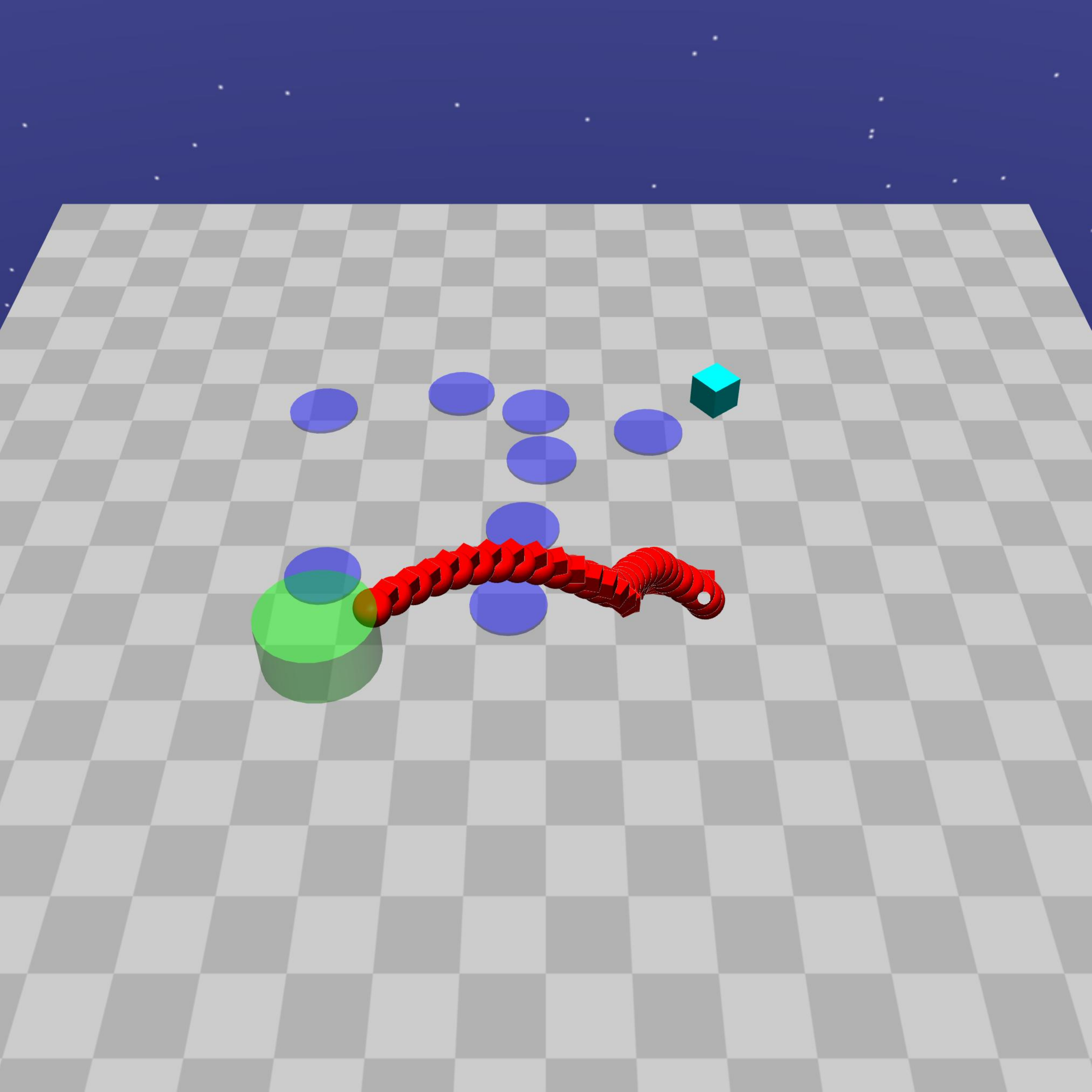}
        \\
        \includegraphics[trim=0 300 0 500, clip, width=0.24\linewidth]{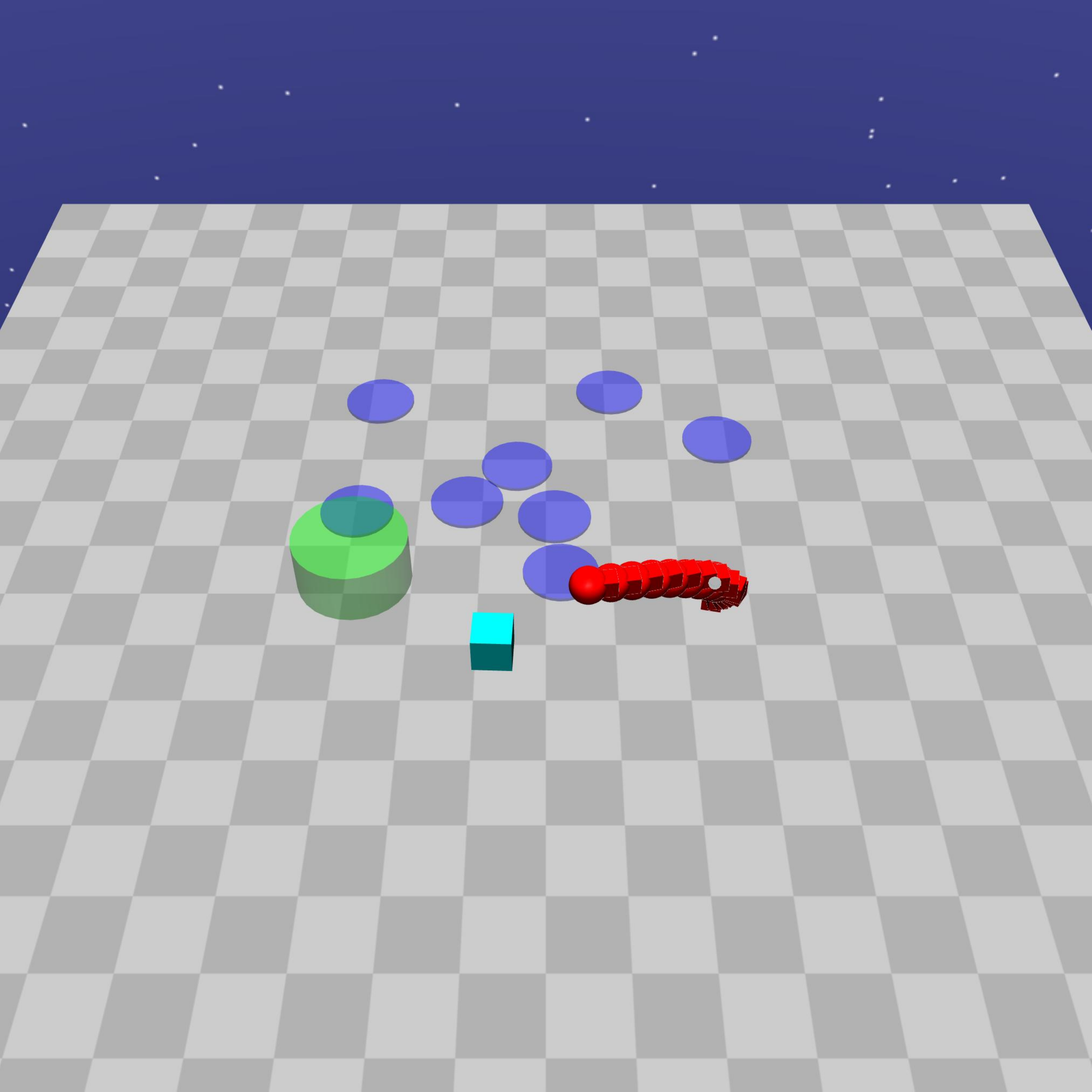}
        \includegraphics[trim=0 300 0 500, clip, width=0.24\linewidth]{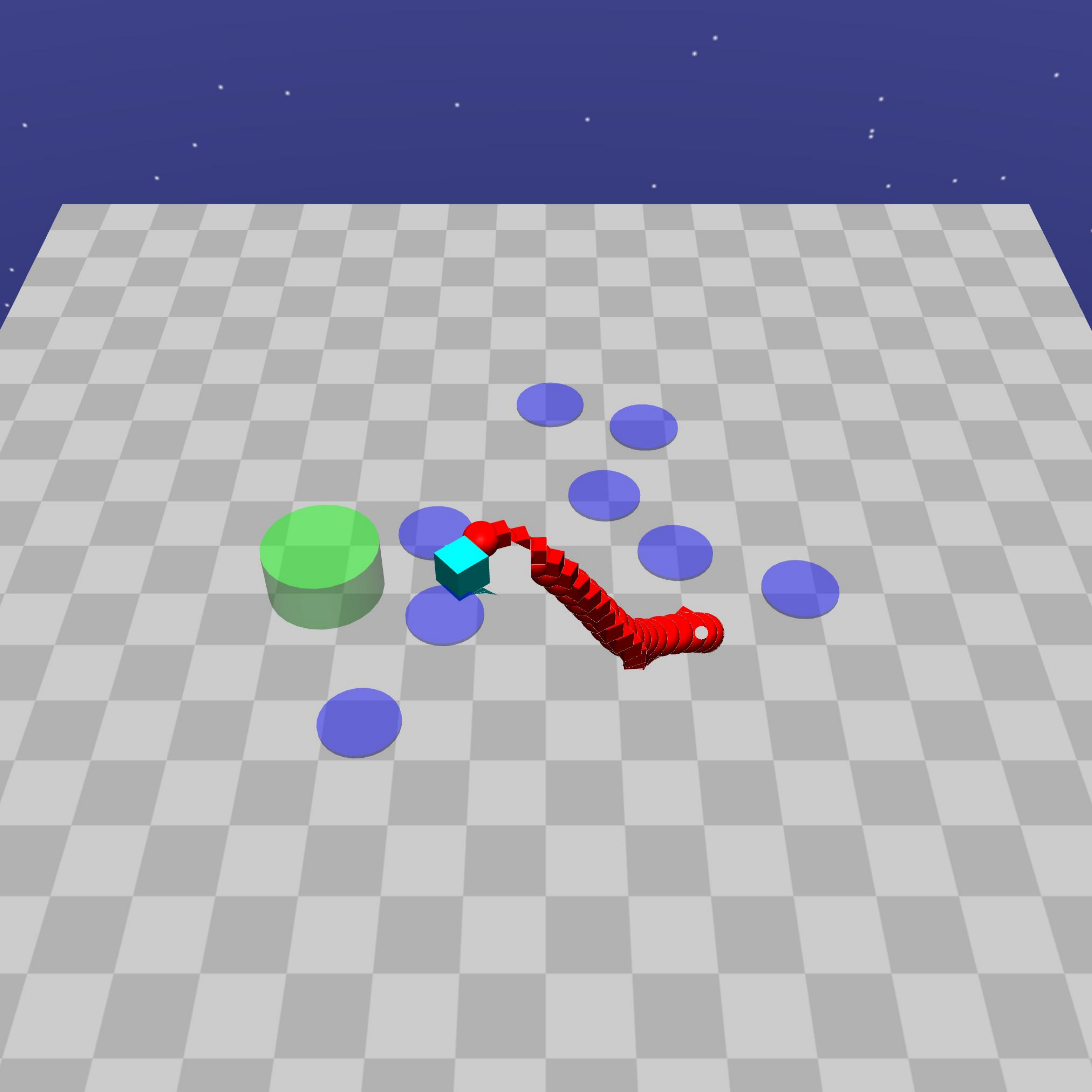}
        \includegraphics[trim=0 300 0 500, clip, width=0.24\linewidth]{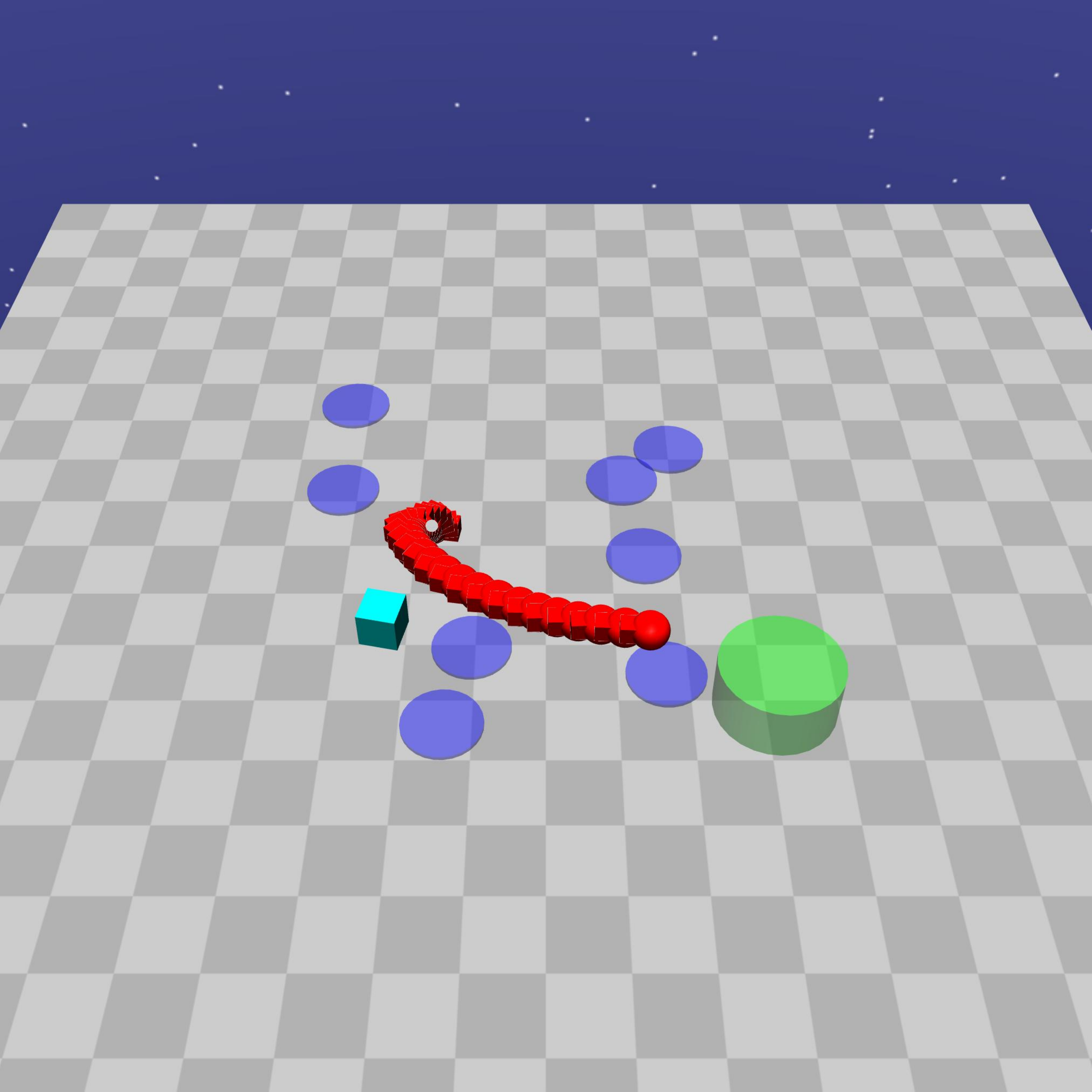}
        \includegraphics[trim=0 300 0 500, clip, width=0.24\linewidth]{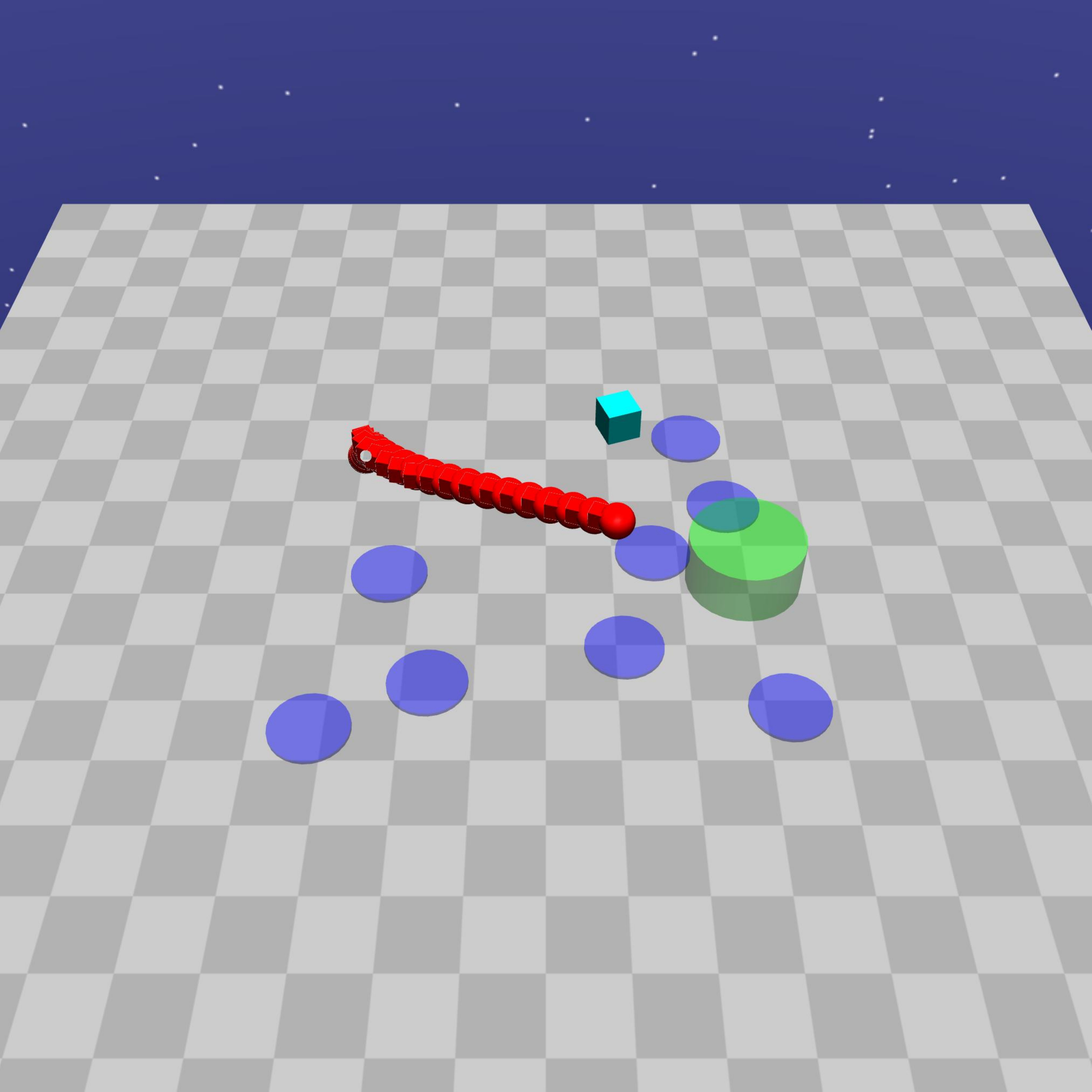}
        \caption{CPO successes (top) and failures (bottom)}
    \end{subfigure}%
    \\
    \begin{subfigure}[t]{\textwidth}
        \centering
        \includegraphics[trim=0 500 0 300, clip, width=0.24\linewidth]{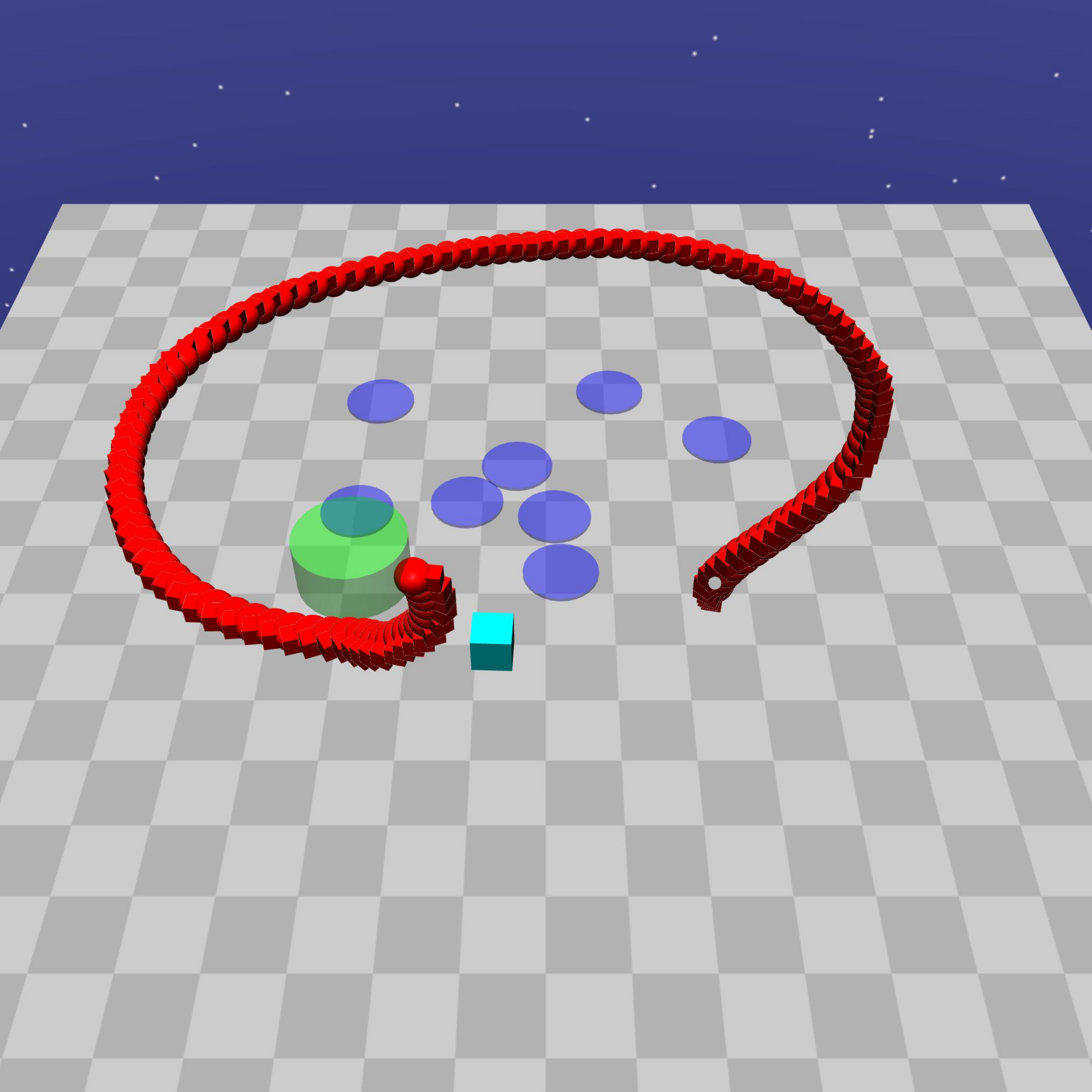}
        \includegraphics[trim=0 500 0 300, clip, width=0.24\linewidth]{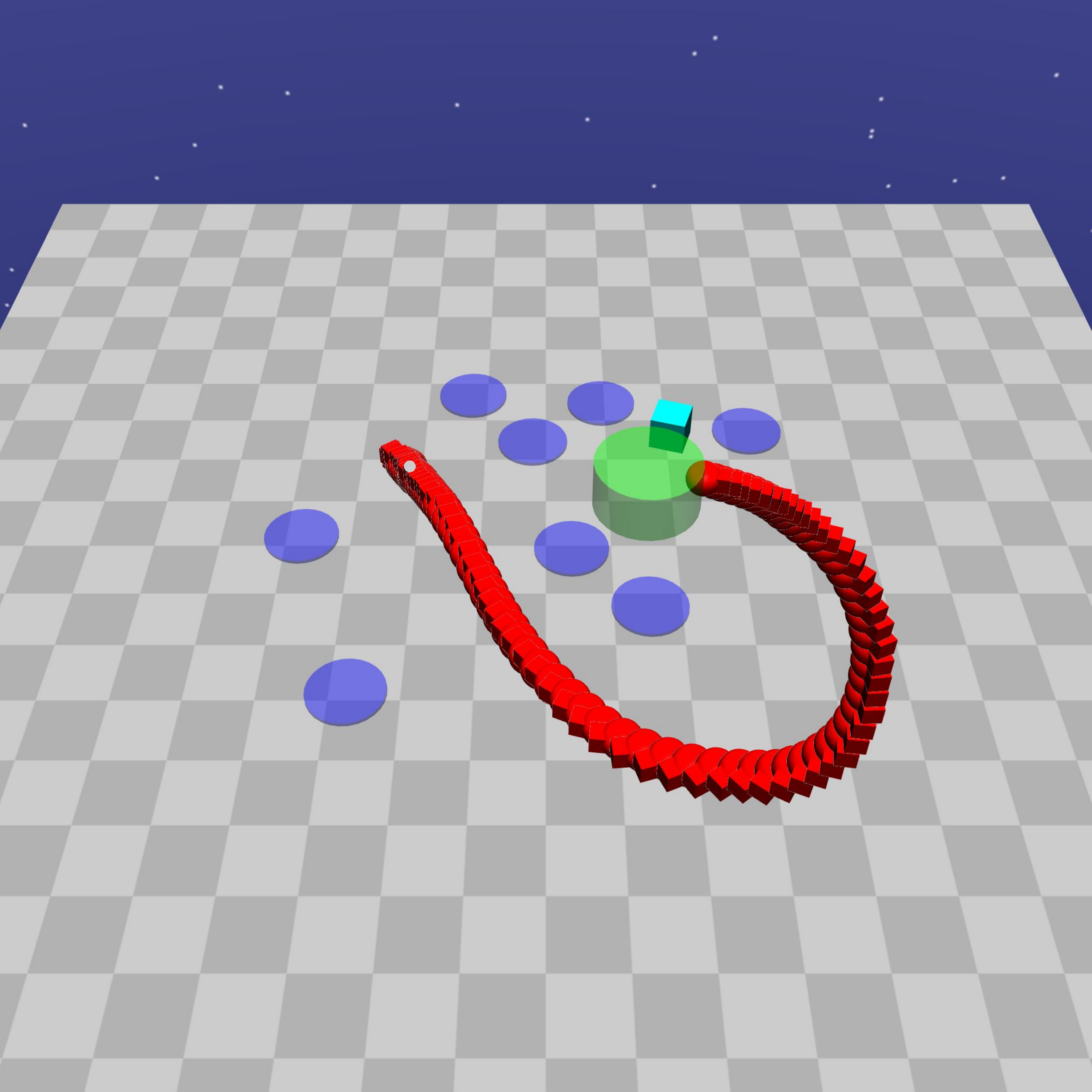}
        \includegraphics[trim=0 500 0 300, clip, width=0.24\linewidth]{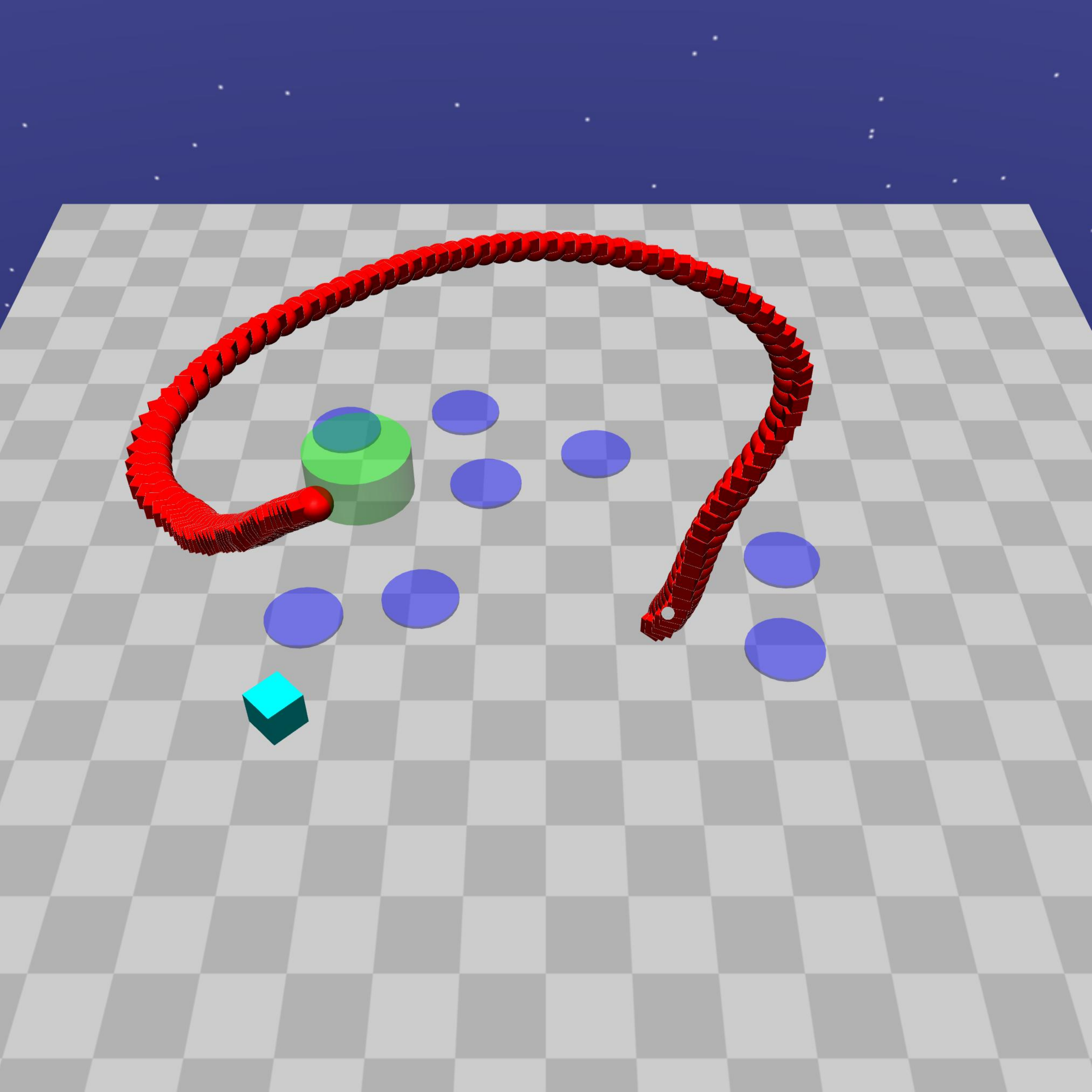}
        \includegraphics[trim=0 500 0 300, clip, width=0.24\linewidth]{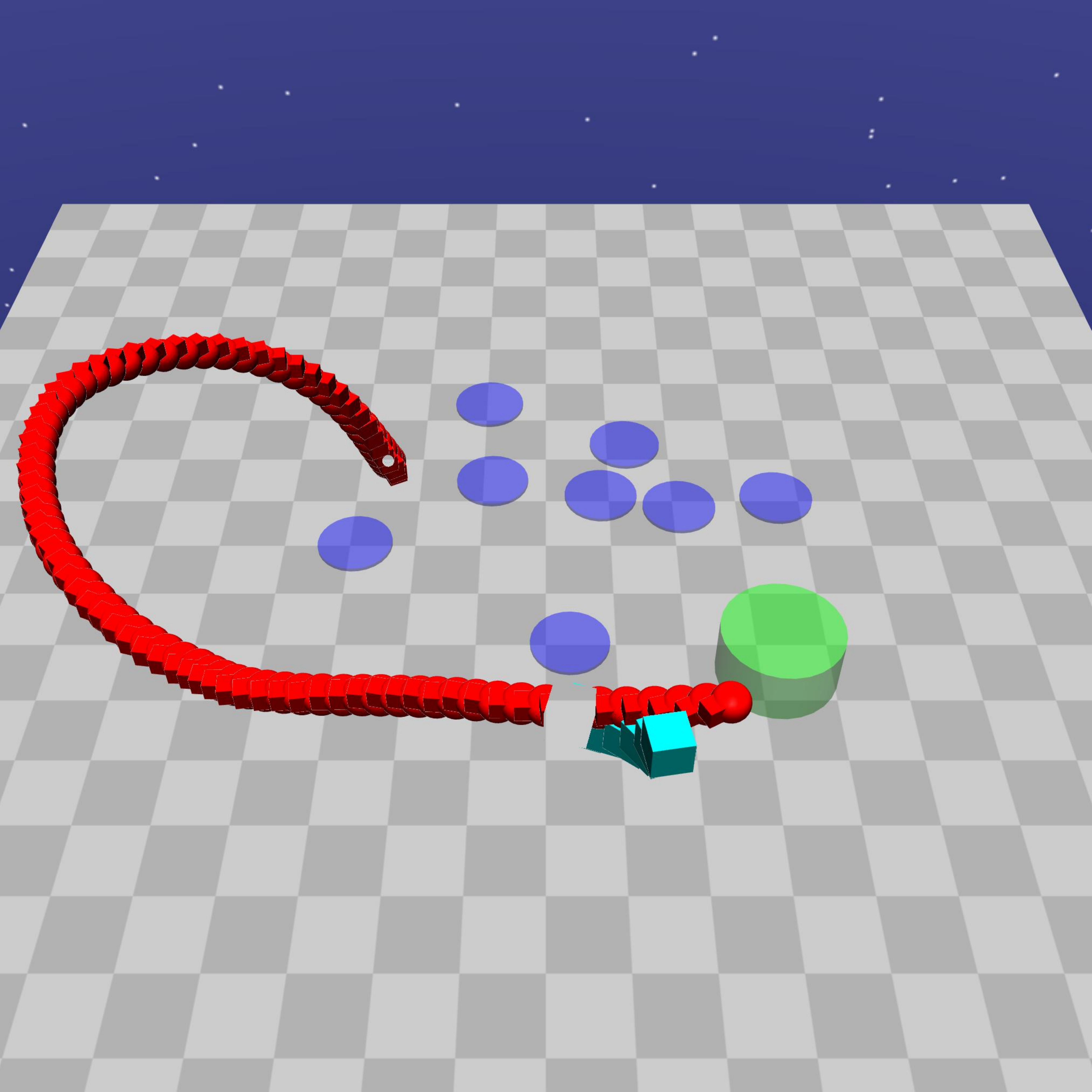}
        \\
        \includegraphics[trim=20 0 20 300, clip, width=0.24\linewidth]{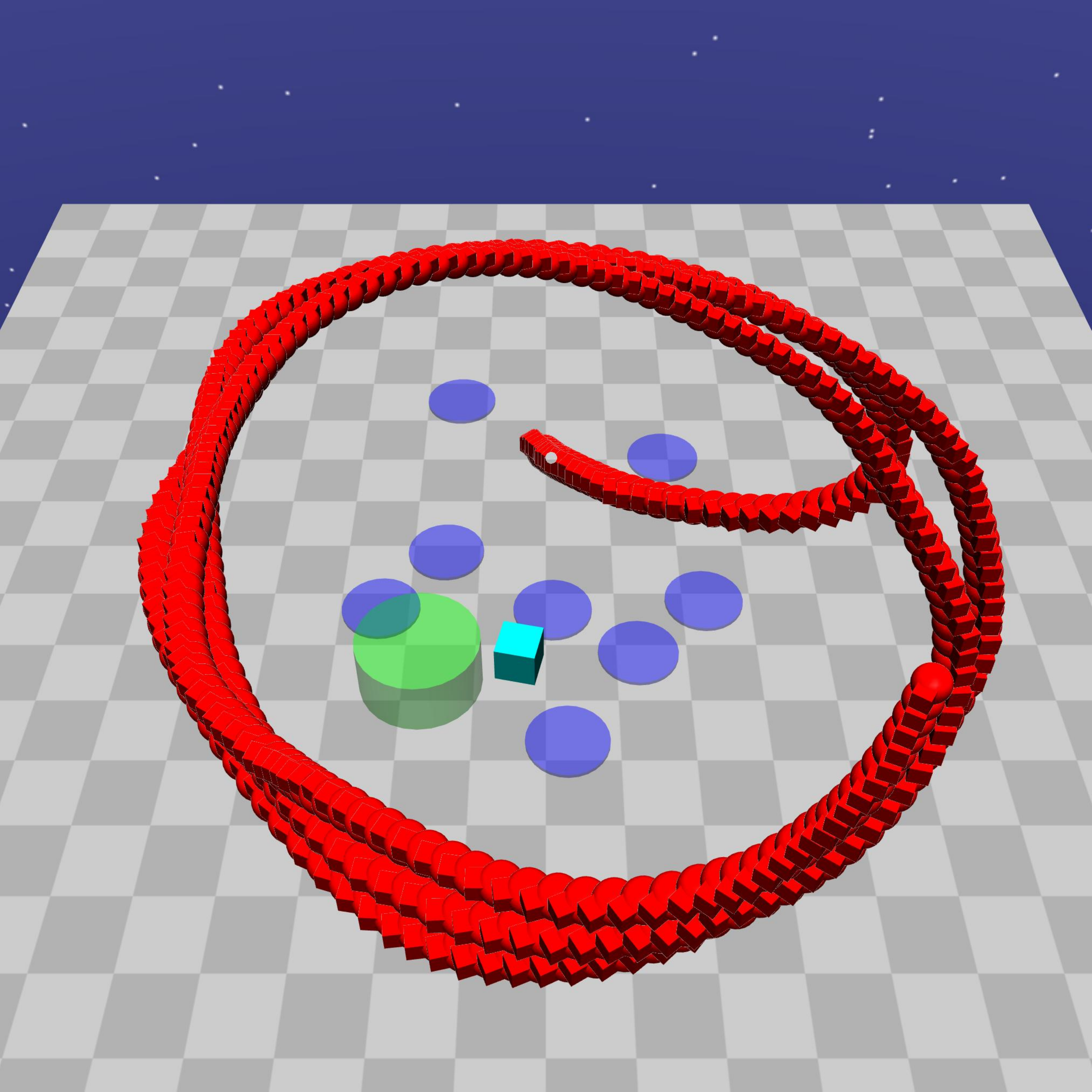}
        \includegraphics[trim=20 0 20 300, clip, width=0.24\linewidth]{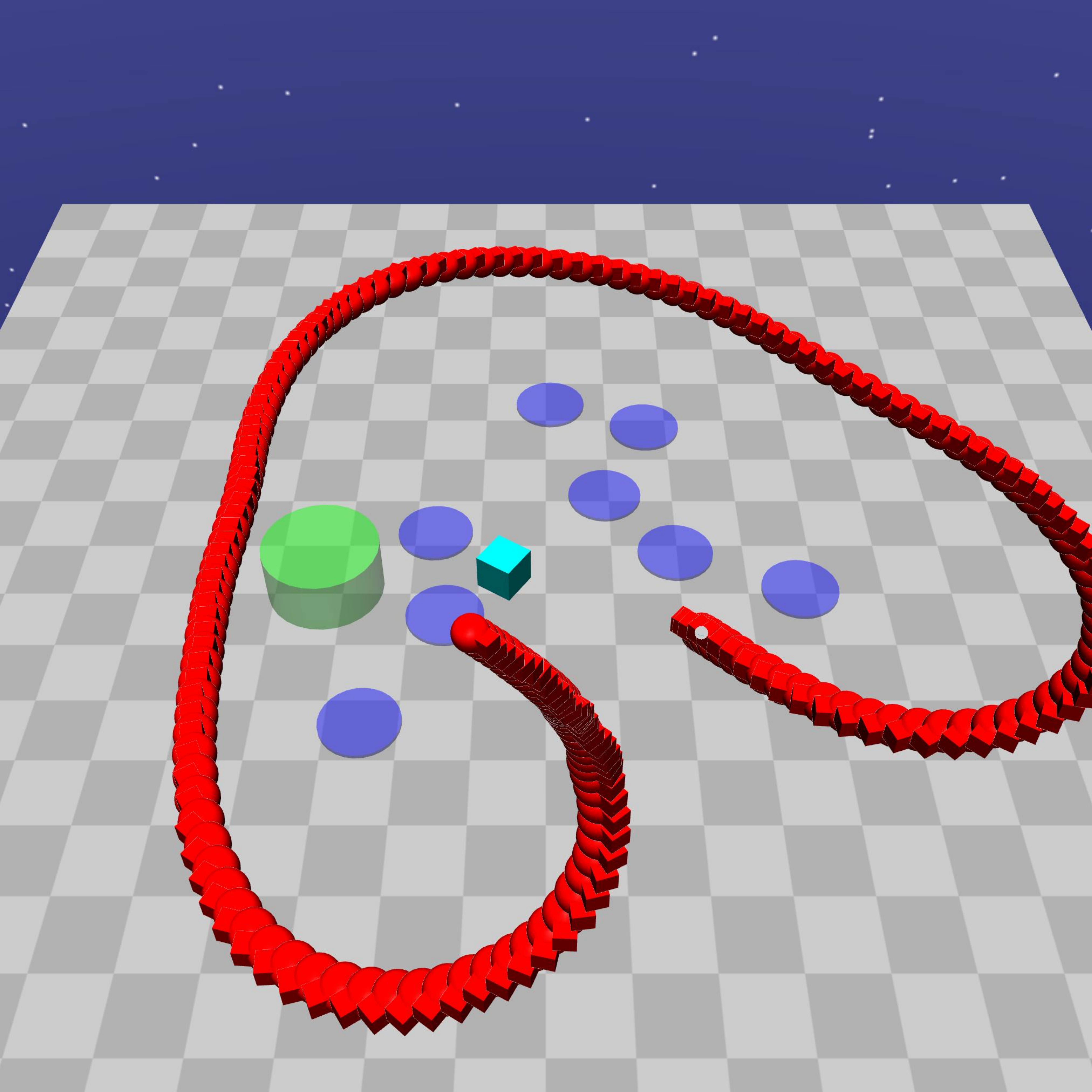}
        \includegraphics[trim=20 0 20 300, clip, width=0.24\linewidth]{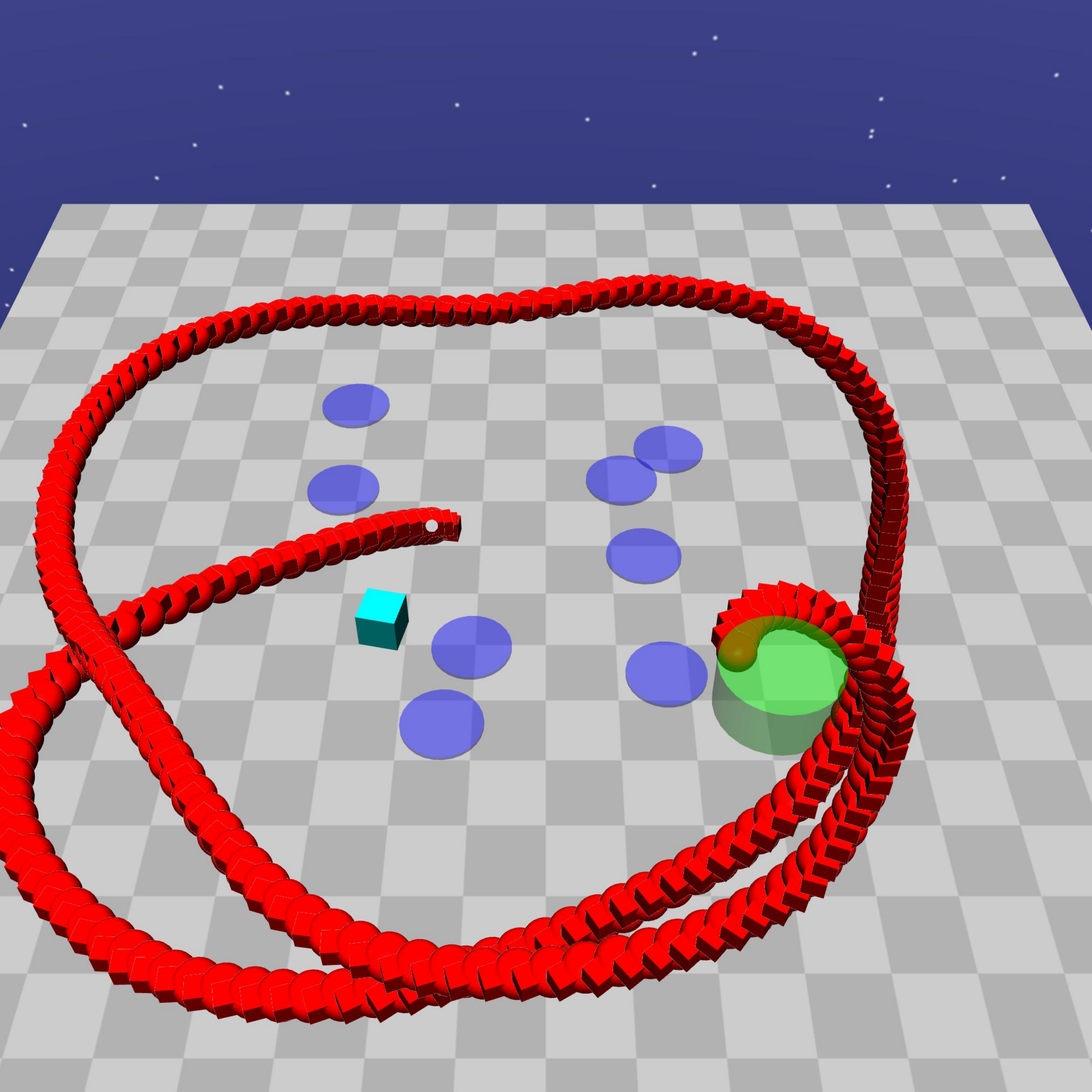}
        \includegraphics[trim=20 0 20 300, clip, width=0.24\linewidth]{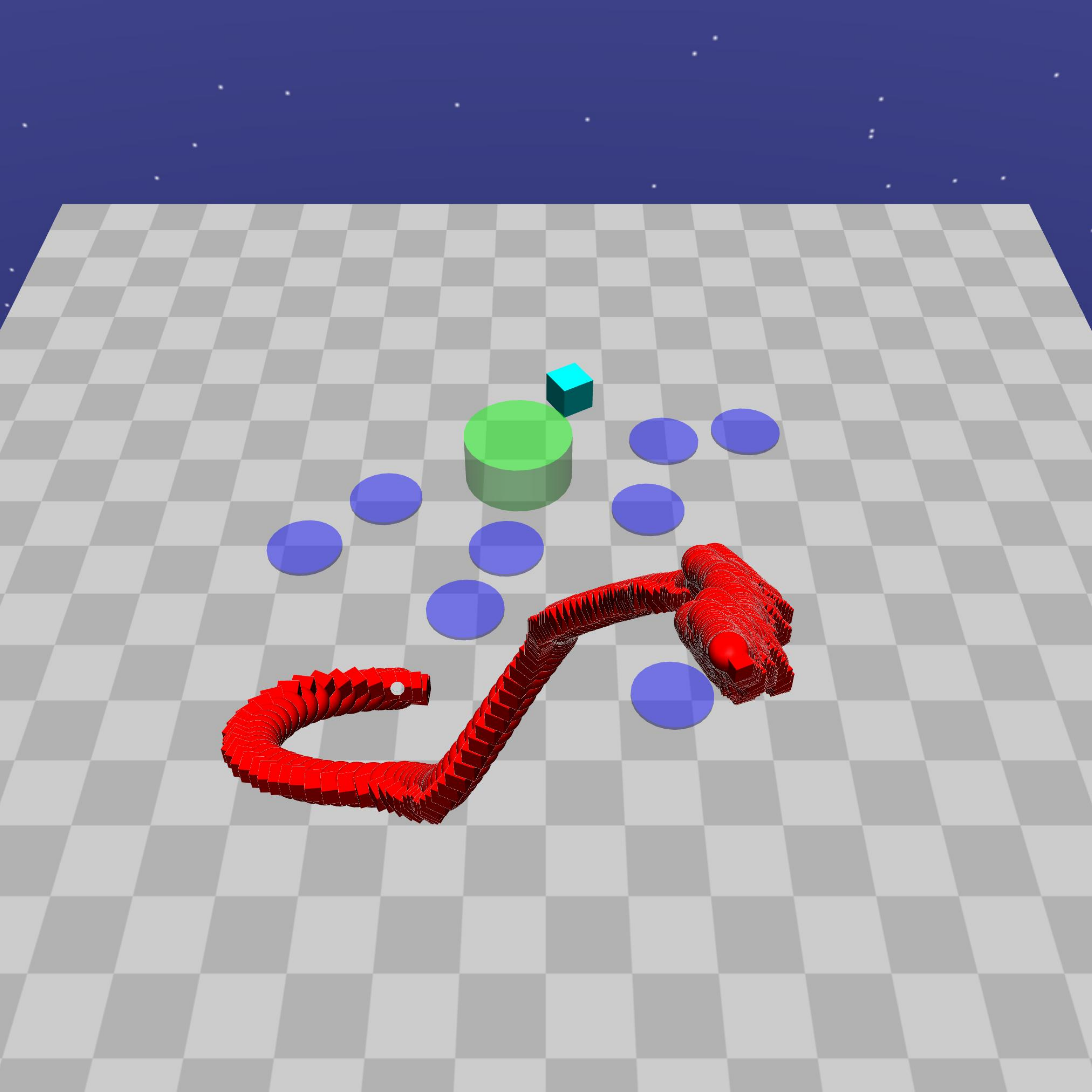}
        \caption{TRPO-Minmax successes (top) and failures (bottom)}
    \end{subfigure}%
    \caption{Sample trajectories of policies learned by each baseline and our Minmax approach in the Safety Gym \pointgoalhard domain, in the experiments of Figure \ref{fig:safety_baselines}.
    Trajectories that hit hazards or take more than 1000 timesteps to reach the goal location are considered failures.   
    }
    \label{fig:all_trajectories}
\end{figure*}%

\begin{figure*}[b!]
    \centering
    \includegraphics[width=0.9\linewidth]{images/_legend.pdf}
    \\
    \begin{subfigure}[t]{0.47\textwidth}
        \centering
        \includegraphics[width=\linewidth]{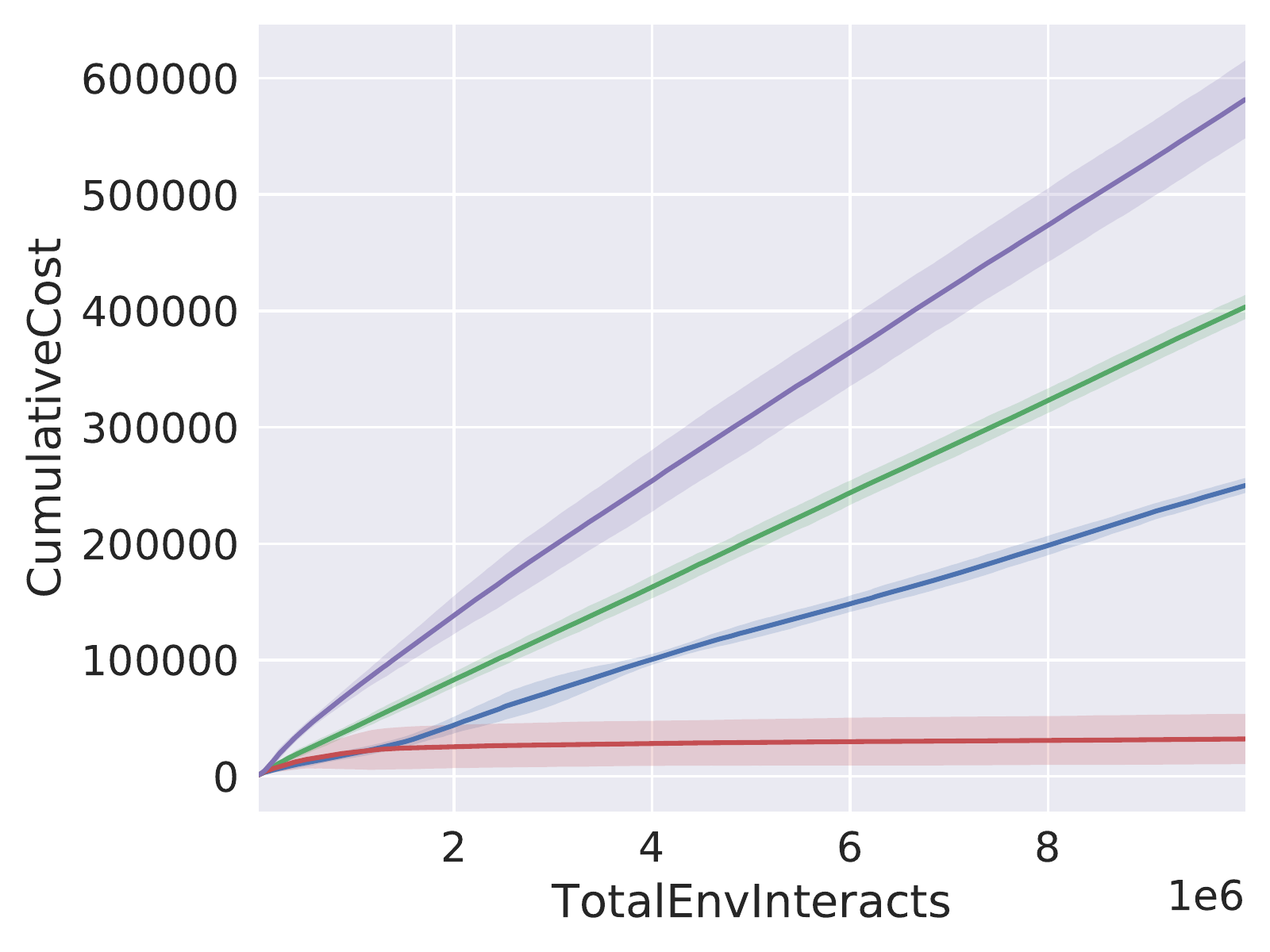}
        \caption{The cumulative cost.}
        \label{fig:cumulative_cost_original}
    \end{subfigure}%
    \quad
    \begin{subfigure}[t]{0.47\textwidth}
        \centering
        \includegraphics[width=\linewidth]{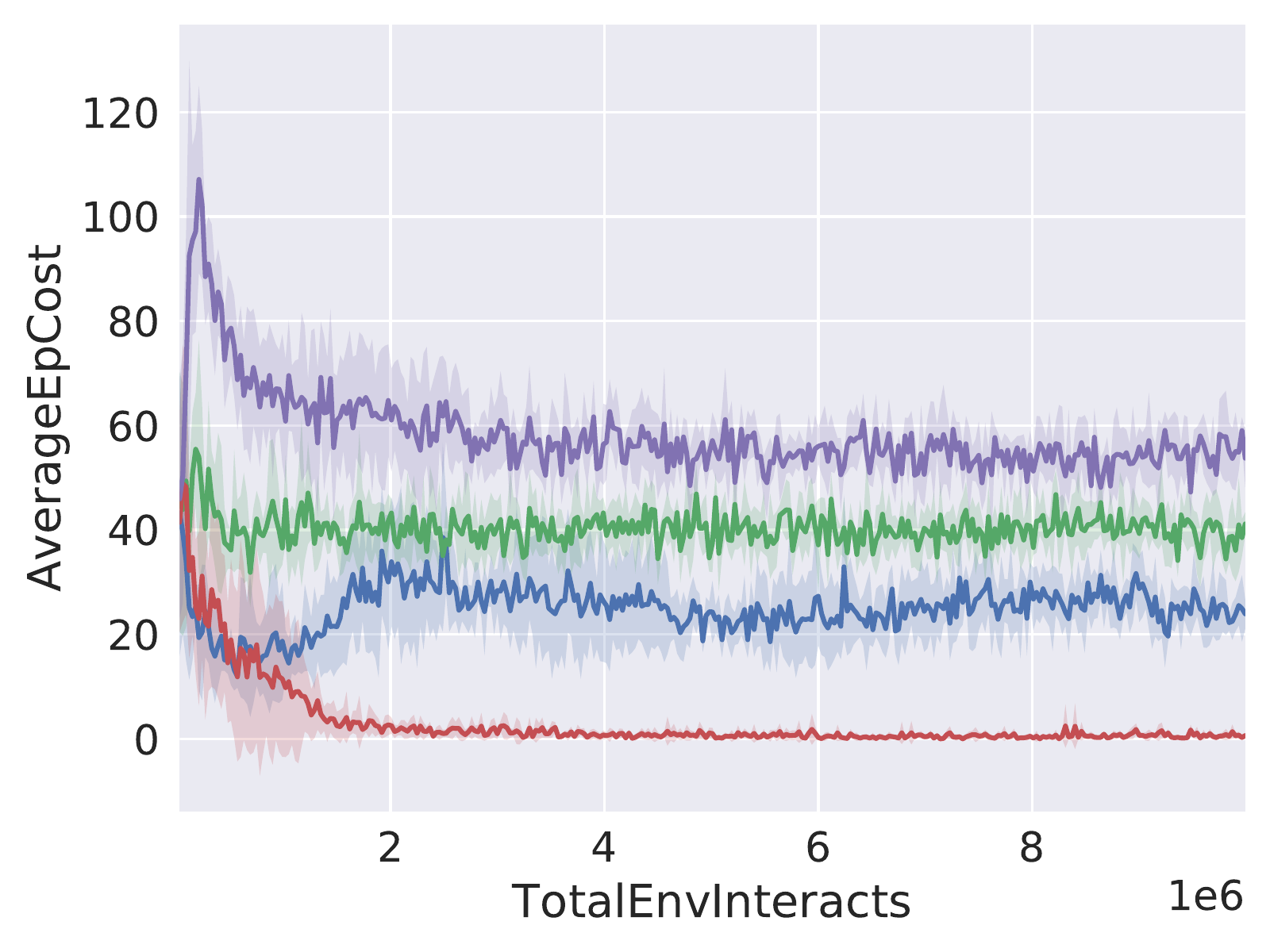}
        \caption{Failure rate}
        \label{fig:baseline_cost_original}
    \end{subfigure}%
    \\
    \begin{subfigure}[t]{0.47\textwidth}
        \centering
        \includegraphics[width=\linewidth]{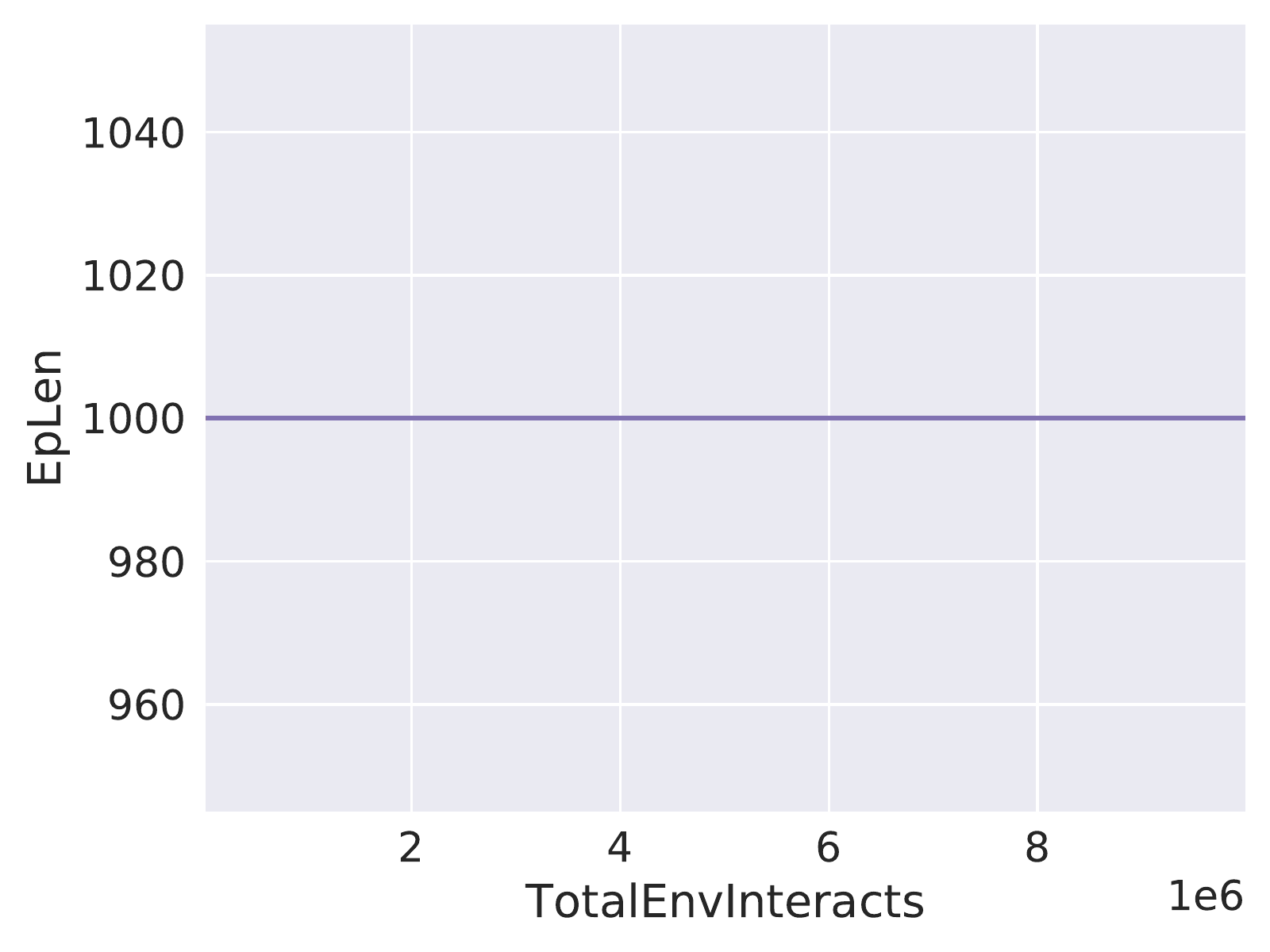}
        \caption{Average episode length}
        \label{fig:baseline_length_original}
    \end{subfigure}%
    \quad
    \begin{subfigure}[t]{0.47\textwidth}
        \centering
        \includegraphics[width=\linewidth]{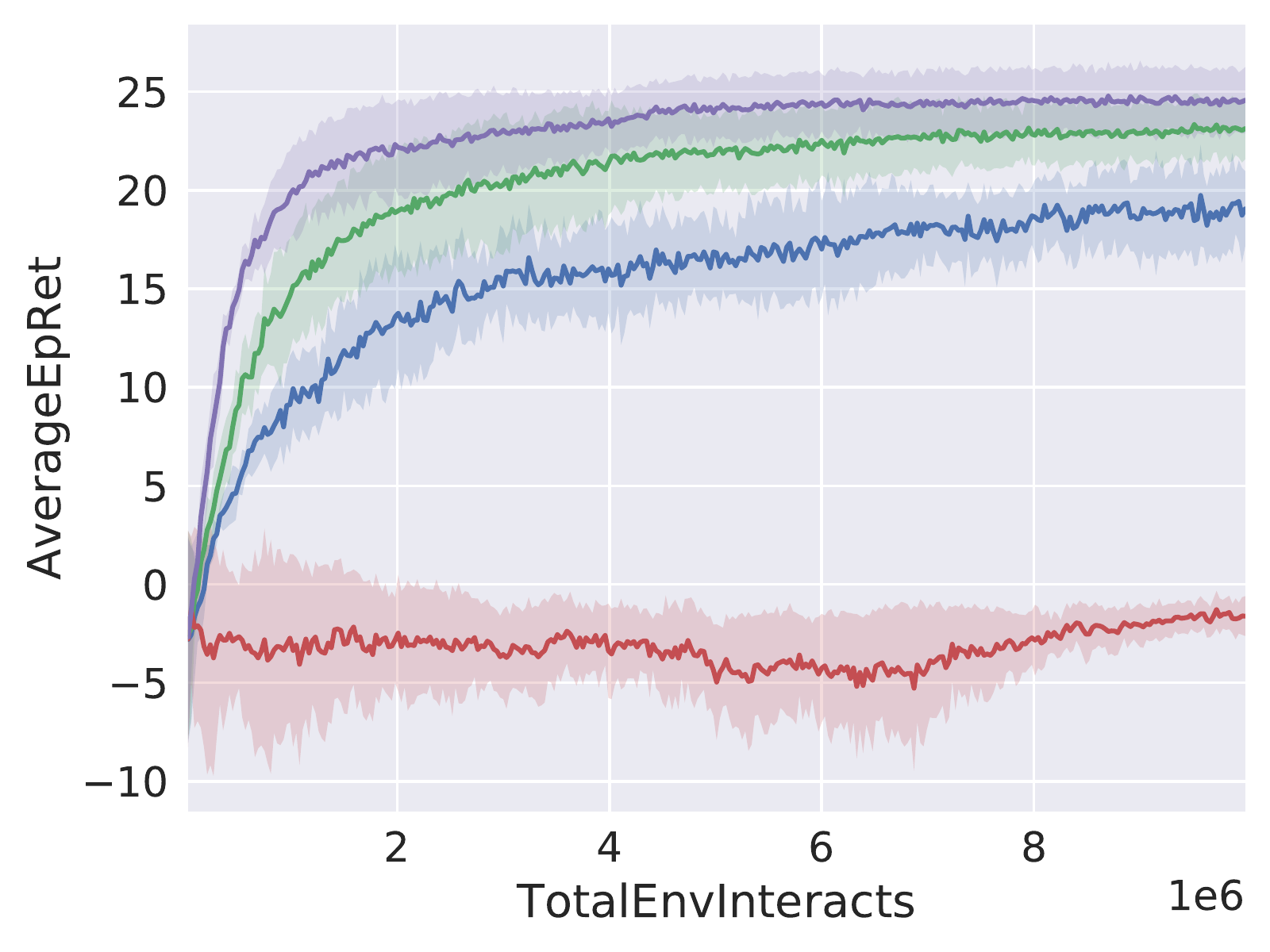}
        \caption{Average returns}
        \label{fig:baseline_return_original}
    \end{subfigure}%
    \caption{Comparison with baselines in the original Safety Gym \pointgoal environment. This domain is the same as \pointgoalhard, except that episodes do not terminate when a hazard is hit (hence every episode only terminates after 1000 steps). We set the cost threshold for TRPO Lagrangian and CPO to $25$ as in \citet{Ray2019}. For TRPO Minmax, we replace the reward with the Minmax penalty every time the agent is in an unsafe state (that is every time the cost is greater than zero), as in previous experiments and as per Algorithm 1. Since unsafe states are not terminal in this domain, our theoretical setting is no longer satisfied. Hence, while TRPO Minmax still beats the baselines in safe exploration (a-b), it struggles to maximise rewards while avoiding unsafe states (d).  }
    \label{fig:safety_baselines_original}
\end{figure*}%

\begin{figure*}[b!]
    \centering
    \begin{subfigure}[t]{\textwidth}
        \centering
        \includegraphics[trim=0 300 0 500, clip, width=0.24\linewidth]{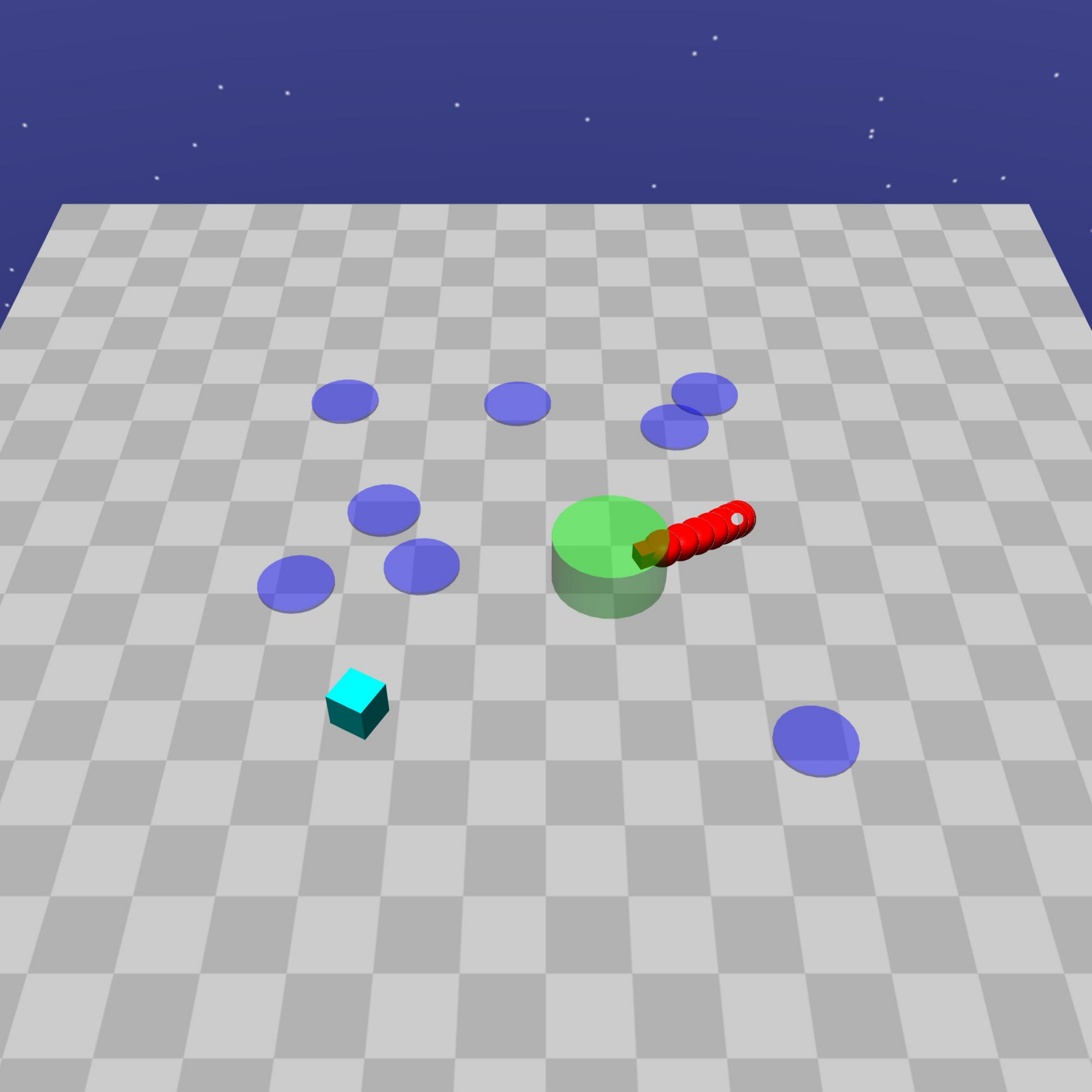}
        \includegraphics[trim=0 300 0 500, clip, width=0.24\linewidth]{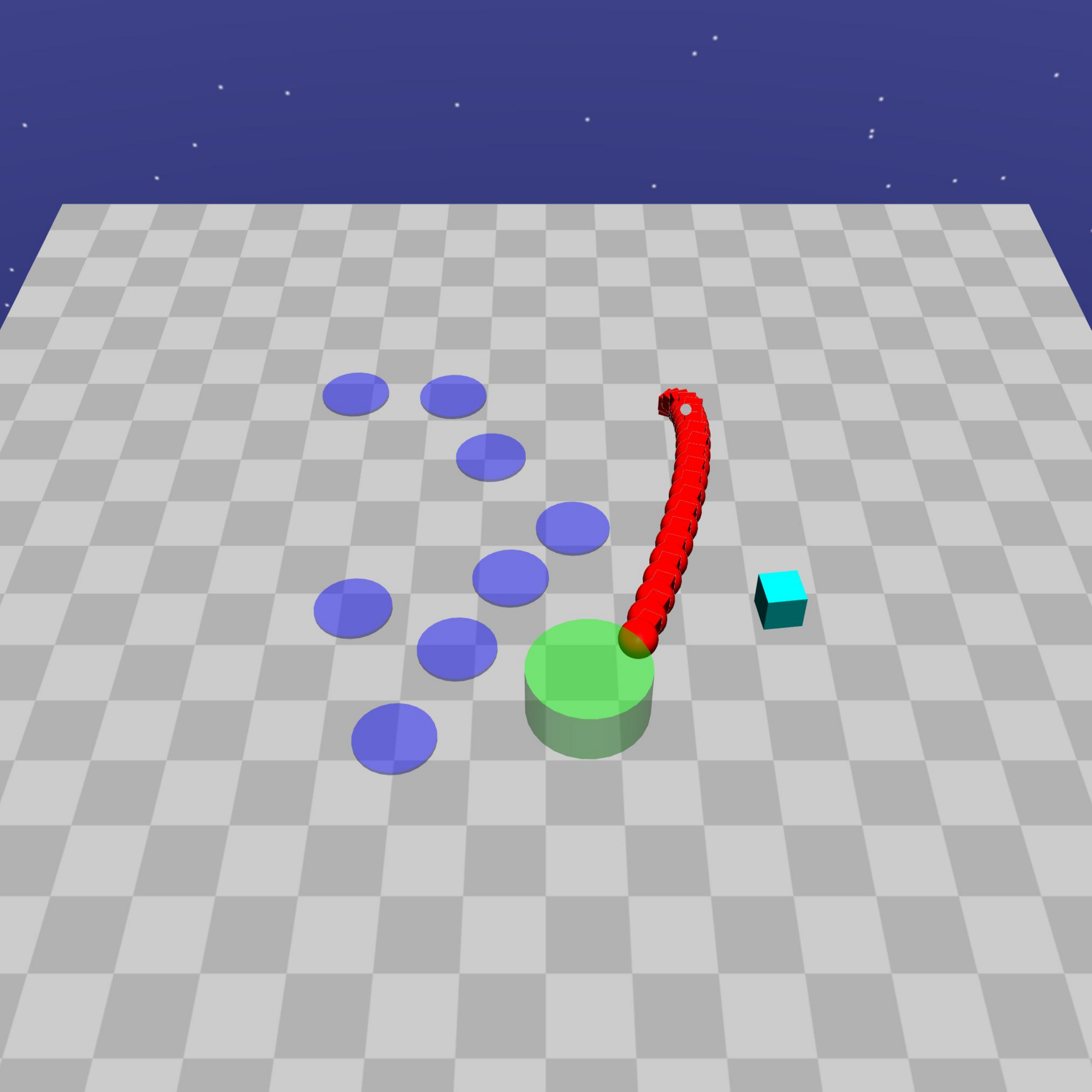}
        \includegraphics[trim=0 300 0 500, clip, width=0.24\linewidth]{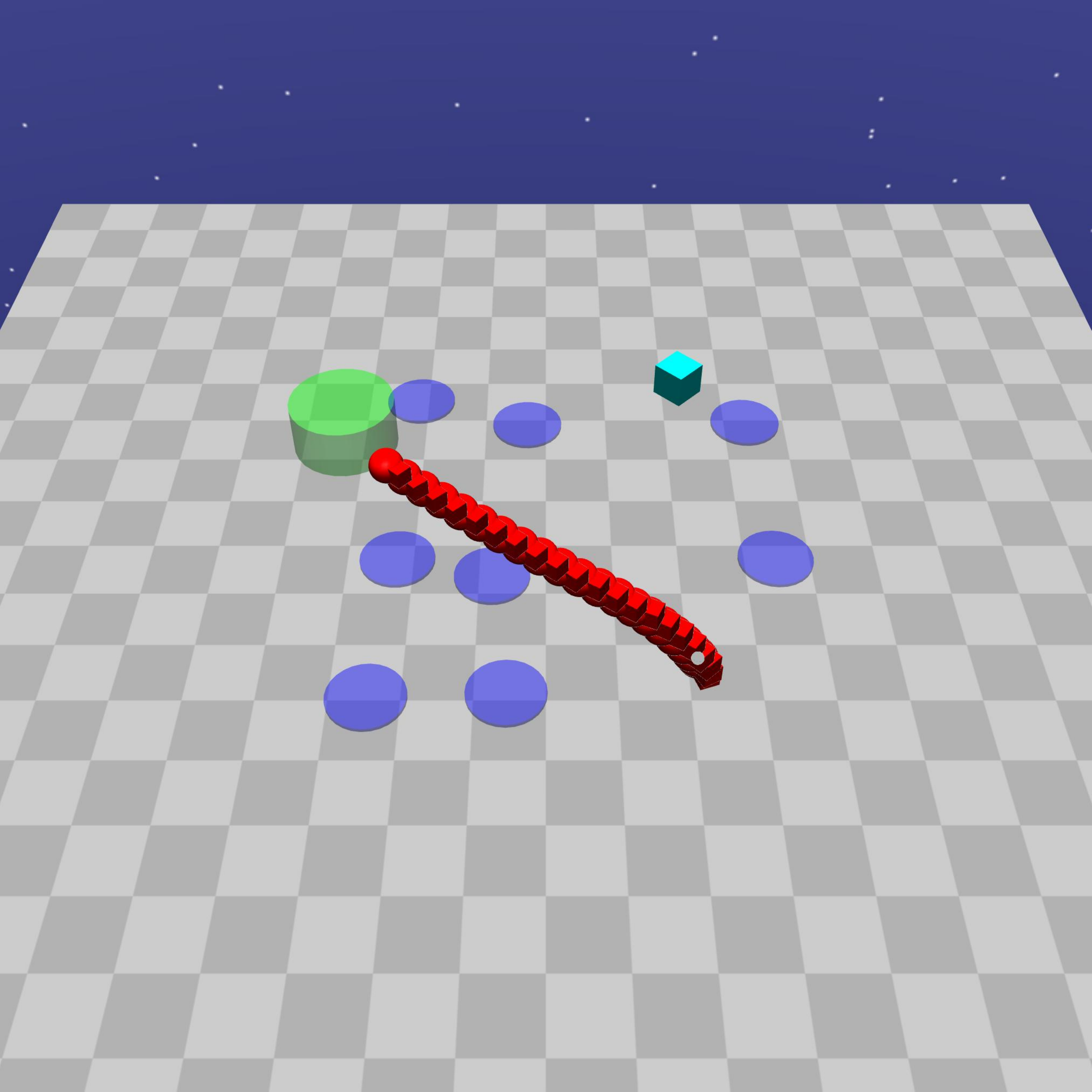}
        \includegraphics[trim=0 300 0 500, clip, width=0.24\linewidth]{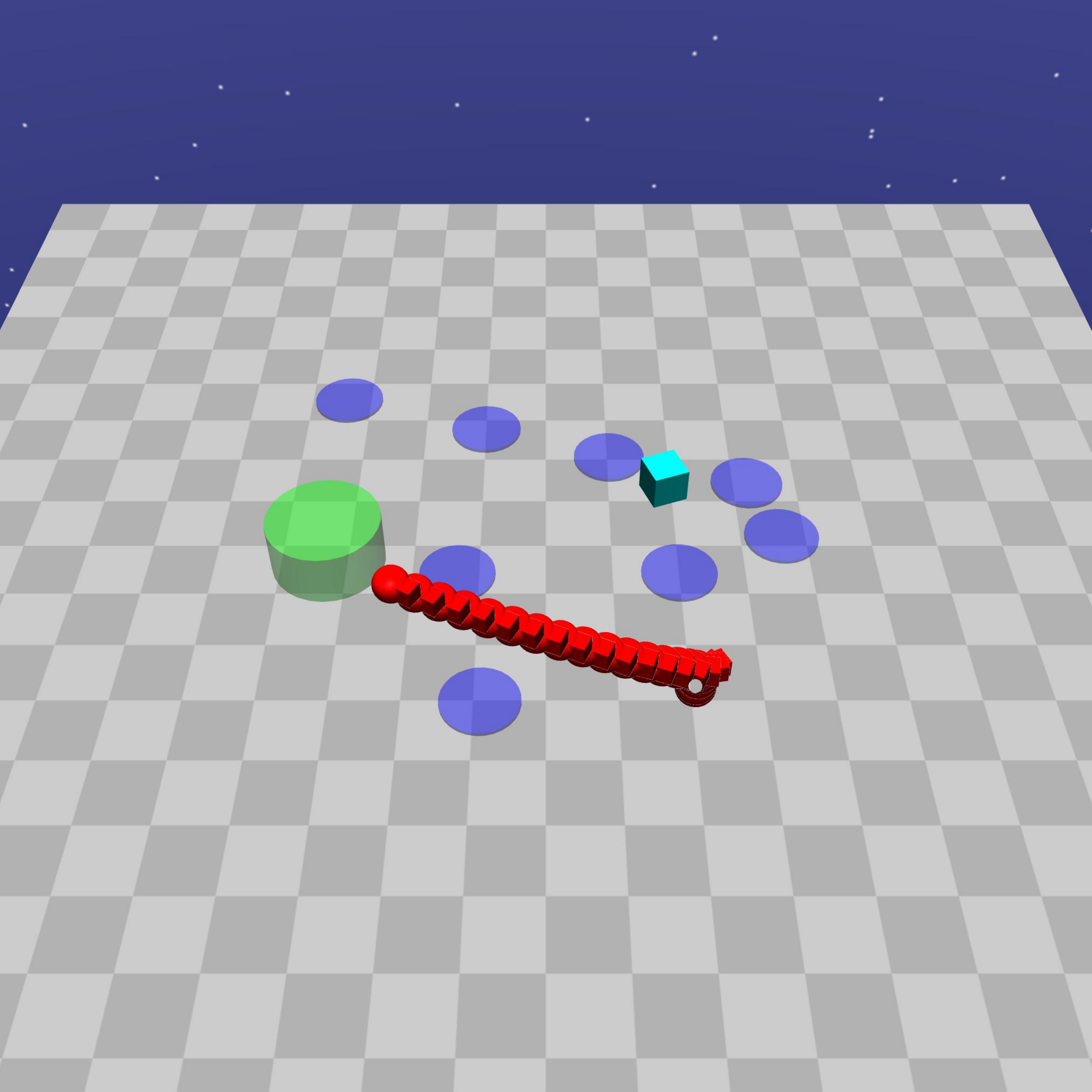}
        \\
        \includegraphics[trim=0 300 0 500, clip, width=0.24\linewidth]{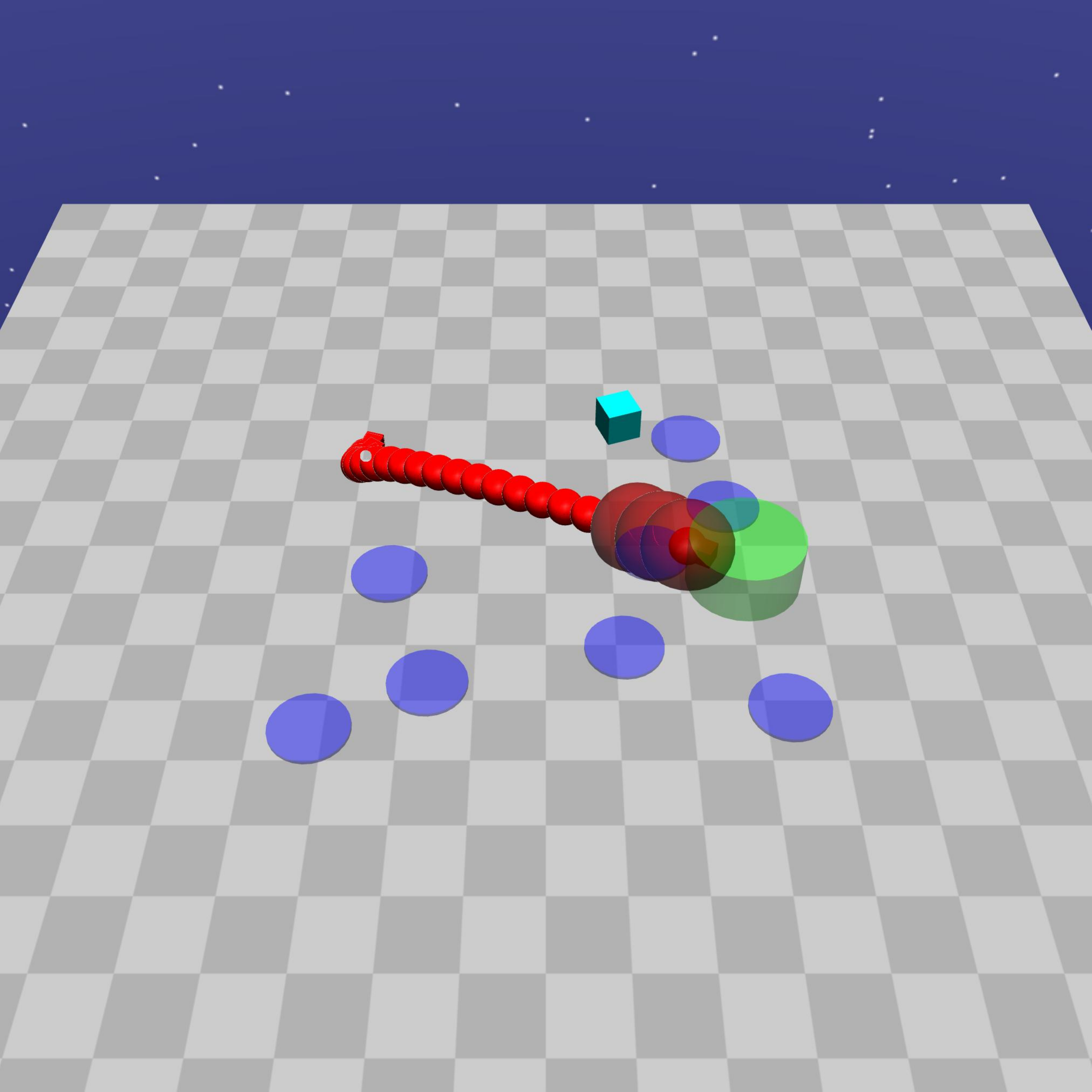}
        \includegraphics[trim=0 300 0 500, clip, width=0.24\linewidth]{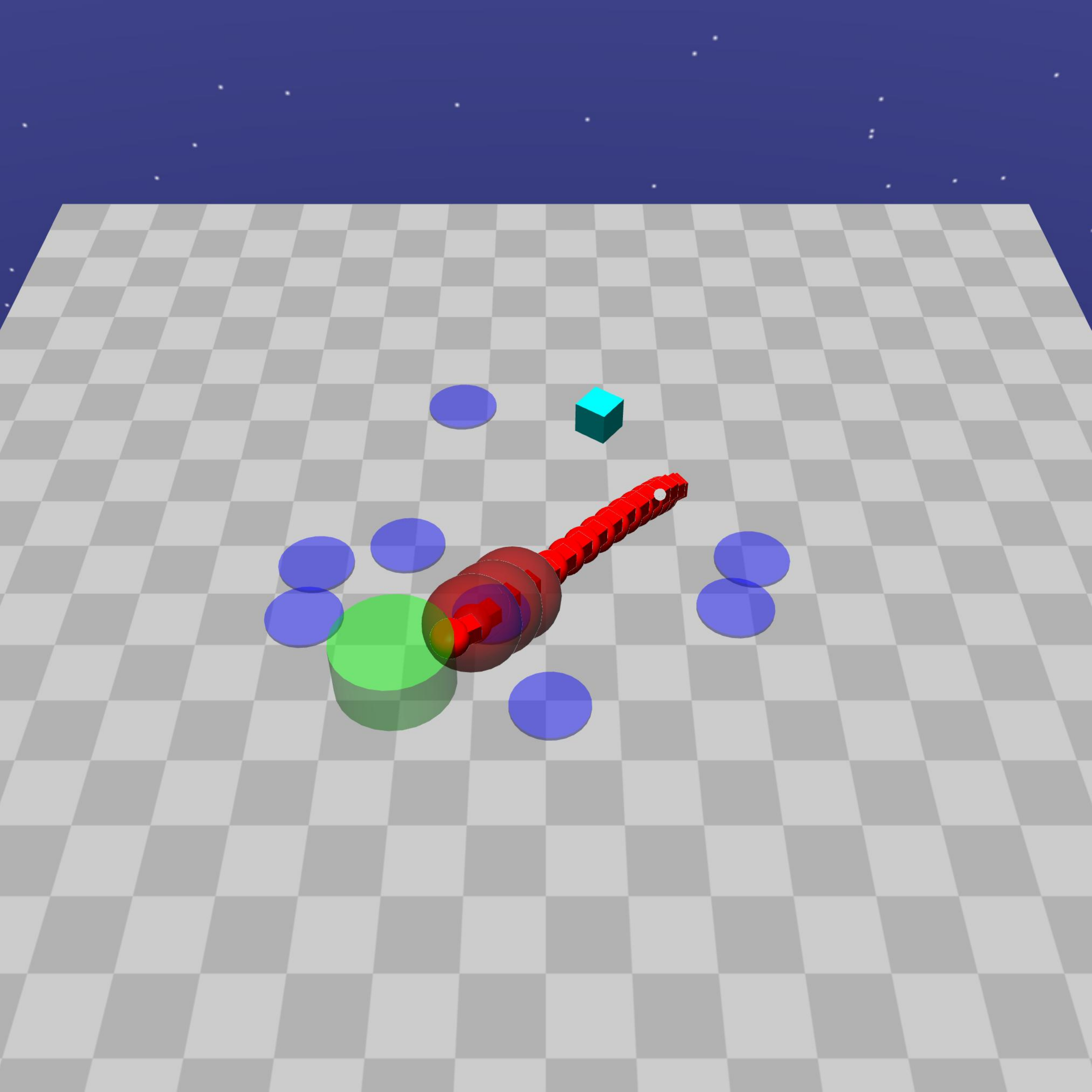}
        \includegraphics[trim=0 300 0 500, clip, width=0.24\linewidth]{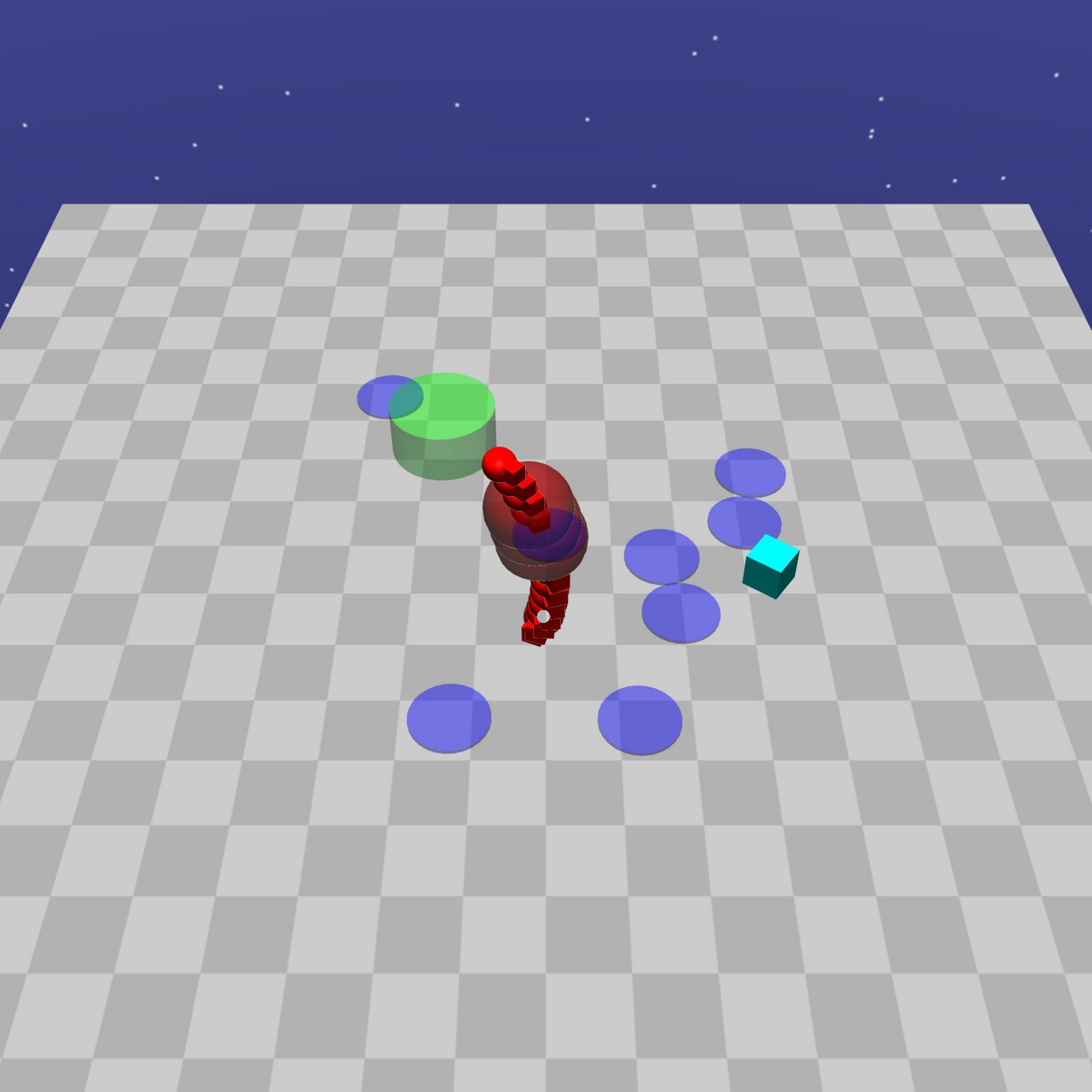}
        \includegraphics[trim=0 300 0 500, clip, width=0.24\linewidth]{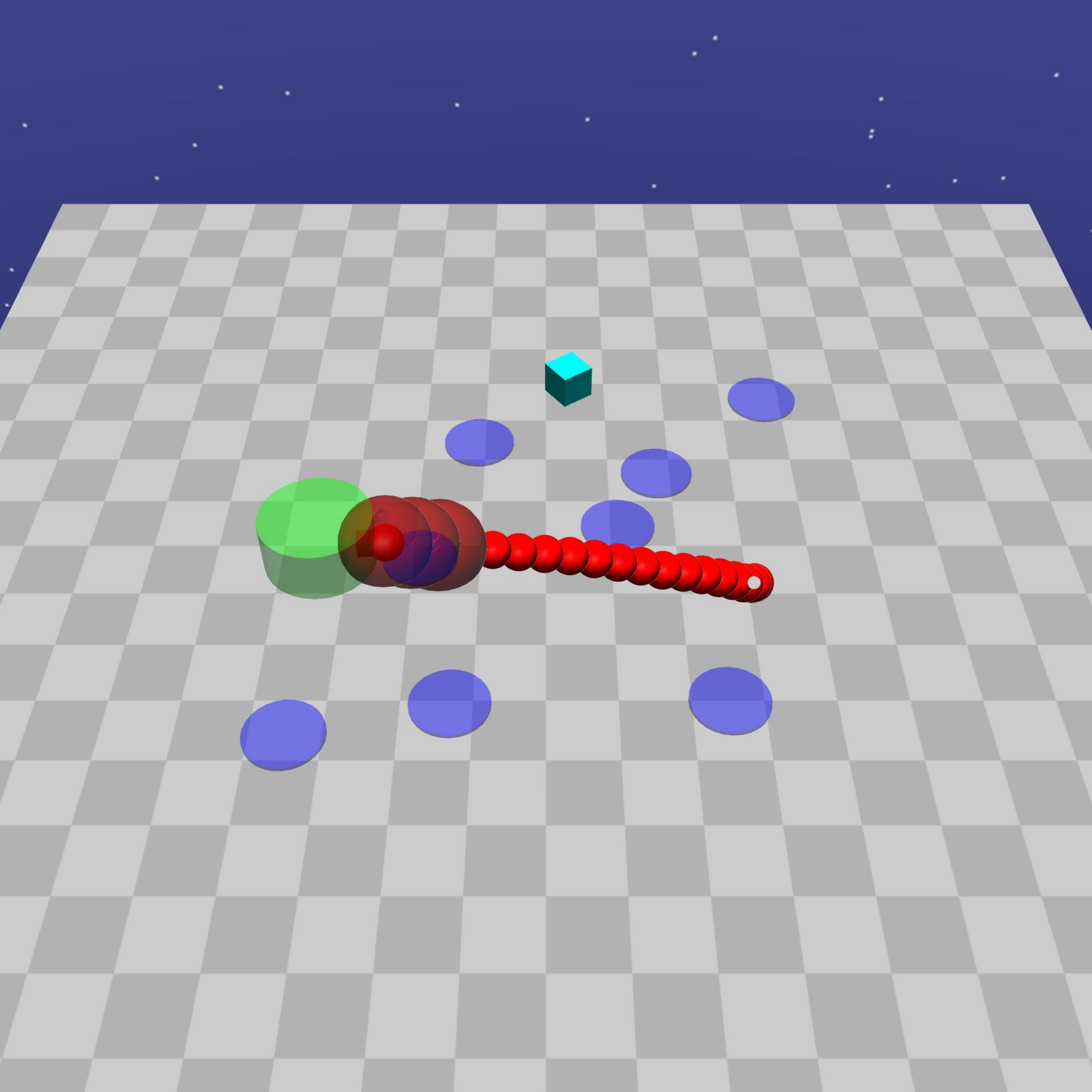}
        \caption{TRPO successes (top) and failures (bottom)}
    \end{subfigure}%
    \\
    \begin{subfigure}[t]{\textwidth}
        \centering
        \includegraphics[trim=0 300 0 500, clip, width=0.24\linewidth]{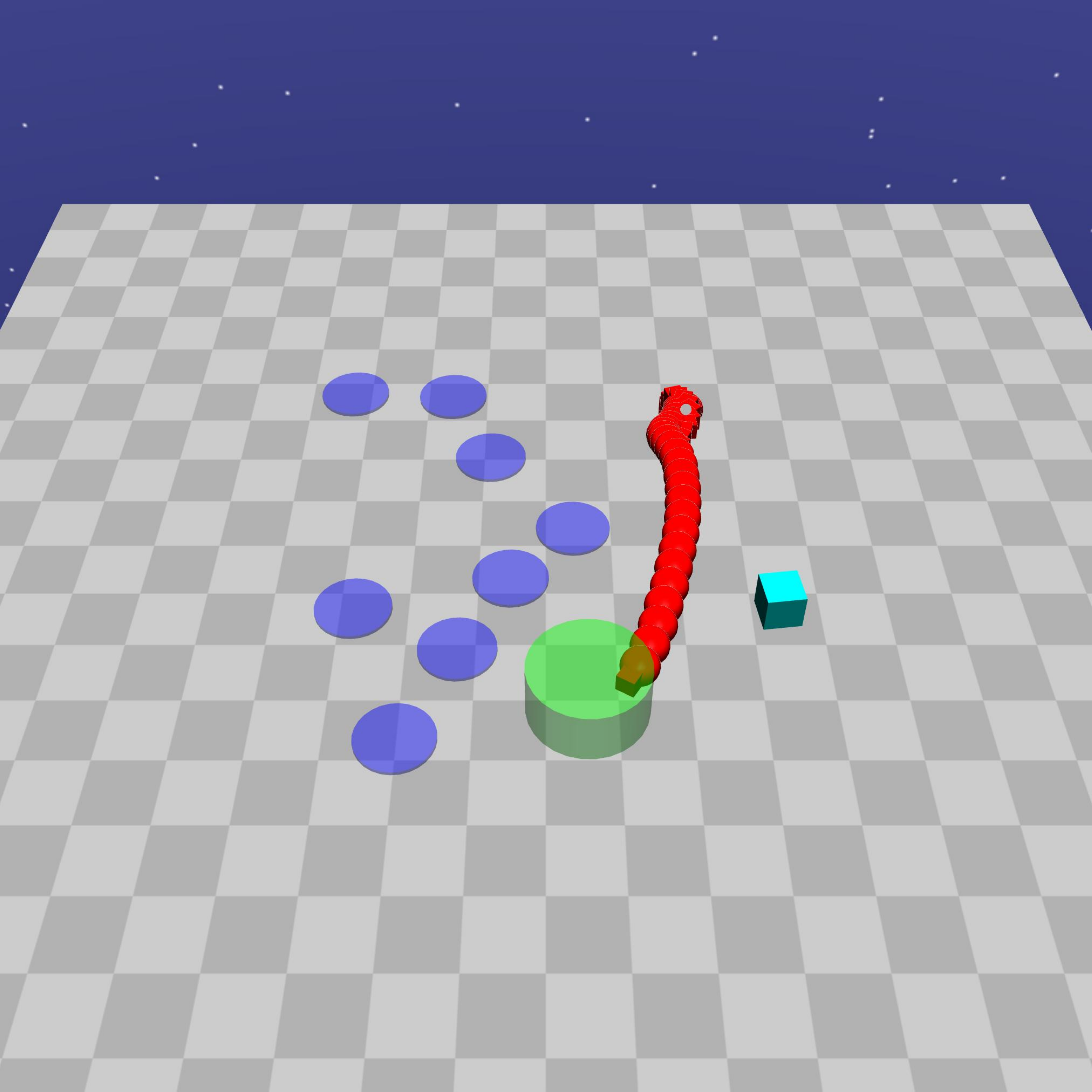}
        \includegraphics[trim=0 300 0 500, clip, width=0.24\linewidth]{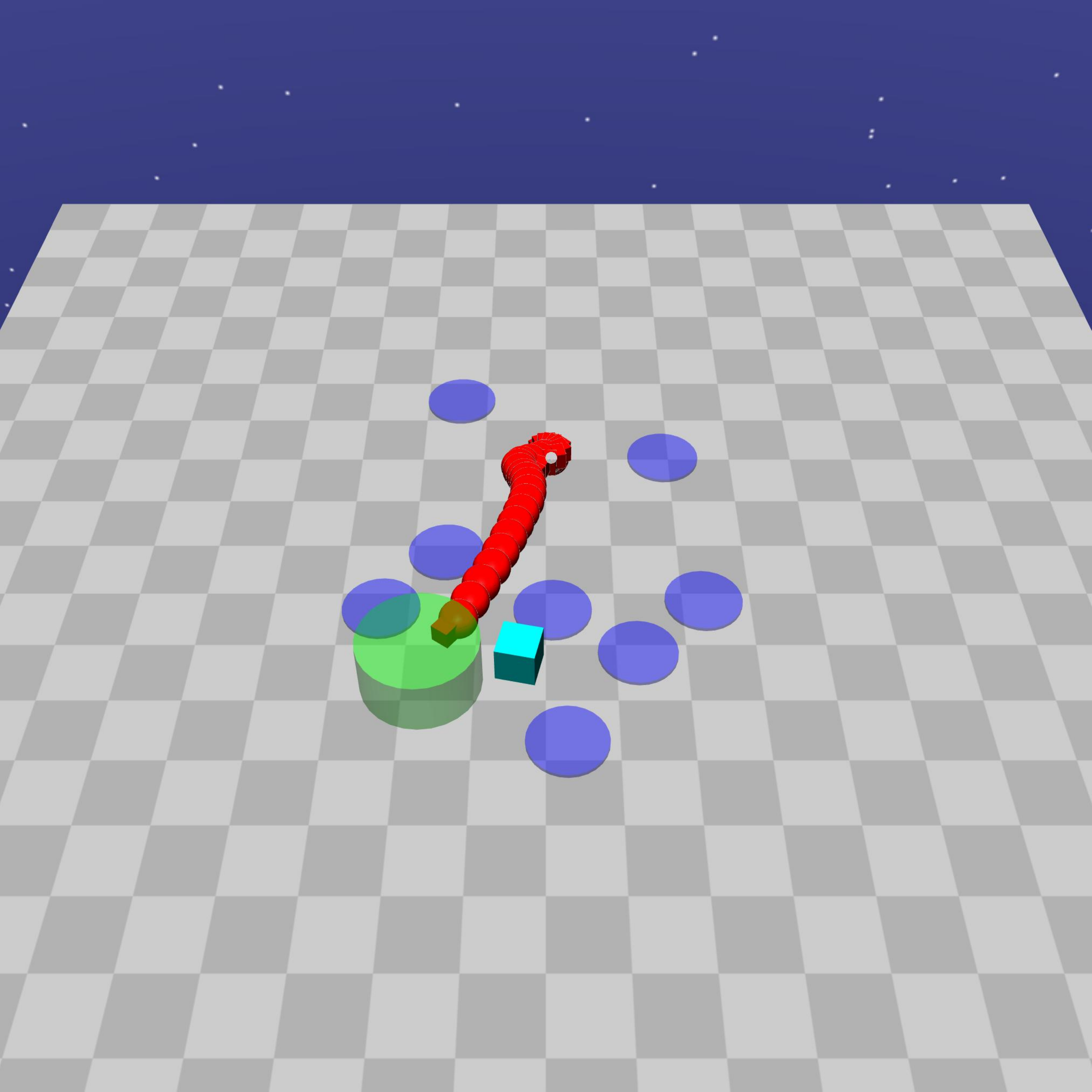}
        \includegraphics[trim=0 300 0 500, clip, width=0.24\linewidth]{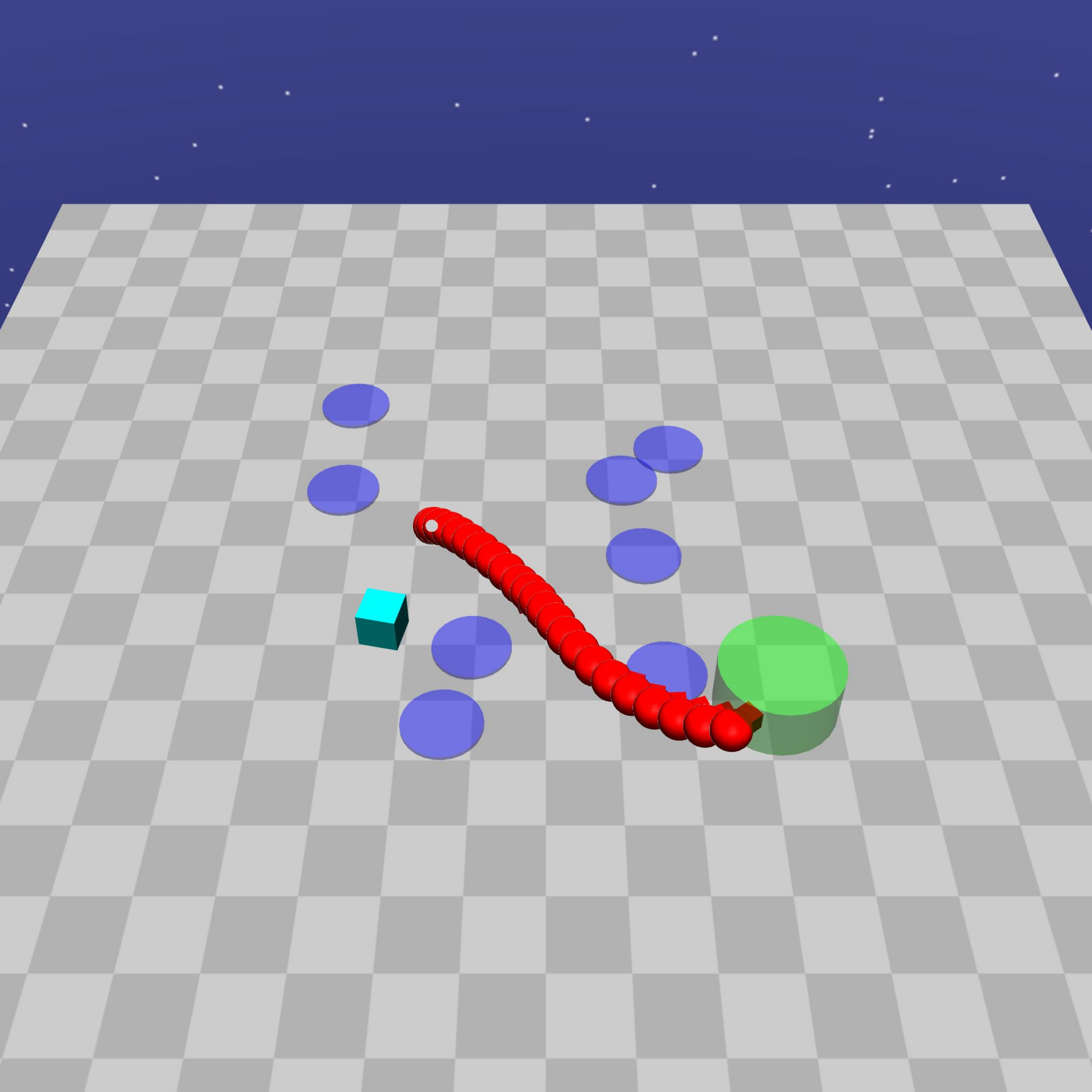}
        \includegraphics[trim=0 300 0 500, clip, width=0.24\linewidth]{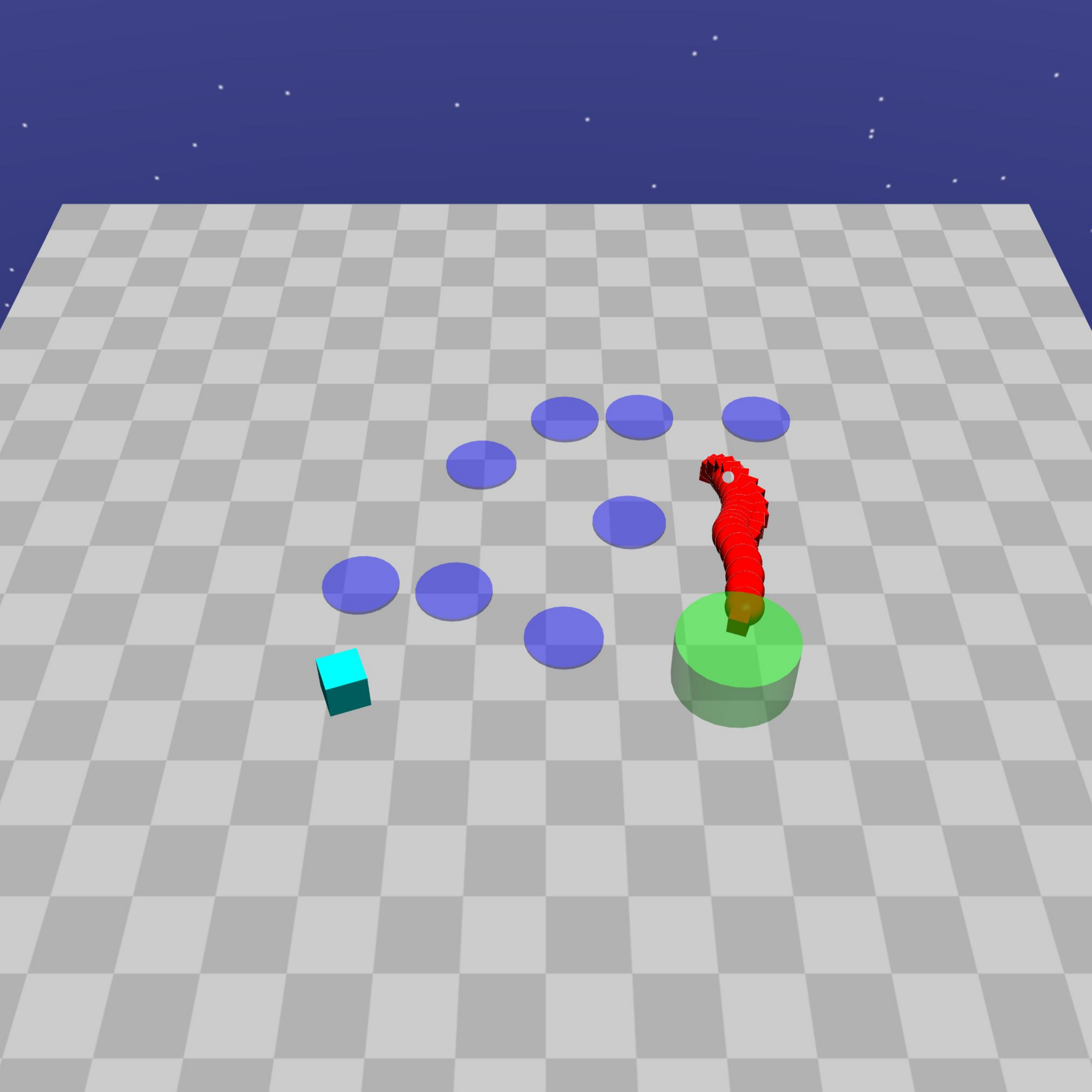}
        \\
        \includegraphics[trim=0 300 0 500, clip, width=0.24\linewidth]{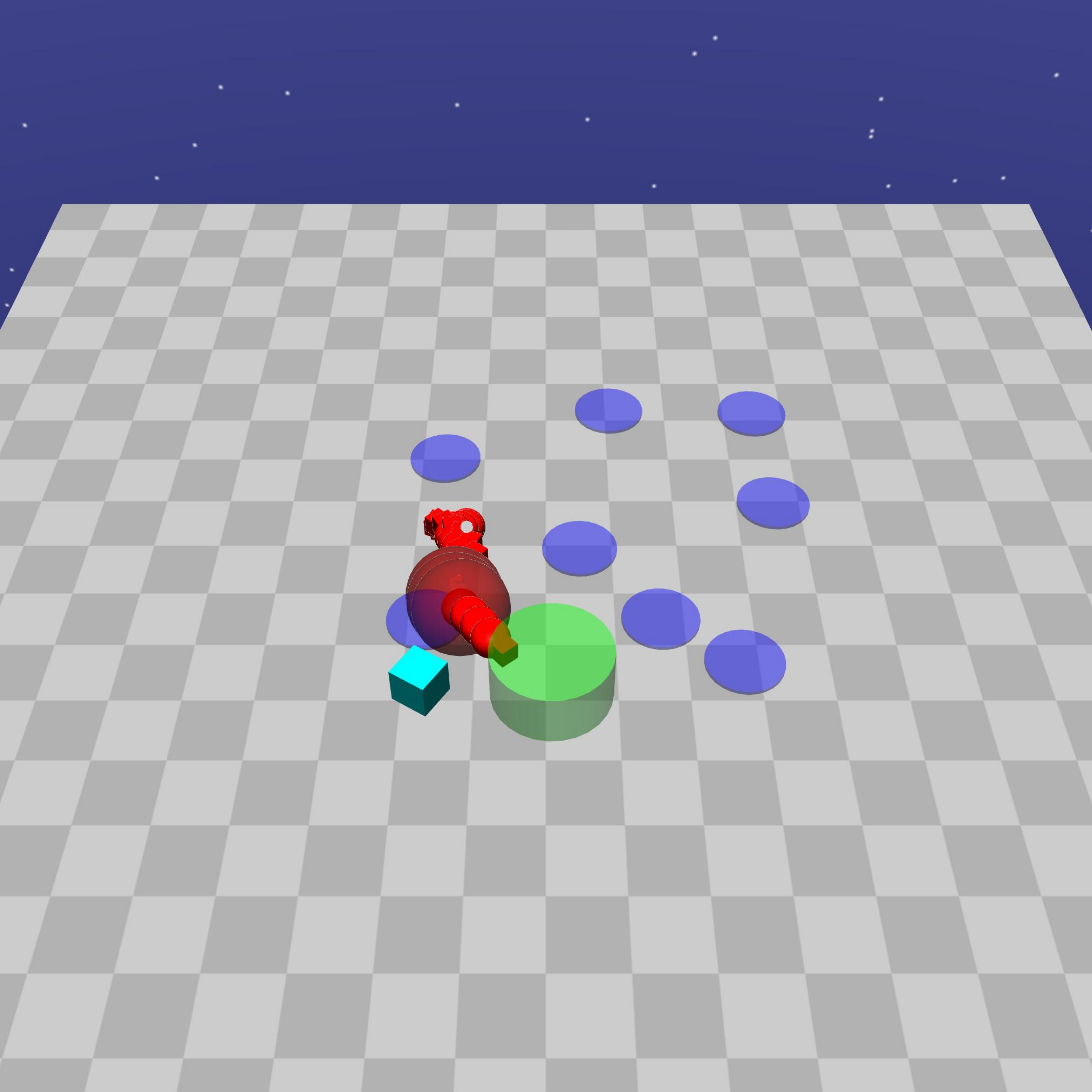}
        \includegraphics[trim=0 300 0 500, clip, width=0.24\linewidth]{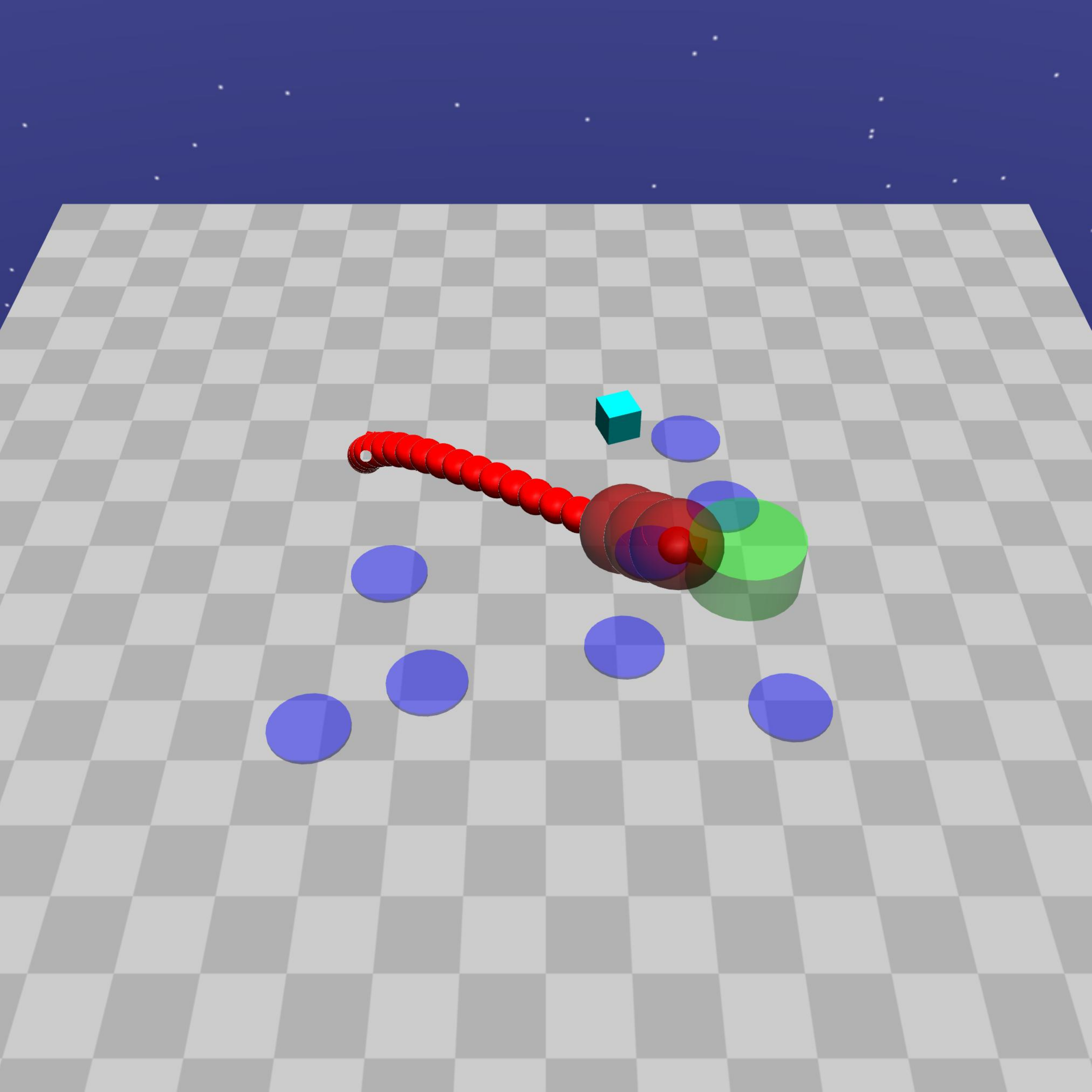}
        \includegraphics[trim=0 300 0 500, clip, width=0.24\linewidth]{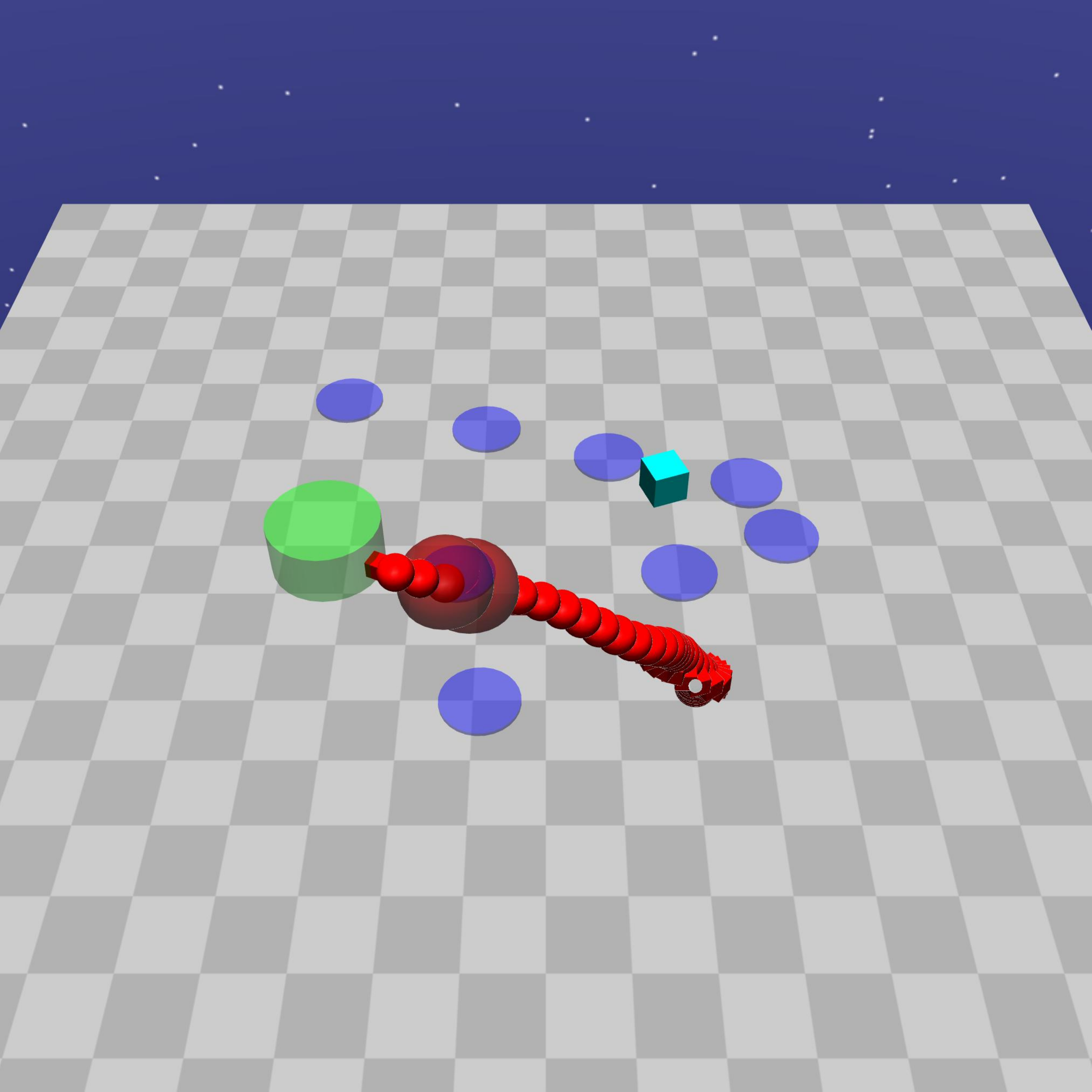}
        \includegraphics[trim=0 300 0 500, clip, width=0.24\linewidth]{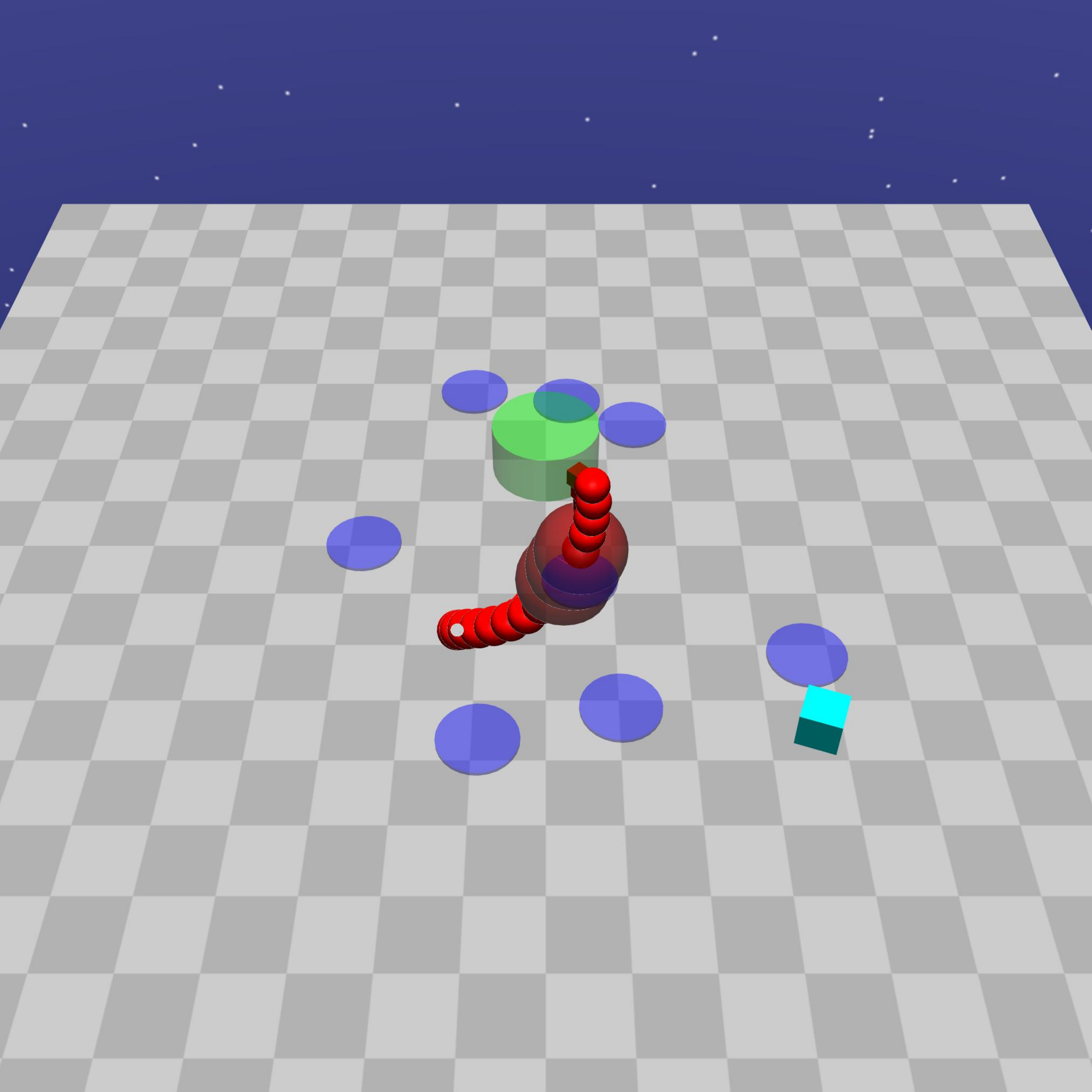}
        \caption{TRPO-Lagrangian successes (top) and failures (bottom)}
    \end{subfigure}%
    \\
    \begin{subfigure}[t]{\textwidth}
        \centering
        \includegraphics[trim=0 300 0 500, clip, width=0.24\linewidth]{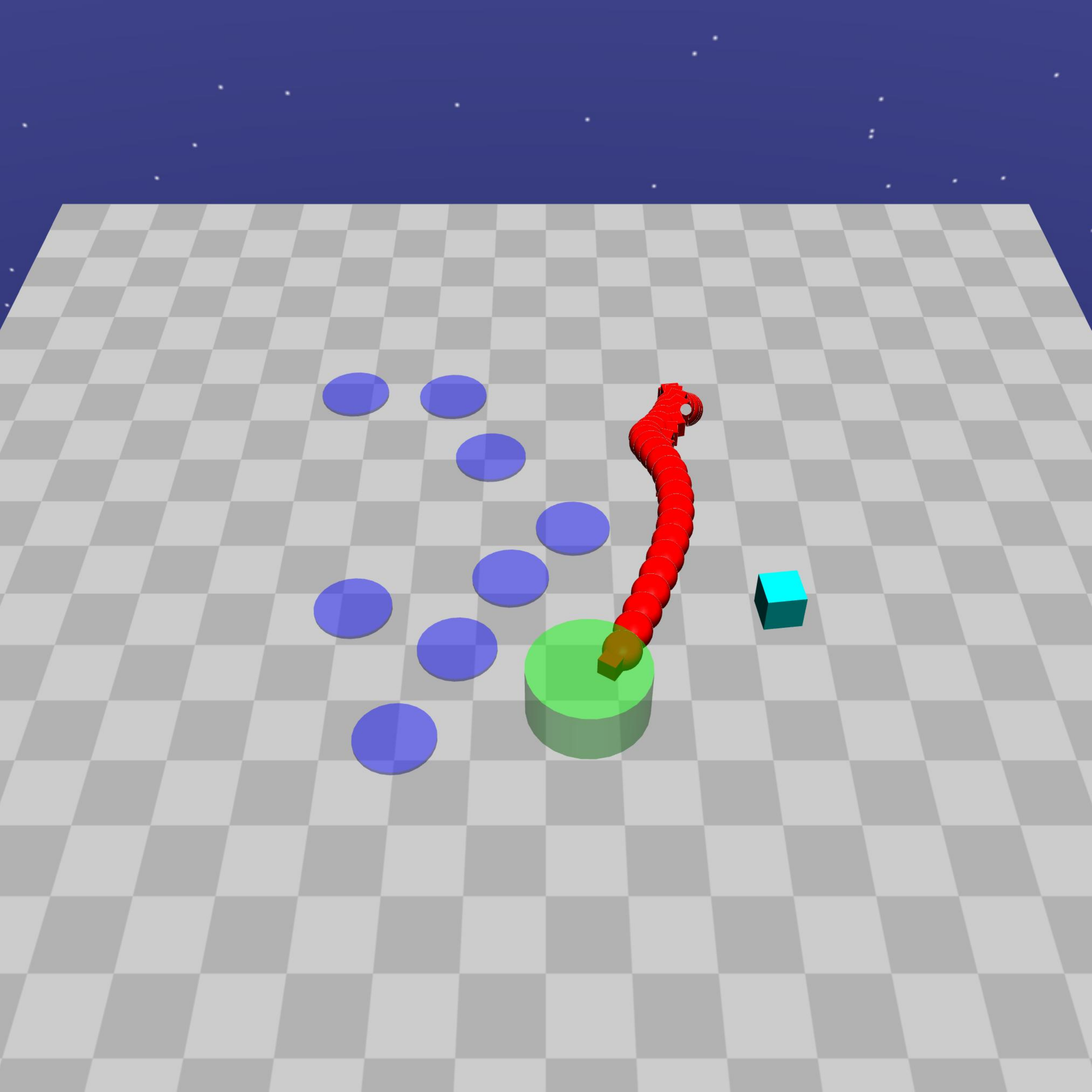}
        \includegraphics[trim=0 300 0 500, clip, width=0.24\linewidth]{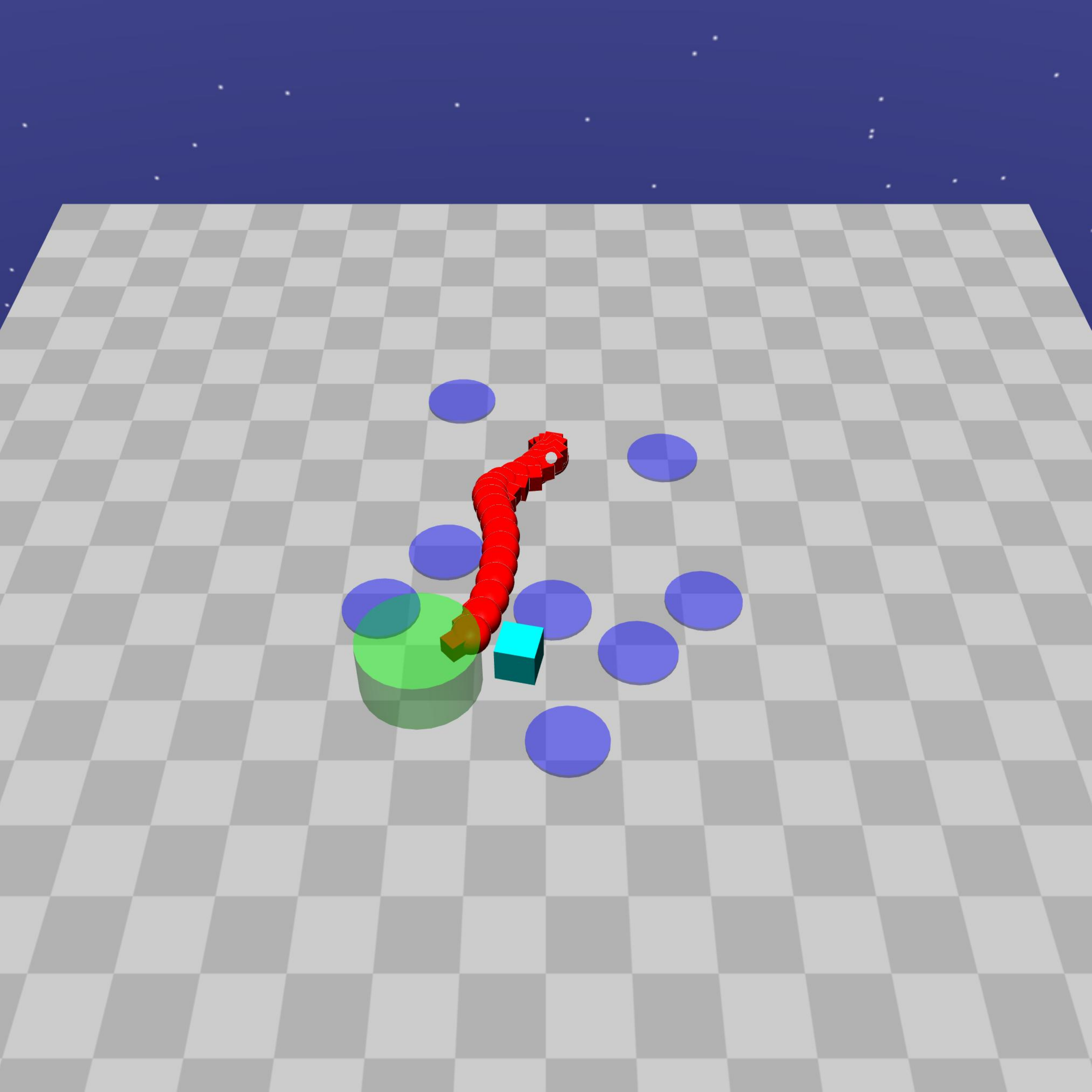}
        \includegraphics[trim=0 300 0 500, clip, width=0.24\linewidth]{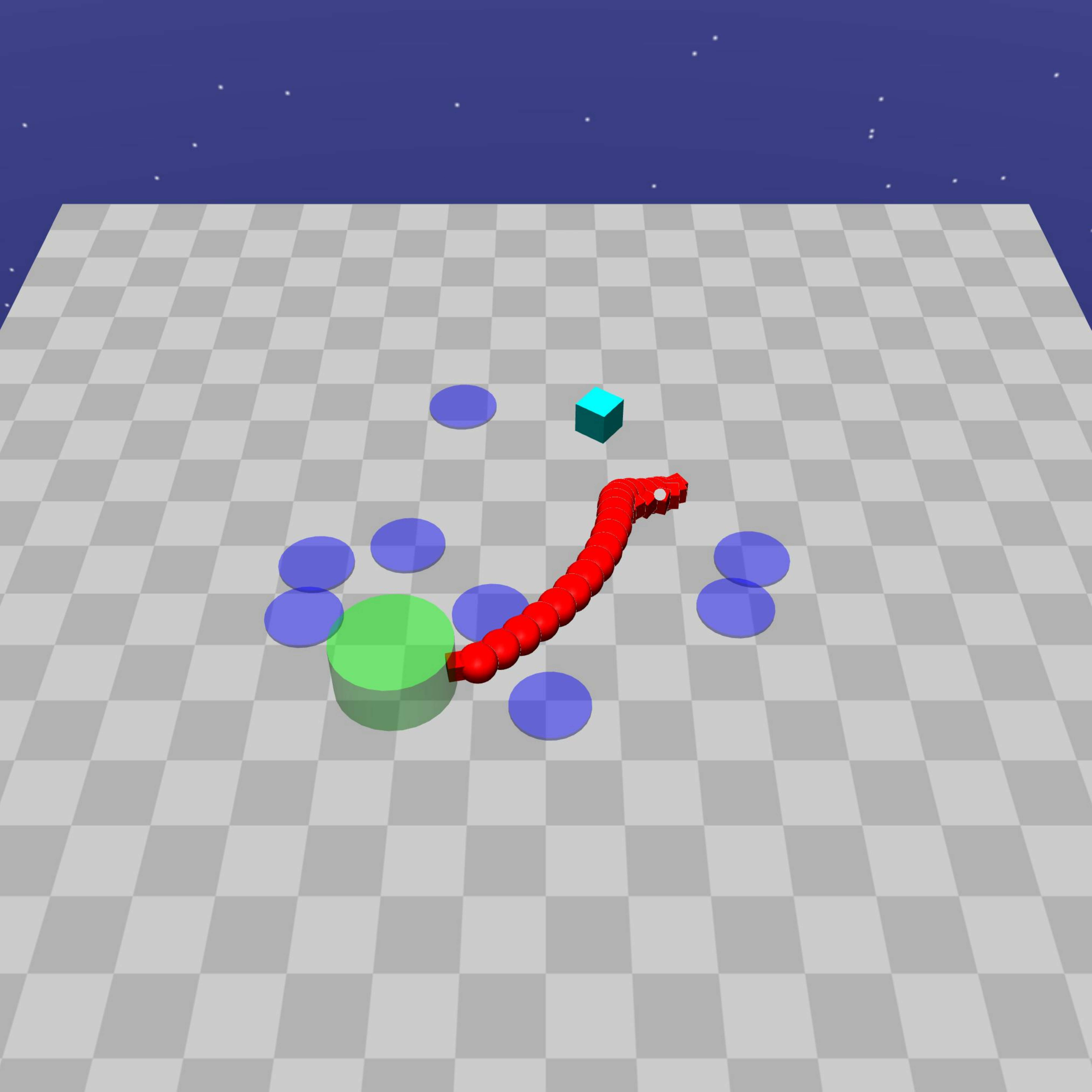}
        \includegraphics[trim=0 300 0 500, clip, width=0.24\linewidth]{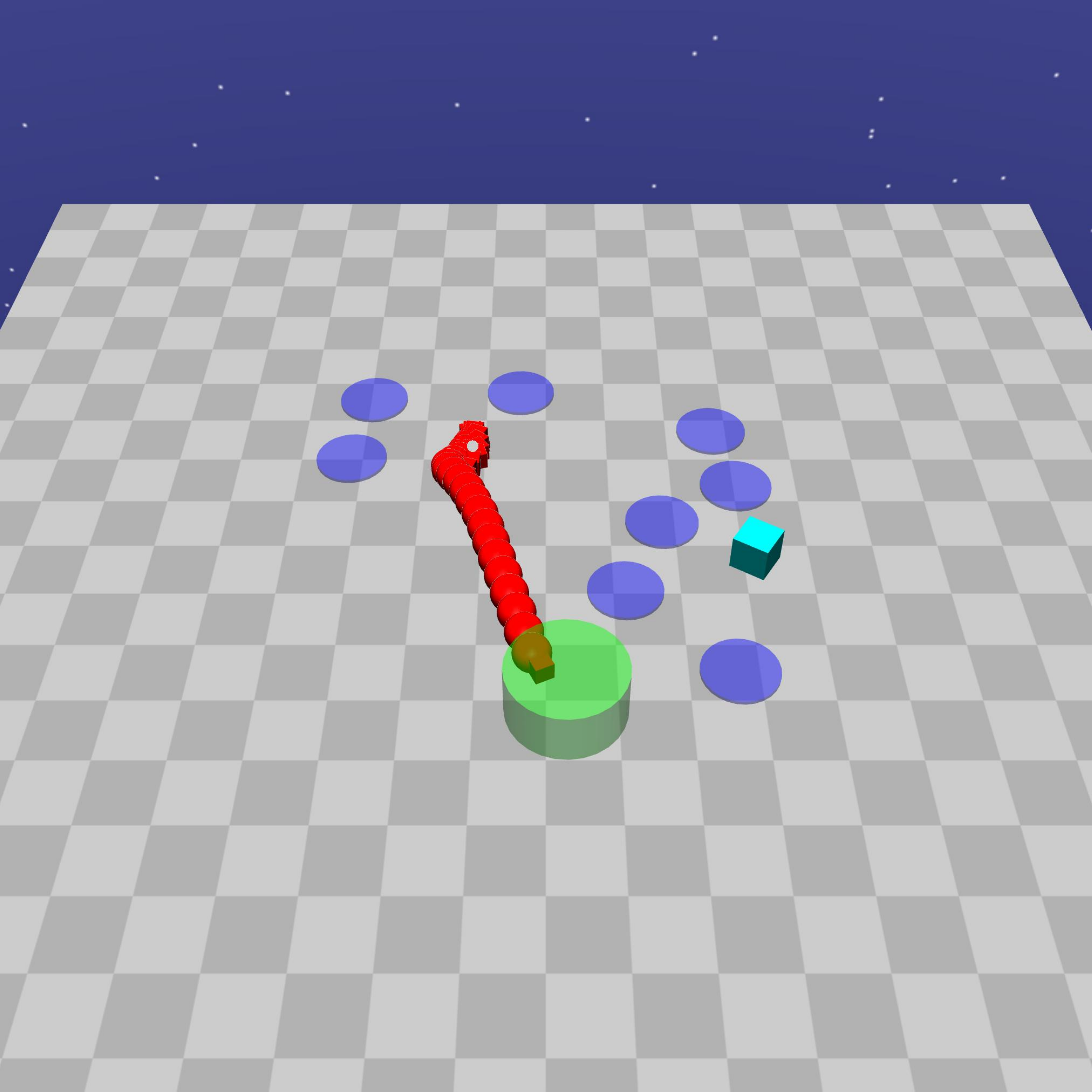}
        \\
        \includegraphics[trim=0 300 0 500, clip, width=0.24\linewidth]{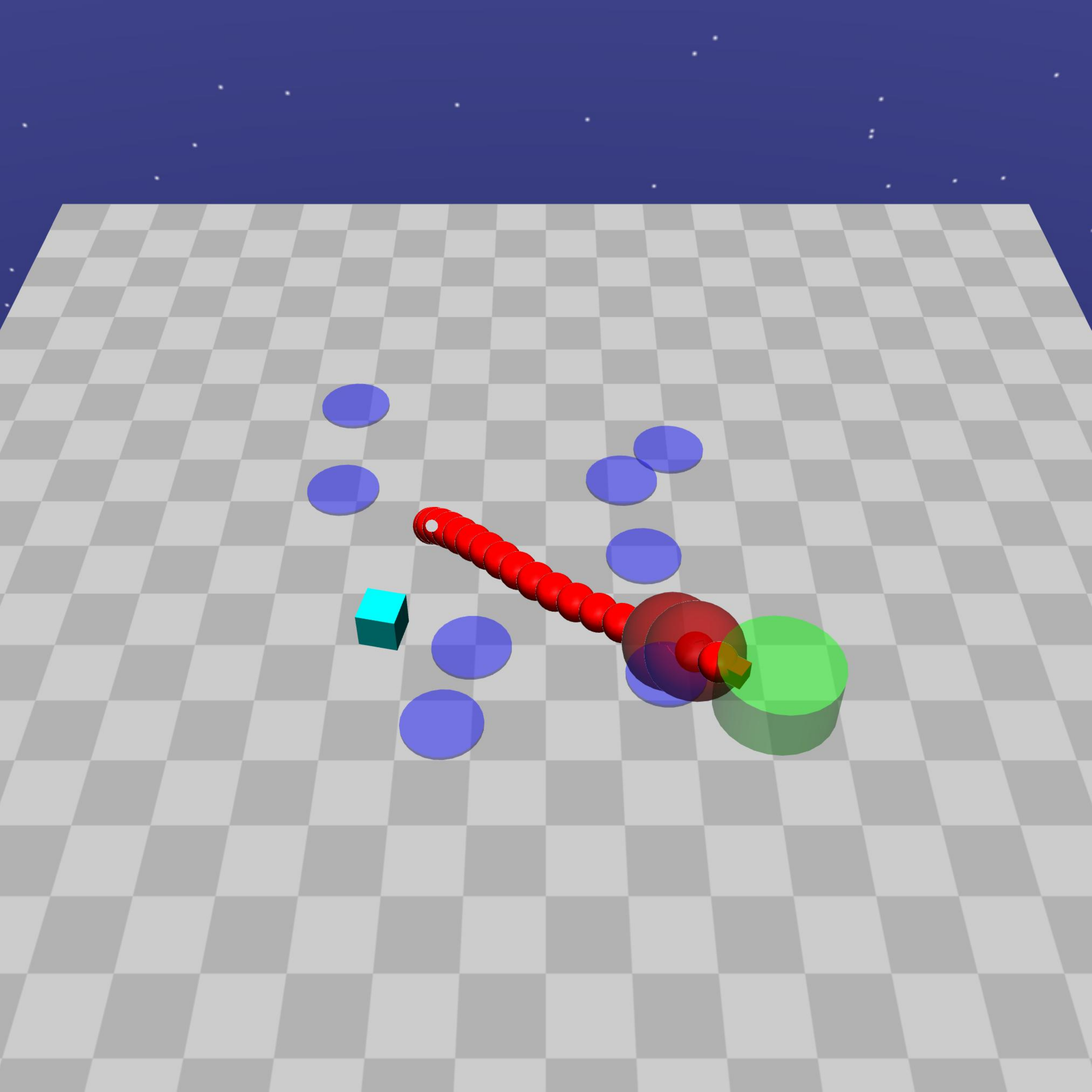}
        \includegraphics[trim=0 300 0 500, clip, width=0.24\linewidth]{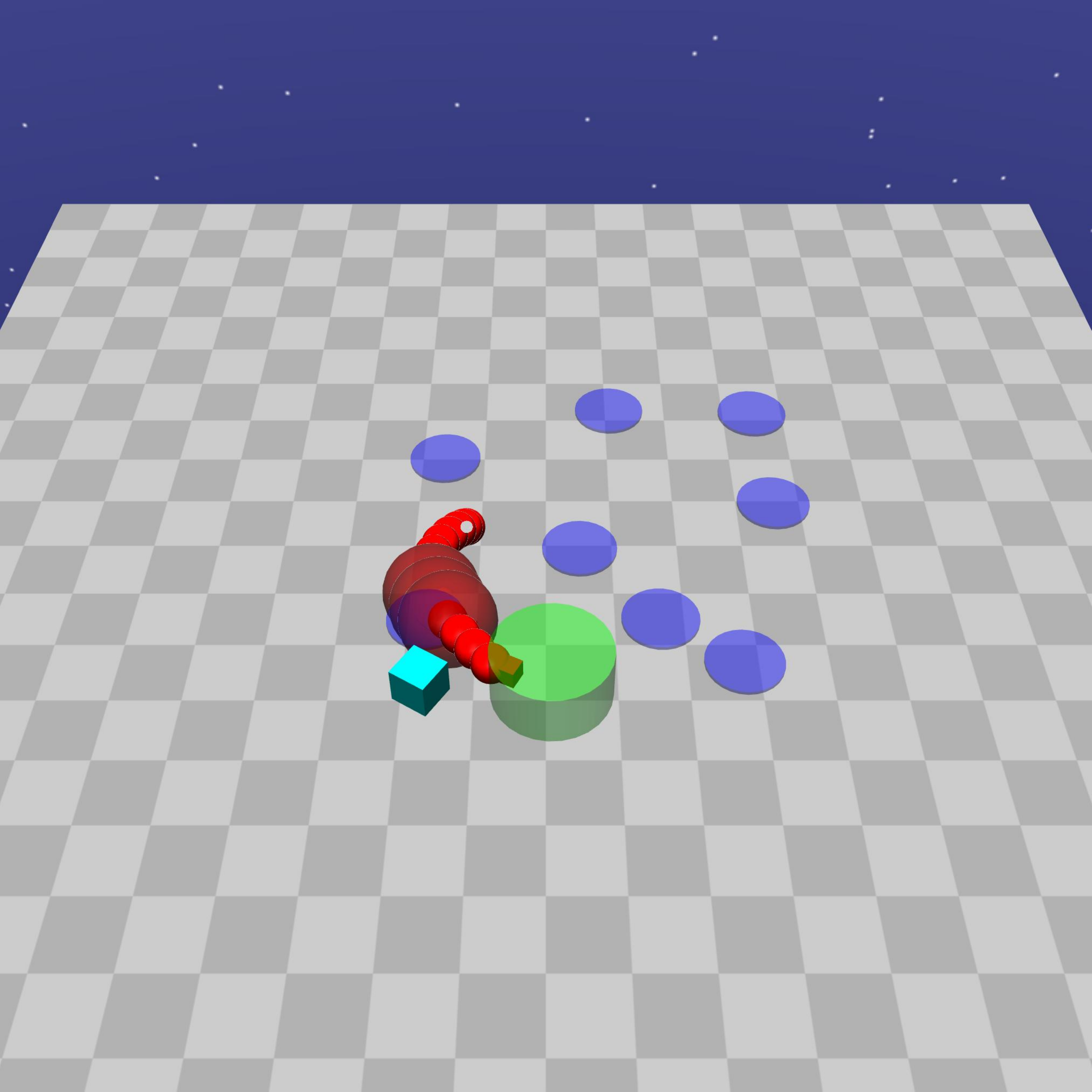}
        \includegraphics[trim=0 300 0 500, clip, width=0.24\linewidth]{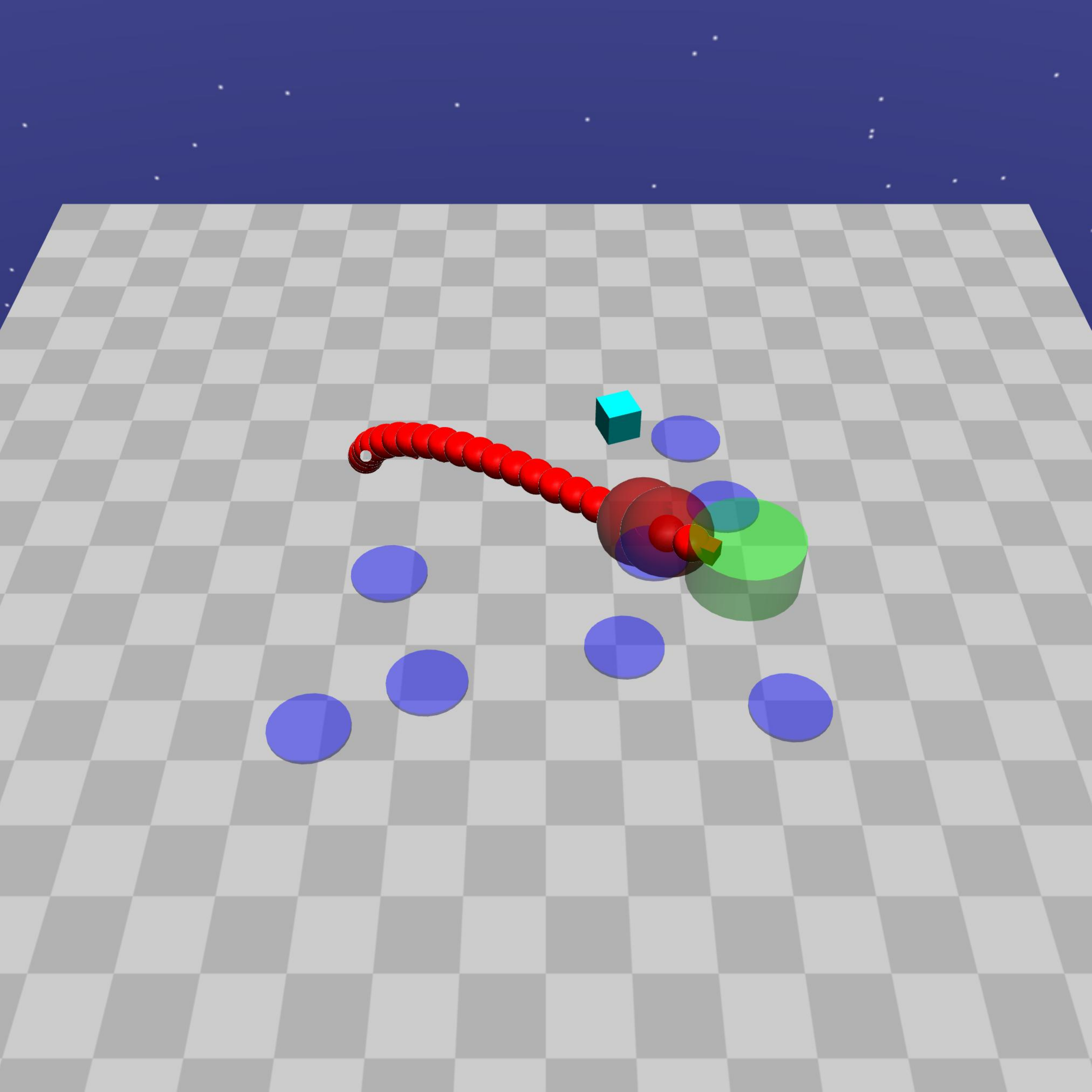}
        \includegraphics[trim=0 300 0 500, clip, width=0.24\linewidth]{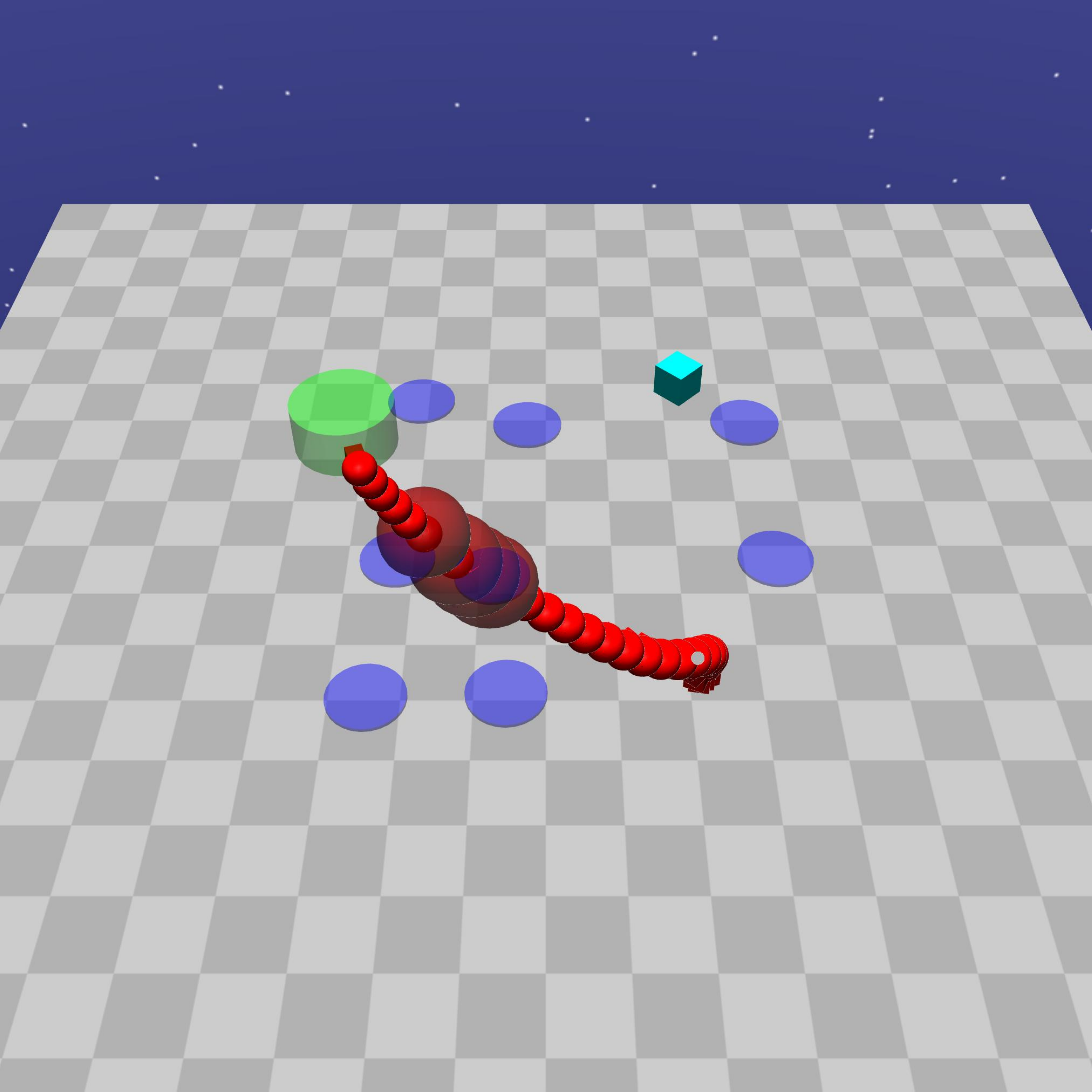}
        \caption{CPO successes (top) and failures (bottom)}
    \end{subfigure}%
    \\
    \begin{subfigure}[t]{\textwidth}
        \centering
        \includegraphics[trim=0 500 0 300, clip, width=0.24\linewidth]{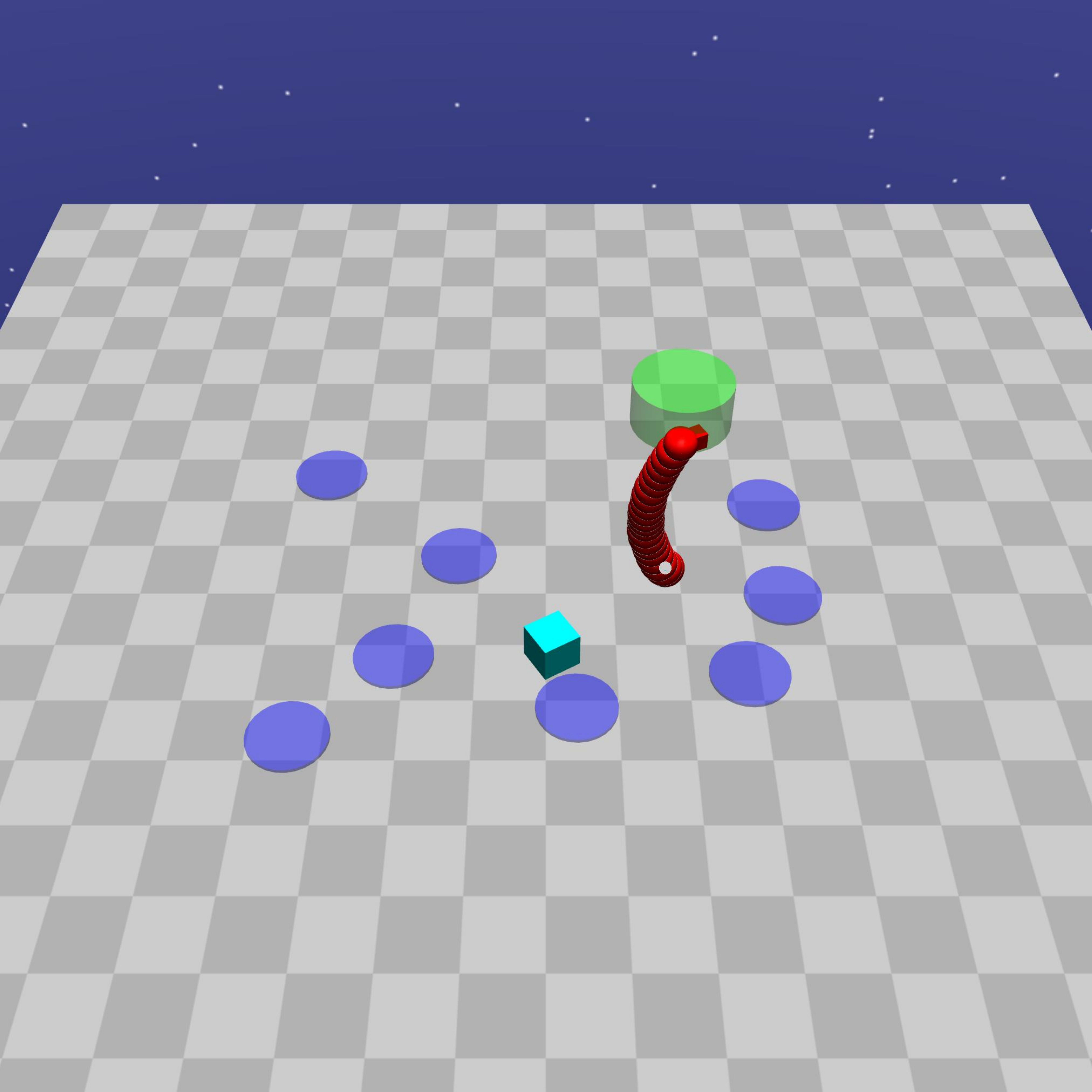}
        \includegraphics[trim=0 500 0 300, clip, width=0.24\linewidth]{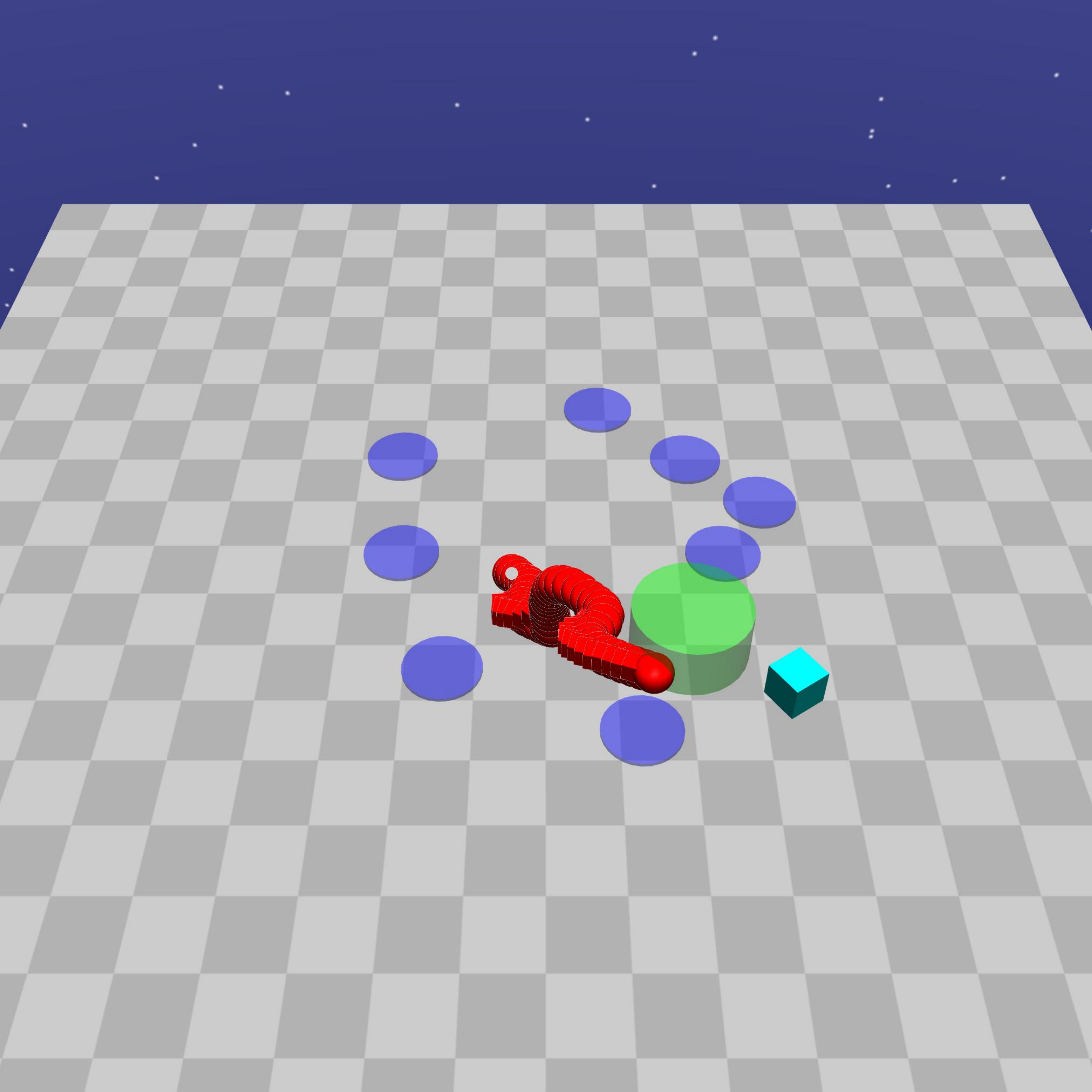}
        \includegraphics[trim=0 500 0 300, clip, width=0.24\linewidth]{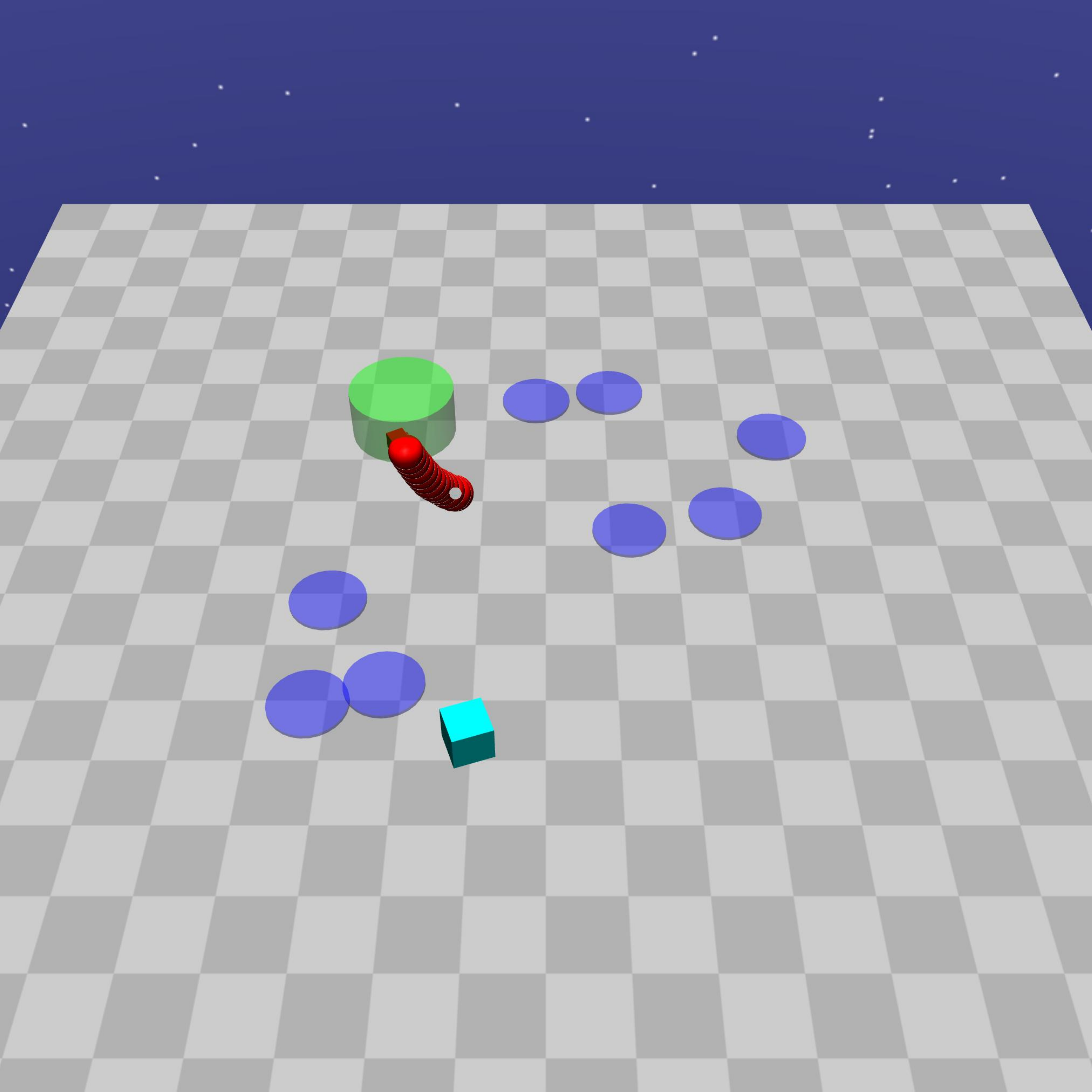}
        \includegraphics[trim=0 500 0 300, clip, width=0.24\linewidth]{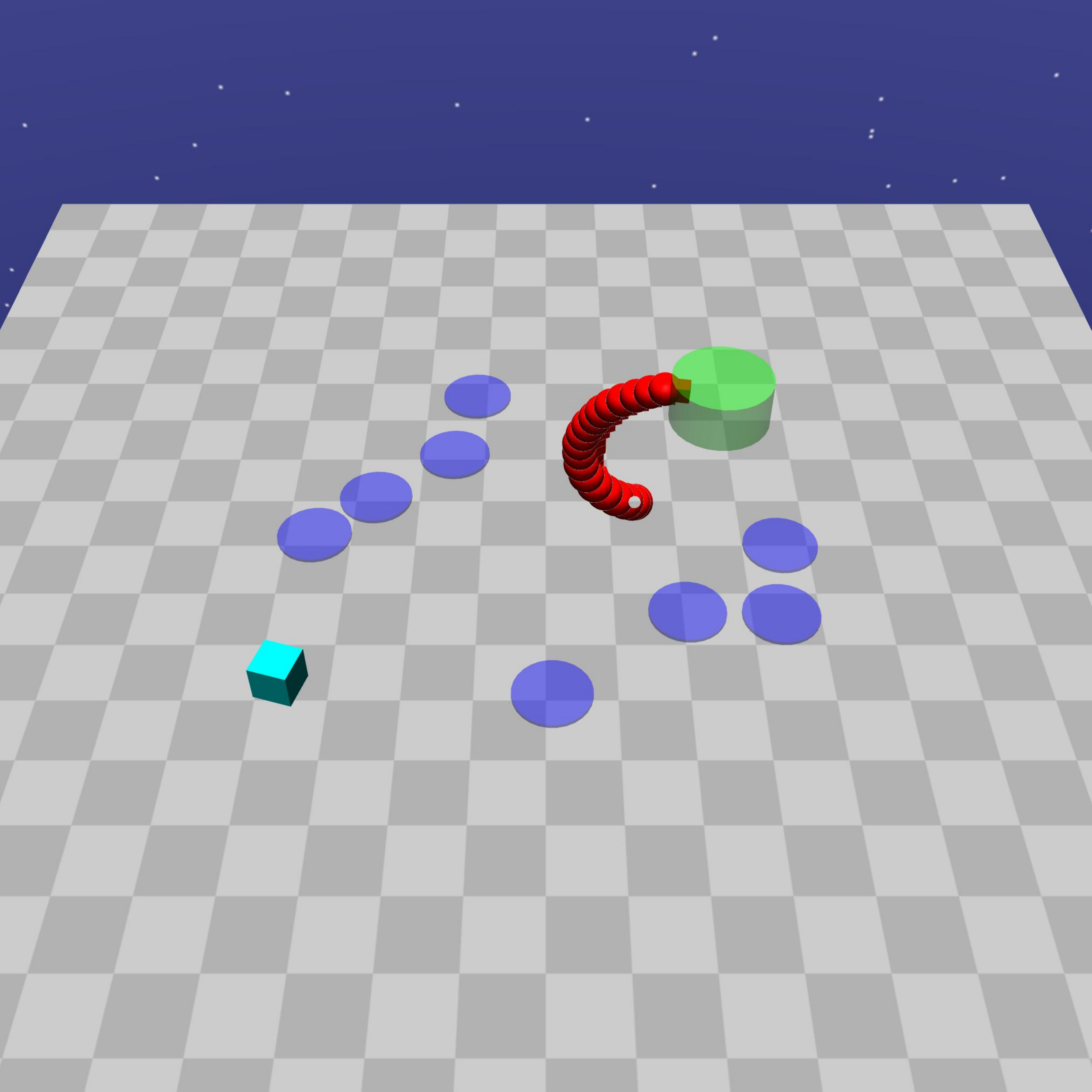}
        \\
        \includegraphics[trim=20 0 20 300, clip, width=0.24\linewidth]{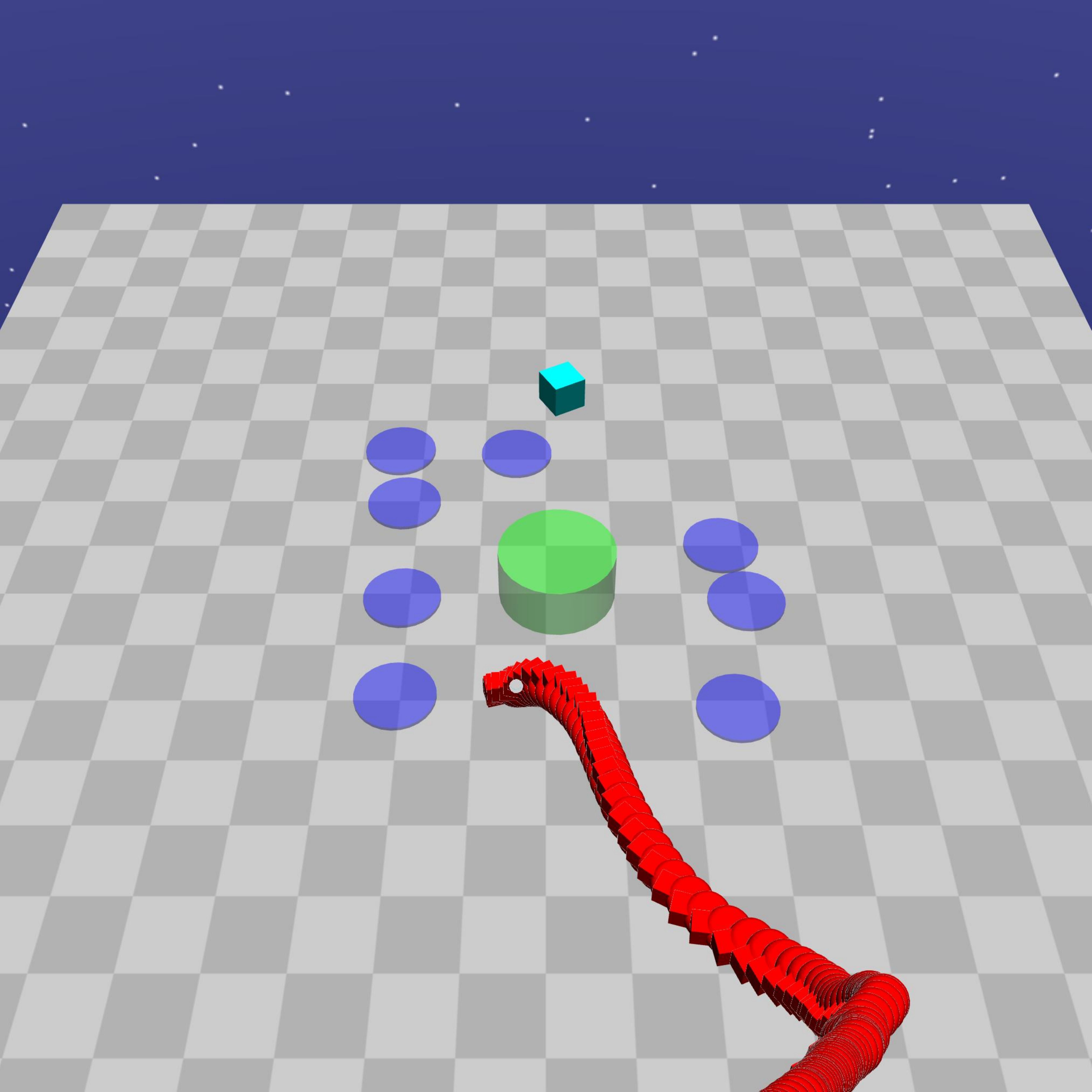}
        \includegraphics[trim=20 0 20 300, clip, width=0.24\linewidth]{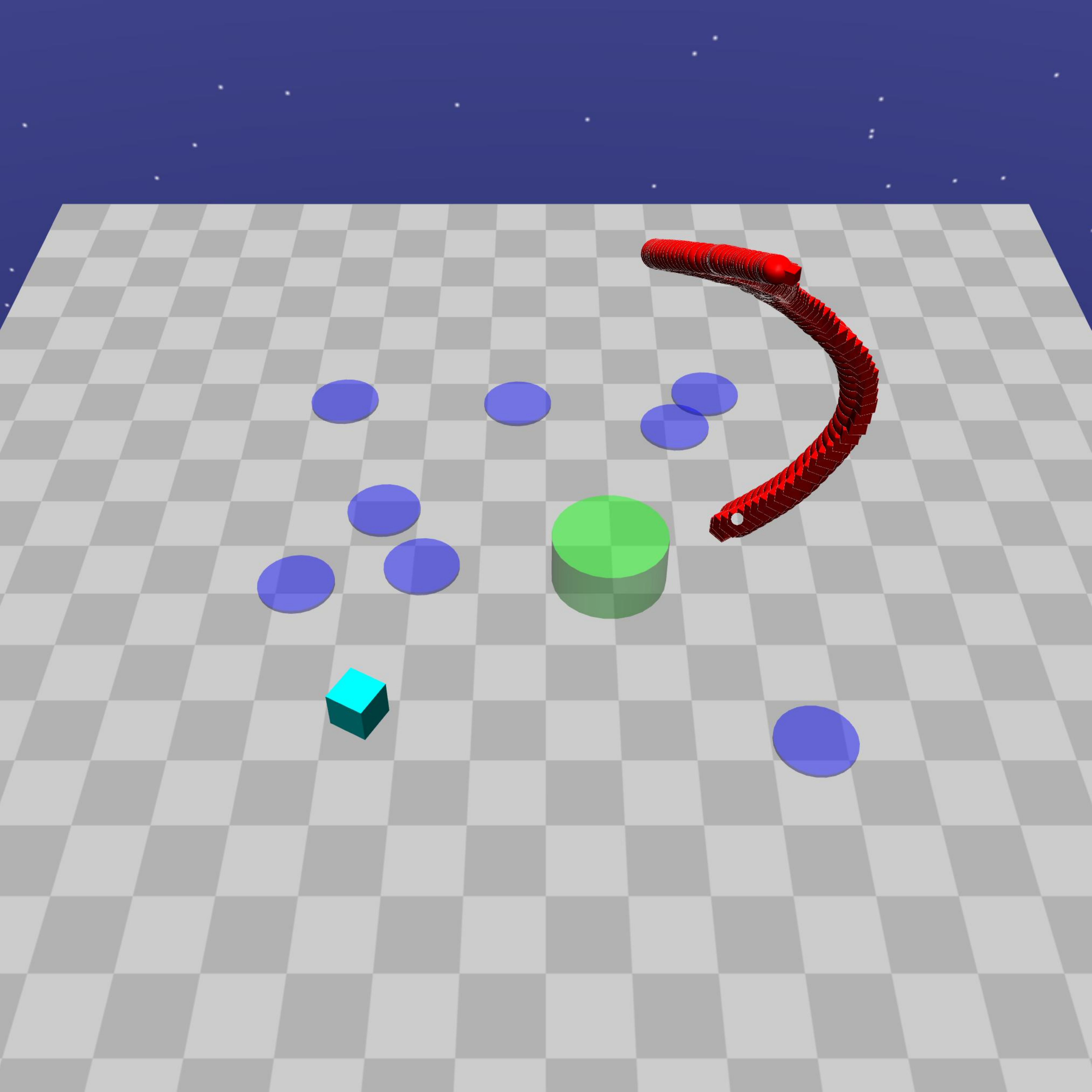}
        \includegraphics[trim=20 0 20 300, clip, width=0.24\linewidth]{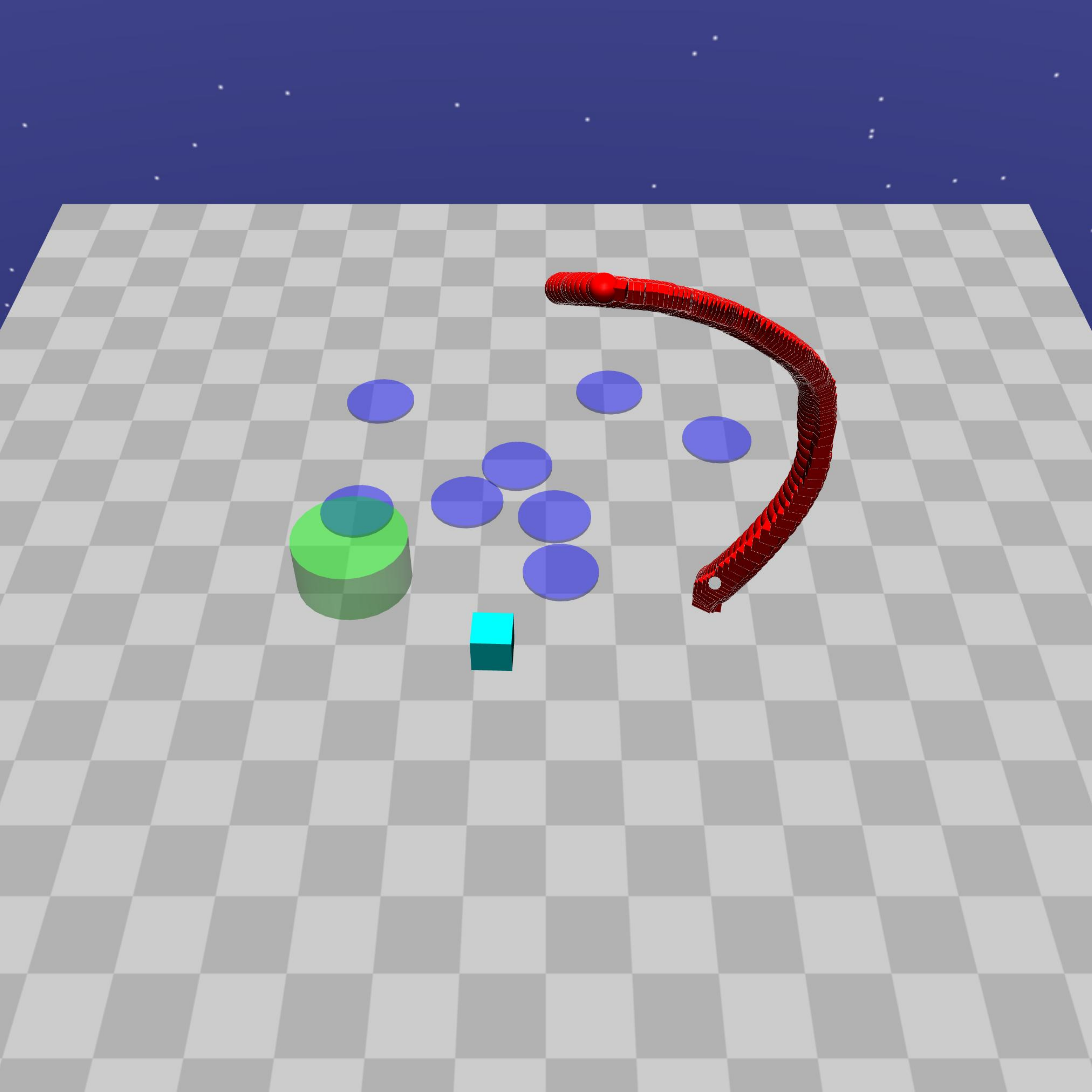}
        \includegraphics[trim=20 0 20 300, clip, width=0.24\linewidth]{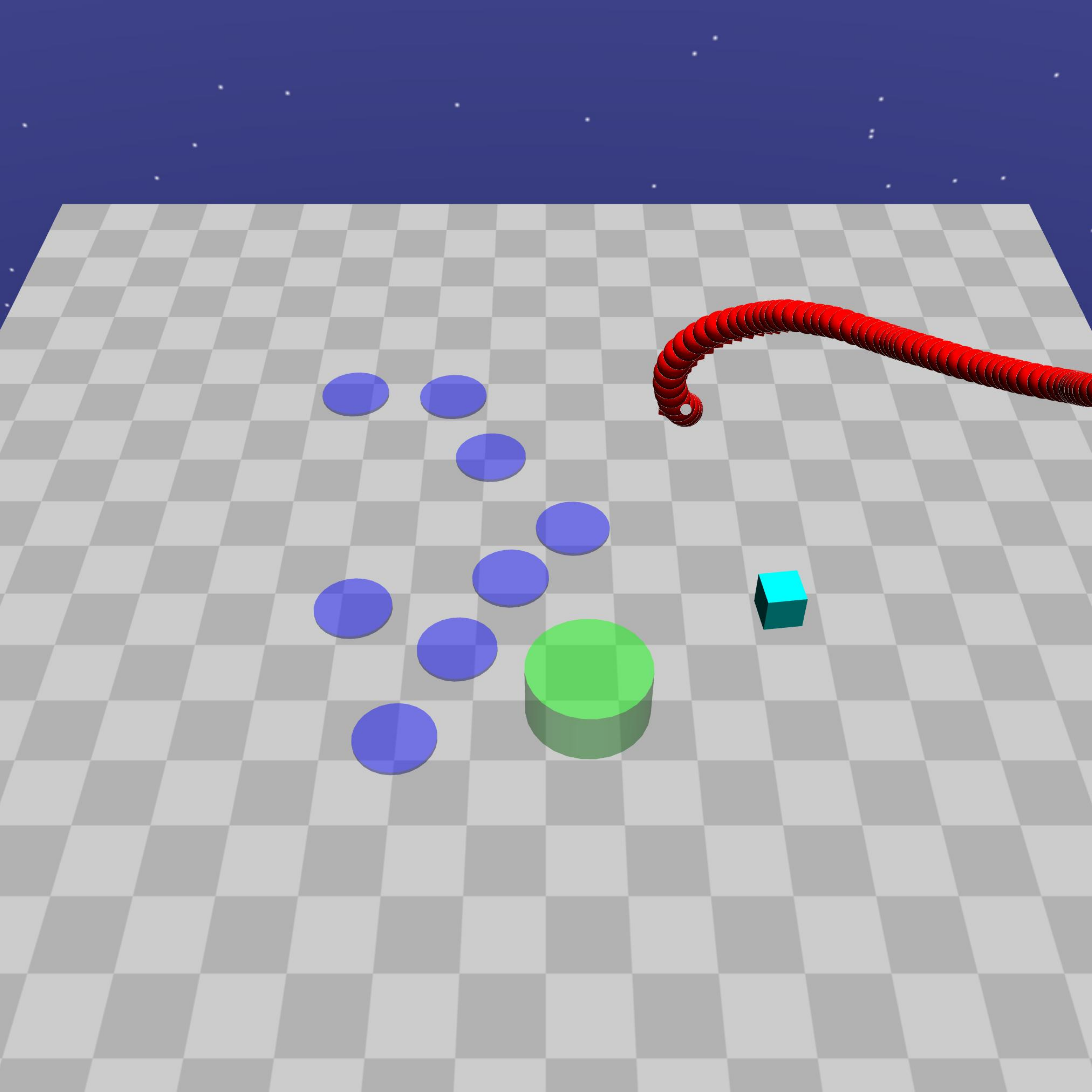}
        \caption{TRPO-Minmax successes (top) and failures (bottom)}
    \end{subfigure}%
    \caption{Sample trajectories of policies learned by each baseline and our Minmax approach in the Safety Gym \pointgoalhard domain, in the experiments of Figure \ref{fig:safety_baselines_original}.
    Trajectories that hit hazards (the hits are highlighted by the red spheres) or take more than 1000 timesteps to reach the goal location are considered failures.   
    }
    \label{fig:all_trajectories_original}
\end{figure*}%

\end{document}